\tikzstyle{every node}=[font=\small]
\def\1{\bm{1}}
\DeclareMathAlphabet{\mathsfit}{\encodingdefault}{\sfdefault}{m}{sl}
\SetMathAlphabet{\mathsfit}{bold}{\encodingdefault}{\sfdefault}{bx}{n}
\newcommand{\R}{\mathbb{R}}
\DeclareMathOperator*{\argmin}{arg\,min}
\newcommand{\xit}{x_i^t}
\newcommand{\thetaijtkll}{\theta_{i,j}^{t,k,l+1}}
\newcommand{\thetaijtkl}{\theta_{i,j}^{t,k,l}}
\newcommand{\thetaijtkstar}{\theta_{i,j}^{t,k,*}}
\newcommand{\thetaijtkL}{\theta_{i,j}^{t,k,L}}
\newcommand{\thetabaritk}{\bar{\theta}_{i}^{t,k}}
\newcommand{\thetaijtklstar}{\theta_{i,j}^{t,k,l^*}}
\newcommand{\fij}{f_{i,j}}
\newcommand{\ftildeij}{\tilde{f}_{i,j}}
\newcommand{\witk}{w_{i}^{t, k}}
\newcommand{\witK}{w_{i}^{t, K}}
\newcommand{\witkk}{w_{i}^{t, k+1}}
\newcommand{\witstar}{w_{i}^{t,*}}
\newcommand{\witmin}{w_{i}^{t, k^*}}
\newcommand{\xt}{x^t}
\newcommand{\xtt}{x^{t+1}}
\newcommand{\xT}{x^{T}}
\newcommand{\gt}{g^{t}}
\newcommand{\git}{g_i^{t}}
\newcommand{\qitk}{q_i^{t, k}}
\newcommand{\pij}{p_{i, j}}
\newcommand{\avgNi}{\frac{1}{N_i}\sum_{j=1}^{N_i}}
\newcommand{\avgK}{\frac{1}{K}\sum_{k=0}^{K-1}}
\newcommand{\avgT}{\sum_{t=0}^{T-1}}
\newcommand{\avgM}{\frac{1}{M}\sum_{i=1}^{M}}
\newcommand{\avgL}{\frac{1}{L} \sum_{l=0}^{L-1}}
\newcommand{\ait}{\witK}
\newcommand{\Ftildei}{\tilde{F}_i}
\newcommand{\muFtilde}{\mu_{\tilde{F}}}
\newcommand{\LFtilde}{L_{\tilde{F}}}
\newcommand{\Lftilde}{L_{\tilde{f}}}
\newcommand{\muflambda}{\mu_f+\lambda}
\newcommand{\Lflambda}{L_f+\lambda}
\newcommand{\deltaij}{\delta_{i,j}}
\newcommand{\deltai}{\delta_{i}}
\newcommand{\norm}[1]{\|#1\|}
\newcommand{\nsq}[1]{\|#1\|^2}
\newcommand{\prox}{\mathrm{prox}}
\newcommand{\ip}[2]{\langle#1,#2\rangle}
\definecolor{LightCyan}{rgb}{0.88,1,1}
\newtheorem{theorem}{Theorem}
\newtheorem{proposition}{Proposition}
\newtheorem{definition}{Definition}
\newtheorem{remark}{Remark}
\newcommand{\ourmodel}[1]{PerMFL}
\renewcommand{\algorithmiccomment}[1]{\bgroup\hfill//~#1\egroup}
\renewcommand{\paragraph}[1]{
   \textbf{#1}
}
\pgfplotsset{compat=1.18}
\begin{document}
\title{Personalized Multi-tier Federated Learning}
\author{Sourasekhar Banerjee, Ali Dadras, Alp Yurtsever, Monowar Bhuyan \\
Ume\aa University \\
Ume\aa \\
Sweden\\
\texttt{\{firstname.lastname\}@umu.se}
}

\maketitle          

\begin{abstract}
The key challenge of personalized federated learning (PerFL) is to capture the statistical heterogeneity properties of data with inexpensive communications and gain customized performance for participating devices. To address these, we introduced personalized federated learning in multi-tier architecture (\ourmodel{}\footnote{ \url{https://github.com/sourasb05/PerMFL_1.git}}) to obtain optimized and personalized local models when there are known team structures across devices. We provide theoretical guarantees of \ourmodel{}, which offers linear convergence rates for smooth strongly convex problems and sub-linear convergence rates for smooth non-convex problems. We conduct numerical experiments demonstrating the robust empirical performance of \ourmodel{}, outperforming the state-of-the-art in multiple personalized federated learning tasks.

\keywords{Personalized federated learning  \and Multi-tier federated learning \and Hierarchical federated learning.}
\end{abstract}

\section{Introduction}

Federated learning (FL) is a distributed on-device learning framework that employs the heterogeneous data privately available at the edge for learning. In classical machine learning, edge devices are supposed to send data to the centralized server for training. However, FL relaxes this restriction by enabling the training of each model on the end devices and aggregating them on the global server. Classical FL learns a single global model by utilizing the private data of the devices locally and exchanging only the model information in a communication-efficient and privacy-preserving manner \cite{mcmahan2017communication}. 

The early FL literature focuses mainly on a simplistic network architecture where all devices communicate directly with a single server \cite{mcmahan2017communication}. An example of such a network architecture is typical on a local area network (LAN) with all devices connected to a single server. However, real-world Internet applications often occur on more complex, multi-tiered network architecture, e.g., wide area networks (WAN) that span multiple geographic locations and connect heterogeneous LANs. Hence, the conventional FL architecture is not suitable for such a setting. To address this, we study a multi-tier FL model \Cref{fig: proposed} where an intermediate layer of mediators, called \emph{team servers ($TS_i$)}, acts as a bridge between the end devices ($N_i$) and the global server ($GS$) and facilitates intermediate aggregations. From a systems standpoint, this multi-tier FL model resembles the cloud-edge continuum architecture \cite{wu2021hierarchical}. In this model, the distant cloud serves as a central server responsible for generating a global model, and the edge servers located in various geographical regions act as team servers that establish connections with both the distant cloud and end devices. In contrast, the end devices perform local model computations. Multi-tier FL models, which are also referred to as hierarchical FL, have been studied by researchers in various applications and have shown significant advantages such as cost efficiency and scalability \cite{ekmefjord2022scalable}, reduced communication overheads \cite{abad2020hierarchical, liu2022hierarchical}, enhanced privacy \cite{wainakh2020enhancing}, improved system adaptability and performance \cite{lin2018don}, and faster convergence speeds (both theoretically and empirically) and reduced training time \cite{liu2020client}.

Data heterogeneity poses a significant challenge in FL, as data across different devices may exhibit varying characteristics and originate from diverse data distributions. The conventional FL systems assume that a single global model fits all devices, hindering the convergence speed and more crucially, the model accuracy when data is disseminated in a non-independent and identically distributed (non-IID) manner. In this scenario, using a `global model for all' disallows adaptation to unique needs and preferences embedded in each user's data characteristics, possibly leading to subpar performance and user dissatisfaction \cite{tan2022towards}. To tackle this challenge, personalized FL (PerFL) methods learn local models suitable for each user's unique needs for downstream tasks while still benefiting from collaborative training to achieve customized performance. Popular PerFL approaches include regularization techniques that penalize the distance between local and global models \cite{li2021ditto}, smoothing techniques such as Moreau envelopes \cite{t2020personalized}, and other heuristics such as taking extra local steps after the global model has converged \cite{finn2017model, fallah2020personalized}. 
 
Multi-tier FL also suffers from data heterogeneity, but the existing literature on personalized multi-tier FL is limited. To address this challenge, we introduce a personalized multi-tier federated learning method (\ourmodel{}). Our method is based on a new problem formulation that explicitly incorporates individual models for each team and device, in addition to the global server's model. We enforce the proximity between team models and the global model and between device models and their associated team models using squared Euclidean distance regularization. As a result, our method leverages the multi-tiered architecture and simultaneously learns three models: (1) a global model, (2) a personalized model for each team, and (3) a personalized model for each device. Geometrically, the global model serves as a central estimate that is agreed upon by all teams and end devices. On the contrary, personalized models are designed to deviate from the global model in specific directions that align with their local data distributions. To facilitate efficient communication, our algorithm restricts direct communication between devices and the global server, allowing devices to communicate only with the team servers, which in turn communicate with the global server.  Our primary goal with \ourmodel{} is to achieve personalized on-device model performance while still maintaining comparably high accuracy for the global model, which can compete with conventional FL methods, all while ensuring efficient communication and collaboration among the devices and team servers.

\vspace{2em}
\noindent Our key \textbf{contributions} are as follows: 
\begin{enumerate}

\item We formulate an optimization problem for multi-tier personalized FL by introducing personal decision variables for teams and devices through squared Euclidean distance regularization, and we propose an algorithm (\ourmodel{}) to solve this problem. Our algorithm flexibly accommodates local objectives during joint training of global and personalized models.
\item To provide a theoretical basis for our approach, we analyze the convergence guarantees of the proposed algorithm under the assumptions of smooth strongly convex and smooth non-convex loss functions. We show that the method converges with linear and sublinear rates, respectively, and we derive explicit theoretical bounds on hyperparameter settings to provide guidance for implementation. 
\item We conduct extensive numerical experiments to evaluate the empirical performance of \ourmodel{}. We compare our method to state-of-the-art (SOTA) approaches for both conventional and multi-tier FL settings using benchmark datasets (MNIST, FMNIST, EMNIST-10, FEMNIST,CIFAR100) and non-image tabular synthetic datasets with non-IID data dissemination. Moreover, we have examined the effect of hyperparameters on the convergence of \ourmodel{}, ablation studies on different team formations, and ablation studies on team and client participation on the convergence of \ourmodel{}.
\end{enumerate}

\section{Related Work} \label{sec : rel_work}
 
This section reviews existing studies and summarizes the differences between existing works and proposed efforts. We emphasize three different categories of FL models - multi-tier and personalized FL. 

\textbf{Multi-tier FL.} A multi-tier (\textit{aka} hierarchical) FL model leverages the combined capabilities of cloud and edge devices within the FL framework. In \cite{liu2020client}, the authors demonstrated that a multi-tier FL strategy, both theoretically and empirically, exhibits a faster convergence rate compared to traditional FL algorithms. In \cite{wang2022demystifying}, the concept of ``upward'' and ``downward'' divergences were introduced, exploring their implications in the context of multi-tier FL. Additionally, \cite{das2022cross} presented Cross-Silo FL, which encompasses multi-tier networks and utilizes vertical and horizontal data partitioning strategies. In  \cite{ekmefjord2022scalable}, they introduced FEDn, a multi-tier FL framework designed explicitly for horizontally scalable distributed deployments.  \cite{wainakh2020enhancing} identified several potential advantages of multi-tier FL in addressing privacy concerns. Firstly, multi-tier FL helps reduce the concentration of power and control in a central server, promoting a more distributed and decentralized approach. Secondly, multi-tier FL allows for the flexible placement of defense and verification mechanisms within the hierarchical structure, enabling the practical application of these methods. Lastly, multi-tier FL leverages the trust between users to mitigate the number of potential threats. 

\textbf{Personalized FL.} FedAvg \cite{mcmahan2017communication} is a classical baseline method in FL, renowned for its simplicity and low communication cost. However, it faces challenges when dealing with heterogeneous (non-iid) data, leading to unstable performance due to the concept drift problem. Concept drift arises when a single global model does not perform well for all clients. To address these issues, FL has increasingly leaned toward personalized models \cite{karimireddy2020scaffold, mansour2020three, tan2022towards}. In \cite{finn2017model}, the authors proposed a personalized version of model-agnostic meta-learning (MAML) for FL. They identified the problem of how fast the initial shared model adapts to their local dataset with fewer gradient descent steps using individual client data. In \cite{mansour2020three}, a systematic learning-theoretic study of personalization led to the proposal of three model-agnostic approaches: user clustering, data interpolation, and model interpolation. Another approach, presented in \cite{t2020personalized}, formulated a personalized FL problem that utilized Moreau envelopes to regularize devices' loss functions. In \cite{lyu2022personalized}, the authors introduced an Asynchronous Loopless Local Gradient Descent (Async-L2GD) method for users from multiple known clusters, simultaneously training three models: a global model, a model-specific cluster, and a personalized model for each device. The architecture is similar to \ourmodel{}, except that L2GD is an asynchronous approach. \cite{nguyen2022self} introduced DemLearn, an FL algorithm that employs hierarchical agglomeration clustering. Unlike our model, where teams remain static throughout the FL process, DemLearn dynamically assembles teams after each global round. Recent research on personalized FL also includes works such as \cite{zhang2022personalized, gasanov2022flix, pillutla2022federated, chen2022pfl, tan2022towards, tziotis2022straggler, li2021ditto}.

\paragraph{Motivation.}
A hierarchical structure in FL addresses the scalability and failure tolerance limitations observed in the centralized architecture \cite{wainakh2020enhancing}. In addition, it is also beneficial in addressing the management difficulties, system adaptivity, and performance \cite{lin2018don} that arise due to fully decentralized architecture \cite{wainakh2020enhancing}. The key challenge of personalization is the heterogeneity of data, locally customized models, and identifying and collaborating among clients those having similar information \cite{lyu2022personalized}. By adding personalization at team and device levels, we aim to capture customized and refined personalized properties of the model that align well with real-world applications \cite{wu2021hierarchical, zhou2023toward}. Communication with the global server is often the most expensive step in FL \cite{abdellatif2022communication}, and communications within a team are typically cheaper \cite{liu2020client}. By employing the multi-tier architecture and accommodating a large portion of the communication within the teams, \ourmodel{} economizes significantly on the communication iterations with the global server, preventing biases to local clients, and achieving faster convergence \cite{wainakh2020enhancing}. These advantages motivated us to propose multi-tier FL.

\section{\ourmodel{}} \label{sec : prop_app}

A multi-tier FL framework (see \Cref{fig: proposed}) is different from the conventional FL framework (see \Cref{fig: existing}) as it follows a hierarchy. All devices are divided into M teams. Each team has a team server ($TS_i$), which is connected with $N_i$ devices of the respective team. Global server $(GS)$ only communicates with $TS_i$'s team. 

\begin{figure}[!ht]
    \centering
    \begin{subfigure}{0.5\linewidth}
        \includegraphics[width=\textwidth]{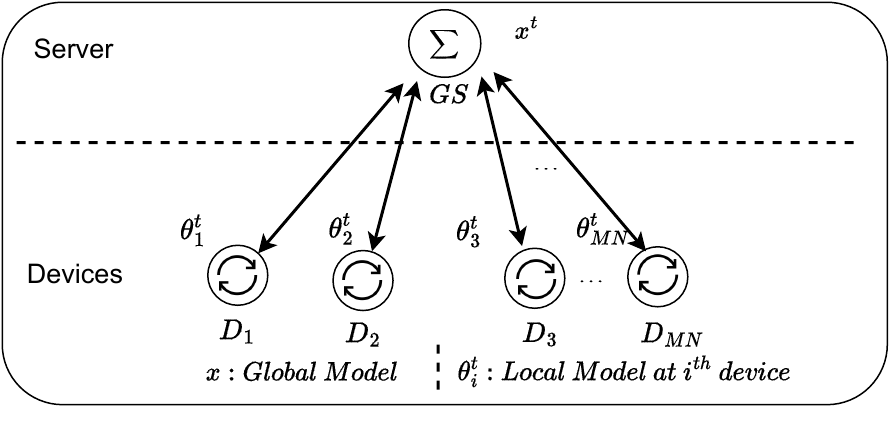}
        \vspace{-2.25em}
        \caption{Conventional FL framework}
        \label{fig: existing}
    \end{subfigure}
    \begin{subfigure}{0.48\linewidth}
        \includegraphics[width=\textwidth]{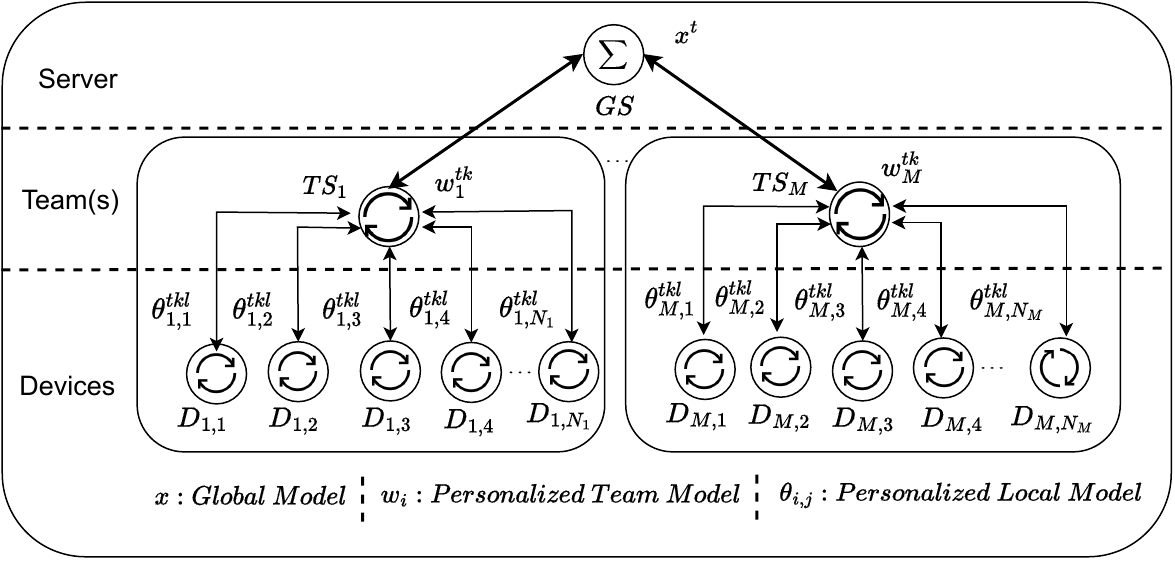}
        \caption{Multi-tier FL framework}
        \label{fig: proposed}
    \end{subfigure}
    \caption{Federated learning work-flow}
    \label{fig: framework}
\end{figure}
\vspace{-0.3in}
\subsection{Team formation } \label{subsec: team}
By definition, FL can be seen as a cross-silo or cross-device setting \cite{kairouz2021advances}. In cross-silo settings, a limited number of devices, typically between 2 and 100, participate in each round, and their participation remains fixed. On the other hand, in cross-device setups, the client pool is extensive, potentially in the millions, but only a small fraction of devices take part in each iteration \cite{kairouz2021advances}.   In a multi-tier structure, teams can be constituted in four ways: (1) Both teams and devices within teams have full participation, (2) Teams have full participation, whereas devices within teams are participating partially, (3) Teams have partial participation, but all devices within teams are participating, and (4) Teams and devices both have partial participation.

Various FL methods with different team formation strategies have been proposed in the literature \cite{long2023multi, yan2023clustered}. \ourmodel{} does not explicitly address the creation of teams. Instead, it exhibits adaptability to accommodate any team formation mechanism. It is important to note that \ourmodel{} is most efficient when communication with the team servers is cheaper compared to communication with the global server. This situation often occurs when devices are geographically distributed and connected to their nearest teams, similar to the Cloud-Edge model \cite{bittencourt2018internet}. We can also argue that team-level personalization is more effective when the data within each team exhibits distinctive characteristics that differentiate it from the data in other teams. Nonetheless, we provide a comprehensive ablation study that thoroughly examines different types of team selection, including poorly constructed and randomly constructed team formations. In cases where teams do not exhibit such distinctive characteristics, our model can still benefit from device-level personalization and reduced communication costs with the global server.

\subsection{\ourmodel{} formulation}

We consider a multi-tier FL setup, 
consisting of $M$ teams, each with $N_i$ devices ($i = 1,\ldots,M$). 
In this setup, empirical risk minimization can be expressed as:
\begin{equation}\label{eqn:ERM-multi}
    \min_{x \in \mathbb{R}^d} \quad  \frac{1}{M} \sum_{i=1}^M \frac{1}{N_i} \sum_{j=1}^{N_i} f_{i,j} (x).
\end{equation}
Here, $f_{i,j}(\cdot)$ represents the loss function of the $j^{th}$ device from the $i^{th}$ team. 
This formulation relies on a single decision variable $x$ that all clients are expected to converge upon. 
However, this model is unsuitable for scenarios involving non-homogeneous data distributions, as the existence of a global model capable of accommodating all devices becomes unreasonable. 

To address this challenge, we introduce decision variables for every team and device, denoted as $w_i$ and $\theta_{i,j}$, respectively. Our goal is to find a global model representing a rough average, capturing common characteristics across all teams and devices; personalized team models that are close to the global model but can deviate from aligning with shared features within each team; and personalized device models that resemble the team model but can deviate to accommodate the unique characteristics of each device. We adopt a quadratic penalty approach to enforce that the device models are close to the team models and the team models to the global model. The parameters $\gamma \geq 0$ and $\lambda \geq 0$ control the degree of personalization impact at the team and device levels respectively. 
\begin{equation}
\label{eqn:model-problem}
    \hspace{-0.5em} \min_{x \in \mathbb{R}^d} \, \min_{w_i \in \mathbb{R}^d} \, \min_{\theta_{i,j} \in \mathbb{R}^d} \frac{1}{M} \sum_{i=1}^M  
    \frac{1}{N_i} \sum_{j=1}^{N_i}  \Big( f_{i,j} (\theta_{i,j}) \!+\!  \frac{\lambda}{2} \| \theta_{i,j} - w_i \|^2 \!+\! \frac{\gamma}{2} \| w_i - x \|^2  \Big).
\end{equation}
We are now prepared to outline the algorithm design. Our approach involves the use of three iteration counters: $t$ for the global rounds, $k$ for team-level rounds, and $\ell$ for device-level rounds. 

\paragraph{Device-level updates.} Given a team model $\smash{w_i^{t,k}}$, the objective of the device $(i,j)$ is to solve the following subproblem:
\begin{equation}
\label{eqn:moreau-device-subproblem}
   \tilde{f}_{i,j}^\lambda (w_i^{t,k}) := \min_{\theta_{i,j} \in \mathbb{R}^d} ~ f_{i,j}(\theta_{i,j} ) + \frac{\lambda}{2} \| \theta_{i,j} - w_i^{t,k} \|^2
\end{equation}
The exact solution to this problem is $\smash{\mathrm{prox}_{f_{i,j}/\lambda}(w_i^{t,k})}$; however, in general, there is no closed-form solution available for this proximal operator. Consequently, each device employs the gradient method to approximate the solution to \eqref{eqn:moreau-device-subproblem}.
Starting with an initial value of $\smash{\theta_{i,j}^{t,k,0} = w_i^{t,k}}$, and utilizing a positive step-size $\alpha$, we perform the following update rule for $l=0,1,\ldots,L-1$:
\begin{equation} \label{eqn:device_p_GD}
\theta_{i,j}^{t,k,l+1} \! = \theta_{i,j}^{t,k,l} \!-\! \alpha_{i,j} \nabla f_{i,j}(\theta_{i,j}^{t,k,l}) \!-\! \alpha_{i,j}  \lambda (\theta_{i,j}^{t,k,l} - w_i^{t,k}).
\end{equation}
\paragraph{Team-level updates.} Similarly, the goal in team-level updates is to solve a regularized subproblem. We define the team-level loss function as 
\begin{equation}
\begin{gathered}
F_i(w_i) := \frac{1}{N_i} \sum_{j=1}^{N_i} \tilde{f}_{i,j}^\lambda (w_i). 
\end{gathered}
\end{equation}
With the addition of regularization towards the global server's model $x^t$, Team $(i)$ aims to solve the following subproblem:
\begin{equation}
\label{eqn:moreau-team-subproblem}
clust    \tilde{F}_i^\gamma (x^t) := \min_{w_i \in \mathbb{R}^d} ~ F_i(w_i) + \frac{\gamma}{2} \| w_i - x^t \|^2.
\end{equation}
Once again, the solution is given by the proximal operator, $\mathrm{prox}_{F_i/\gamma}(x^t)$, which can be difficult to compute. Instead, we can find an approximate solution by using the gradient method. Starting from $\smash{w_i^{t,0} = x^t}$, and using a positive step-size $\eta > 0$, the gradient method update becomes
\begin{equation}\label{eqn:prox_Fi_GD-exact}
\begin{aligned}
    w_i^{t,k+1} 
    & = w_i^{t,k} - \eta_{i}  \nabla F_i(w_i^{t,k}) - \eta_{i}  \gamma (w_i^{t,k} - x^t) \\
    & = w_i^{t,k} -  \frac{\eta_{i} }{N_i} \sum_{j=1}^{N_i} \nabla \tilde{f}_{i,j}^\lambda (w_i^{t,k}) - \eta_{i}  \gamma (w_i^{t,k} - x^t). 
\end{aligned}
\end{equation}
Here, the second line follows from the definition of $F_i$. 
Since we do not have the exact gradient $\nabla \tilde{f}_{i,j} (w_i^{t,k})$, we approximate it by:
\begin{equation}
\label{eqn:approx-device-grad}
    \nabla \tilde{f}_{i,j}^\lambda (w_i^{t,k})
    = \lambda \big( w_i^{t,k} - \mathrm{prox}_{f_{i,j}/\lambda}(w_i^{t,k}) \big) 
    \approx \lambda ( w_i^{t,k} - \theta_{i,j}^{t,k,L} ).
\end{equation}
By combining \eqref{eqn:prox_Fi_GD-exact} and \eqref{eqn:approx-device-grad}, we construct the following update rule for $k=0,1,\ldots,K-1$:
\begin{equation}\label{eqn:prox_Fi_GD}
\begin{aligned}
    w_i^{t,k+1} \!=\! \big(1 \!-\! \eta_{i} (\lambda\!+\!\gamma)\big) w_i^{t,k} + \eta_{i}  \gamma x^t + \frac{\lambda\eta_{i}}{N_i} \sum_{j=1}^{N_i} \theta_{i,j}^{t,k,L}.
\end{aligned}
\end{equation}
\paragraph{Server-level updates.} Finally, the server applies the gradient method for 
\begin{equation}\label{eq:definition of phi}
    \min_{x\in \mathbb{R}^d} ~~ \phi(x) := \frac{1}{M} \sum_{i=1}^M \tilde{F}_i^\gamma(x). 
\end{equation}
Starting from an initial state $\smash{x^0 \in \R^d}$ and using a positive step-size $\beta$, the gradient update becomes:
\begin{equation}\label{eqn:global_update-pre}
    x^{t+1} 
    = x^t - \frac{\beta}{M} \sum_{i=1}^M \nabla \tilde{F}_i^\gamma (x^t). 
\end{equation}
Again, we use an approximation for $\smash{\nabla \tilde{F}_i (x^t)}$ based on the team-level models:
\begin{equation}\label{eqn:grad-moreau}
    \nabla \tilde{F}_i^\gamma(x^t) = \gamma \big( x^t - \mathrm{prox}_{F_i/\gamma}(x^t) \big) \approx \gamma ( x^{t} - w_{i}^{t,K} ).
\end{equation}
Finally, we construct the server-level update rule by combining \eqref{eqn:global_update-pre} and \eqref{eqn:grad-moreau}. 
For $t=0,1,\ldots,T-1$, the server performs the following update rule:
\begin{equation}\label{eqn:global_update}
    x^{t+1} = (1-\beta \gamma) x^t + \frac{\beta\gamma}{M}\sum_{i=1}^M  w_{i}^{t,K}.
\end{equation}
\paragraph{Synthesis.} 
By combining device, team, and server-level updates, we propose \ourmodel{} (\Cref{alg:pfedmt_algo}) for personalized multi-tier FL. 

\begin{algorithm}[ht!]
\scriptsize
{
\caption{\ourmodel{} : Personalized Multi-tier FL }\label{alg:pfedmt_algo}
\textbf{Input :} $x^0$ 

\textbf{Output :} $\forall_{i=1}^M \forall_{j=1}^{N_i} \theta^{T,K,L}_{i,j}$, $x^T$ 

\textbf{Initialize :} $\forall_{i=1}^{M} w_i^{0,0} = x^0$ , $\forall_{i=1}^M \forall_{j=1}^{N_i} \theta_{i,j} = w_i^{0,0}$, $T$,$K$,$L$,$\alpha_{i,j}$, $\beta$, $\gamma$, $\lambda$, $\eta_i$ \\[0.5em]

\begin{algorithmic}[1]
{
\FOR[Global iterations]{$t = 0,1,\ldots,T-1$} \label{step: ge} 
    \STATE global server sends $x^t$ to the teams. \label{step: gm}  \COMMENT{Global model}
    \FOR[Team iterations]{$k = 0,1,\ldots,K-1$}  \label{step: te}   
        \STATE Teams send $\witk$ to the devices. \label{step: tm} \COMMENT{Team-level personalized model}
        \FOR[Local iterations]{$l = 0,1, \ldots,L-1$} \label{step: le}  
            \STATE $\theta_{i,j}^{t,k,l+1} = \theta_{i,j}^{t,k,l} - \alpha_{i,j} \nabla f_{i,j}(\theta_{i,j}^{t,k,l}) - \alpha_{i,j} \lambda (\theta_{i,j}^{t,k,l} - w_i^{t,k})$
            \label{step: theta}
            \COMMENT{Personalized local models}
        \ENDFOR \label{step: end_le}
        \STATE $\bar{\theta}_i^{t,k} =  \frac{1}{N_i}\sum_{j=1}^{ N_i} \theta_{i,j}^{t,k,L}$ \label{step: theta_bar} \COMMENT{Aggregation within a team}
        \STATE $w_i^{t,k+1} =(1- \eta_i \lambda - \eta_i\gamma) w_i^{t,k} + \eta_i \gamma x^t + \lambda\eta_i \bar{\theta}_i^{t,k} $ \label{step: w}\COMMENT{Personalized team update}
    \ENDFOR \label{step: end_te}
    \STATE  $\bar{w}^t = \frac{1}{M}\sum_{i=1}^{M} w_{i}^{t,K}$ \label{step: g_agg}\COMMENT{Global aggregation} 
    \STATE $x^{t+1} = (1-\beta\gamma)x^t + \beta\gamma\bar{w}^t$ \label{step: gu} \COMMENT{Global update} 
\ENDFOR \label{step: end_ge}
}
\end{algorithmic}
}
\end{algorithm}

\textit{Initialization:} Global server initializes the global model ($x^0$). Every team connected to the global server, copies the global model to their respective team model ($\forall_{i=1}^{M}w^{0,0}_i$ = $x^0$). Each device within a team copies the initial team model ($w_i^{0,0}$) as their local model ($\theta_{i,j}$). Global server initializes the total number of global iterations, team iterations, and local iterations as $T, K,$ and $L$, respectively.

\textit{Iterations:} At each global iteration $t$, the global server broadcasts the global model $x^t$ to every team. Similarly, at each team iteration $k$, each team broadcasts $w_i^{t,k}$ to all the devices within the team. For each local iteration ($l$), each device solves \eqref{eqn:moreau-device-subproblem} separately, but in parallel to obtain the personalized model $\smash{\theta_{i,j}^{t,k,l}}$. The team server of each team collects the device updates from the respective devices after $L$ local iterations and performs aggregation ($\smash{\bar \theta_i^{t,k}}$) on the device updates. Each team broadcasts the updated team-level model to the devices registered with that team and continues steps 3 to 10 for the next $K-1$ team iterations. After all teams finished $K$ team iterations, the global server collects team updates ($w_i^{t,K}$) from each team and performs averaging ($\bar w^t$) on team updates over M teams. The global server produces a global update ($x^t$) by solving \eqref{eqn:global_update}.
The global server broadcasts the updated global model to teams and continues steps 1 to 13 for the upcoming $T-1$ global iterations.

\begin{remark}
The quadratic penalty approach leads to the concept of Moreau envelopes, a mathematical tool frequently employed in optimization theory for smoothing functions. For a function $g: \mathbb{R}^d \to \mathbb{R}$, we define the Moreau envelope $\tilde{g}^\sigma: \mathbb{R}^d \to \mathbb{R}$ with parameter $\sigma \geq 0$ as 
\begin{equation}
\label{eqn:moreau-envelope}
    \tilde{g}^\sigma(x) := \min_{u \in \mathbb{R}^d}  ~ \Big\{ g(u) + \frac{\sigma}{2} \| u- x\|^2 \Big\}.
\end{equation}

Clearly, we can interpret \eqref{eqn:moreau-device-subproblem} and \eqref{eqn:moreau-team-subproblem} 
as the Moreau envelopes of $f_{i,j}$ and $F_i$, respectively. It is important to note that Moreau envelopes have been used before in \cite{t2020personalized} for personalization in the conventional FL setting. 
\end{remark}

\subsection{Convergence guarantees} \label{sec : convergence_analysis}

This section presents the convergence guarantees of \ourmodel{}. We consider two different settings, with strongly convex and non-convex loss functions. In both cases, we assume that the loss functions are smooth in the sense that they have Lipschitz continuous gradients. 
The next Theorem formalizes the guarantees of when $\fij$ is strongly convex.

\begin{theorem}[Strongly convex]  \label{PfedMT:theorem: analysis of the strongly convex case}
Consider the minimization problem $\min_x \phi(x)$ when $\phi(x)$ is defined in \eqref{eq:definition of phi} with $L_f$-smooth and $\mu_f$-strongly convex loss functions $\fij(x)$. For large enough numbers of inner iterations of orders $L=\Omega \big(K \big)$ and $K=\Omega \big(T \big)$, see the supplementary copy for the bounds, estimation $\{\xt\}_{t=0}^T$ generated by \ourmodel{} with step-size  $\beta$ satisfies:
\begin{align}
\|\xT-x^*\|^2 \leq 2(1-\beta )^T \|x^0- x^*\|^2.
\end{align}
 where learning rates should satisfy $\beta \leq \frac{\muFtilde}{4\gamma}$, $\eta_i\leq \frac{1}{2(\lambda+\gamma)}$, $\alpha_{i,j}  \leq \frac{1}{\Lflambda}$, $\muFtilde:=\frac{\lambda \gamma \mu_f}{\lambda \mu_f+ \gamma \mu_f+\lambda \gamma }$, and $\gamma > 2 \lambda > 4 L_f $.
\end{theorem}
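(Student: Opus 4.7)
The plan is to view the server update \eqref{eqn:global_update} as an inexact gradient step on the outer function $\phi$ defined in \eqref{eq:definition of phi}, and to argue linear convergence by controlling the approximation error geometrically at each of the three nested loops. Before doing so, I would establish the smoothness and strong convexity of the Moreau envelopes: if $\fij$ is $\mu_f$-strongly convex and $L_f$-smooth, then $\ftildeij$ is $\tfrac{\lambda\mu_f}{\lambda+\mu_f}$-strongly convex (with smoothness constant at most $\lambda$), and applying the envelope identity once more to $F_i$ produces $\Ftildei^\gamma$ with strong convexity constant $\muFtilde = \tfrac{\lambda\gamma\mu_f}{\lambda\mu_f+\gamma\mu_f+\lambda\gamma}$, matching the statement. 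Consequently $\phi$ is smooth and $\muFtilde$-strongly convex, so it admits a unique minimizer $x^*$.

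Next I would introduce the exact proxies $\witstar := \prox_{F_i/\gamma}(\xt)$ and $\thetaijtkstar := \prox_{\fij/\lambda}(\witk)$ and rewrite the server update as
\begin{equation*}
\xtt = \xt - \beta \nabla \phi(\xt) + \frac{\beta\gamma}{M}\sum_{i=1}^M (\witK - \witstar),
\end{equation*}
i.e.\ as inexact gradient descent on $\phi$. A standard inexact GD analysis for strongly convex smooth functions then gives a recursion of the form
\begin{equation*}
\nsq{\xtt - x^*} \leq (1-c\beta\muFtilde)\,\nsq{\xt - x^*} + C\,\beta^2\gamma^2 \cdot \frac{1}{M}\sum_{i=1}^M \nsq{\witK-\witstar},
\end{equation*}
with $c,C>0$ from a Young split. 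The team update \eqref{eqn:prox_Fi_GD} is itself an inexact gradient step on the strongly convex function $F_i(w_i)+\tfrac{\gamma}{2}\|w_i-\xt\|^2$, whose gradient noise is driven entirely by the device-level approximation \eqref{eqn:approx-device-grad}. The device subproblem \eqref{eqn:moreau-device-subproblem} is $\muflambda$-strongly convex and $\Lflambda$-smooth, so with step-size $\alpha_{i,j}\le 1/\Lflambda$ and $L$ inner iterations starting from $\theta_{i,j}^{t,k,0}=\witk$, classical strongly convex GD yields $\nsq{\thetaijtkL-\thetaijtkstar}\leq (1-\alpha_{i,j}\muflambda)^{2L}\,\nsq{\witk-\thetaijtkstar}$.

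Plugging the device bound into the team recursion and using $\eta_i\leq 1/(2(\lambda+\gamma))$, iterating $K$ team rounds produces $\nsq{\witK-\witstar}\leq A^K\,\nsq{w_i^{t,0}-\witstar}+B\cdot(\text{device noise})$, where $A<1$ comes from the team-level contraction governed by the strong convexity of the penalized subproblem, and $B$ is polynomial in the problem constants. The main obstacle will be closing this telescoping system: I would prove by induction on $t$ that $\nsq{\xt-x^*}\leq 2(1-\beta)^t\nsq{x^0-x^*}$, which requires the compounded device-plus-team error at iteration $t$ to be a strictly smaller multiple of $(1-\beta)^{t+1}\nsq{x^0-x^*}$. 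This forces $K$ to grow linearly in $T$ (so the team-level factor $A^K$ dominates the outer rate) and $L$ to grow linearly in $K$ (so the device-level factor dominates the team rate), yielding the stated orders $K=\Omega(T)$ and $L=\Omega(K)$. The conditions $\gamma>2\lambda>4L_f$ and $\beta\le\muFtilde/(4\gamma)$ enter exactly here, ensuring both that the Young-split constants do not destroy the outer contraction and that the three nested subproblems remain uniformly well conditioned across iterations.
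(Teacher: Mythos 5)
Your plan is correct and follows essentially the same route as the paper's own proof: a three-level inexact-gradient analysis (exact strongly convex GD at the device level, inexact GD with device-driven noise on the $(\mu_F+\gamma)$-strongly convex team subproblem, and inexact GD on $\phi$ at the server with Young's inequality), combined by forcing each inner error to decay geometrically so that $L=\Omega(K)$ and $K=\Omega(T)$. Apart from minor constant-level details (e.g.\ the exact $\beta$-dependence of the server error coefficient and the contraction exponent at the device level), this matches the paper's Theorems on server-, team-, and client-level convergence and their combination.
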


\noindent\textit{Proof sketch}.~
We analyze the algorithm in three levels: 
{\em(i)}~The devices find approximate solutions to problem (\ref{eqn:moreau-device-subproblem}) by using the gradient method. We can control the accuracy of this stage by choosing $L$ (the number of iterations for the gradient method) large enough. {\em(ii)}~The teams solve problem (\ref{eqn:moreau-team-subproblem}) approximately, again by using a gradient method. We use an inexact gradient at this stage since the exact gradient requires exact solutions from the devices to which we do not have access. At this stage, $K$ is the number of iterations, and $L$ modulates the accuracy of our gradients. By choosing both $K$ and $L$ large enough, we can control the solution accuracy achieved at the teams' level. {\em(iii)} Finally, the Server solves the problem (\ref{eq:definition of phi}), the original FL problem, by using the information provided by the Teams. 
It is worth noting that for a fixed $T$, we can decrease the error (down to a threshold) by increasing $K$ and $L$. More precisely, we achieve linear convergence rates when we choose $K$ and $L$ in the order of $\Omega{(T)}$. The complete proof is left to the supplementary material. 

The next Theorem shows that \ourmodel{} finds a first-order stationary point with sublinear rates when $\fij$ are smooth but non-convex.  
 
\begin{theorem}[Non-convex] \label{theorem} \label{PfedMT:theorem: analysis of the non-convex case}
Consider the minimization problem $\min_x \phi(x)$ when $\phi(x)$ is defined in \eqref{eq:definition of phi} with non-convex $L_f$-smooth loss functions $\fij(x)$. For large enough numbers of inner iterations of orders $ L=\Omega \big( K\big) $ and $ K=\Omega \big( T\big) $, see the appendix for the bounds, then, estimation $\{\xt\}_{t=0}^T$ generated by \ourmodel{}  with step-size $\beta$ satisfies: 
\begin{align}  \label{PfedMT:theorem: analysis of the non-convex case eq1}
\mathbb{E} \big[ \nsq{\nabla\phi(x^{\tilde{t}})}  \big] 
&\leq  \frac{\phi(x^0)-\phi(x^*)}{\beta T}
\end{align}
where $\beta \leq \frac{1}{4\gamma}$, $\eta_i \leq \frac{1}{\lambda+\gamma}$, $\alpha_{i,j} \leq \frac{1}{\lambda}$, $\gamma > 2 \lambda > 4 L_f $, and $\Tilde{t}$ is uniformly sampled from $\{0, \ldots, T-1\}$
\end{theorem}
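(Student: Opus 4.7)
}
The plan is to reduce the analysis to a standard descent-lemma argument on the smooth server objective $\phi$ and then absorb the cascading device- and team-level approximation errors into the leading descent term. The key structural observation is that under $\gamma > 2\lambda > 4L_f$, each Moreau envelope in the hierarchy is well defined and smooth: since $\lambda > L_f$, the inner objective $f_{i,j}(\theta) + \tfrac{\lambda}{2}\|\theta - w_i^{t,k}\|^2$ is $(\lambda-L_f)$-strongly convex, so $\tilde f_{i,j}^\lambda$ (and hence $F_i$) is $\Lftilde$-smooth for an explicit constant, and since $\gamma > \Lftilde$ the team envelope $\tilde F_i^\gamma$ is $\LFtilde$-smooth with $\LFtilde \leq \gamma$. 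Consequently $\phi$ is $L_\phi$-smooth with $L_\phi \leq \gamma$, which is what makes the choice $\beta \leq 1/(4\gamma)$ sufficient for descent.

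First I would run a device-level contraction: under $\alpha_{i,j} \leq 1/\lambda \leq 1/(\lambda-L_f + L_f)$, gradient descent on the strongly convex device subproblem \eqref{eqn:moreau-device-subproblem} gives a linear bound $\|\theta_{i,j}^{t,k,L} - \prox_{f_{i,j}/\lambda}(w_i^{t,k})\|^2 \leq \rho_1^{L}\, \|\theta_{i,j}^{t,k,0} - \prox_{f_{i,j}/\lambda}(w_i^{t,k})\|^2$ for some $\rho_1 \in (0,1)$ depending on $L_f, \lambda$. This immediately controls the surrogate-gradient error in \eqref{eqn:approx-device-grad} via $\|\nabla \tilde f_{i,j}^\lambda(w_i^{t,k}) - \lambda(w_i^{t,k} - \theta_{i,j}^{t,k,L})\| = \lambda\|\theta_{i,j}^{t,k,L}-\prox_{f_{i,j}/\lambda}(w_i^{t,k})\|$. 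Plugging this into the team-level update \eqref{eqn:prox_Fi_GD} and recognizing it as inexact gradient descent on the $(\gamma - \Lftilde)$-strongly convex team objective (valid since $\gamma > \Lftilde$), with $\eta_i \leq 1/(\lambda+\gamma)$, gives a second linear bound $\|w_i^{t,K} - \prox_{F_i/\gamma}(x^t)\|^2 \leq \rho_2^{K}\, \|x^t - \prox_{F_i/\gamma}(x^t)\|^2 + C_1 \rho_1^{L}$, where the second term collects the propagated device error. Choosing $L = \Omega(K)$ with a sufficiently large constant makes $C_1 \rho_1^L \lesssim \rho_2^K$, so the team-level approximation error is effectively $\rho_2^K$-small.

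Finally I would apply the standard smooth descent lemma to $\phi$ at $x^t$. Writing the server update \eqref{eqn:global_update} as $\xtt = \xt - \beta \bar g^t$ with $\bar g^t := \tfrac{\gamma}{M}\sum_i (\xt - w_i^{t,K})$, and decomposing $\bar g^t = \nabla \phi(\xt) + e^t$ where $\|e^t\|^2 \leq \gamma^2 \cdot \tfrac{1}{M}\sum_i \|w_i^{t,K} - \prox_{F_i/\gamma}(\xt)\|^2$, the descent lemma together with $\ip{a}{a+e}\geq \tfrac{1}{2}\nsq{a} - \tfrac{1}{2}\nsq{e}$ and $\beta L_\phi \leq \beta \gamma \leq 1/4$ yields
\begin{equation*}
\phi(x^{t+1}) \leq \phi(\xt) - \tfrac{\beta}{2}\nsq{\nabla \phi(\xt)} + C_2 \beta \rho_2^{K} \nsq{\xt - \vx_i^{t,\star}} + C_3 \beta \rho_1^{L},
\end{equation*}
where $\vx_i^{t,\star} = \prox_{F_i/\gamma}(\xt)$. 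Telescoping from $t=0$ to $T-1$, dividing by $\beta T$, and choosing $K = \Omega(T)$ large enough (and $L = \Omega(K)$ as above) so that the accumulated residual $T(\rho_2^K + \rho_1^L)$ is absorbed, delivers $\tfrac{1}{T}\sum_{t=0}^{T-1}\nsq{\nabla \phi(\xt)} \leq \tfrac{\phi(x^0)-\phi(x^*)}{\beta T}$. Since $\tilde t$ is uniform on $\{0,\ldots,T-1\}$, this is exactly \eqref{PfedMT:theorem: analysis of the non-convex case eq1}.

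The main obstacle is the quantitative bookkeeping: one must track how the device error feeds the team recursion as an inexact-gradient bias, how the team error then appears as a noise term in the server descent inequality, and finally extract explicit thresholds on $K, L$ (in terms of $\rho_1, \rho_2, T, \gamma, \lambda, L_f$) so that every residual term collapses below the $1/T$ rate; uniform-in-$t$ bounds on the warm-start gaps $\|x^t - \vx_i^{t,\star}\|$ and $\|w_i^{t,0} - \prox_{f_{i,j}/\lambda}(w_i^{t,0})\|$ are needed and can be obtained from the boundedness implied by the descent inequality itself.
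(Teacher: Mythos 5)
Your proposal is essentially correct and its server-level argument coincides with the paper's: both apply the descent lemma to the $\gamma$-smooth $\phi$, split the surrogate direction into $\nabla\phi(\xt)$ plus an error controlled by $\gamma^2\avgM\nsq{\witK-\prox_{F_i/\gamma}(\xt)}$, telescope, and use uniform sampling of $\tilde t$. Where you genuinely diverge is in the inner loops. You exploit the fact that $\lambda>2L_f$ and $\gamma>2\lambda$ make the device subproblem $(\lambda-L_f)$-strongly convex and the team subproblem $(\gamma-\Lftilde)$-strongly convex, and you run exact/inexact gradient-descent contraction arguments to get \emph{geometric} error decay ($\rho_1^L$, $\rho_2^K$) for the \emph{last} iterates $\thetaijtkL$ and $\witK$ — essentially importing the paper's strongly convex machinery into the non-convex regime. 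The paper instead treats both inner loops with the generic non-convex descent lemma, obtaining only $O(1/L)$ and $O(1/K)$ bounds on averaged gradient norms, and then converts these to distances via the same strong-convexity constants, but only for the \emph{best} iterates ($\thetaijtklstar$, $\witmin$), which is what its assumptions at the next level actually reference. Your route is cleaner in two respects: it analyzes the iterates the algorithm actually transmits, and its geometric decay would in principle allow $L,K$ to grow only logarithmically rather than the linear couplings $L=\Omega(K)$, $K=\Omega(T)$ that the paper's sublinear inner rates force; the paper's route, in exchange, never needs the inner contraction factors and keeps all inner analyses purely "smooth non-convex." One shared looseness you should be aware of: a descent inequality with coefficient $\beta/2$ on $\nsq{\nabla\phi(\xt)}$ can only yield $\tfrac{2(\phi(x^0)-\phi(x^*))}{\beta T}$ plus a nonnegative residual — "absorbing" the residual cannot shrink the constant to $1$ as stated in the theorem; the paper's own appendix in fact derives the bound with constant $4$, so this is a defect of the statement rather than of your argument, but you should not claim the constant-$1$ bound follows from your inequality.
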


\noindent\textit{Proof sketch}.~
The analysis follows a similar structure to the previous setting. The main difference is that the errors in subproblems are guaranteed as a bound on the gradient norms, which we translate to error bounds on objective residual by tuning $\lambda$ and $\gamma$. The proof can be found in the supplementary material. 

\begin{remark}
At first glance, it may come as a surprise that our guarantees do not necessitate a bounded drift condition, which is common in FL methods involving multiple local steps. It is important to note a fundamental distinction between the conventional FL template \eqref{eqn:ERM-multi} and our personalized multi-tier FL template \eqref{eqn:model-problem}. The former lacks consideration for data heterogeneity as it relies solely on a single global variable. This can result in `drift-away' issues when multiple local steps are taken, especially in the presence of data heterogeneity. In contrast, our formulation explicitly incorporates team and device-level variables, and our local steps are tailored to solve device and team-level subproblems \eqref{eqn:moreau-device-subproblem} and \eqref{eqn:moreau-team-subproblem}. The regularization employed in these subproblems prevents `over-drifting' by explicitly penalizing the divergence between device, team, and global models in a suitable manner.
\end{remark}

\section{Experiments} \label{sec : exp_res}
We studied classification problems to validate \ourmodel{} using both benchmarks (MNIST \citep{lecun1998gradient}, FMNIST \citep{xiao2017online} EMNIST \citep{cohen2017emnist} with 10 classes  (EMNIST-10), EMNIST with 62 classes (FEMNIST), CIFAR100 \citep{krizhevsky2009learning}) and non-image synthetic datasets. For the MNIST, FMNIST, EMNIST-10, and synthetic datasets, the data was distributed among multiple devices in a non-iid manner, ensuring each device had data from at most two classes. Subsequently, the devices were randomly grouped into four teams, each consisting of 10 devices, before performing \ourmodel{}. We considered the full participation of teams and devices in each global round for the performance and convergence evaluation. For the FEMNIST and CIFAR100 datasets, the data was distributed to 3,500 and 350 devices, respectively, with each device holding data from the 3 classes. The devices are arranged into 5 teams. All datasets are split into training and validation sets with a 3:1 ratio. 

 We considered a multi-class logistic regression (MCLR) model with a softmax activation function for strongly convex scenarios. For synthetic datasets, we constructed deep neural networks with two hidden layers, while for image datasets, we built two-layered convolutional neural networks for non-convex scenarios. More details of the experimental setup, datasets, and learning models are in the supplementary copy. 

In this paper, we conducted (1) the performance comparison between \ourmodel{} with FedAvg \citep{mcmahan2017communication}, pFedMe\citep{t2020personalized}, Per-FedAvg \citep{fallah2020personalized}, pFedBayes \citep{zhang2022personalized}, Ditto \citep{li2021ditto}, and performance and convergence comparison with two hierarchical FL algorithms, such as hierarchical-SGD (h-SGD)\citep{liu2022hierarchical}, Asynchronous L2GD (AL2GD)\citep{lyu2022personalized}, and a hierarchical-clustered FL algorithm DemLearn \citep{nguyen2022self}.
(2) We investigated the impact of $\beta$, $\gamma$, and $\lambda$ on the convergence of \ourmodel{}. (3) An ablation study to explore team formation. (4) An ablation study to analyze the influence of team and device participation. Moreover, in the supplementary copy, we gave an ablation study that explores the effects of team iterations on the convergence of \ourmodel{}.  Throughout the experiments, we denoted the personalized model and global model as (PM) and (GM), respectively We have made the implementation of \ourmodel{} available at \url{https://github.com/sourasb05/PerMFL_1.git}

\subsection{Results and Analysis}
\subsubsection{Performance:}
From \cref{tab: perf}, we observed that \ourmodel{}(PM) outperformed the state-of-the-art for non-convex cases in all datasets. \ourmodel{}(GM) outperformed other global models, including FedAvg(GM), pFedMe(GM), and Ditto(GM), and nearly equivalent with h-SGD(GM) and DemLearn(GM) for the MNIST dataset. For the synthetic dataset, \ourmodel{}(GM) outperformed the state-of-the-art. For FMNIST and EMNIST-10 datasets, the performance of \ourmodel{}(GM) is better than the conventional FL models and DemLearn(GM). For strongly convex cases, \ourmodel{}(PM) outperformed the state of the art in MNIST and Synthetic datasets. For the FMNIST dataset, \ourmodel{}(PM)'s performance is better than the conventional FL state-of-the-art. Moreover, the performance of \ourmodel{}(PM) is nearly equivalent to the DemLearn(PM) for FMNIST and EMNIST-10. \ourmodel{}(PM) also achieved better performance than h-SGD and AL2GD on FEMNIST and CIFAR100 given in the supplementary copy. \ourmodel{}(GM) outperformed the state-of-the-art in Synthetic and FMNIST datasets. \ourmodel{}(GM) performs better than conventional methods in all datasets and is nearly equivalent to h-SGD(GM) on MNIST and EMNIST-10. From these observations, we can infer that \ourmodel{}(PM) performs better than the 7 state-of-the-art methods on 6 out of 8 experiments. The reason could be, the personalization in both team and devices helps to get better performance.

\begin{table*}[!ht]
\centering
\scriptsize
\caption{Performance (Validation accuracy(mean/std)(\%)) comparison of \ourmodel{} with state-of-the-art.  } \label{tab: perf}
\begin{adjustbox}{max width=\textwidth}
\setstretch{1.1}
\begin{tabular}{| c | c | c | c | c | c |}
    \hline
    Architecture & \multicolumn{5}{| c |}{MCLR (Strongly convex)}  \\
    \cline{2-6} 
    \parbox[t]{0.5mm}{\multirow{7}{*}{\rotatebox[origin=c]{90}{Conventional}}}& Algorithm & MNIST  & Synthetic & FMNIST & EMNIST-10 \\ 
   \hline
   & FedAvg(GM)\citep{mcmahan2017communication} & 84.87 ($\pm$ 0.054) & 84.87($\pm$ 0.054) & 79.80 ($\pm 0.002$) & 91.60($\pm 0.001$) \\
   & Per-FedAvg (PM)\citep{fallah2020personalized}  & 94.81($\pm$ 0.00) & 83.91($\pm 0.15$)  & 94.75 ($\pm 0.00$) & 97.57($\pm 0.0 $)  \\

   & pFedMe(GM)\citep{t2020personalized} & 75.50($\pm 0.00$) &  81.93($\pm 0.21$) &  83.45($\pm 0.21$) & 88.78($\pm 0.01$) \\
   & pFedMe(PM)\citep{t2020personalized}  & 88.89($\pm 0.001$)  & 87.61($\pm 0.32$) & 91.32 ($\pm 0.08$) & 91.23($\pm 0.01 $) \\
   & pFedBayes(PM)\citep{zhang2022personalized} & 94.13($\pm 0.27$) & 87.05($\pm 0.5$) & 92.14($\pm 0.001$) & 94.13($\pm 0.001$) \\

   & Ditto (GM) \citep{li2021ditto} & 84.81($\pm 0.001 $) & 82.35($\pm 0.001 $) & 74.02($\pm 0.001 $) & 91.03 ($\pm 0.0003 $) \\
    \hline
   \parbox[t]{0.5mm}{\multirow{4}{*}{\rotatebox[origin=c]{90}{Multi-tier}}} & h-SGD (GM) \citep{wang2022demystifying}  & \textbf{87.41}($\pm 6.35 $) & 84.29($\pm 5.18$) & 81.653($\pm 1.8$) & \textbf{92.33}($\pm 0.001$) \\
   & AL2GD(PM) \citep{lyu2022personalized} & 93.70 ($\pm$0.13)  & 84.75($\pm 0.03$) & \textbf{98.52($\pm$0.004)} & \textbf{98.72($\pm 0.001 $)} \\

     & DemLearn (GM)\citep{nguyen2022self} & 87.32($\pm 0.002 $) & 67.93($\pm 0.04$) & 62.60($\pm 0.002$) & 69.09($\pm 0.12$) \\
   & DemLearn (PM)\citep{nguyen2022self} & 91.26 ($\pm 0.01 $)  & 81.21($\pm 0.01$) & 97.50($\pm 0.0$) & 97.24($\pm 0.005$) \\

   & \cellcolor{LightCyan} \textbf{\ourmodel{}(GM)} [ ours ]  & \cellcolor{LightCyan} 86.92($\pm 0.013$) & \cellcolor{LightCyan} \textbf{84.92}($\pm 0.06$) & \cellcolor{LightCyan} \textbf{83.71}($\pm 0.001$) & \cellcolor{LightCyan} 91.68($\pm 0.0$)\\

   & \cellcolor{LightCyan} \textbf{\ourmodel{}(PM)} [ ours ]  & \cellcolor{LightCyan} \textbf{96.87($\pm 0.0$)}  & \cellcolor{LightCyan} \textbf{87.94($\pm 0.001$)} & \cellcolor{LightCyan} 96.77 ($\pm 0.0$) & \cellcolor{LightCyan} 96.49($\pm 0.0$) \\  \hline

   \multicolumn{6}{| c |}{DNN or CNN (Non-convex)}  \\ \hline
   
   \parbox[t]{0.5mm}{\multirow{5}{*}{\rotatebox[origin=c]{90}{Conventional}}} & FedAvg(GM) & 93.17 ($\pm 0.02$) & 84.53($\pm0.067$) &  \textbf{84.14}($\pm 0.00$) & 92.73($\pm 0.003 $) \\
   & Per-FedAvg(PM)  & 91.845($\pm 0.00$) & 75.93 ($\pm 0.18$) & 88.69($\pm 0.269$) & 97.37($\pm 0.01$) \\
   
   & pFedMe(GM) & 80.12($\pm 0.01$) & 81.23($\pm 0.19$) & 68.64 ($\pm 0.009 $) & 91.81 ($\pm 0.0002 $) \\
   & pFedMe(PM) & 97.40($\pm 0.001$) & 87.86($\pm 0.06$) & 96.30 ($\pm 0.001 $) & 97.18($\pm0.0003$) \\

   & Ditto(GM) & 87.30($\pm 0.03$) & 81.12($\pm 0.006 $) & 57.80($\pm 0.001 $) & 90.58($\pm 0.004 $) \\
    \hline
   \parbox[t]{0.5mm}{\multirow{6}{*}{\rotatebox[origin=c]{90}{Multi-tier}}} & h-SGD(GM) & 86.59 ($\pm 7.14 $) & 87.42 ($\pm 5.67$) & 79.84 ($\pm 0.035 $) & 96.03 ($\pm 0.001$) \\

   & AL2GD(PM) & 91.04 ($\pm 0.035$) & 84.92 ($\pm 0.02$) & 71.32($\pm 0.13$) & 92.94 ($\pm 0.14 $) \\

   & DemLearn(GM) & \textbf{90.75($\pm 0.001 $)} & 68.91($\pm 0.05$) & 64.84 ($\pm 0.002 $) & \textbf{96.63 ($\pm 0.005$)} \\
   
   & DemLearn(PM) & 97.20($\pm 0.001$) & 82.74($\pm 0.008$) & 98.64($\pm 0.0 $) & 98.74($\pm 0.0 $) \\

   & \cellcolor{LightCyan} \textbf{\ourmodel{}(GM)} [ ours ]  & \cellcolor{LightCyan} 89.39 ($\pm 0.001$) & \cellcolor{LightCyan} \textbf{87.53 ($\pm 0.0$) }& \cellcolor{LightCyan} 79.15($\pm 0.0$) & \cellcolor{LightCyan} 93.12($\pm 0.0$) \\
   & \cellcolor{LightCyan} \textbf{\ourmodel{}(PM)} [ ours ]  & \cellcolor{LightCyan} \textbf{98.15 ($\pm 0.0$)} & \cellcolor{LightCyan} \textbf{87.89($\pm 0.0$)} & \cellcolor{LightCyan} \textbf{98.67}($\pm 0.0$) & \cellcolor{LightCyan} \textbf{98.79}($\pm 0.0$) \\ \hline

\end{tabular}
\end{adjustbox}
\end{table*}

\subsubsection{Convergence:}
 From \cref{Fig:convergence_sota_fmnist}, we observed that the convergence of \ourmodel{}(PM) is equivalent to DemLearn and is faster than h-SGD and AL2GD. Similar findings were also observed in EMNIST-10, Synthetic, and MNIST datasets given in the supplementary copy. It is because, inside each team multiple iterations are happening, that helps the personalized model to converge quickly.


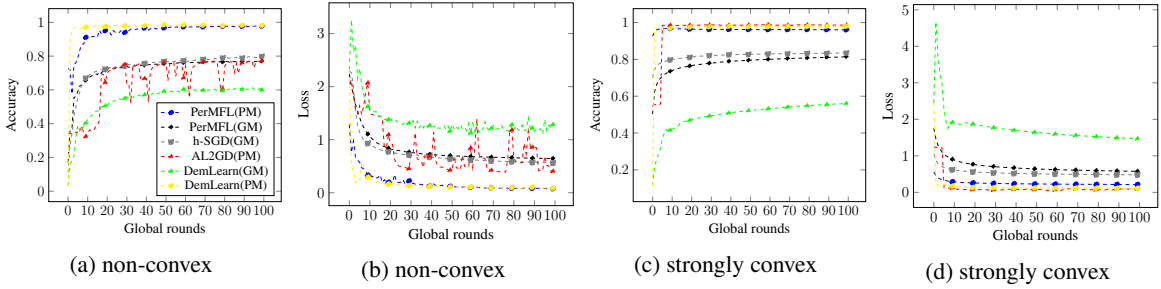
\begin{figure*}[h!]
        \centering
        \begin{subfigure}[!t]{0.23\linewidth}
            \resizebox{\linewidth}{!}{
                \begin{tikzpicture}
                \tikzstyle{every node}=[font=\fontsize{12}{12}\selectfont]

\begin{axis}[
  xlabel= Global rounds,
  ylabel= Accuracy,
  legend style ={nodes={scale=0.9, transform shape}},
  legend pos=south east,
  xtick={0,10,20,30,40,50,60,70,80,90,100}
]
\addplot[blue,dashed,thick,mark=otimes*, mark repeat=10, mark phase=10] table [y=PerMFL(PM) ,x=GR]{final-perf-fmnist-cnn-acc.dat};
\addlegendentry{\ourmodel{}(PM)}

\addplot[black,dashed,thick,mark=diamond*, mark repeat=10, mark phase=10]  table [y=PerMFL(GM) ,x=GR]{final-perf-fmnist-cnn-acc.dat};
\addlegendentry{\ourmodel{}(GM)}

\addplot[gray,dashed,thick,mark=square*, mark repeat=10, mark phase=10]  table [y=h-sgd,x=GR]{final-perf-fmnist-cnn-acc.dat};
\addlegendentry{h-SGD(GM)}

\addplot[red,dashed,thick,mark=triangle*, mark repeat=10, mark phase=10]  table [y=AL2GD,x=GR]{final-perf-fmnist-cnn-acc.dat};
\addlegendentry{AL2GD(PM)}

\addplot[green,thick,dashed,mark=triangle*, mark repeat=10, mark phase=10]  table [y=DemLearn(C-GEN),x=GR]{final-perf-fmnist-cnn-acc.dat};
\addlegendentry{DemLearn(GM)}

\addplot[yellow,thick,dashed,mark=*, mark repeat=10, mark phase=10]  table [y=DemLearn(C-SPE),x=GR]{final-perf-fmnist-cnn-acc.dat};
\addlegendentry{DemLearn(PM)}

\end{axis}
\end{tikzpicture}
            }
        \caption{non-convex}
        \label{Fig:fmnist_non_conv_acc_sota}
    \end{subfigure} 
    \begin{subfigure}[!t]{0.23\linewidth}
        \resizebox{\linewidth}{!}{
        \begin{tikzpicture}
        \tikzstyle{every node}=[font=\fontsize{12}{12}\selectfont]
\begin{axis}[
  xlabel= Global rounds,
  ylabel= Loss,
  legend style ={nodes={scale=1.45, transform shape}},
  legend pos= north east,
  xtick={0,10,20,30,40,50,60,70,80,90,100}
]
\addplot[blue,dashed,thick,mark=otimes*, mark repeat=10, mark phase=10] table [y=PerMFL(PM) ,x=GR]{final-perf-fmnist-cnn-loss.dat};

\addplot[black,dashed,thick,mark=diamond*, mark repeat=10, mark phase=10]  table [y=PerMFL(GM) ,x=GR]{final-perf-fmnist-cnn-loss.dat};

\addplot[gray,dashed,thick,mark=square*, mark repeat=10, mark phase=10]  table [y=h-sgd,x=GR]{final-perf-fmnist-cnn-loss.dat};

\addplot[red,dashed,thick,mark=triangle*, mark repeat=10, mark phase=10]  table [y=AL2GD,x=GR]{final-perf-fmnist-cnn-loss.dat};

\addplot[green,thick,dashed,mark=triangle*, mark repeat=10, mark phase=10]  table [y=DemLearn(C-GEN),x=GR]{final-perf-fmnist-cnn-loss.dat};

\addplot[yellow,thick,dashed,mark=*, mark repeat=10, mark phase=10]  table [y=DemLearn(C-SPE),x=GR]{final-perf-fmnist-cnn-loss.dat};

\end{axis}
\end{tikzpicture}
}

    \caption{non-convex}
    \label{Fig:fmnist_non_conv_loss_sota}
\end{subfigure}
\begin{subfigure}[!t]{0.23\linewidth}
            \resizebox{\linewidth}{!}{
                \begin{tikzpicture}
                \tikzstyle{every node}=[font=\fontsize{12}{12}\selectfont]

\begin{axis}[
  xlabel= Global rounds,
  ylabel= Accuracy,
  legend style ={nodes={scale=1.45, transform shape}},
  legend pos= south east,
   xtick={0,10,20,30,40,50,60,70,80,90,100},
  legend style ={nodes={scale=1.3, transform shape}},
  legend pos= south east
]
\addplot[blue,thick,dashed,mark=otimes*, mark repeat=10, mark phase=10] table [y=PerMFL(PM) ,x=GR]{final-perf-fmnist-mclr-acc.dat};

\addplot[black,dashed,thick,mark=diamond*, mark repeat=10, mark phase=10]  table [y=PerMFL(GM) ,x=GR]{final-perf-fmnist-mclr-acc.dat};

\addplot[gray,dashed,thick,mark=square*, mark repeat=10, mark phase=10]  table [y=h-sgd,x=GR]{final-perf-fmnist-mclr-acc.dat};

\addplot[red,dashed,thick,mark=triangle*, mark repeat=10, mark phase=10]  table [y=AL2GD,x=GR]{final-perf-fmnist-mclr-acc.dat};

\addplot[green,thick,dashed,mark=triangle*, mark repeat=10, mark phase=10]  table [y=DemLearn(C-GEN),x=GR]{final-perf-fmnist-mclr-acc.dat};

\addplot[yellow,thick,dashed,mark=*, mark repeat=10, mark phase=10]  table [y=DemLearn(C-SPE),x=GR]{final-perf-fmnist-mclr-acc.dat};

\end{axis}

\end{tikzpicture}

            }
        \caption{strongly convex}
        \label{Fig:fmnist_conv_acc_sota}
    \end{subfigure} 
    \begin{subfigure}[!t]{0.23\linewidth}
        \resizebox{\linewidth}{!}{
        \begin{tikzpicture}
        \tikzstyle{every node}=[font=\fontsize{12}{12}\selectfont]
        \begin{axis}[
  xlabel= Global rounds,
  ylabel= Loss,
  legend style ={nodes={scale=1.45, transform shape}},
  legend pos= north east,
    xtick={0,10,20,30,40,50,60,70,80,90,100}
]
\addplot[blue,dashed,thick,mark=otimes*, mark repeat=10, mark phase=10] table [y=PerMFL(PM) ,x=GR]{final-perf-fmnist-mclr-loss.dat};

\addplot[black,dashed,thick,mark=diamond*, mark repeat=10, mark phase=10]  table [y=PerMFL(GM) ,x=GR]{final-perf-fmnist-mclr-loss.dat};

\addplot[gray,dashed,thick,mark=square*, mark repeat=10, mark phase=10]  table [y=h-sgd,x=GR]{final-perf-fmnist-mclr-loss.dat};

\addplot[red,dashed,thick,mark=triangle*, mark repeat=10, mark phase=10]  table [y=AL2GD,x=GR]{final-perf-fmnist-mclr-loss.dat};

\addplot[green,thick,dashed,mark=triangle*, mark repeat=10, mark phase=10]  table [y=DemLearn(C-GEN),x=GR]{final-perf-fmnist-mclr-loss.dat};

\addplot[yellow,thick,dashed,mark=*, mark repeat=10, mark phase=10]  table [y=DemLearn(C-SPE),x=GR]{final-perf-fmnist-mclr-loss.dat};

\end{axis}
\end{tikzpicture}
}

    \caption{strongly convex}
    \label{Fig:fmnist_conv_loss_sota}
\end{subfigure}

\caption{ Convergence of \ourmodel{} with multi-tier SOTA in strongly convex and non-convex settings on FMNIST }
\label{Fig:convergence_sota_fmnist}
\end{figure*}


\subsubsection{Effect of hyghperparameters $\beta$, $\gamma$, and $\lambda$:}
From \cref{fig: mnist_conv_beta_gamma_lamda_cnn_mclr_main}, we observed if we increase the value of $\beta$, $\gamma$, and $\lambda$ separately then \ourmodel{}(PM) converge faster. A similar observation is found for FMNIST, and the synthetic dataset is given in the supplementary copy. In all experiments, the hyperparameters followed the bounds given in \cref{PfedMT:theorem: analysis of the strongly convex case} for strongly convex and \cref{PfedMT:theorem: analysis of the non-convex case} for non-convex and smooth problems.


\begin{figure*}[!ht]
        \centering
        \begin{subfigure}[!t]{0.3\linewidth}
            \resizebox{\linewidth}{!}{
                \begin{tikzpicture}
                \tikzstyle{every node}=[font=\fontsize{12}{12}\selectfont]

                \begin{axis}[
  xlabel= Global rounds,
  ylabel= Personalized Loss,
  legend style={at={(0.5,1.03)},anchor=south}, 
  legend columns=2
   xtick={0,10,20,30,40,50,60,70,80,90,100}
]
\addplot[red,dashed,thick,mark=diamond*]  table [y=pbeta_0.1,x=GR]{hyperparameters_mnist_beta_loss_cnn.dat};
\addlegendentry{$\beta=0.1$ (CNN)}

\addplot[green,dashed,thick,mark=triangle*]  table [y=pbeta_0.2,x=GR]{hyperparameters_mnist_beta_loss_cnn.dat};
\addlegendentry{$\beta=0.2$ (CNN)}

\addplot[blue,dashed,thick,mark=square*]  table [y=pbeta_0.3,x=GR]{hyperparameters_mnist_beta_loss_cnn.dat};
\addlegendentry{$\beta=0.3$ (CNN)}

\addplot[cyan,dashed,thick,mark=diamond*]  table [y=pbeta_0.1,x=GR]{hyperparameters_mnist_beta_loss_mclr.dat};
\addlegendentry{$\beta=0.1$ (MCLR)}

\addplot[gray,dashed,thick,mark=triangle*]  table [y=pbeta_0.2,x=GR]{hyperparameters_mnist_beta_loss_mclr.dat};
\addlegendentry{$\beta=0.2$ (MCLR) }

\addplot[black,dashed,thick,mark=square*]  table [y=pbeta_0.3,x=GR]{hyperparameters_mnist_beta_loss_mclr.dat};
\addlegendentry{$\beta=0.3$ (MCLR)}

\end{axis}

\end{tikzpicture}
}

    \caption{Effect of $\beta$ }
    \label{Fig: mnist_conv_beta_pm_cnn_loss_main}
\end{subfigure}
    \begin{subfigure}[!t]{0.3\linewidth}
        \resizebox{\linewidth}{!}{
        \begin{tikzpicture}
        \tikzstyle{every node}=[font=\fontsize{12}{12}\selectfont]

\begin{axis}[
  xlabel= Global rounds,
  ylabel= Personalized Loss,
  legend style={at={(0.5,1.03)},anchor=south}, 
  legend columns=2
  xtick={0,10,20,30,40,50,60,70,80,90,100}
]
\addplot[red,dashed,thick,mark=diamond*]  table [y=pgamma_2.0,x=GR]{hyperparameters_mnist_gamma_loss_cnn.dat};
\addlegendentry{$\gamma=2.0$ (CNN)}

\addplot[green,dashed,thick,mark=triangle*]  table [y=pgamma_4.0,x=GR]{hyperparameters_mnist_gamma_loss_cnn.dat};
\addlegendentry{$\gamma=4.0$ (CNN)}

\addplot[blue,dashed,thick,mark=square*]  table [y=pgamma_8.0,x=GR]{hyperparameters_mnist_gamma_loss_cnn.dat};
\addlegendentry{$\gamma=8.0$ (CNN)}

\addplot[cyan,dashed,thick,mark=diamond*]  table [y=pgamma_2.0,x=GR]{hyperparameters_mnist_gamma_loss_mclr.dat};
\addlegendentry{$\gamma=2.0$ (MCLR)}

\addplot[gray,dashed,thick,mark=triangle*]  table [y=pgamma_4.0,x=GR]{hyperparameters_mnist_gamma_loss_mclr.dat};
\addlegendentry{$\gamma=4.0$ (MCLR)}

\addplot[black,dashed,thick,mark=square*]  table [y=pgamma_8.0,x=GR]{hyperparameters_mnist_gamma_loss_mclr.dat};
\addlegendentry{$\gamma=8.0$ (MCLR)}

\end{axis}

\end{tikzpicture}
}

    \caption{Effect of $\gamma$}
    \label{Fig: mnist_conv_gamma_pm_cnn_loss_main}
\end{subfigure}
    \begin{subfigure}[!t]{0.3\linewidth}
        \resizebox{\linewidth}{!}{
        \begin{tikzpicture}
        \tikzstyle{every node}=[font=\fontsize{12}{12}\selectfont]

\begin{axis}[
  xlabel= Global rounds,
  ylabel= Personalized Loss,
  legend style={at={(0.5,1.03)},anchor=south}, 
  legend columns=2
  xtick={0,10,20,30,40,50,60,70,80,90,100}
]
\addplot[red,dashed,thick,mark=diamond*]  table [y=plamdba_0.1,x=GR]{hyperparameters_mnist_lamda_loss_cnn.dat};
\addlegendentry{$\lambda=0.1$ (CNN)}

\addplot[green,dashed,thick,mark=triangle*]  table [y=plamdba_0.3,x=GR]{hyperparameters_mnist_lamda_loss_cnn.dat};
\addlegendentry{$\lambda=0.3$ (CNN)}

\addplot[blue,dashed,thick,mark=square*]  table [y=plamdba_0.5,x=GR]{hyperparameters_mnist_lamda_loss_cnn.dat};
\addlegendentry{$\lambda=0.5$ (CNN)}

\addplot[red,dashed,thick,mark=diamond*]  table [y=plamdba_0.1,x=GR]{hyperparameters_mnist_lamda_loss_mclr.dat};
\addlegendentry{$\lambda=0.1$ (MCLR)}

\addplot[green,dashed,thick,mark=triangle*]  table [y=plamdba_0.3,x=GR]{hyperparameters_mnist_lamda_loss_mclr.dat};
\addlegendentry{$\lambda=0.3$ (MCLR)}

\addplot[black,dashed,thick,mark=square*]  table [y=plamdba_0.5,x=GR]{hyperparameters_mnist_lamda_loss_mclr.dat};
\addlegendentry{$\lambda=0.5$ (MCLR)}
\end{axis}

\end{tikzpicture}
}

    \caption{Effect of $\lambda$}
    \label{Fig: mnist_conv_lamda_pm_cnn_loss_main}
\end{subfigure}
\caption{Effect of $\beta$, $\gamma$, and $\lambda$ on convergence of \ourmodel{}(PM) in non-convex(CNN) and strongly convex(MCLR) settings using MNIST dataset}
\label{fig: mnist_conv_beta_gamma_lamda_cnn_mclr_main}
\end{figure*}
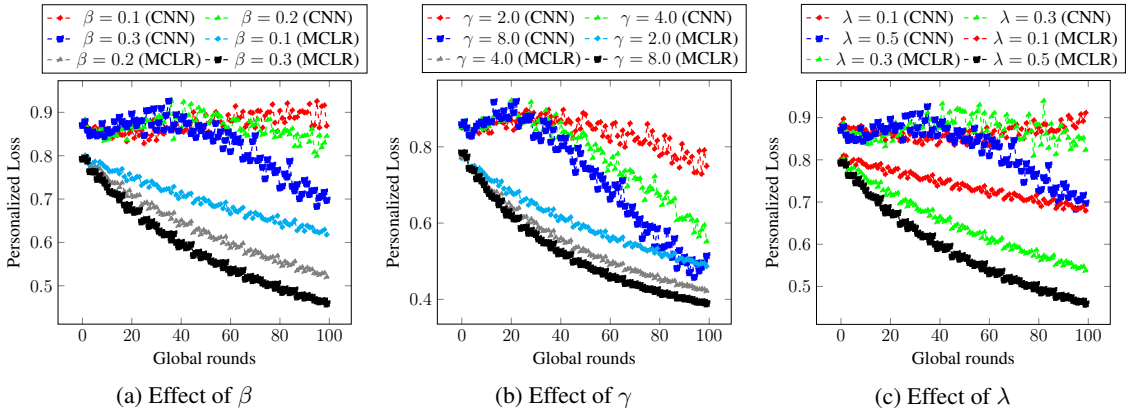

\subsubsection{Ablation studies on team formation:} \label{team_formation}
In \cref{tab: team_formation}, we evaluated \ourmodel{} for both worst-case (team 1 with labels $\{0,1,2,3,4\}$ and team 2 with $\{5,6,7,8,9\}$) and average-case (teams with overlapping labels, team 1 with labels $\{0,1,2,3,4,5,6\}$ and team 2 with $\{ 5,6,7,8,9,0,1\}$)  over 400 global iterations which include 10 and 20, team and local iterations respectively, with the hyperparameter settings $\lambda$ = 0.5, $\gamma$ = 1.5, $\beta$ = 0.6, and, $\alpha$ = 0.01. PerMFL(GM) showed a 4\% improvement in average-case over the worst-case with FMNIST data. Likewise, PerMFL(PM) had slightly better results in average-case than worst-case for non-convex setups (CNN) with MNIST and FMNIST datasets, indicating \ourmodel{}(PM)'s performance is mostly unaffected by team formation.

\begin{table*}[!ht]
\centering
\caption{Performance of \ourmodel{} (Validation accuracy (\%)) on worst-case and average-case team formation}
\label{tab: team_formation}
\begin{adjustbox}{width=0.8\textwidth}
\scriptsize
\begin{tabular}{| c | c | c | c | c | c | c | c |}
    \hline
    \centering
     {\begin{tabular}[c]{@{}c@{}}Team \\  Formation \end{tabular}} &
     Algorithm & \multicolumn{2}{ c |} {MNIST} &  \multicolumn{2}{ c |} {FMNIST} & \multicolumn{2}{ c |} {EMNIST-10} \\ 
   \cline{3-8}
   & & MCLR(\%) & CNN(\%) & MCLR(\%) & CNN(\%) & MCLR(\%) & CNN(\%) \\
   \hline
    {\multirow{2}{*}{{\begin{tabular}[c]{@{}c@{}}Worst \\ case \end{tabular}}}} 
    & \ourmodel{}(PM) & 96.86 & 95.80 & \textbf{97.14} & 95.62 & \textbf{96.57} & 98.13\\
   & \ourmodel{}(GM) & 80.48 & 82.21  & \textbf{76.18} & 70.28 & 88.05 & 87.05 \\ \hline
   
{\multirow{2}{*}{{\begin{tabular}[c]{@{}c@{}}Average \\ case \end{tabular}}}}  
& \ourmodel{}(PM) & \textbf{97.01} & \textbf{97.02} & 96.72 & \textbf{97.38} &  96.39 & \textbf{98.15}\\

&  \ourmodel{}(GM)  &  \textbf{80.86} & \textbf{83.59} &  74.45  & \textbf{74.66} &  \textbf{90.36} & \textbf{87.43}\\ \hline

   \end{tabular}
\end{adjustbox}
\end{table*}

 \subsubsection{Ablation study on teams and clients participation:}
 \ourmodel{} achieves quick convergence with complete participation from both teams and devices (\cref{fig: mnist_ftfc_mclr_pl_main}) or when teams fully participate but devices do so partially (\cref{fig: Mnist_ftpc_pl_5_main}), in contrast to slower convergence under partial participation from both teams and devices (\cref{fig: mnist_ptpc_team_2_mclr_pl_main}). Moreover, expanding the number of teams does not impact the convergence speed of \ourmodel{}(PM) when there is full participation from all teams and devices throughout all global rounds (\cref{fig: mnist_ftfc_mclr_pl_main}). Increased device involvement leads to faster convergence (\cref{fig: Mnist_ftpc_pl_5_main}), whereas lower team engagement (\cref{fig: mnist_ptfc_mclr_pl_main}) decelerates it. Nonetheless, when team participation reaches 50\% in each global round, the convergence rate is comparable to that observed in scenarios with complete participation (\cref{fig: mnist_ftfc_mclr_pl_main}). When all teams are fully participating, increasing the number of team iterations leads to quicker convergence. However, in scenarios where both team and device participation is minimal (2\%) as shown in \cref{fig: mnist_ptpc_team_2_mclr_pl_main}, \ourmodel{}(PM) requires more global and team iterations to achieve convergence. Extended experimental results are reported in the supplementary copy.


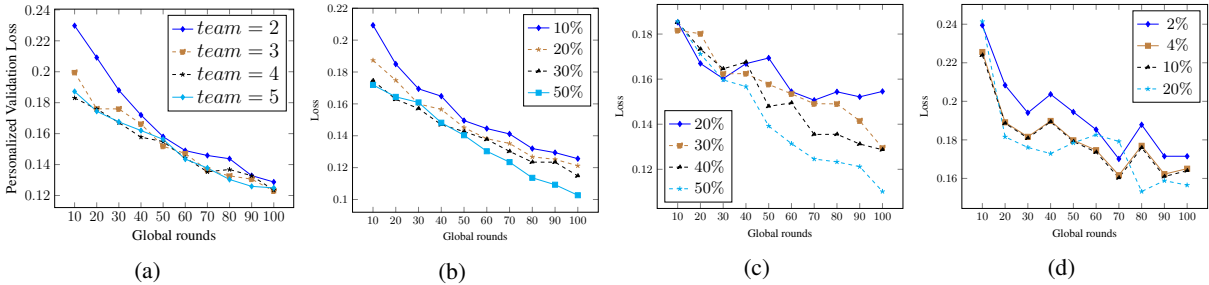
\begin{figure*}[!ht]
        \centering
        \begin{subfigure}[!t]{0.24\linewidth}
            \resizebox{\linewidth}{!}{
                \begin{tikzpicture}
                \tikzstyle{every node}=[font=\fontsize{12}{12}\selectfont]
\begin{axis}[
  xlabel= Global rounds,
  ylabel= Personalized Validation Loss,
  legend style ={nodes={scale=1.4, transform shape}},
  legend pos=north east,
  xtick={0,10,20,30,40,50,60,70,80,90,100}
]
\addplot[blue,thick,mark=diamond*] table [y=pteam_2,x=GR]{ablation_study_team_Mnist_Full_team_full_client_mclr_loss.dat};
\addlegendentry{$team = 2$}

\addplot[brown,dashed,thick,mark=square*]  table [y=pteam_3,x=GR]{ablation_study_team_Mnist_Full_team_full_client_mclr_loss.dat};
\addlegendentry{$ team = 3$}

\addplot[black,dashed,thick,mark=star]  table [y=pteam_4,x=GR]{ablation_study_team_Mnist_Full_team_full_client_mclr_loss.dat};
\addlegendentry{$ team = 4$}

\addplot[cyan,thick,mark=diamond*] table [y=pteam_5,x=GR]{ablation_study_team_Mnist_Full_team_full_client_mclr_loss.dat};
\addlegendentry{$ team = 5$}

\end{axis}
\end{tikzpicture}

            }
        \caption{}
        \label{fig: mnist_ftfc_mclr_pl_main}
    \end{subfigure} 
        \begin{subfigure}[!t]{0.24\linewidth}
            \resizebox{\linewidth}{!}{
                \begin{tikzpicture}
\begin{axis}[
  xlabel= Global rounds,
  ylabel= Loss,
  legend pos=north east,
  legend style ={nodes={scale=1.4, transform shape}},
  xtick={0,10,20,30,40,50,60,70,80,90,100}
]
\addplot[blue,thick,mark=diamond*] table [y=p_10,x=GR]{ablation_study_team_Mnist_Full_team_part_client_team_5_loss.dat};
\addlegendentry{$ 10\%$}

\addplot[brown,dashed,thick,mark=star]  table [y=p_20,x=GR]{ablation_study_team_Mnist_Full_team_part_client_team_5_loss.dat};
\addlegendentry{$ 20\%$}

\addplot[black,dashed,thick,mark=triangle*]  table [y=p_30,x=GR]{ablation_study_team_Mnist_Full_team_part_client_team_5_loss.dat};
\addlegendentry{$ 30\%$}

\addplot[cyan,thick,mark=square*] table [y=p_50,x=GR]{ablation_study_team_Mnist_Full_team_part_client_team_5_loss.dat};
\addlegendentry{$ 50\%$}

\end{axis}
\end{tikzpicture}

            }
    \caption{}
    \label{fig: Mnist_ftpc_pl_5_main}
\end{subfigure} 
\begin{subfigure}[!t]{0.24\linewidth}
    \resizebox{\linewidth}{!}{
    \begin{tikzpicture}
\begin{axis}[
  xlabel= Global rounds,
  ylabel= Loss,
  legend pos=south west,
  legend style ={nodes={scale=1.4, transform shape}},
xtick={0,10,20,30,40,50,60,70,80,90,100}
]
\addplot[blue,thick,mark=diamond*] table [y=pteams_2,x=GR]{ablation_study_team_Mnist_Part_team_full_client_mclr_loss.dat};
\addlegendentry{$ 20\%$}

\addplot[brown,dashed,mark=square*] table [y=pteams_3,x=GR]{ablation_study_team_Mnist_Part_team_full_client_mclr_loss.dat};
\addlegendentry{$ 30\%$}

\addplot[black,dashed,thick,mark=triangle*]  table [y=pteams_4,x=GR]{ablation_study_team_Mnist_Part_team_full_client_mclr_loss.dat};
\addlegendentry{$ 40\%$}

\addplot[cyan,dashed,thick,mark=star]  table [y=pteams_5,x=GR]{ablation_study_team_Mnist_Part_team_full_client_mclr_loss.dat};
\addlegendentry{$ 50\%$}

\end{axis}
\end{tikzpicture}

            }
\caption{}
\label{fig: mnist_ptfc_mclr_pl_main}
\end{subfigure}
\begin{subfigure}[!t]{0.24\linewidth}
    \resizebox{\linewidth}{!}{
    \begin{tikzpicture}
\begin{axis}[
  xlabel= Global rounds,
  ylabel= Loss,
  legend pos=north east,
  legend style ={nodes={scale=1.4, transform shape}},
  xtick={0,10,20,30,40,50,60,70,80,90,100}
]
\addplot[blue,thick,mark=diamond*] table [y=pusers_2,x=GR]{ablation_study_team_Mnist_Part_team_part_client_team_2_mclr_loss.dat};
\addlegendentry{$ 2\%$}

\addplot[brown,thick,mark=square*] table [y=pusers_4,x=GR]{ablation_study_team_Mnist_Part_team_part_client_team_2_mclr_loss.dat};
\addlegendentry{$ 4\%$}

\addplot[black,dashed,thick,mark=triangle*]  table [y=pusers_10,x=GR]{ablation_study_team_Mnist_Part_team_part_client_team_2_mclr_loss.dat};
\addlegendentry{$ 10\%$}

\addplot[cyan,dashed,thick,mark=star]  table [y=pusers_20,x=GR]{ablation_study_team_Mnist_Part_team_part_client_team_2_mclr_loss.dat};
\addlegendentry{$ 20\%$}

\end{axis}
\end{tikzpicture}

}
\caption{}
\label{fig: mnist_ptpc_team_2_mclr_pl_main}
\end{subfigure}

\caption{Ablation study on team and client's participation on MNIST datasets in convex settings (MCLR): \cref{fig: mnist_ftfc_mclr_pl_main} Full teams and devices participation, \cref{fig: Mnist_ftpc_pl_5_main} Full participation of 5 teams but partial participation of devices, \cref{fig: mnist_ptfc_mclr_pl_main} partial participation of teams but full participation of devices, and \cref{fig: mnist_ptpc_team_2_mclr_pl_main} Partial participation of teams (2\%) with partial participation of clients}
\label{Fig: mnist_ftfc_mclr_main}
\end{figure*}

\vspace{-0.2in}
\section{Conclusions}
We introduced, \ourmodel{}, a personalized multi-tier federated learning approach involving global servers, teams, and devices. \ourmodel{} utilized squared euclidean distance regularization on both devices and teams. By employing \ourmodel{}, we are able to generate personalized models for each device and simultaneously obtain a global model. The performance of \ourmodel{} demonstrates linear baseline rates for strongly convex scenarios and sublinear baseline rates for non-convex and smooth scenarios. Empirically we observed that \ourmodel{}(PM) converges quickly and outperformed the state-of-the-art. \ourmodel{} performs best when both teams and devices participate fully. Very low team and device participation degrade the performance of \ourmodel{}(GM). It requires more global and team iterations to converge. Also, in worst-case team formation, the performance of the global model decreases, while the personalized model is able to maintain its performance. Moreover right combination of $\lambda$, $\beta$, and $\gamma$ enhances the convergence of \ourmodel{}.
\section{Acknowledgement}
This work was supported by the Wallenberg AI, Autonomous Systems and Software Program (WASP) funded by the Knut and Alice Wallenberg Foundation. The computations were enabled by the Berzelius resource provided by the Knut and Alice Wallenberg Foundation at the National Supercomputer Centre.

\begingroup
\vspace*{0.5cm}  
\let\clearpage\relax 
\bibliographystyle{unsrt}
\bibliography{bibliography}
\endgroup
\newpage

\newgeometry{top=2cm, bottom=2.5cm, left=2cm, right=2cm}
\appendix
\section{Preliminaries and Supporting Lemmas}

\begin{definition}[Strong convexity]
\label{appendix:Background:def:strongly convex}
A differentiable function $g : \R^d \to \R$ is $\mu_g$-strongly convex if there exists a positive constant $\mu_g$ such that
\begin{align}
\label{appendix:Background:def:strongly convex:eq1}
g(x) \leq g(y) + \ip{\nabla g(x)}{x-y} -\frac{\mu_g}{2} 
\nsq{x-y}, \quad \forall x,y \in \R^d. 
\end{align}
If $g$ is $\mu_g$-strongly convex, then 
\begin{equation}
 \label{appendix: properties strongly-convex eq1}
\mu_g \|x-y\| \leq \|\nabla g(x)-\nabla g(y)\|, \quad \forall x,y \in \R^d.
\end{equation}
\end{definition}
\begin{definition}[Smoothness]
\label{appendix:Background:def:smoothness}
A differentiable function $g : \R^d \to \R$ is $L_g$-smooth if there exists a non-negative constant $L_g$ such that
\begin{equation}
    \label{appendix:Background:def:smoothness:eq1}
    \norm{\nabla g(y)-\nabla g(x)} \leq L_g \norm{x-y}, \quad \forall x,y \in \R^d.
\end{equation} 
If $g$ is $L_g$-smooth, then 
\begin{equation}
    \label{appendix:Background:def:smoothness:eq2}
    |g(y)-g(x) -\ip{\nabla g(x)}{y-x}| \leq \frac{L_g}{2} \nsq{y-x}, \quad \forall x,y \in \R^d.
\end{equation}
\end{definition}
\begin{definition}[Expected Smoothness]
\label{appendix:Background:def:expected smoothness}
The second moment of the stochastic gradient $\tilde{\nabla} g(x) $ sattisfies
\begin{align}
\label{appendix:Background:def:expected smoothness:eq1}
\mathbb{E} \nsq{\tilde{\nabla} g(x)} \leq 2 A \big( g(x)-g(x^*)\big) +B \cdot  \nsq{\nabla g(x)} +C,
\end{align}
for some $A, B, C \geq 0$  and $\forall x  \in \mathbb{R}^d$.
\end{definition}
\begin{definition}[Moreau Envelope]
\label{appendix:Background:def:Moreau Envelope}
For a function $g$, its Moreau envelope $\tilde{g}$ is defined as
\begin{equation}
    \tilde{g}(x) = \min_{u  \in \mathbb{R}^d} \left( g(u) + \frac{\lambda}{2} \| u- x\|^2 \right).
\end{equation}
\end{definition}
\begin{proposition}\label{appendix: Moreau Envelope properties convex}
Let $g$ be a convex function and $\tilde{g}$ be its Moreau Envelope with parameter $\lambda$. Then
\begin{enumerate}[itemsep=0.5em]
\item $\tilde{g}$ is convex.
\item $\tilde{g}$ is continuously differentiable (even if f is not) and
\begin{align}\label{appendix:eq: grad of Moreau Envelope}
\nabla \tilde{g}(x) &= \lambda \big( x - \mathrm{prox}_{g/\lambda}(x) \big), 
\quad \text{where} \quad \mathrm{prox}_{g/\lambda}(x):=\argmin_{u \in \mathbb{R}^d} ~\{g(u)+\frac{\lambda}{2} \|u-x\|^2 \}.
\end{align}
Moreover $\tilde{g}$ is $\lambda$-smooth.
\item If $g$ is $\mu$-strongly convex, then $\tilde{g}$ is $\mu_{\tilde{g}}$-strongly convex with  $\mu_{\tilde{g}}=\frac{\mu \lambda}{\mu+\lambda}$.
\end{enumerate}
\end{proposition}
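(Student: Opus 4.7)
\noindent\textit{Proof proposal.}~
The plan is to fix, once and for all, the \emph{proximal map}
$p(x):=\mathrm{prox}_{g/\lambda}(x)=\argmin_{u\in\mathbb{R}^d}\{g(u)+\tfrac{\lambda}{2}\|u-x\|^2\}$,
noting that the inner objective is $\lambda$-strongly convex in $u$, hence $p(x)$ is uniquely defined for every $x$. I would then attack the three claims in the stated order, reusing $p$ at each step.

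For part (1), I would argue abstractly: the map $(x,u)\mapsto g(u)+\tfrac{\lambda}{2}\|u-x\|^2$ is jointly convex on $\R^d\times\R^d$ (it is a sum of $g(u)$ and a convex quadratic in $(x,u)$), and partial minimization of a jointly convex function over one block yields a convex marginal. This is the cleanest route and avoids any direct Jensen-type manipulation.

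For part (2), the plan is: first, write the first-order optimality condition for $p(x)$, namely $\lambda(x-p(x))\in\partial g(p(x))$; second, use monotonicity of $\partial g$ applied at $p(x)$ and $p(y)$ to derive the \emph{firm nonexpansiveness}
$\|p(x)-p(y)\|^2\le \ip{p(x)-p(y)}{x-y}$,
which in particular gives $\|p(x)-p(y)\|\le\|x-y\|$; third, invoke Danskin's theorem (or a direct envelope argument exploiting uniqueness of $p(x)$) to obtain $\nabla\tilde g(x)=\lambda(x-p(x))$, which simultaneously establishes continuous differentiability. The $\lambda$-smoothness then follows because $I-p$ is itself firmly nonexpansive, so $\|\nabla\tilde g(x)-\nabla\tilde g(y)\|=\lambda\|(x-p(x))-(y-p(y))\|\le\lambda\|x-y\|$.

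For part (3), my plan is to use Fenchel conjugation rather than direct manipulation. The key observations are: (a) $\mu$-strong convexity of $g$ is equivalent to $\frac{1}{\mu}$-smoothness of $g^\ast$; (b) the Moreau envelope can be written as the infimal convolution $\tilde g=g\,\square\,\frac{\lambda}{2}\|\cdot\|^2$, hence $\tilde g^\ast=g^\ast+\frac{1}{2\lambda}\|\cdot\|^2$; (c) $\tilde g^\ast$ is then $(\tfrac{1}{\mu}+\tfrac{1}{\lambda})$-smooth as a sum of two smooth functions, so by the smoothness/strong-convexity duality its conjugate $\tilde g$ is strongly convex with modulus $(\tfrac{1}{\mu}+\tfrac{1}{\lambda})^{-1}=\tfrac{\mu\lambda}{\mu+\lambda}$, which is exactly $\mu_{\tilde g}$. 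I expect the main obstacle to be part (2): carefully justifying the gradient formula and differentiability of $\tilde g$ from the variational definition (as opposed to simply quoting it) requires a precise envelope-theorem argument that uses the uniqueness and continuity of $p(x)$; the conjugate-duality argument in part (3) and the joint-convexity argument in part (1) are comparatively routine once the right abstract tools are invoked.
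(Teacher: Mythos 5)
The paper never proves this proposition: it is stated in the appendix preliminaries as a standard fact about Moreau envelopes (of the kind quoted from the convex-analysis literature and already used in pFedMe), so there is no in-paper argument to compare yours against. Your blind proposal is a correct, essentially textbook proof and would make the statement self-contained. Part (1) via partial minimization of the jointly convex map $(x,u)\mapsto g(u)+\tfrac{\lambda}{2}\|u-x\|^2$ is fine; part (2) via the optimality condition $\lambda\big(x-\mathrm{prox}_{g/\lambda}(x)\big)\in\partial g\big(\mathrm{prox}_{g/\lambda}(x)\big)$, monotonicity of $\partial g$, and firm nonexpansiveness of both $p:=\mathrm{prox}_{g/\lambda}$ and $I-p$ is the standard route and correctly yields $\lambda$-smoothness; part (3) via $\tilde g = g\,\square\,\tfrac{\lambda}{2}\|\cdot\|^2$, $\tilde g^\ast = g^\ast+\tfrac{1}{2\lambda}\|\cdot\|^2$, and the smoothness/strong-convexity conjugate duality gives exactly $\mu_{\tilde g}=\tfrac{\mu\lambda}{\mu+\lambda}$. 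Two small points of care: in part (2), the classical Danskin theorem assumes minimization over a compact set, so the cleaner justification is the direct one you allude to — verify from the definition that $\lambda(x-p(x))$ is a subgradient of $\tilde g$ at every $x$, then use single-valuedness together with the (Lipschitz) continuity of $p$ you already established to conclude that $\tilde g$ is continuously differentiable with that gradient; in part (3), the infimal-convolution conjugacy identity and the biconjugation step require $g$ to be proper and lower semicontinuous, which is automatic in the paper's setting (the losses $\fij$ are differentiable and strongly convex) but should be stated as a hypothesis if the proposition is meant to stand alone.
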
 
\begin{proposition}\label{appendix: Moreau Envelope properties non-convex}
If $g$ is non-convex and $L_g$-smooth, then 
$\nabla \tilde{g}(x)$ is $\lambda$-smooth with the condition that $\lambda >2 L_g $.
\end{proposition}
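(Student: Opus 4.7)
The plan is to work directly from the optimality condition of the proximal subproblem. Even though $g$ is non-convex, its $L_g$-smoothness makes it $L_g$-weakly convex, so the inner problem
\[
u \mapsto g(u) + \frac{\lambda}{2}\|u-x\|^2
\]
is $(\lambda - L_g)$-strongly convex whenever $\lambda > L_g$. In particular, for $\lambda > 2L_g$ this is strictly stronger than needed, and the proximal map $p(x) := \mathrm{prox}_{g/\lambda}(x)$ is well-defined and single-valued. The first-order optimality condition reads
\[
\nabla g(p(x)) + \lambda\bigl(p(x) - x\bigr) = 0,
\]
and a standard envelope argument (mimicking the convex case in Proposition 1, which still applies because of strong convexity of the inner problem) gives $\tilde{g}$ differentiable with
\[
\nabla \tilde{g}(x) = \lambda\bigl(x - p(x)\bigr).
\]

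Next I would bound the Lipschitz constant of $p(\cdot)$. Taking two points $x_1,x_2$ and writing $p_i = p(x_i)$, subtracting the two optimality conditions yields
\[
\lambda(x_1 - x_2) = \lambda(p_1 - p_2) + \bigl(\nabla g(p_1) - \nabla g(p_2)\bigr).
\]
Taking the inner product with $p_1-p_2$ and using the weak monotonicity inequality $\langle \nabla g(p_1)-\nabla g(p_2),\,p_1-p_2\rangle \geq -L_g\|p_1-p_2\|^2$ (equivalent to $L_g$-weak convexity), then applying Cauchy--Schwarz, gives
\[
\|p_1 - p_2\| \leq \frac{\lambda}{\lambda - L_g}\|x_1-x_2\|.
\]

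For the final step, observe that the displayed identity above also gives the clean formula
\[
\nabla \tilde{g}(x_1) - \nabla \tilde{g}(x_2) = \lambda(x_1-x_2) - \lambda(p_1-p_2) = \nabla g(p_1) - \nabla g(p_2).
\]
Applying $L_g$-smoothness of $g$ and the prox-Lipschitz estimate,
\[
\|\nabla \tilde{g}(x_1) - \nabla \tilde{g}(x_2)\| \leq L_g\|p_1-p_2\| \leq \frac{L_g \lambda}{\lambda - L_g}\|x_1-x_2\|.
\]
The assumption $\lambda > 2L_g$ precisely guarantees $L_g/(\lambda-L_g) < 1$, so the constant is bounded by $\lambda$, proving $\tilde{g}$ is $\lambda$-smooth.

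The main obstacle is handling the loss of monotonicity of $\nabla g$: in the convex case the inequality $\langle \nabla g(p_1)-\nabla g(p_2),p_1-p_2\rangle \geq 0$ is immediate and gives a $1$-Lipschitz prox, but in the non-convex setting one only has the weaker bound $\geq -L_g\|p_1-p_2\|^2$. Tracking exactly where that deficit enters the estimate is what forces the condition $\lambda > 2L_g$ rather than the usual $\lambda > L_g$.
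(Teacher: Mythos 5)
Your argument is correct, and in fact the paper never proves Proposition~2 at all: it is stated (like Proposition~1) as a background fact about Moreau envelopes of weakly convex functions, so your write-up supplies a self-contained justification rather than mirroring anything in the text. The route you take is the standard one for this setting, and each step checks out: $L_g$-smoothness gives $L_g$-weak convexity, so the inner problem is $(\lambda-L_g)$-strongly convex and $\mathrm{prox}_{g/\lambda}$ is single-valued; the optimality condition $\nabla g(p(x))=\lambda(x-p(x))$ combined with hypomonotonicity $\langle\nabla g(p_1)-\nabla g(p_2),p_1-p_2\rangle\ge -L_g\|p_1-p_2\|^2$ yields the prox-Lipschitz bound $\frac{\lambda}{\lambda-L_g}$; and your identity $\nabla\tilde g(x_i)=\nabla g(p(x_i))$ turns this into the Lipschitz constant $\frac{\lambda L_g}{\lambda-L_g}$ for $\nabla\tilde g$, which is where $\lambda>2L_g$ enters only to dominate that constant by $\lambda$ as the proposition asserts (note $\lambda>L_g$ alone already gives smoothness, just with a possibly larger modulus). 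The one place you wave your hands is differentiability of $\tilde g$ with $\nabla\tilde g(x)=\lambda(x-p(x))$ in the non-convex case; this is indeed standard (a Danskin/parametric-minimization argument using uniqueness and Lipschitz continuity of $p$, which you have already established), but a fully rigorous version would spell that step out rather than deferring to the convex Proposition~1.
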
 
\begin{proposition}\label{appendix: properties L-smooth}
If $g:\R^d \to \R$ is convex and $L_g$-smooth, then 
\begin{equation*}
\|\nabla g(x)-\nabla g(y)\|^2 \leq 2L_g 
\Big( g(x)-g(y)-\langle \nabla g(y) , x-y\rangle
\Big), \quad \forall x,y \in \R^d.
\end{equation*}
\end{proposition}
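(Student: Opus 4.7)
The plan is to prove this standard co-coercivity-style inequality via the usual ``auxiliary function'' trick, reducing it to the descent lemma (smoothness upper bound) applied at a shifted minimizer. This is a classical argument and the excerpt has already recorded the two tools I need: convexity and the smoothness inequality \eqref{appendix:Background:def:smoothness:eq2}.

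First, I would fix an arbitrary pair $x,y \in \R^d$ and introduce the auxiliary function
\begin{equation*}
h(z) := g(z) - \langle \nabla g(y),\, z\rangle.
\end{equation*}
I would observe that $h$ inherits convexity from $g$ (the added term is linear) and inherits $L_g$-smoothness from $g$ (the Hessian-type behavior is unchanged). Moreover, $\nabla h(z) = \nabla g(z) - \nabla g(y)$, so $\nabla h(y) = 0$, which combined with convexity of $h$ implies that $y$ is a global minimizer of $h$ and $h^\star = h(y)$.

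Next, I would apply the smoothness inequality \eqref{appendix:Background:def:smoothness:eq2} to $h$ at the base point $x$: for every $z \in \R^d$,
\begin{equation*}
h(z) \leq h(x) + \langle \nabla h(x),\, z-x\rangle + \frac{L_g}{2}\|z-x\|^2.
\end{equation*}
Minimizing the right-hand side over $z$ (the minimizer is $z = x - \tfrac{1}{L_g}\nabla h(x)$) yields the well-known one-step descent bound
\begin{equation*}
h^\star \leq h(x) - \frac{1}{2L_g}\|\nabla h(x)\|^2.
\end{equation*}
Since $h^\star = h(y)$ and $\nabla h(x) = \nabla g(x) - \nabla g(y)$, this reads
\begin{equation*}
g(y) - \langle \nabla g(y),\, y\rangle \leq g(x) - \langle \nabla g(y),\, x\rangle - \frac{1}{2L_g}\|\nabla g(x) - \nabla g(y)\|^2.
\end{equation*}

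Finally, I would rearrange this inequality by moving the gradient-norm term to the left and grouping the inner products into $\langle \nabla g(y),\, x - y\rangle$, then multiply by $2L_g$ to obtain exactly
\begin{equation*}
\|\nabla g(x) - \nabla g(y)\|^2 \leq 2L_g\bigl(g(x) - g(y) - \langle \nabla g(y),\, x - y\rangle\bigr),
\end{equation*}
as claimed. There is no real obstacle here; the only ``insight'' step is spotting the auxiliary function $h$, after which every line is a direct application of definitions already stated in the Preliminaries. The symmetry of $x$ and $y$ in the construction is why the inequality holds for all pairs without extra assumptions.
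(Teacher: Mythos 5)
Your proof is correct: the auxiliary-function argument (shift $g$ by the linear form $\langle \nabla g(y),\cdot\rangle$, note that $y$ becomes a global minimizer, and apply the one-step descent bound from the smoothness inequality \eqref{appendix:Background:def:smoothness:eq2}) is the classical derivation of this co-coercivity-type bound, and every step checks out. The paper states this proposition as a background fact without proof, so there is no alternative argument to compare against; your write-up fills that gap in the standard way.
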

\begin{proposition}\label{appendix: properties strongly-convex}
If $g:\R^d \to \R$ is  $\mu_g$-strongly convex, then 
\begin{equation}
\label{appendix: properties strongly-convex eq2}
\frac{\mu_g}{2} \nsq{x-x^*} \leq g(x)-g(x^*), \quad \forall x \in \R^d.
\end{equation}
\end{proposition}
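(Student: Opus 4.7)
The plan is a one-line application of the strong-convexity inequality from \Cref{appendix:Background:def:strongly convex} combined with the first-order optimality condition at $x^*$. First, I would rewrite the stated form
$$g(x) \leq g(y) + \ip{\nabla g(x)}{x-y} - \frac{\mu_g}{2} \nsq{x-y}$$
in its equivalent lower-bounding shape
$$g(y) \geq g(x) + \ip{\nabla g(x)}{y-x} + \frac{\mu_g}{2} \nsq{y-x}, \qquad \forall x,y \in \R^d,$$
simply by moving $g(y)$ to the left-hand side and using the identity $\ip{\nabla g(x)}{x-y} = -\ip{\nabla g(x)}{y-x}$.

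Next, I would specialize this inequality by setting the base point $x \leftarrow x^*$ and the free point $y \leftarrow x$. A differentiable $\mu_g$-strongly convex function on $\R^d$ admits a unique minimizer $x^*$, and the first-order optimality condition gives $\nabla g(x^*) = \vzero$. Substituting kills the inner-product term and leaves
$$g(x) \geq g(x^*) + \frac{\mu_g}{2} \nsq{x-x^*},$$
which rearranges to the claimed inequality \eqref{appendix: properties strongly-convex eq2}.

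There is essentially no technical obstacle; the only care needed is keeping the sign conventions in \Cref{appendix:Background:def:strongly convex} straight, and noting that the minimizer $x^*$ is well-defined and attained under strong convexity so that the first-order optimality condition indeed applies. No additional smoothness or regularity beyond what is already assumed enters the argument.
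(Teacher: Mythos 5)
Your proof is correct: the paper states this proposition without proof, and your argument is the standard one — rewrite the strong-convexity inequality of Definition~\ref{appendix:Background:def:strongly convex} as the lower bound $g(y) \geq g(x) + \ip{\nabla g(x)}{y-x} + \frac{\mu_g}{2}\nsq{y-x}$, take the base point to be the (unique, attained) minimizer $x^*$, and use $\nabla g(x^*)=\vzero$ to drop the inner-product term. Nothing further is needed.
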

\begin{remark}\label{appendix: general properties}
For any set of vectors $\{y_i\}_{i=1}^n$, we have
    \begin{align}
        \frac{1}{n}\sum_{i=1}^n~ \| y_i\|^2
        = \frac{1}{n}\sum_{i=1}^n~ \| y_i-\Bar{y}\|^2+ \|\Bar{y}\|^2 
    \end{align}
that leads to 
\begin{align}
        \|\sum_{i=1}^n~ y_i\|^2 \leq n\cdot \sum_{i=1}^n~ \| y_i\|^2.
\end{align}
where $\Bar{y}:= \frac{1}{n}\sum_{i=1}^n y_i$.
\end{remark}
\begin{remark}[Young's inequality]
\label{appendix: Young's inequality}
For any $\epsilon >0$ and vectors $y,z \in \mathbb{R}^d$, we have 
\begin{equation*}
2|\langle y, z \rangle| \leq \frac{2}{\epsilon} \|y\|^2    +\frac{\epsilon}{2} \|z\|^2
\end{equation*}  
\end{remark}
\begin{remark}[Parameters in \ourmodel{}]
\label{appendix: Parameters in pFedMT}
Here, we consider two setups.
\begin{itemize}
    \item If $\fij$ is $\mu_{f}$-strongly convex, then
\begin{itemize}
    \item $\ftildeij$ is $\mu_{\tilde{f}}$-strongly convex where     $\mu_{\tilde{f}}=\frac{\lambda\mu_{f}}{\lambda+\mu_{f}}$
    \item  $\ftildeij$ is $\lambda$-smooth.
    \item $F_i$ is $\mu_F$-strongly convex where $\mu_{F}=\mu_{\tilde{f}}=\frac{\lambda\mu_{f}}{\lambda+\mu_{f}}$.
    \item $\tilde{F}_i$ is $\mu_{\tilde{F}}$-strongly convex where 
    \begin{equation}
    \mu_{\tilde{F}}=\frac{\mu_F \gamma}{\mu_F+\gamma}=\frac{\frac{\lambda\mu_{f}}{\lambda+\mu_{f}} \gamma}{\frac{\lambda\mu_{f}}{\lambda+\mu_{f}}+\gamma}=\frac{\lambda \gamma \mu_f}{\lambda \mu_f+ \gamma \mu_f+\lambda \gamma }.   
    \end{equation}
    \item $\phi$ is $\mu_\phi$-strongly convex where $\mu_\phi= \mu_{\tilde{F}}=\frac{\lambda \gamma \mu_f}{\lambda \mu_f+ \gamma \mu_f+\lambda \gamma }$
\end{itemize}

\item If $\fij$ is $L_f$-smooth then
\begin{itemize}
    \item $\ftildeij$ is $\Lftilde$-smooth with the condition that $\Lftilde=\lambda >2 L_f$.
    \item $F_i$ is $L_F$-smooth with the condition that $L_F=\Lftilde=\lambda >2 L_f$.
    \item $\tilde{F}_i$ is $\LFtilde$-smooth with the condition that $\LFtilde=\gamma > 2 \lambda > 4 L_f $.
    \item $\phi$ is $L_\phi$-smooth with the condition that $L_\phi=\LFtilde=\gamma > 2 \lambda > 4 \Lftilde $.
\end{itemize}
\end{itemize}
\end{remark}
\begin{definition}[Inexact Gradient Update]
\label{appendix: Defenition of gt and git}
Consider an equivalent algorithm to \Cref{alg:pfedmt_algo} as \Cref{appendix: analysis: pFedMT algorithm}.
Note that, We can re-write the update rule in \Cref{alg:pfedmt_algo} as
\begin{equation}\label{appendix:eq: pFedMT analysis with approx prox smooth}
    \xtt =\avgM \xit
    =\avgM (\xt -\beta \git)
    =\xt -\beta \avgM \git
    =\xt-\beta \gt
\end{equation}
where $\gt:=\avgM \git$.

\begin{algorithm}
\setstretch{1.2}
\caption{}
\begin{algorithmic}
\label{appendix: analysis: pFedMT algorithm}
\STATE \textbf{set} $x^1, \lambda, \eta, \beta$
\FOR {round $t = 1,2,\ldots,T$}
        \vspace{0.25em}
        \STATE --- \textbf{Team}-level training ----------------------------------- 
        \FOR {team $i = 1,2,\ldots,M$}
        \STATE $\ait=\textbf{approx}  \Big\{\argmin_{u \in \mathbb{R}^d} ~\{F_i(u)+\frac{\gamma}{2} \|u-\xt\|^2 \} \Big\} $
        \STATE $\git=\gamma(\xt- \ait)$ 
        \STATE $\xit=\xt-\beta \git$
        \STATE Client communicates $\xit$ to the server. 
        \ENDFOR
        \vspace{0.25em}
        \STATE --- \textbf{Server}-level aggregation ------------------------------------
        \STATE $\xtt= \avgM \xit $
        \vspace{0.15em}
        \STATE Server communicates $\xtt$ to the clients.
        \vspace{0.35em}
\ENDFOR
\end{algorithmic}
\end{algorithm}
 
\end{definition}

\section{ Convergence of \ourmodel{} (strongly convex case) \label{appendix:sec Team-level convergence} }
Let us consider the team-level update as in \Cref{appendix: analysis: pFedMT algorithm}. We start with analyzing the convergence of this algorithm when  having access to the approximate solution to  $\mathrm{prox}_{F_i/\gamma}(.)$ in \Cref{appendix:Th: Team-level bound: inexact grad strongly convex}. Secondly we upper bound the approximation error in \Cref{appendix:Th: K bound: inexact grad: strongly convex}. We analyze the client level convergence in \Cref{appendix:L:Th: client level strongly convex:error bound}. 
 Finally, we combine these results in \Cref{Discussion on error bounds (strongly convex case)} and complete the proof of \Cref{PfedMT:theorem: analysis of the strongly convex case}.


\begin{theorem}\label{appendix:Th: Team-level bound: inexact grad strongly convex}
Let $\fij$ be $\mu_{f}$-strongly convex and assume that $\ait$ is an approximate solution to $\mathrm{prox}_{F_i/\gamma}(x^t)$ such that  
$\|\ait-\mathrm{prox}_{F_i/\gamma}(x^t)\|^2 \leq \nu_i$.
Suppose that the sequence $\{x^t\}$ is generated by  \ourmodel{} with $ \beta \leq  \frac{\muFtilde}{4\LFtilde}$. Then, we have  
\begin{equation}
\label{appendix:Convergence of PfedMT:Th:strongly convex:eq1}
 \|\xT-x^*\|^2 \leq (1-\beta)^T \|x^0- x^*\|^2
+ \frac{12\gamma^2 \nu}{\muFtilde^2},
\end{equation}
where $\muFtilde=\frac{\lambda \gamma \mu_f}{\lambda \mu_f+ \gamma \mu_f+\lambda \gamma }$, $\LFtilde=\gamma$, and $\nu:=\avgM \nu_i$.
\end{theorem}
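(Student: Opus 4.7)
\noindent\textit{Proof proposal.}~
The plan is to interpret the server-level update as an inexact gradient descent step on $\phi$ and apply a standard contraction argument for strongly convex smooth functions, handling the inexactness through Young's inequality.

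First, I would rewrite the algorithm as in Definition~\ref{appendix: Defenition of gt and git}: $x^{t+1} = x^t - \beta g^t$ with $g^t = \tfrac{\gamma}{M}\sum_i (x^t - a_i^t)$. Defining the gradient error
\begin{equation*}
e^t := g^t - \nabla\phi(x^t) = \tfrac{\gamma}{M}\textstyle\sum_i\bigl(\mathrm{prox}_{F_i/\gamma}(x^t) - a_i^t\bigr),
\end{equation*}
Jensen's inequality (Remark on general properties) together with the hypothesis $\|a_i^t - \mathrm{prox}_{F_i/\gamma}(x^t)\|^2 \leq \nu_i$ yields $\|e^t\|^2 \leq \gamma^2 \nu$. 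From the Remark on Parameters in \ourmodel{}, $\phi$ is $\mu_{\tilde F}$-strongly convex and $L_{\tilde F}$-smooth with $L_{\tilde F}=\gamma$, and $\nabla\phi(x^*)=0$.

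Next I would expand
\begin{equation*}
\|x^{t+1}-x^*\|^2 = \|x^t-x^*\|^2 - 2\beta\langle \nabla\phi(x^t), x^t-x^*\rangle - 2\beta\langle e^t, x^t-x^*\rangle + \beta^2\|g^t\|^2.
\end{equation*}
For the deterministic part I would use the standard co-coercivity inequality for $\mu_{\tilde F}$-strongly convex $L_{\tilde F}$-smooth functions (using $\nabla\phi(x^*)=0$):
\begin{equation*}
\langle\nabla\phi(x^t), x^t-x^*\rangle \geq \tfrac{\mu_{\tilde F} L_{\tilde F}}{\mu_{\tilde F}+L_{\tilde F}}\|x^t-x^*\|^2 + \tfrac{1}{\mu_{\tilde F}+L_{\tilde F}}\|\nabla\phi(x^t)\|^2,
\end{equation*}
then split $\|g^t\|^2 \leq 2\|\nabla\phi(x^t)\|^2 + 2\|e^t\|^2$ and apply Young's inequality to the cross term $-2\beta\langle e^t,x^t-x^*\rangle$ with parameter tuned (roughly $\epsilon = \mu_{\tilde F}$) so that the $\|x^t-x^*\|^2$ piece of the cross term is dominated by the strong-convexity gain. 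Using the step-size condition $\beta \leq \mu_{\tilde F}/(4 L_{\tilde F})=\mu_{\tilde F}/(4\gamma)$, the quadratic-in-$\beta$ contributions $\beta^2\|\nabla\phi(x^t)\|^2$ can be absorbed into the linear contraction (since $\beta L_{\tilde F}$ is small), giving a recursion of the form
\begin{equation*}
\|x^{t+1}-x^*\|^2 \leq (1-\beta)\|x^t-x^*\|^2 + c\cdot\tfrac{\beta}{\mu_{\tilde F}^2}\gamma^2\nu,
\end{equation*}
where $c$ is an explicit small integer constant.

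Finally I would unroll this one-step recursion. A geometric-series sum gives
\begin{equation*}
\|x^T-x^*\|^2 \leq (1-\beta)^T\|x^0-x^*\|^2 + \tfrac{c\,\gamma^2\nu}{\mu_{\tilde F}^2}\sum_{t=0}^{T-1}\beta(1-\beta)^t \leq (1-\beta)^T\|x^0-x^*\|^2 + \tfrac{c\,\gamma^2\nu}{\mu_{\tilde F}^2},
\end{equation*}
and tuning constants so that $c=12$ delivers the stated inequality.

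The main obstacle is a careful bookkeeping exercise: getting the contraction factor to come out as exactly $(1-\beta)$ (rather than the more natural $(1-\beta\mu_{\tilde F})$) requires using the step-size ceiling $\beta\le\mu_{\tilde F}/(4\gamma)$ precisely, and ensuring the Young's parameter $\epsilon$ is chosen to simultaneously (i)~cancel the state-dependent part of the cross term, (ii)~keep the noise coefficient at $\gamma^2\nu/\mu_{\tilde F}^2$ (not worse), and (iii)~leave the quadratic-in-$\beta$ terms small enough to be absorbed. Essentially all the difficulty is in chasing constants; the structure is standard inexact-gradient descent.
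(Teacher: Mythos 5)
Your proposal matches the paper's proof in all essential respects: the same inexact-gradient-descent reading of the server update $x^{t+1}=x^t-\beta g^t$, the same Jensen-type bound $\|g^t-\nabla\phi(x^t)\|^2\le\gamma^2\nu$ from the per-team assumption, Young's inequality with parameter $\epsilon\approx\mu_{\tilde F}$ on the error cross term, absorption of the quadratic-in-$\beta$ terms via the step-size cap, and a geometric unrolling of the resulting $(1-\beta)$-contraction with an additive $O(\gamma^2\nu/\mu_{\tilde F}^2)$ floor. The only difference is cosmetic: you invoke the strong-convexity/smoothness co-coercivity bound on $\langle\nabla\phi(x^t),x^t-x^*\rangle$, whereas the paper pairs the strong-convexity function-value bound $-\langle\nabla\phi(x^t),x^t-x^*\rangle\le\phi(x^*)-\phi(x^t)-\tfrac{\mu_{\tilde F}}{2}\|x^t-x^*\|^2$ with $\|\nabla\phi(x^t)\|^2\le 2L_{\tilde F}(\phi(x^t)-\phi(x^*))$ so the function values cancel — both yield the same contraction and error form, so your route goes through just as the paper's does.
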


\begin{proof}
Using \eqref{appendix:eq: pFedMT analysis with approx prox smooth}, we have
 \begin{equation}\label{appendix:eq: pFedMT analysis approx prox s-convex one step update}
 \|\xtt-x^*\|^2 = \|   \xt- \beta  \gt - x^*\|^2 
 = \|\xt- x^*\|^2 -2\beta \langle  \gt, \xt- x^* \rangle +\beta^2 \|\gt\|^2.
 \end{equation}
Now, we bound $J_1:= -\langle  \gt, \xt- x^* \rangle$ and $J_2:=  \|\gt\|^2$ seperately. 

    \begin{itemize}
    \item {Upper bound for $J_1$:}
        \begin{equation}
    \label{appendix:eq: pFedMT analysis J_1}
    J_1 = -\langle  \gt-\nabla \phi(\xt)+\nabla \phi(\xt), \xt- x^* \rangle
    = \underbrace{ -\langle  \gt-  \nabla \phi(\xt), \xt- x^* \rangle}_{=:J_3} 
    \underbrace{-\langle \nabla \phi(\xt), \xt- x^* \rangle }_{=:J_4}.
    \end{equation}
    We upper bound $J_3$ and $J_4$.
    \begin{itemize}
        \item {Upper bound for $J_3$}: Using Young’s inequality, see \Cref{appendix: Young's inequality}, we have
        \begin{align}
           2 J_3 \leq \frac{2}{\epsilon} \| \gt-  \nabla \phi(\xt)\|^2
           +
           \frac{\epsilon}{2} \|\xt- x^*\|^2, \qquad \forall \epsilon>0. 
        \end{align}
        \item {Upper bound for $J_4$}: Using the  fact that  $\Ftildei$ is $\muFtilde$-strongly convex, we have 
        \begin{equation}
        J_4 =-\langle \nabla \phi(\xt), \xt- x^* \rangle
        \leq \phi(x^*)-\phi(\xt) - \frac{\muFtilde}{2} 
        \|\xt-x^*\|^2.
        \end{equation}
    \end{itemize}
    
Substituting upper bounds on $J_3$ and $J_4$ into \eqref{appendix:eq: pFedMT analysis J_1}, we have 
\begin{align}\label{appendix:eq: pFedMT analysis J_1 part2}
    J_1=J_3+J_4 
    &\leq 
    \frac{1}{\epsilon} \| \gt-  \nabla \phi(\xt)\|^2
           +
           \frac{\epsilon}{4} \|\xt- x^*\|^2 +\phi(x^*)-\phi(\xt) - \frac{\muFtilde}{2} 
        \|\xt-x^*\|^2
    \nonumber \\
    &=\frac{1}{\epsilon} \| \gt-  \nabla \phi(\xt)\|^2
    -\Big(\frac{\muFtilde}{2}- \frac{\epsilon}{4}\Big) \|\xt- x^*\|^2+
    \phi(x^*)-\phi(\xt) 
\end{align} 

\item {Upper bound for $J_2$}:
\begin{align}
   J_2= \|\gt\|^2
    &=\|  \gt  -\nabla \phi(\xt) +\nabla \phi(\xt)\|^2
    \nonumber \\
    &\leq 2~\| \gt  -\nabla \phi(\xt) \big)\|^2 + 2 ~ \|\nabla \phi(\xt)
    \|^2 \qquad (\text{see \Cref{appendix: general properties}})
   \nonumber \\
    &\leq 2~\| \gt  -\nabla \phi(\xt) \big)\|^2 + 4\LFtilde (\phi(\xt)-\phi(x^*)),
\end{align}
where in the last line, we used the fact that the average of $\LFtilde$-smooth functions $\Ftildei(.)$ is a $\LFtilde$-smooth function $\phi(x)=\avgM \Ftildei(x)$, therefore, we can apply \Cref{appendix: properties L-smooth} with $x=\xt, x'=x^*$, and $\nabla \phi(x^*)=0$.
\end{itemize}

Substituting $J_1$ and $J_2$ into \eqref{appendix:eq: pFedMT analysis approx prox s-convex one step update}, we can write
\begin{align} \label{appendix:eq: pFedMT analysis approx prox s-convex one step update part2}
 \|\xtt-x^*\|^2 &\leq
\|\xt- x^*\|^2 +2\beta  J_1 +\beta^2 J_2
\nonumber\\ 
&\leq 
\|\xt- x^*\|^2 +2\beta \bigg( \frac{1}{\epsilon} \Big(  \| \gt-  \nabla \phi\|^2 \Big) 
    -\Big(\frac{\muFtilde}{2}- \frac{\epsilon}{4}\Big) \|\xt- x^*\|^2+
    \phi(x^*)-\phi(\xt)  \bigg) 
    \nonumber\\
    &+\beta^2 \bigg(  2  \|  \gt  -\nabla \phi(\xt) \|^2 + 4\LFtilde (\phi(\xt)-\phi(x^*)) \bigg)
    \nonumber\\
    &=\Big(1-\beta(\muFtilde- \frac{\epsilon}{2})\Big)  \|\xt- x^*\|^2
    -2\beta\big(1-2\beta \LFtilde \big) \Big( \phi(\xt)-\phi(x^*) \Big)
    \nonumber\\
    &
    +2\beta(\beta+\frac{1}{\epsilon})
    \Big( \|  \gt  -\nabla \phi(\xt) \|^2 \Big)
    \nonumber\\
    &\leq (1-\frac{\beta \muFtilde}{2}) \|\xt- x^*\|^2
    +2\beta(\beta+\frac{1}{\muFtilde})
    \Big(  \|  \gt  -\nabla \phi(\xt) \|^2 \Big)
    \nonumber\\
    &\leq (1-\beta) \|\xt- x^*\|^2
    +\frac{8}{\muFtilde^2}\beta(\beta+\frac{1}{2})
    \underbrace{
    \Big(  \|  \gt  -\nabla \phi(\xt) \|^2 \Big)}_{J_5}
\end{align}
where in the last line we re-defined $\beta$ with the new $\beta \leq \frac{\muFtilde}{2}  \min \big\{ \frac{1}{2\LFtilde}, \frac{2}{\muFtilde} \big \}=\min \big\{ \frac{\muFtilde}{4\LFtilde} ,1 \big \} $ and chose $\epsilon=\muFtilde$. Also note that  $\muFtilde \leq \LFtilde$ results in $\beta \leq \frac{\muFtilde}{4\LFtilde}$. 
Let us now bound $J_5$ using the definitions of $\git$ and the second part of \Cref{appendix: Moreau Envelope properties convex}.
\begin{align}
J_5= \|  \gt  -\nabla \phi(\xt) \|^2 &= \avgM \|  \git  -\nabla \Ftildei(\xt) \|^2
\nonumber \\    
&= \avgM \| \gamma(\xt- \ait)-\gamma(\xt- \mathrm{prox}_{F_i/\gamma}(x^t) )    \|^2
\nonumber \\    
&=\gamma^2 \avgM \|\ait-\mathrm{prox}_{F_i/\gamma}(x^t)  \|^2
\nonumber \\    
& \leq  \gamma^2 \avgM \nu_i
\quad (\text{assumption})
\nonumber \\
&=:\gamma^2 \nu
\end{align}
where $\nu:=\avgM \nu_i$.
Substituting $J_5$ into \eqref{appendix:eq: pFedMT analysis approx prox s-convex one step update part2}, we get 
\begin{equation}\label{appendix:Convergence of PfedMT:Th:strongly convex:eq1 in proof}
 \|\xtt-x^*\|^2 \leq (1-\beta) \|\xt- x^*\|^2
    +\frac{8}{\muFtilde^2}\beta(\beta+\frac{1}{2})
\gamma^2 \nu.
\end{equation}
Substituting $t=T-1$ and unrolling \eqref{appendix:Convergence of PfedMT:Th:strongly convex:eq1 in proof}, we get
\begin{align}
\label{appendix:Convergence of PfedMT:Th:strongly convex:eq2 in proof}
 \|\xT-x^*\|^2 &\leq (1-\beta)^T \|x^0- x^*\|^2
    +\frac{8}{\muFtilde^2}\beta(\beta+\frac{1}{2})
\gamma^2 \nu \Big(\frac{1- \big(1- \frac{\beta \muFtilde}{2}\big)^T}{1-(1-\frac{\beta \muFtilde}{2})} \Big)
\nonumber \\    
&\leq (1-\beta)^T \|x^0- x^*\|^2
+\frac{8}{\muFtilde^2}\beta (\beta+\frac{1}{2}) \gamma^2 \nu 
 \Big(\frac{1}{\frac{\beta \muFtilde}{2}} \Big)
\nonumber \\    
&\leq (1-\beta)^T \|x^0- x^*\|^2
+(\beta+\frac{1}{2})\frac{16}{\muFtilde^3}  \gamma^2 \nu.
\end{align}
We can further simplify \eqref{appendix:Convergence of PfedMT:Th:strongly convex:eq2 in proof} using $\beta \leq \frac{\muFtilde}{4\LFtilde} \leq \frac{1}{2}$
\begin{align}
\label{appendix:Convergence of PfedMT:Th:strongly convex:eq3 in proof}
 \|\xT-x^*\|^2 \leq (1-\beta)^T \|x^0- x^*\|^2
+ \frac{16}{\muFtilde^3}  \gamma^2 \nu.
\end{align}
That completes the proof.
\end{proof} 

\subsection{Team-level convergence (strongly convex case)}
\begin{theorem}
\label{appendix:Th: K bound: inexact grad: strongly convex}
Consider Problem \ref{eqn:moreau-team-subproblem} with   Gradient descent steps \eqref{eqn:prox_Fi_GD}. Let us assume that we have approximate solution to $prox_{\fij/\gamma}(.)$ so that we have $\nsq{ \prox_{\fij/\lambda}(\witk) -\thetaijtkL} \leq \deltaij$, then 

\begin{equation}
\label{appendix:Team-level:Th:strongly convex:eq1}
\nsq{\witK-\prox_{F_i/\gamma}(\xt)}  \leq \big(1-\eta_i\frac{\mu_F+\gamma}{2}\big)^K \frac{2L_{\tilde{F}}}{\gamma^2} \big( \Ftildei(w_i^{t,0}) -\Ftildei(x^*)\big)+12\big(\frac{\lambda}{\mu_F+\gamma}\big)^2 \avgNi \deltaij
    :=\nu_i
\end{equation}
where $\eta_i\leq \frac{1}{2(L_F+\gamma)}$.
\end{theorem}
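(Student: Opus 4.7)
The goal is to bound how close the team model $w_i^{t,K}$ is to the exact proximal point $w^\star := \mathrm{prox}_{F_i/\gamma}(x^t)$ after $K$ inexact gradient steps. I will view the team update \eqref{eqn:prox_Fi_GD} as inexact gradient descent on the surrogate
$g_i(w) := F_i(w) + \tfrac{\gamma}{2}\|w - x^t\|^2$,
which is $(\mu_F+\gamma)$-strongly convex and $(L_F+\gamma)$-smooth, with unique minimizer $w^\star$. Using $\nabla F_i(w) = \lambda\bigl(w - \tfrac{1}{N_i}\sum_j \mathrm{prox}_{f_{i,j}/\lambda}(w)\bigr)$ together with the approximation $\mathrm{prox}_{f_{i,j}/\lambda}(w_i^{t,k}) \approx \theta_{i,j}^{t,k,L}$, the iterate \eqref{eqn:prox_Fi_GD} can be rewritten as $w_i^{t,k+1} = w_i^{t,k} - \eta_i\bigl(\nabla g_i(w_i^{t,k}) + e_i^{t,k}\bigr)$, where the gradient error satisfies, by Jensen's inequality and the assumption,
\begin{equation*}
\|e_i^{t,k}\|^2 \;=\; \lambda^2\Bigl\|\tfrac{1}{N_i}\textstyle\sum_j\bigl(\mathrm{prox}_{f_{i,j}/\lambda}(w_i^{t,k}) - \theta_{i,j}^{t,k,L}\bigr)\Bigr\|^2 \;\leq\; \tfrac{\lambda^2}{N_i}\textstyle\sum_j \delta_{i,j}.
\end{equation*}

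\textbf{One-step contraction.} Next I carry out the standard inexact strongly convex contraction. Expanding $\|w_i^{t,k+1}-w^\star\|^2$, the cross term involving $e_i^{t,k}$ is handled by Young's inequality (\Cref{appendix: Young's inequality}) with parameter tuned to $\mu_F+\gamma$; the exact-gradient part contracts by $(1-\eta_i(\mu_F+\gamma))$ provided $\eta_i \leq 1/(L_F+\gamma)$. Absorbing half of the strong-convexity budget into Young's cross term yields
\begin{equation*}
\|w_i^{t,k+1}-w^\star\|^2 \;\leq\; \Bigl(1-\eta_i\tfrac{\mu_F+\gamma}{2}\Bigr)\|w_i^{t,k}-w^\star\|^2 + \tfrac{C}{\mu_F+\gamma}\,\|e_i^{t,k}\|^2,
\end{equation*}
for a modest absolute constant $C$. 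The slightly tighter step-size condition $\eta_i\leq\tfrac{1}{2(L_F+\gamma)}$ cited in the theorem comes from keeping the $\eta_i^2\|e_i^{t,k}\|^2$ term absorbed into the noise coefficient.

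\textbf{Unrolling and the initial gap.} Iterating the recursion $K$ times and summing the geometric error series (bounded by $\tfrac{2}{\eta_i(\mu_F+\gamma)}$) gives
\begin{equation*}
\|w_i^{t,K}-w^\star\|^2 \leq \Bigl(1-\eta_i\tfrac{\mu_F+\gamma}{2}\Bigr)^{K}\|w_i^{t,0}-w^\star\|^2 + \tfrac{C'}{(\mu_F+\gamma)^2}\,\lambda^2\,\tfrac{1}{N_i}\textstyle\sum_j \delta_{i,j},
\end{equation*}
where the numerical constant $C'$ is routinely shown to be at most $12$. It remains to bound the initial gap $\|w_i^{t,0}-w^\star\|^2 = \|x^t - \mathrm{prox}_{F_i/\gamma}(x^t)\|^2$. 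Using the Moreau-envelope identity \eqref{appendix:eq: grad of Moreau Envelope}, this equals $\tfrac{1}{\gamma^2}\|\nabla\tilde F_i^\gamma(x^t)\|^2$; since $\tilde F_i^\gamma$ is $L_{\tilde F}$-smooth (\Cref{appendix: Moreau Envelope properties convex}), \Cref{appendix: properties L-smooth} applied at $x^t$ and at the minimizer bounds this by $\tfrac{2L_{\tilde F}}{\gamma^2}(\tilde F_i(w_i^{t,0}) - \tilde F_i(x^\star))$.

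\textbf{Main obstacle.} The routine piece is the inexact strongly convex recursion; the delicate piece is matching the numerical constants exactly as stated ($12$ and $2L_{\tilde F}/\gamma^2$), which forces a careful choice of the Young's-inequality split and a clean use of the smoothness-based initial-gap bound. Care is also required because the inner approximation error $\delta_{i,j}$ is itself a \emph{per-iterate} quantity at iteration $k$; the argument implicitly uses a uniform bound $\delta_{i,j}$ over $k=0,\ldots,K-1$, which is the form in which the client-level analysis (\Cref{appendix:L:Th: client level strongly convex:error bound}) will supply it.
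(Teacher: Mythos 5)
Your proposal is correct and follows essentially the same route as the paper's proof: inexact gradient descent on the $(\mu_F+\gamma)$-strongly convex, $(L_F+\gamma)$-smooth surrogate $p_i(w)=F_i(w)+\tfrac{\gamma}{2}\|w-x^t\|^2$, with the error term absorbed via Young's inequality against half the strong-convexity margin, the squared inexact gradient bounded through smoothness, the recursion unrolled with a geometric sum, and the initial gap $\|x^t-\mathrm{prox}_{F_i/\gamma}(x^t)\|^2$ bounded by $\tfrac{2L_{\tilde F}}{\gamma^2}\bigl(\tilde F_i(w_i^{t,0})-\tilde F_i(x^*)\bigr)$ using the Moreau-envelope gradient identity and smoothness. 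The constants you leave implicit ($C'\leq 12$ from the Young split with $\epsilon=(\mu_F+\gamma)/\lambda$ and the geometric-series factor) match the paper's computation.
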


\begin{proof}
Let us re-write \eqref{eqn:prox_Fi_GD} as 
$\witkk= \witk -\eta_i \qitk$, where $\qitk$ is the inexact gradient of $p_i(y):=F_i(y)+\frac{\gamma}{2} \nsq{y- \xt}$ at $y=\witk$.
\begin{align}\label{appendix:eq:client level}
    \nsq{\witkk-\witstar} 
    & = \nsq{w_i^{t,k} - \eta_i \qitk-\witstar} 
    \nonumber \\
    & =\nsq{\witk-\witstar}
    \underbrace{    -2\eta_i\ip{\qitk}{w_i^{t,k}-\witstar}
    }_{J_1}+
    \eta_i^2
    \underbrace{\nsq{\qitk}
    }_{J_2},
\end{align}
where $\witstar$ is the minimizer of $p_i(.)$.  
Now, we upper bound $J_1$ and $J_2$.
\begin{itemize}
    \item {\bf Bound on $J_1$:}
    \begin{align}\label{appendix:eq:client level dummy1}
    J_1=2\eta_i\Big( 
    \underbrace{
    -\ip{\qitk-\nabla p_i(\witk)}{w_i^{t,k}-\witstar} 
    }_{J_3}
    \underbrace{-
    \ip{\nabla p_i(\witk)}{w_i^{t,k}-\witstar}
    }_{J_4}\Big)
    \end{align}
    \begin{itemize}
        \item {\bf Bound on $J_3$:} Note that $\qitk=(\lambda+\gamma)\witk-\gamma \xt- \lambda \thetabaritk$ and $\nabla p_i(\witk)=(\lambda+\gamma)\witk-\gamma \xt- \frac{\lambda}{N_i} \sum_{j=1}^{N_i}\mathrm{prox}_{f_{i,j}/\lambda}(w_i^{t,k})$. We have 
        \begin{align}
            J_3&=-\lambda\ip{ \avgNi \prox_{\fij/\lambda}(\witk) -\thetabaritk}{w_i^{t,k}-\witstar}
            \nonumber  \\
            &=-\lambda\avgNi \ip{\prox_{\fij/\lambda}(\witk)-\thetaijtkL}{w_i^{t,k}-\witstar}
        \quad (\thetabaritk:=\avgNi \thetaijtkL).
        \end{align}
Using Young’s inequality with $\epsilon=(\mu_F+\gamma)/\lambda$, see Remark \ref{appendix: Young's inequality}, we get
          \begin{align}
            2J_3&\leq \lambda\avgNi\Big(
            \frac{2\lambda}{\mu_F+\gamma} \nsq{\prox_{\fij/\lambda}(\witk)-\thetaijtkL}  
            + 
            \frac{\mu_F+\gamma}{2\lambda} \nsq{\witk-\witstar}
            \Big)
            \nonumber \\
            &\leq  \frac{2\lambda^2}{\mu_F+\gamma} \avgNi \deltaij+
            \frac{\mu_F+\gamma}{2} \nsq{\witk-\witstar}.
         \end{align}        
        \item {\bf Bound on $J_4$:} Note that $p_i(.)$ is $(\mu_F+\gamma)$-strongly convex, therefore we have
            \begin{equation}
             J_4\leq p_i(\witstar)-p_i(\witk)-
            \frac{\mu_F+\gamma}{2}\nsq{w_i^{t,k}-\witstar}.
        \end{equation}
    \end{itemize}
    Substituting upper bounds on $J_3$ and $J_4$ into \eqref{appendix:eq:client level dummy1}, we have
    \begin{align}\label{appendix:eq:client level dummy2}
    J_1\leq 2\eta_i \Big( \frac{\lambda^2}{\mu_F+\gamma} \avgNi\deltaij
    +p_i(\witstar)-p_i(\witk)-
            \frac{\mu_F+\gamma}{4}\nsq{w_i^{t,k}-\witstar}
    \Big).
    \end{align}    
\item {\bf Upper bound for $J_2$}
\begin{align}
   J_2= \|\qitk\|^2 &=\|  \qitk  -\nabla p_i(\witk) +\nabla p_i(\witk)
    \|^2
    \nonumber \\
    &\leq 2 \big( \nsq{\qitk  -\nabla p_i(\witk)} + \nsq{\nabla p_i(\witk)}
    \big)  \quad (\text{see Remark \ref{appendix: general properties}})
    \nonumber \\
    &= 
    2 \big( \lambda^2 \nsq{\avgNi \prox_{\fij/\lambda}(\witk) -\thetaijtkL} + \nsq{\nabla p_i(\witk)}
    \big)
        \nonumber \\
    &\leq 
    2 \lambda^2 \avgNi \nsq{ \prox_{\fij/\lambda}(\witk) -\thetaijtkL} +2 \nsq{\nabla p_i(\witk)}
    \quad (\text{see Remark \ref{appendix: general properties}})
    \nonumber \\
    &\leq  2\lambda^2 \avgNi \deltaij+4(L_F+\gamma)\big(p_i(\witk)-p_i(\witstar) \big)
    \quad (\text{$p_i(.)$ is $(L_F+\gamma)$-Smooth})
\end{align}
\end{itemize}
Substituting $J_1$ and $J_2$ into \eqref{appendix:eq:client level}, we can write
\begin{align}\label{appendix:eq:client level dummy3}
    \nsq{\witkk-\witstar} 
    &\leq \nsq{\witk-\witstar}+2\eta_i \Big( \frac{\lambda^2}{\mu_F+\gamma} \avgNi\deltaij
    +p_i(\witstar)-p_i(\witk)-
            \frac{\mu_F+\gamma}{4}\nsq{w_i^{t,k}-\witstar}
    \Big)
    \nonumber \\
    &+2\eta_i^2\lambda^2 \avgNi \deltaij+4\eta_i^2(\Lftilde+\gamma)\big(p_i(\witk)-p_i(\witstar) \big)
    \nonumber \\
    &=(1-\eta_i\frac{\mu_F+\gamma}{2}) \nsq{\witk-\witstar}
    -2\eta_i\big(1-2\eta_i (L_F+\gamma)\big)\Big( p_i(\witk)-p_i(\witstar)\Big)
    \nonumber \\
    &+2\eta_i\lambda^2\big(\eta_i+ \frac{1}{\mu_F+\gamma} \big) \avgNi \deltaij
    \nonumber \\
    &\leq (1-\eta_i\frac{\mu_F+\gamma}{2}) \nsq{\witk-\witstar}+2\eta_i\lambda^2\big(\eta_i+ \frac{1}{\mu_F+\gamma} \big) \avgNi \deltaij
    \quad(\eta_i\leq \frac{1}{2(L_F+\gamma)})
    \nonumber \\
    &\leq (1-\eta_i\frac{\mu_F+\gamma}{2}) \nsq{\witk-\witstar}+\big(\frac{6\eta_i\lambda^2}{\mu_F+\gamma} \big) \avgNi \deltaij
    \quad(\eta_i\leq \frac{2}{(\mu_F+\gamma)}).
\end{align} 
Substituting $k=K-1$ and unrolling \eqref{appendix:eq:client level dummy3}, we get
\begin{align}
\label{appendix:eq:client level dummy4}
    \nsq{\witK-\witstar} 
    &\leq \big(1-\eta_i\frac{\mu_F+\gamma}{2}\big)^K \nsq{w_{i}^{t,0}-\witstar}  + \big(\frac{6\eta_i\lambda^2}{\mu_F+\gamma} \big) \avgNi \deltaij
 \Big(\frac{ 1-\big(1-\eta_i\frac{\mu_F+\gamma}{2}\big)^K}{1-(1-\eta_i\frac{\mu_F+\gamma}{2})}\Big) 
    \nonumber \\
    &\leq\big(1-\eta_i\frac{\mu_F+\gamma}{2}\big)^K \nsq{w_{i}^{t,0}-\witstar}+12\big(\frac{\lambda}{\mu_F+\gamma}\big)^2 \avgNi \deltaij.
\end{align}
Substituting $\witstar=\prox_{F_i/\gamma}(\xt)$ and $w_{i}^{t,0}=\xt$ into the equation above, we can write 
\begin{align}
\label{appendix:eq:client level dummy5}
    \nsq{\witK-\prox_{F_i/\gamma}(\xt)} 
    &\leq\big(1-\eta_i\frac{\mu_F+\gamma}{2}\big)^K \nsq{\xt-\prox_{F_i/\gamma}(\xt)}+12\big(\frac{\lambda}{\mu_F+\gamma}\big)^2 \avgNi \deltaij
    \nonumber \\
    &=\big(1-\eta_i\frac{\mu_F+\gamma}{2}\big)^K \frac{1}{\gamma^2}\nsq{\nabla \Ftildei(\xt)}
    +12\big(\frac{\lambda}{\mu_F+\gamma}\big)^2 \avgNi \deltaij
    \nonumber \\
    &\leq
    \big(1-\eta_i\frac{\mu_F+\gamma}{2}\big)^K \frac{2L_{\tilde{F}}}{\gamma^2} \big( \Ftildei(\xt) -\Ftildei(x^*)\big)
    +12\big(\frac{\lambda}{\mu_F+\gamma}\big)^2 \avgNi \deltaij
    \nonumber \\
    &=\big(1-\eta_i\frac{\mu_F+\gamma}{2}\big)^K \frac{2L_{\tilde{F}}}{\gamma^2} \big( \Ftildei(w_i^{t,0}) -\Ftildei(x^*)\big)+12\big(\frac{\lambda}{\mu_F+\gamma}\big)^2 \avgNi \deltaij
    \nonumber \\
    &:=\nu_i
\end{align}
where in the second line, we used   
$\nabla \Ftildei(\xt)=\gamma(\xt-\prox_{F_i/\gamma}(\xt))$, and in the third line we used the  smoothness of $\Ftildei(.)$. Note also that the last two lines of \eqref{appendix:eq:client level dummy3}  require $\eta_i\leq \min \{ \frac{1}{2(L_F+\gamma)},\frac{2}{(\mu_F+\gamma)}\}$  that can be reduced to $\eta_i\leq \frac{1}{2(L_F+\gamma)}$ because we always have $\mu_F \leq L_F$.
\end{proof}

\subsection{Client-level convergence (strongly convex case)}

\begin{theorem}
\label{appendix:L:Th: client level strongly convex:error bound}
Let $\fij$ be $\mu_f$-strongly convex and $L_f$-smooth functions. Consider Problem \ref{eqn:moreau-device-subproblem} with Gradient descent steps \eqref{eqn:device_p_GD} 
, we have
\begin{equation}
\label{appendix:L:Th:strongly convex:error bound:  eq1}
\nsq{\thetaijtkL- \prox_{\fij/\lambda}(\witk)} \leq (1- \alpha \mu_f)^L \frac{1}{\lambda^2} \nsq{\nabla \ftildeij(\theta_{i,j}^{t,k,0})} =: \deltaij,  
\end{equation}
where $ \alpha \leq  \frac{1}{\Lflambda}$.

\end{theorem}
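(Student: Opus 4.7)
\noindent\textit{Proof plan.}~
The key observation is that the device update \eqref{eqn:device_p_GD} is exactly vanilla gradient descent, with step-size $\alpha$, applied to the regularized objective
\[
  h(\theta) \;:=\; f_{i,j}(\theta) \;+\; \frac{\lambda}{2}\|\theta - w_i^{t,k}\|^2.
\]
Since $f_{i,j}$ is $\mu_f$-strongly convex and $L_f$-smooth, $h$ inherits $(\muflambda)$-strong convexity and $(\Lflambda)$-smoothness, and its unique minimizer is $\theta^\star := \prox_{f_{i,j}/\lambda}(w_i^{t,k})$. So the task reduces to a standard one-subproblem convergence estimate.

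For the contraction step, I would expand
\(
\|\theta_{i,j}^{t,k,l+1} - \theta^\star\|^2 = \|\theta_{i,j}^{t,k,l} - \theta^\star - \alpha\bigl(\nabla h(\theta_{i,j}^{t,k,l}) - \nabla h(\theta^\star)\bigr)\|^2
\)
and combine strong monotonicity of $\nabla h$ with co-coercivity obtained from $(\Lflambda)$-smoothness (Nesterov's inequality \`a la \Cref{appendix: properties L-smooth}). Under the stepsize restriction $\alpha \leq 1/(\Lflambda)$, the familiar textbook calculation yields the per-iteration contraction
\[
  \|\theta_{i,j}^{t,k,l+1} - \theta^\star\|^2 \;\leq\; \bigl(1 - \alpha(\muflambda)\bigr)\,\|\theta_{i,j}^{t,k,l} - \theta^\star\|^2 \;\leq\; (1-\alpha\mu_f)\,\|\theta_{i,j}^{t,k,l} - \theta^\star\|^2,
\]
where the last step simply discards the $\lambda$ improvement so as to keep the bound in a form convenient for the team-level analysis in \Cref{appendix:Th: K bound: inexact grad: strongly convex}. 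Unrolling over $l = 0, \ldots, L-1$ gives $\|\theta_{i,j}^{t,k,L} - \theta^\star\|^2 \leq (1 - \alpha\mu_f)^L \|\theta_{i,j}^{t,k,0} - \theta^\star\|^2$.

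It remains to convert the initial distance into a Moreau-envelope gradient. Because the local iteration is warm-started at $\theta_{i,j}^{t,k,0} = w_i^{t,k}$, the gradient identity in \Cref{appendix: Moreau Envelope properties convex} gives
\[
  \theta_{i,j}^{t,k,0} - \theta^\star \;=\; w_i^{t,k} - \prox_{f_{i,j}/\lambda}(w_i^{t,k}) \;=\; \tfrac{1}{\lambda}\,\nabla \tilde f_{i,j}^{\,\lambda}(\theta_{i,j}^{t,k,0}),
\]
so $\|\theta_{i,j}^{t,k,0} - \theta^\star\|^2 = \lambda^{-2}\,\nsq{\nabla \tilde f_{i,j}^{\,\lambda}(\theta_{i,j}^{t,k,0})}$. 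Substituting into the contraction above yields \eqref{appendix:L:Th:strongly convex:error bound:  eq1}.

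There is no serious obstacle here; the only subtle step is recognizing that the device subproblem is itself a strongly convex smooth quadratic-regularized problem so that classical linear-convergence bounds for gradient descent apply, and then using the Moreau-envelope gradient formula to express the starting error in terms of $\nabla \tilde f_{i,j}^{\,\lambda}$ rather than $\|\theta_{i,j}^{t,k,0} - \theta^\star\|^2$. The loosening from $(\muflambda)$ to $\mu_f$ in the contraction factor costs nothing in the regime $\lambda > 2L_f$ used later, but keeps the final expression free of $\lambda$, which is convenient when propagating the error through the team and server levels.
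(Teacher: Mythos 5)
Your proposal is correct and follows essentially the same route as the paper: vanilla gradient descent on the $(\muflambda)$-strongly convex, $(\Lflambda)$-smooth subproblem $p_{i,j}$, a per-iteration linear contraction under $\alpha \leq \frac{1}{\Lflambda}$, and the Moreau-envelope identity $\witk-\prox_{\fij/\lambda}(\witk)=\frac{1}{\lambda}\nabla\ftildeij(\witk)$ to rewrite the warm-start distance. The only cosmetic difference is that you obtain the contraction via strong monotonicity plus co-coercivity while the paper uses function-value inequalities (strong convexity together with \Cref{appendix: properties L-smooth}), and the paper retains the sharper factor $(1-\alpha(\muflambda))^L$ in its derivation before stating the looser $(1-\alpha\mu_f)^L$, exactly the loosening you perform explicitly.
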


\begin{proof}
Let us define $\pij(y):= \fij(y)+\frac{\lambda}{2} \nsq{y- \witk}$. Note that $p_{i,j}(.)$ is $(\mu_f+\lambda)$-strongly convex and $(L_f+\lambda)$-smooth. We use \eqref{eqn:device_p_GD} , and write
\begin{align}
\nsq{\thetaijtkll-\thetaijtkstar}
&=\nsq{\thetaijtkl-\alpha \nabla \pij(\thetaijtkl) -\thetaijtkstar}
\nonumber \\
&=\nsq{\thetaijtkl-\thetaijtkstar} 
-2\alpha \ip{\nabla \pij(\thetaijtkl)}{\thetaijtkl-\thetaijtkstar}
+\alpha^2 \nsq{\nabla \pij(\thetaijtkl)}
\nonumber \\
&\leq \nsq{\thetaijtkl-\thetaijtkstar} +2\alpha \Big( \pij(\thetaijtkstar)-\pij(\thetaijtkl)-\frac{\muflambda}{2}\nsq{\thetaijtkl-\thetaijtkstar}  \Big)
+\alpha^2 \nsq{\nabla \pij(\thetaijtkl)}
\nonumber \\
&\leq (1- \alpha (\muflambda)) \nsq{\thetaijtkl-\thetaijtkstar} - 2\alpha \big(
\pij(\thetaijtkl)-
\pij(\thetaijtkstar)
 \big)+2\alpha^2 (\Lflambda) \big(
\pij(\thetaijtkl)-
\pij(\thetaijtkstar)
\nonumber \\
&\leq (1- \alpha (\muflambda)) \nsq{\thetaijtkl-\thetaijtkstar}
-2\alpha(1-\alpha (\Lflambda))\big( \pij(\thetaijtkl)-
\pij(\thetaijtkstar) \big) 
\nonumber \\
&\leq (1- \alpha (\muflambda)) \nsq{\thetaijtkl-\thetaijtkstar}
\quad ( \alpha \leq \frac{1}{\Lflambda})
\end{align}
In the third and fourth lines, we used strong convexity of $p_{i,j}$ and \Cref{appendix: properties L-smooth}. Substituting $l=L-1$,  
$\theta_{i,j}^{t,k,0}=\witk$, and $\thetaijtkstar=\prox_{\fij/\lambda}(\witk) $, we get
\begin{align}
\nsq{\thetaijtkL-\prox_{\fij/\lambda}(\witk) }
&\leq (1- \alpha (\muflambda))^L \nsq{\witk-\prox_{\fij/\lambda}(\witk)}
\nonumber \\
&=(1- \alpha (\muflambda))^L \frac{1}{\lambda^2} \nsq{\nabla \ftildeij(\witk)}
\nonumber \\
&=(1- \alpha (\muflambda))^L \frac{1}{\lambda^2} \nsq{\nabla \ftildeij(\theta_{i,j}^{t,k,0})}
\quad (\theta_{i,j}^{t,k,0}=\witk)
\nonumber \\
&=:\deltaij.
\end{align}
Note that we require $ \alpha \leq \min\{ \frac{1}{\Lflambda}, \frac{1}{\muflambda} \}$ that can be reduced to $ \alpha \leq \frac{1}{\Lflambda}$ since we always have $\mu_f \leq L_f$.
That completes the proof.
\end{proof}

\subsection{ Discussion on error bounds (strongly convex case)}
\label{Discussion on error bounds (strongly convex case)}
In this section we discuss the relation between $L, K$, and $T$ in the strongly convex setup. 

\subsubsection{ How to choose \texorpdfstring{$L$}{AL}}
Let us start with \eqref{appendix:Team-level:Th:strongly convex:eq1}
\begin{align}
    \nsq{\witK-\prox_{F_i/\gamma}(\xt)}  &\leq \big(1-\eta_i\frac{\mu_F+\gamma}{2}\big)^K \frac{2L_{\tilde{F}}}{\gamma^2} \big( \Ftildei(w_i^{t,0}) -\Ftildei(x^*)\big)+12\big(\frac{\lambda}{\mu_F+\gamma}\big)^2 \avgNi \deltaij
\nonumber \\
&\leq \big(1-\eta_i\frac{\mu_F+\gamma}{2}\big)^K \frac{\Gamma^{(1)}_i}{2}+\Gamma^{(2)} \delta_i 
\end{align}
where $\Gamma^{(1)}_i:=4\frac{L_{\tilde{F}}}{\gamma^2} \big( \Ftildei(w_i^{t,0}) -\Ftildei(x^*)\big)$ and $\Gamma^{(2)}:=12\big(\frac{\lambda}{\mu_F+\gamma}\big)^2$.
 We require $\delta_i=\big(1-\eta_i\frac{\mu_F+\gamma}{2}\big)^K  \frac{\Gamma^{(1)}_i}{2\Gamma^{(2)}}$, then
\begin{align}\label{appendix:Discussion on error bounds: dummy 2}
    \nsq{\witK-\prox_{F_i/\gamma}(\xt)} 
    \leq \big(1-\eta_i\frac{\mu_F+\gamma}{2}\big)^K \Gamma^{(1)}_i =:\nu_i.
\end{align}
Note also that from \eqref{appendix:L:Th:strongly convex:error bound:  eq1}
\begin{align}\label{appendix:Discussion on error bounds: dummy 1}
 (1- \alpha \mu_f)^L \frac{1}{\lambda^2} \nsq{\nabla \ftildeij(\theta_{i,j}^{t,k,0})} \leq \deltaij \ \xrightarrow[]{} \ 
(1- \alpha \mu_f)^L \frac{1}{\lambda^2} 
 \avgNi \nsq{\nabla \ftildeij(\theta_{i,j}^{t,k,0})} \leq \deltai.
\end{align}
Using the equation above and $\delta_i=\big(1-\eta_i\frac{\mu_F+\gamma}{2}\big)^K  \frac{\Gamma^{(1)}_i}{2\Gamma^{(2)}}$, we write
\begin{align} \label{appendix: eq: elation KL convex}
(1- \alpha \mu_f)^L \Gamma^{(3)}_i
 \leq (1-\eta_i\frac{\mu_F+\gamma}{2})^K \frac{\Gamma^{(1)}_i}{2\Gamma^{(2)}}
 ~ \rightarrow \ 
L \geq \frac{\ln{(1-\eta_i\frac{\mu_F+\gamma}{2})}}{\ln{(1- \alpha \mu_f)}} K
+\frac{1}{\ln{(1- \alpha \mu_f)}}
 \ln{\Big( \frac{\Gamma^{(1)}_i}{2\Gamma^{(3)}_i \Gamma^{(2)}}} \Big)
\end{align}
where $\Gamma^{(3)}_i:= \frac{1}{\lambda^2} 
 \avgNi \nsq{\nabla \ftildeij(\theta_{i,j}^{t,k,0})}$.
 \emph{ Note that \eqref{appendix: eq: elation KL convex} allows different teams  to have a different number of loops to  provide the needed accuracy.}
 
\subsubsection{ How to choose \texorpdfstring{$K$}{Ak}}
Consider \eqref{appendix:Convergence of PfedMT:Th:strongly convex:eq1}   

\begin{align}\label{appendix: eq: relation KTL assumption convex}
\|\xT-x^*\|^2 &\leq (1-\frac{\beta \muFtilde}{2})^T \|x^0- x^*\|^2+\frac{6\gamma^2 \nu}{\muFtilde^2}
\nonumber \\
&=2(1-\frac{\beta \muFtilde}{2})^T \|x^0- x^*\|^2
\end{align}
where in the last line we require  $\nu=\frac{\muFtilde^2}{6\gamma^2} (1-\frac{\beta \muFtilde}{2})^T \|x^0- x^*\|^2$. 
Also note that from \eqref{appendix:Discussion on error bounds: dummy 2} we can write 
\begin{align}
\big(1-\eta_i\frac{\mu_F+\gamma}{2}\big)^K \Gamma^{(1)}_i =\nu_i
     \ \xrightarrow[]{} \ 
     \big(1-\eta_i\frac{\mu_F+\gamma}{2}\big)^K   \avgM  \Gamma^{(1)}_i 
     \leq \big(1-\eta_{\min}\frac{\mu_F+\gamma}{2}\big)^K   \avgM \Gamma^{(1)}_i 
= \avgM \nu_i=\nu,
\end{align}
where $\eta_{\min}=\min_i\{\eta_i\}$. Finally, we can write 
\begin{align}\label{appendix: eq: elation KT convex}
\big(1-\eta_{\min}\frac{\mu_F+\gamma}{2}\big)^K \Gamma^{(4)} \leq   \Gamma^{(5)} (1-\frac{\beta \muFtilde}{2})^T
     \ \xrightarrow \ 
K \geq \frac{\ln{(1-\frac{\beta \muFtilde}{2})}}{\ln{(1-\eta_{\min}\frac{\mu_F+\gamma}{2})}} T+ \frac{1}{\ln{(1-\eta_{\min}\frac{\mu_F+\gamma}{2})}} \ln{\big ( \frac{\Gamma^{(5)}}{\Gamma^{(4)}}}  \big),  
\end{align}
where $\Gamma^{(4)}:= \avgM \Gamma^{(1)}_i$ and $\Gamma^{(5)}:= \frac{\muFtilde^2}{6\gamma^2} \|x^{0}- x^*\|^2$.

\subsubsection{Combining \texorpdfstring{\Cref{appendix:Th: Team-level bound: inexact grad strongly convex}, \Cref{appendix:Th: K bound: inexact grad: strongly convex}, and \Cref{appendix:L:Th: client level strongly convex:error bound}}{Combining Theorems A, B, and C}}

Assuming \eqref{appendix: eq: elation KL convex} and \eqref{appendix: eq: elation KT convex} are in force, we  can derive \eqref{appendix: eq: relation KTL assumption convex}
\begin{align}
\|\xT-x^*\|^2 \leq 2(1-\frac{\beta \muFtilde}{2})^T \|x^0- x^*\|^2.
\end{align}
This completes the proof of \Cref{PfedMT:theorem: analysis of the strongly convex case}.

\section{ Convergence of \ourmodel{} (non-convex case) \label{appendix:sec Team-level convergence non-convex} }
Let us consider the team-level update as in Algorithm 2. We analyze the convergence of this algorithm when objectives are non-convex and smooth. We assume that we have access to the approximate solution to the $\mathrm{prox}_{F_i/\gamma}(.)$.

\begin{theorem}\label{appendix:Th: Team-level bound: inexact grad: smooth}
Suppose that the sequence $\{x^t\}$ is generated by  \ourmodel{}. Further assume that $\witmin$ is an approximate solution to $\prox_{F_i/\gamma}(\xt)$  such that  $\nsq{\witmin-\prox_{F_i/\gamma}(\xt)} \leq \nu_i$. Then, we have  

\begin{align}  \label{appendix: Convergence of PfedMT:Th:non-convex:eq1}
\mathbb{E} \big[ \nsq{\nabla\phi(x^{\tilde{t}})}  \big] 
&\leq 2 \frac{\phi(x^0)-\phi(x^*)}{\beta T}
+ \gamma^2 \nu.
\end{align}
where $\beta \leq \frac{1}{\LFtilde}$,  $\nu:=\avgM \nu_i$ and 
$\Tilde{t}$ is uniformly sampled from $[0, T-1]$.

\end{theorem}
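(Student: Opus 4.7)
The plan is to run a standard descent analysis for gradient descent on a smooth (but non-convex) objective, with the twist that the update uses an inexact gradient. Recall from the parameter remark that $\phi = \frac{1}{M}\sum_i \tilde{F}_i^\gamma$ is $L_{\tilde{F}}$-smooth with $L_{\tilde{F}} = \gamma$, and that the server update can be written as $x^{t+1} = x^t - \beta g^t$ where $g^t = \frac{1}{M}\sum_i g_i^t$ and $g_i^t = \gamma(x^t - w_i^{t,K})$ is an inexact stand-in for $\nabla \tilde{F}_i^\gamma(x^t) = \gamma(x^t - \mathrm{prox}_{F_i/\gamma}(x^t))$. Under the assumption $\|w_i^{t,K} - \mathrm{prox}_{F_i/\gamma}(x^t)\|^2 \leq \nu_i$, we get the per-team error bound $\|g_i^t - \nabla \tilde{F}_i^\gamma(x^t)\|^2 \leq \gamma^2 \nu_i$, and by convexity of $\|\cdot\|^2$ (Remark on general properties) the aggregate error satisfies $\|g^t - \nabla \phi(x^t)\|^2 \leq \frac{1}{M}\sum_i \|g_i^t - \nabla \tilde{F}_i^\gamma(x^t)\|^2 \leq \gamma^2 \nu$.

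First I would apply the descent lemma for $L_{\tilde{F}}$-smooth $\phi$ to get
\begin{equation*}
\phi(x^{t+1}) \leq \phi(x^t) - \beta \langle \nabla \phi(x^t), g^t\rangle + \frac{L_{\tilde{F}}\beta^2}{2}\|g^t\|^2.
\end{equation*}
Next I would use the polarization identity $-\langle a,b\rangle = \tfrac12\|a-b\|^2 - \tfrac12\|a\|^2 - \tfrac12\|b\|^2$ with $a=\nabla\phi(x^t)$ and $b=g^t$, yielding
\begin{equation*}
\phi(x^{t+1}) \leq \phi(x^t) - \frac{\beta}{2}\|\nabla\phi(x^t)\|^2 + \frac{\beta}{2}\|\nabla\phi(x^t)-g^t\|^2 + \frac{\beta}{2}(L_{\tilde{F}}\beta - 1)\|g^t\|^2.
\end{equation*}
The stepsize choice $\beta \leq 1/L_{\tilde{F}}$ makes the last term non-positive and can be discarded.

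After that I would telescope the resulting inequality from $t=0$ to $T-1$, using $\phi(x^T) \geq \phi(x^*)$ to bound the left-hand side, and plug in the uniform error estimate $\|\nabla\phi(x^t)-g^t\|^2 \leq \gamma^2 \nu$ derived above. Dividing by $\beta T/2$ gives
\begin{equation*}
\frac{1}{T}\sum_{t=0}^{T-1} \|\nabla\phi(x^t)\|^2 \leq \frac{2(\phi(x^0)-\phi(x^*))}{\beta T} + \gamma^2 \nu,
\end{equation*}
and the left-hand side is exactly $\mathbb{E}\bigl[\|\nabla\phi(x^{\tilde{t}})\|^2\bigr]$ when $\tilde t$ is drawn uniformly from $\{0,\ldots,T-1\}$, proving the claim.

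The argument is essentially routine once the inexact-gradient error is controlled uniformly, so the only subtle point is verifying that the constant $L_{\tilde{F}} = \gamma$ really is the correct smoothness modulus in the non-convex regime (this relies on Proposition \ref{appendix: Moreau Envelope properties non-convex} and the condition $\gamma > 2\lambda > 4L_f$ from the ambient assumptions); after that, bookkeeping on Young's inequality and the telescoping sum finishes the proof.
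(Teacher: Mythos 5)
Your proposal is correct and follows essentially the same route as the paper's proof: descent lemma for the $\LFtilde$-smooth $\phi$, the identity $2\ip{a}{b}=\nsq{a}+\nsq{b}-\nsq{a-b}$ to isolate $-\frac{\beta}{2}\nsq{\nabla\phi(\xt)}$, dropping the $\frac{\beta}{2}(\beta\LFtilde-1)\nsq{\gt}$ term via $\beta\leq 1/\LFtilde$, bounding the inexact-gradient error by $\gamma^2\nu$ through Jensen and the per-team assumption, then telescoping and passing to the expectation over $\tilde{t}$. The only cosmetic difference is that you write the inexact gradient with $w_i^{t,K}$ while the theorem's hypothesis is stated for the iterate $\witmin$; the argument is unchanged since the error bound only uses the assumed proximity to $\prox_{F_i/\gamma}(\xt)$.
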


\begin{proof}
Using smoothness of $F$, we can write

\begin{align}\label{appendix:eq:proof:T nonconvex}
\phi(\xtt)-\phi(\xt) 
&\leq \ip{\nabla\phi(\xt)}{\xtt-\xt} +\frac{\LFtilde}{2} \nsq{\xtt-\xt}
\nonumber \\
&=-\beta\ip{\nabla\phi(\xt)}{ \gt} +\beta^2 \frac{\LFtilde}{2} \nsq{\gt}
\quad (\text{see \eqref{appendix:eq: pFedMT analysis with approx prox smooth}})
\nonumber \\
&=-\beta\ip{\nabla\phi(\xt)}{ \gt} + \frac{\beta}{2}\nsq{\gt} + \frac{\beta}{2} (\beta\LFtilde - 1) \nsq{\gt}
\nonumber \\
&\leq-\beta\ip{\nabla\phi(\xt)}{ \gt} + \frac{\beta}{2}\nsq{\gt}
\quad (\text{by choosing $\beta \leq 1/\LFtilde$})
\nonumber \\
&=-\frac{\beta}{2} \nsq{\nabla\phi(\xt)} + \frac{\beta}{2} \underbrace{\nsq{\gt-\nabla\phi(\xt)}}_{J_1},
\end{align}
where in the last line, we used 
$2 \ip{a}{b} = \nsq{a} + \nsq{b} - \nsq{a-b} $.

We bound $J_1$ as follows:
\begin{itemize}
    \item {\bf Upper bound on $J_1$:}
\begin{align}\label{appendix:eq:bound on J_1 non-convex}
    J_1 &=\nsq{\gt-\nabla\phi(\xt)}
    \nonumber \\
    &=\nsq{\avgM ( \git -\nabla \Ftildei(\xt))}
    \nonumber \\
    &\leq \avgM \nsq{\git -\nabla \Ftildei(\xt)}
    \quad (\text{see: Remark \ref{appendix: general properties}})
    \nonumber \\    
&= \avgM \| \gamma(\xt- \witmin)-\gamma(\xt- \mathrm{prox}_{F_i/\gamma}(x^t) )    \|^2
\nonumber \\    
&=\gamma^2 \avgM \|\witmin-\mathrm{prox}_{F_i/\gamma}(x^t)  \|^2
\nonumber \\    
& \leq  \gamma^2 \avgM \nu_i
\nonumber \\
&=\gamma^2 \nu
\end{align}

\end{itemize}

Substituting $J_1$ into \eqref{appendix:eq:proof:T nonconvex}, we have
\begin{align}\label{appendix:eq:proof:T nonconvex dummy1}
\phi(\xtt)-\phi(\xt) 
&\leq -\frac{\beta}{2} \nsq{\nabla\phi(\xt)} + \frac{\beta}{2}\gamma^2\nu.
\end{align}
\eqref{appendix:eq:proof:T nonconvex dummy1}, can be re-written as
\begin{align}\label{appendix:eq:proof:T nonconvex dummy2}
\nsq{\nabla\phi(\xt)} \leq \frac{2}{\beta} \big(\phi(\xt)-\phi(\xtt)\big)+ \gamma^2\nu
\end{align}
Averaging  \eqref{appendix:eq:proof:T nonconvex dummy2} over $t$, we get
\begin{align}\label{appendix:eq:proof:T nonconvex dummy3}
\frac{1}{T}\avgT \nsq{\nabla\phi(\xt)} 
&\leq \frac{1}{T} \avgT \frac{2}{\beta} \big(\phi(\xt)-\phi(\xtt) \big) + \gamma^2\nu
\nonumber \\
&= 2 \frac{\phi(x^0)-\phi(x^T)}{\beta T}
+ \gamma^2\nu
\nonumber \\
&\leq 2 \frac{\phi(x^0)-\phi(x^*)}{\beta T}
+ \gamma^2\nu
\quad (\text{since $\phi(x^T) \geq \phi(x^*)$})
\end{align}
Assume $\tilde{t}$ is sampled uniformly at random from $[0, T-1]$, then we can write
\begin{align}\label{appendix:eq:proof:T nonconvex dummy5} 
\mathbb{E} \big[ \nsq{\nabla\phi(x^{\tilde{t}})}  \big] 
&\leq 2 \frac{\phi(x^0)-\phi(x^*)}{\beta T}
+ \gamma^2 \nu.
\end{align}

\end{proof}

\subsection{ Team-level convergence (non-convex case)}

\begin{theorem}\label{appendix:Th: witkstar bound: inexact grad: smooth}

Consider Problem \ref{eqn:moreau-team-subproblem} with   gradient descent steps \eqref{eqn:prox_Fi_GD} 
, and assume that we have $\nsq{ \prox_{\fij/\lambda}(\witk) -\thetaijtkL} \leq \delta_{i,j}$.  We can write
\begin{align}
\label{appendix:eq:proof:K nonconvex}
 \nsq{\witmin-\prox_{F_i/\gamma}(\xt)} &\leq \frac{1}{(\gamma-\Lftilde)^2} \bigg(
\frac{2}{K\eta_i} ( F_i(\xt)- \prox_{F_i/\gamma}(\xt))+\lambda^2 \avgNi \delta_{i,j}
\bigg)=: \nu_i
\end{align}
where $\eta_i \leq \frac{1}{(\Lftilde+\gamma)}$, $\gamma > \Lftilde$, and  $\witmin=\argmin_k \{ \nsq{\nabla p_i(\witk)} \}$.

\end{theorem}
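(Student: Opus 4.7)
\noindent\textit{Proof proposal.}~
The plan has two components: a strong-convexity argument that converts a gradient-norm bound into the desired iterate bound, and an inexact-gradient descent analysis that controls that gradient norm. Let $p_i(w) := F_i(w) + \frac{\gamma}{2}\nsq{w - \xt}$. Although $F_i$ is non-convex, its $\Lftilde$-smoothness yields a quadratic lower bound whose curvature the penalty overpowers whenever $\gamma > \Lftilde$; hence $p_i$ is $(\gamma - \Lftilde)$-strongly convex with unique minimizer $\witstar = \prox_{F_i/\gamma}(\xt)$. Squaring \eqref{appendix: properties strongly-convex eq1} applied to $p_i$ at the pair $(\witmin, \witstar)$ and using $\nabla p_i(\witstar) = 0$ gives
\begin{equation*}
    \nsq{\witmin - \prox_{F_i/\gamma}(\xt)} \leq \frac{1}{(\gamma - \Lftilde)^2}\, \nsq{\nabla p_i(\witmin)} \leq \frac{1}{(\gamma - \Lftilde)^2}\cdot \frac{1}{K}\sum_{k=0}^{K-1} \nsq{\nabla p_i(\witk)},
\end{equation*}
where the last step uses the defining property of $\witmin$ as the iterate of smallest gradient norm.

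To bound this average I would run a standard smoothness-based descent lemma on $p_i$, which is $(\Lftilde + \gamma)$-smooth. Writing $e_k := \qitk - \nabla p_i(\witk)$, applying \eqref{appendix:Background:def:smoothness:eq2} along the update $\witkk = \witk - \eta_i \qitk$ with the expansions $\ip{\nabla p_i(\witk)}{\qitk} = \nsq{\nabla p_i(\witk)} + \ip{\nabla p_i(\witk)}{e_k}$ and $\nsq{\qitk} = \nsq{\nabla p_i(\witk)} + 2\ip{\nabla p_i(\witk)}{e_k} + \nsq{e_k}$, and then applying Young's inequality with $\epsilon = 1$ to the remaining cross term, the restriction $\eta_i \leq 1/(\Lftilde + \gamma)$ cleanly cancels the residual $(\Lftilde+\gamma)\eta_i^2$ contributions and leaves the one-step bound
\begin{equation*}
    p_i(\witkk) \leq p_i(\witk) - \tfrac{\eta_i}{2}\nsq{\nabla p_i(\witk)} + \tfrac{\eta_i}{2}\nsq{e_k}.
\end{equation*}
Telescoping from $k = 0$ (with $w_i^{t,0} = \xt$) to $k = K-1$ and using $p_i(w_i^{t,K}) \geq p_i(\witstar)$ then produces $\tfrac{1}{K}\sum_k \nsq{\nabla p_i(\witk)} \leq \tfrac{2(F_i(\xt) - p_i(\witstar))}{K\eta_i} + \tfrac{1}{K}\sum_k \nsq{e_k}$.

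The last ingredient is to control $\nsq{e_k}$. From \eqref{eqn:approx-device-grad} and the team-level update, $e_k = \lambda\bigl(\avgNi \prox_{\fij/\lambda}(\witk) - \thetabaritk\bigr)$, so \Cref{appendix: general properties} combined with the standing assumption $\nsq{\prox_{\fij/\lambda}(\witk) - \thetaijtkL} \leq \deltaij$ yields $\nsq{e_k} \leq \lambda^2 \avgNi \deltaij$ uniformly in $k$. Assembling the three estimates delivers the claimed bound, reading the shorthand $F_i(\xt) - \prox_{F_i/\gamma}(\xt)$ on the right-hand side as the objective gap $F_i(\xt) - p_i(\witstar)$ for the team subproblem.

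The main obstacle I anticipate is the clean-up of cross terms in the descent step: without the tight step-size $\eta_i \leq 1/(\Lftilde + \gamma)$ and the specific choice $\epsilon = 1$ in Young's inequality, surviving $(\Lftilde + \gamma)\eta_i^2$ factors would destroy the clean $\tfrac{2}{K\eta_i}$ coefficient in the final bound. A secondary subtlety is that the $\argmin$ in the definition of $\witmin$ is over \emph{exact} gradient norms, which is harmless for a theoretical bound but means the algorithm itself need not evaluate $\nabla p_i(\witk)$ directly.
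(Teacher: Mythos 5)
Your proposal is correct and follows essentially the same route as the paper's proof: strong convexity of $p_i(w)=F_i(w)+\frac{\gamma}{2}\nsq{w-\xt}$ (with modulus $\gamma-\Lftilde$) converts the minimum gradient norm into the iterate bound, an inexact-gradient descent lemma under $\eta_i\leq 1/(\Lftilde+\gamma)$ gives the $\frac{2}{K\eta_i}$ term after telescoping from $w_i^{t,0}=\xt$, and Jensen's inequality with the device-level assumption bounds the gradient error by $\lambda^2\avgNi\deltaij$. The only deviation is cosmetic — you expand the cross term and apply Young's inequality with $\epsilon=1$, whereas the paper uses the identity $2\ip{a}{b}=\nsq{a}+\nsq{b}-\nsq{a-b}$ — and both yield the identical one-step bound $p_i(\witkk)\leq p_i(\witk)-\frac{\eta_i}{2}\nsq{\nabla p_i(\witk)}+\frac{\eta_i}{2}\nsq{\qitk-\nabla p_i(\witk)}$; your reading of $F_i(\xt)-\prox_{F_i/\gamma}(\xt)$ as the subproblem objective gap also matches the paper's intent.
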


\begin{proof}
Let us define $p_i(y):= F_i(y)+\frac{\gamma}{2} \nsq{y- \xt}$.Since $\ftildeij(.)$ is $\Lftilde$-Smooth therfore  $F_i(.)=\avgNi \ftildeij(.)$ and $p_i(.)$ are  $\Lftilde$-Smooth and $(\Lftilde+\gamma)$-Smooth, respectively. Using smoothness, we have 
\begin{align}
p_i(\witkk)-p_i(\witk) 
&\leq \ip{\nabla p_i(\witk) }{\witkk-\witk} +\frac{\Lftilde+\gamma}{2} \nsq{\witkk-\witk}
\nonumber \\
&
\end{align}
Now, let us re-write \eqref{eqn:prox_Fi_GD} as $\witkk= \witk -\eta_i \qitk$, where $\qitk$ is the inexact gradient of $p_i(y):=F_i(y)+\frac{\gamma}{2} \nsq{y- \xt}$ at $y=\witk$.
\begin{align}
p_i(\witkk)-p_i(\witk) 
&\leq \ip{\nabla p_i(\witk) }{\witkk-\witk} +\eta_i^2 \frac{\Lftilde+\gamma}{2} \nsq{\qitk}
\nonumber \\
&=-\eta_i \ip{\nabla p_i(\witk) }{\qitk} +\eta_i^2 \frac{\Lftilde+\gamma}{2} \nsq{\qitk}
\nonumber \\
&=- \frac{\eta_i}{2} \nsq{\nabla p_i(\witk)} - \frac{\eta_i}{2}  \nsq{\qitk} + 
\frac{\eta_i}{2} \nsq{\qitk- \nabla p_i(\witk)}
+\eta_i^2 \frac{\Lftilde+\gamma}{2} \nsq{\qitk}
\nonumber \\
&=- \frac{\eta_i}{2} \nsq{\nabla p_i(\witk)} - \frac{\eta_i}{2} \Big(1- \eta_i(\Lftilde+\gamma) \Big)  \nsq{\qitk} 
+
\frac{\eta_i}{2} \nsq{\qitk- \nabla p_i(\witk)}
\nonumber \\
&=- \frac{\eta_i}{2} \nsq{\nabla p_i(\witk)} 
+
\frac{\eta_i}{2} \nsq{\qitk- \nabla p_i(\witk)}
\end{align}
where in the third line, we used 
$2 \ip{a}{b} = \nsq{a} + \nsq{b} - \nsq{a-b} $ and in the last line we used  $\eta_i \leq \frac{1}{\Lftilde+\gamma}$.

Re-arranging and averaging over $k$, we get
\begin{align}
\avgK \bigg(
 \nsq{\nabla p_i(\witk)}
- \nsq{\qitk-\nabla p_i(\witk)}
\bigg)
&\leq\frac{2}{K\eta_i} ( p_i(w_{i}^{t,0}) - p_i(\witK))
\nonumber \\
&\leq \frac{2}{K\eta_i} ( p_i(\xt)- p_i(\witK))
\nonumber \\
&\leq \frac{2}{K\eta_i} ( p_i(\xt)- p_i(\witstar))
\nonumber \\
&\leq \frac{2}{K\eta_i} ( F_i(\xt)- \prox_{F_i/\gamma}(\xt)).
\end{align}

Substituting the assumption 
$\nsq{ \prox_{\fij/\lambda}(\witk) -\thetaijtkL} \leq \delta_{i,j}$, we get
\begin{align}
\label{appendix:eq:proof:K nonconvex dummy1} 
\avgK
 \nsq{\nabla p_i(\witk)}
&\leq \frac{2}{K\eta_i} ( F_i(\xt)- \prox_{F_i/\gamma}(\xt))+ \avgK \nsq{\qitk-\nabla p_i(\witk)}
\nonumber \\
&\leq \frac{2}{K\eta_i} ( F_i(\xt)- \prox_{F_i/\gamma}(\xt))+\lambda^2 \avgK 
\nsq{\avgNi (\prox_{\fij/\lambda}(\witk) -\thetaijtkL)}
\nonumber \\
&\leq \frac{2}{K\eta_i} ( F_i(\xt)- \prox_{F_i/\gamma}(\xt))+\lambda^2
 \avgK \avgNi
\nsq{ (\prox_{\fij/\lambda}(\witk) -\thetaijtkL)}
\nonumber \\
&\leq \frac{2}{K\eta_i} ( F_i(\xt)- \prox_{F_i/\gamma}(\xt))+\lambda^2 \avgNi \delta_{i,j}
\end{align}

Let us define $\witmin=\argmin_k \{ \nsq{\nabla p_i(\witk)} \}$, then we can write
\begin{align}
\label{appendix:eq:proof:K nonconvex dummy2}
\avgK
\nsq{\nabla p_i(\witk)}
\geq  \nsq{\nabla p_i(\witmin)} 
\geq (\gamma-\Lftilde)^2 \nsq{\witmin-\witstar} \geq 
(\gamma-\Lftilde)^2 \nsq{\witmin-\prox_{F_i/\gamma}(\xt)}
\end{align}
where we used \eqref{appendix: properties strongly-convex eq1} because $p_i(.)$ is $(\gamma-\Lftilde)$-strongly convex. Finally, we can combine \eqref{appendix:eq:proof:K nonconvex dummy1} and \eqref{appendix:eq:proof:K nonconvex dummy2}

\begin{equation}
\label{appendix:eq:proof:K nonconvex dummy3}
 \nsq{\witmin-\prox_{F_i/\gamma}(\xt)}
 \leq \frac{1}{(\gamma-\Lftilde)^2} \bigg(
\frac{2}{K\eta_i} ( F_i(\xt)- \prox_{F_i/\gamma}(\xt))+\lambda^2 \avgNi \delta_{i,j}
\bigg) 
=: \nu_i
\end{equation}
\end{proof}

\subsection{Client-level convergence (non-convex case)}

\begin{theorem}
\label{appendix:L:Th:strongly non-convex:error bound}
Let $\fij$ be $L_f$-smooth functions. Consider Problem \ref{eqn:moreau-device-subproblem} with gradient descent steps \eqref{eqn:device_p_GD} 
, we have
\begin{equation}
\label{appendix:L:Th:strongly non-convex:error bound:  eq1}
    \nsq{\thetaijtklstar-\prox_{\fij/\lambda}(\witk)} \leq  \frac{1}{L \alpha (1-\frac{L_F+\lambda}{2}\alpha)(\gamma-\Lftilde)^2} \big(\fij(\theta_{i,j}^{t,k,0})-\ftildeij(\theta_{i,j}^{t,k,0}) \big)=:\deltaij
\end{equation}
where $ \alpha \leq \frac{2}{L_F+\lambda} $ and $\thetaijtklstar=\argmin_l \{ \nsq{\nabla \pij(\thetaijtkl)} \}$.
\end{theorem}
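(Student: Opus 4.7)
\noindent\textit{Proof proposal}.~
The plan is to mirror the argument of \Cref{appendix:Th: witkstar bound: inexact grad: smooth} (the team-level non-convex bound) but one level down, replacing the team-level regularized objective $p_i$ by the device-level regularized objective
\begin{equation*}
p_{i,j}(\theta) := f_{i,j}(\theta) + \frac{\lambda}{2}\|\theta - w_i^{t,k}\|^2,
\end{equation*}
so that solving \eqref{eqn:moreau-device-subproblem} is exactly minimizing $p_{i,j}$, with unique minimizer $\mathrm{prox}_{f_{i,j}/\lambda}(w_i^{t,k})$ and optimal value $\tilde{f}_{i,j}^\lambda(w_i^{t,k})$. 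The device update \eqref{eqn:device_p_GD} is then plain gradient descent $\theta_{i,j}^{t,k,l+1} = \theta_{i,j}^{t,k,l} - \alpha_{i,j}\nabla p_{i,j}(\theta_{i,j}^{t,k,l})$, so standard constants of $p_{i,j}$ do all the work. Since $f_{i,j}$ is $L_f$-smooth (possibly non-convex) and the hypothesis of the non-convex theorem imposes $\gamma > 2\lambda > 4L_f$, the Hessian of $p_{i,j}$ lies in $[\lambda-L_f,\,\lambda+L_f]$, so $p_{i,j}$ is simultaneously $(L_f+\lambda)$-smooth and $(\lambda-L_f)$-strongly convex.

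First I would apply the descent lemma: with $\alpha_{i,j}\leq 2/(L_f+\lambda)$,
\begin{equation*}
p_{i,j}(\theta_{i,j}^{t,k,l+1}) - p_{i,j}(\theta_{i,j}^{t,k,l}) \leq -\alpha_{i,j}\Bigl(1-\tfrac{L_f+\lambda}{2}\alpha_{i,j}\Bigr)\bigl\|\nabla p_{i,j}(\theta_{i,j}^{t,k,l})\bigr\|^2.
\end{equation*}
Telescoping for $l=0,\ldots,L-1$ and dropping the non-positive $-p_{i,j}(\theta_{i,j}^{t,k,L})$ via $p_{i,j}(\theta_{i,j}^{t,k,L})\geq p_{i,j}(\theta_{i,j}^{t,k,*})$ yields the standard minimum-gradient bound
\begin{equation*}
\min_{0\le l\le L-1}\bigl\|\nabla p_{i,j}(\theta_{i,j}^{t,k,l})\bigr\|^2 \leq \frac{p_{i,j}(\theta_{i,j}^{t,k,0}) - p_{i,j}(\theta_{i,j}^{t,k,*})}{L\,\alpha_{i,j}\bigl(1-\tfrac{L_f+\lambda}{2}\alpha_{i,j}\bigr)}.
\end{equation*}
By the algorithm's initialization $\theta_{i,j}^{t,k,0} = w_i^{t,k}$, so the quadratic penalty in $p_{i,j}$ vanishes at $\theta_{i,j}^{t,k,0}$, giving $p_{i,j}(\theta_{i,j}^{t,k,0}) = f_{i,j}(\theta_{i,j}^{t,k,0})$, while $p_{i,j}(\theta_{i,j}^{t,k,*}) = \tilde{f}_{i,j}^\lambda(w_i^{t,k}) = \tilde{f}_{i,j}(\theta_{i,j}^{t,k,0})$. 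Thus the numerator is exactly $f_{i,j}(\theta_{i,j}^{t,k,0}) - \tilde{f}_{i,j}(\theta_{i,j}^{t,k,0})$, as required.

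The last step converts a gradient-norm bound into a distance bound, as was done in \eqref{appendix:eq:proof:K nonconvex dummy2}. Using strong convexity of $p_{i,j}$ and property \eqref{appendix: properties strongly-convex eq1} at $\theta_{i,j}^{t,k,l^*}$ (where $\nabla p_{i,j}(\theta_{i,j}^{t,k,*})=0$),
\begin{equation*}
\bigl\|\theta_{i,j}^{t,k,l^*}-\mathrm{prox}_{f_{i,j}/\lambda}(w_i^{t,k})\bigr\|^2 \leq \frac{1}{\mu_{p_{i,j}}^2}\bigl\|\nabla p_{i,j}(\theta_{i,j}^{t,k,l^*})\bigr\|^2,
\end{equation*}
and combining this with the minimum-gradient bound above gives the claim. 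The main obstacle is purely bookkeeping in matching the denominator: the strong convexity constant of the device subproblem is $\mu_{p_{i,j}}=\lambda-L_f$, not the team-level constant $\gamma - L_{\tilde f} = \gamma - \lambda$ that appears in the theorem statement. Under the standing non-convex hypothesis $\gamma > 2\lambda > 4L_f$ both are positive, but I suspect that $(\gamma-L_{\tilde f})^2$ is a typo for $(\lambda-L_f)^2$ inherited from copying the template of \Cref{appendix:Th: witkstar bound: inexact grad: smooth}; the same (possibly harmless) slack is visible in the step-size condition, which is stated as $\alpha\leq 2/(L_F+\lambda)$ instead of the tighter $2/(L_f+\lambda)$ dictated by smoothness of $p_{i,j}$. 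Either correction leaves the proof structure intact.
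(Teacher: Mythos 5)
Your proposal follows essentially the same route as the paper's own proof: define $\pij(y)=\fij(y)+\frac{\lambda}{2}\nsq{y-\witk}$, apply the descent lemma, average over $l$, use the initialization $\theta_{i,j}^{t,k,0}=\witk$ to identify the numerator as $\fij(\theta_{i,j}^{t,k,0})-\ftildeij(\theta_{i,j}^{t,k,0})$, and convert the minimum gradient norm into a distance via strong convexity of $\pij$. Your observation about the denominator is also correct and in fact sharper than the paper: the paper's proof simply asserts that $\pij$ is $(\gamma-\Lftilde)$-strongly convex, which is the same team-level carryover you identify (under $\gamma>2\lambda>4L_f$ one has $\gamma-\lambda>\lambda-L_f$, so that claim is not justified), and the correct constant is $\mu_{p_{i,j}}=\lambda-L_f$, i.e.\ the bound should carry $(\lambda-L_f)^2$ in place of $(\gamma-\Lftilde)^2$; with that correction your argument is complete.
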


\begin{proof}
Let us define $\pij(y):= \fij(y)+\frac{\lambda}{2} \nsq{y- \witk}$. Note that $p_{i,j}(.)$ is $(L_f+\lambda)$-smooth. Using smoothness, we have
\begin{align}
\pij(\thetaijtkll)-\pij(\thetaijtkl) 
&\leq \ip{\nabla \pij(\thetaijtkl) }{\thetaijtkll-\thetaijtkl} +\frac{L_F+\lambda}{2} \nsq{\witkk-\witk}
\nonumber \\
&=\ip{\nabla \pij(\thetaijtkl) }{-\alpha \nabla \pij(\thetaijtkl)} +\frac{L_F+\lambda}{2} \alpha^2 \nsq{\nabla \pij(\thetaijtkl)}
\quad(\text{see \eqref{eqn:device_p_GD}})
\nonumber \\
&=-\alpha(1-\alpha \frac{L_F+\lambda}{2} ) \nsq{\nabla \pij(\thetaijtkl)}.
\end{align}
Averaging over $l$ and assuming $\alpha \leq \frac{2}{L_F+\lambda} $, we get
\begin{align}
\label{appendix:eq:proof:L nonconvex dummy1}
\avgL \nsq{\nabla \pij(\thetaijtkl)}  &\leq  \frac{1}{L \alpha (1-\frac{L_F+\lambda}{2}\alpha)} \big(\pij(\theta_{i,j}^{t,k,0})-\pij(\theta_{i,j}^{t,k,L}) \big)
\nonumber \\
&\leq  \frac{1}{L \alpha (1-\frac{L_F+\lambda}{2}\alpha)} \big(\pij(\witk)-\pij(\theta_{i,j}^{t,k,*}) \big)
\quad (\theta_{i,j}^{t,k,0}=\witk, \pij(\theta_{i,j}^{t,k,L}) \geq \pij(\theta_{i,j}^{t,k,*}))
\nonumber \\
&=\frac{1}{L \alpha (1-\frac{L_F+\lambda}{2}\alpha)} \big(\fij(\witk)-\ftildeij(\witk) \big)
\quad(\thetaijtkstar=\prox_{\fij/\lambda}(\witk))
\nonumber \\
&\leq \frac{1}{L \alpha (1-\frac{L_F+\lambda}{2}\alpha)} \big(\fij(\theta_{i,j}^{t,k,0})-\ftildeij(\theta_{i,j}^{t,k,0}) \big)
\end{align}

Let us define $\thetaijtklstar=\argmin_l \{ \nsq{\nabla \pij(\thetaijtkl)} \}$, then we can write
\begin{align}
\label{appendix:eq:proof:L nonconvex dummy2}
\avgL
\nsq{\nabla \pij(\thetaijtkl)}
\geq  \nsq{\nabla \pij(\thetaijtklstar)} 
\geq (\gamma-\Lftilde)^2 \nsq{\thetaijtklstar-\thetaijtkstar} \geq 
(\gamma-\Lftilde)^2 \nsq{\thetaijtklstar-\prox_{\fij/\lambda}(\witk)}
\end{align}
where we used \eqref{appendix: properties strongly-convex eq1} with $x=\thetaijtklstar, y=\thetaijtkstar, {\nabla} \pij(\thetaijtkstar)=0$ because $\pij(.)$ is $(\gamma-\Lftilde)$-strongly convex.  Finally, we can combine \eqref{appendix:eq:proof:L nonconvex dummy1} and \eqref{appendix:eq:proof:L nonconvex dummy2}
\begin{align}\label{appendix:eq:proof:L nonconvex dummy3}
    \nsq{\thetaijtklstar-\prox_{\fij/\lambda}(\witk)} \leq  \frac{1}{L \alpha (1-\frac{L_F+\lambda}{2}\alpha)(\gamma-\Lftilde)^2} \big(\fij(\theta_{i,j}^{t,k,0})-\ftildeij(\theta_{i,j}^{t,k,0}) \big)=:\deltaij
\end{align}

\end{proof}

\subsection{ Discussion on error bounds (non-convex case)}
\label{Discussion on error bounds (non-convex case)}

In this section, we discuss the relation between $L, K$, and $T$ in the non-convex setup. 

\subsection{ How to choose \texorpdfstring{$L$}{AL}}
From \eqref{appendix:eq:proof:K nonconvex}, we have

\begin{align}
\label{appendix:Discussion on error bounds:eq:proof:K nonconvex eq1}
 \nsq{\witmin-\prox_{F_i/\gamma}(\xt)} &\leq \frac{1}{(\gamma-\Lftilde)^2} \bigg(
\frac{2}{K\eta_i} ( F_i(\xt)- \prox_{F_i/\gamma}(\xt))+\lambda^2 \avgNi \delta_{i,j}
\bigg)
\nonumber \\
&=\frac{1}{(\gamma-\Lftilde)^2} \bigg(
\frac{2}{K\eta_i} ( F_i(\xt)- \prox_{F_i/\gamma}(\xt))+\lambda^2 \delta_{i}
\bigg)
\nonumber \\
&\leq \frac{B^1_i}{K}=\nu_i  
\end{align}
where $B^1_i=\frac{4}{\eta_i(\gamma-\Lftilde)^2} ( F_i(\xt)- \prox_{F_i/\gamma}(\xt))$  by requiring $\delta_i \leq \frac{2}{\lambda^2 \eta_i K}  ( F_i(\xt)- \prox_{F_i/\gamma}(\xt))$. Note that, from \eqref{appendix:eq:proof:L nonconvex dummy3}, we can write
\begin{align}
\frac{1}{L_i \alpha (1-\frac{L_F+\lambda}{2}\alpha)(\gamma-\Lftilde)^2} \avgNi  \big(\fij(\theta_{i,j}^{t,k,0})-\ftildeij(\theta_{i,j}^{t,k,0}) \big)=\avgNi \deltaij=\delta_i,
\end{align}
and therefore
\begin{align}
\frac{1}{L_i \alpha (1-\frac{L_F+\lambda}{2}\alpha)(\gamma-\Lftilde)^2} \avgNi  \big(\fij(\theta_{i,j}^{t,k,0})-\ftildeij(\theta_{i,j}^{t,k,0}) \big) =\delta_i\leq\frac{2}{\lambda^2 \eta_i K}  ( F_i(\xt)- \prox_{F_i/\gamma}(\xt)).
\end{align}
This leads to 
\begin{equation} \label{appendix: eq: elation KL non-convex}
 L_i \geq B^2_i K
\end{equation}
where $B^2_i=\frac{\lambda^2 \eta_i}{2\alpha (1-\frac{L_F+\lambda}{2}\alpha)(\gamma-\Lftilde)^2} \avgNi  \big(\frac{\fij(\theta_{i,j}^{t,k,0})-\ftildeij(\theta_{i,j}^{t,k,0})}{ F_i(\xt)- \prox_{F_i/\gamma}(\xt)} \big) $. \emph{Note that  \eqref{appendix: eq: elation KL non-convex} allows different teams  to have a different number of loops to  provide the needed accuracy.}

\subsection{ How to choose \texorpdfstring{$K$}{Ak}}
From \eqref{appendix: Convergence of PfedMT:Th:non-convex:eq1}, we have
\begin{align}\label{appendix:Discussion on error bounds:eq:proof:K nonconvex e99}
\mathbb{E} \big[ \nsq{\nabla\phi(x^{\tilde{t}})}  \big] 
&\leq 2 \frac{\phi(x^0)-\phi(x^*)}{\beta T}
+ \beta \LFtilde \gamma^2 \nu  
\nonumber \\
&\leq \frac{B^3}{T}
\end{align}
where $B^3=4 \frac{\phi(x^0)-\phi(x^*)}{\beta }$ by requiring $\nu \leq 2 \frac{\phi(x^0)-\phi(x^*)}{\beta^2  \gamma^2 \LFtilde T}$. Uson equation above  and \eqref{appendix:Discussion on error bounds:eq:proof:K nonconvex eq1}, we get
\begin{align}
\frac{1}{K} \avgM B^1_i =\avgM \nu_i= \nu \leq 2 \frac{\phi(x^0)-\phi(x^*)}{\beta^2  \gamma^2 \LFtilde T}.   
\end{align}
 That leads to
 \begin{equation}\label{appendix: eq: elation KT non-convex}
 K \geq  B^4 ~ T
 \end{equation}
where $B^4:= \frac{\beta^2  \gamma^2 \LFtilde T}{2(\phi(x^0)-\phi(x^*))}\avgM B^1_i$.

\subsection{Combining \texorpdfstring{\Cref{appendix:Th: Team-level bound: inexact grad: smooth}, \Cref{appendix:Th: witkstar bound: inexact grad: smooth}, and \Cref{appendix:L:Th:strongly non-convex:error bound}}{Combining Theorems X, Y, and Z}}

Assuming \eqref{appendix: eq: elation KL non-convex} and \eqref{appendix: eq: elation KT non-convex} are in force, we  can derive \eqref{appendix:Discussion on error bounds:eq:proof:K nonconvex e99}
\begin{equation}
\mathbb{E}_{\tilde{t}} \big[ \nsq{\nabla\phi(x^{\tilde{t}})}  \big]   \leq 4 \frac{\phi(x^0)-\phi(x^*)}{\beta } \frac{1}{T}=\mathcal{O} \Big(\frac{1}{T} \Big).
\end{equation}
This can be translated to 
\begin{equation}
\mathbb{E}_{\tilde{t}} \big[ \norm{\nabla\phi(x^{\tilde{t}})}  \big]   \leq\mathcal{O} \Big(\frac{1}{\sqrt{T}} \Big).
\end{equation}

\section{Experiments} \label{appendix: experiment}
We studied classification problems to validate \ourmodel{} using both image (MNIST \citep{lecun1998gradient}, FMNIST \citep{xiao2017online} EMNIST \citep{cohen2017emnist} with 10 classes and 62 classes, CIFAR100 \citep{krizhevsky2009learning}) and non-image synthetic datasets, but \ourmodel{} is not limited to these two categories of data. Data distributions are heterogeneous and non-iid. The
distribution of the EMNIST dataset with 62 classes exhibits non-iid characteristics similar to the FEMNIST dataset. Consequently, we will distinguish EMNIST with 10 classes as 'EMNIST' and EMNIST with 62 classes as 'FEMNIST'. In the case of MNIST, FMNIST, EMNIST, and synthetic datasets, each device contains data for 2 classes. However, for FEMNIST and CIFAR100 datasets, each device carries data for 3 classes. There are no overlapping samples among devices. This supplementary material provides the following experiments and analysis to validate the \ourmodel{}.
\begin{enumerate}
    \item A detailed empirical study to investigate the impact of hyperparameters $\beta$, $\lambda$, and $\gamma$ on the convergence of PerMFL (see \Cref{appx: hyperparameter}).
    \item An ablation study to analyze the influence of team and device participation (see \Cref{appx: ablation_study_team}).
    \item An ablation study to explore team formation, i.e., the performance of \ourmodel{} on worst-case and average-case team formation (see \Cref{team_formation}).
    \item An ablation study that explores the effects of team iterations on the convergence of \ourmodel{} (see \Cref{appx: effect_team_iters}).
    \item Convergence analysis on MNIST and synthetic datasets with the multi-tier SOTA such as (AL2GD\citep{lyu2022personalized}, h-SGD \citep{liu2022hierarchical}, \citep{nguyen2022self}) (see \Cref{appx: exp_conv_analysis}). 
    \item Performance analysis of \ourmodel{} on FEMNIST and CIFAR100 datasets (see \Cref{appx: perf_femnist_cifar100}).

\end{enumerate}
Each experiments ran 10 times, from there we produced mean and standard deviation. 
\subsection{Hardware specification}  \label{appx: hardware}
The experiments were conducted using an NVIDIA DGX-A100 GPU with 40 GB of RAM. The DGX-A100 is based on the NVIDIA A100 Tensor Core GPU architecture, and each A100 GPU in the system has 40 GB of high-bandwidth memory (HBM2). The standard configuration of the DGX-A100 includes eight A100 GPUs connected via NVLink, providing a total of 320 GB of GPU memory (40 GB per GPU × 8 GPUs). However, we could utilize only one GPU at a time for our experiments, which allowed us to utilize 40 GB of GPU memory per experiment.

\subsection{Dataset description}
We conducted our experiments using MNIST, FMNIST, EMNIST, FEMNIST, CIFAR100 and Synthetic datasets. FEMNIST, CIFAR100, and synthetic datasets are considered to create larger and more complicated scenarios from these datasets. The description of the datasets is given below.

\subsubsection{MNIST}

The primary purpose of the MNIST dataset is to serve as a widely used benchmark for evaluating and comparing the performance of various models in image classification tasks. It consists of a total of 70,000 examples, with 60,000 examples used for training and 10,000 examples used for testing. Each example is a grayscale image measuring 28x28 pixels, representing a handwritten digit ranging from 0 to 9. Every image in the dataset is associated with a label that denotes the correct digit it represents. The labels themselves are integers ranging from 0 to 9, corresponding to the handwritten digits in the images.

\subsubsection{FMNIST}
The FMNIST dataset serves the purpose of evaluating and benchmarking machine learning algorithms, particularly in the areas of image classification and pattern recognition. It differs from the original MNIST dataset by focusing on fashion-related images rather than handwritten digits, providing a more complex task. The dataset consists of 70,000 grayscale images with dimensions of 28x28 pixels. These images are split into 60,000 training examples and 10,000 testing examples. They depict various fashion items, including clothing, shoes, bags, and accessories. Each image in the FMNIST dataset is associated with a label representing the corresponding fashion item category. There are a total of 10 classes representing different types of clothing and fashion accessories such as T-shirt/top, Trouser, Pullover, Dress, Coat, Sandal, Shirt, Sneaker, Bag, and Ankle boot. Compared to the MNIST dataset, FMNIST presents a more significant challenge due to the diversity of clothing types and the increased complexity of the images. These datasets are often utilized to assess the robustness and generalization capabilities of machine learning models.

\subsubsection{EMNIST}

The dataset is a collection of handwritten characters, including lowercase and uppercase letters and digits. It consists of six different splits or variations, each representing a different task or scenario. The first split, called EMNIST ByClass, contains a total of 814,255 images representing 62 character classes. These classes include 26 uppercase letters, 26 lowercase letters, and 10 digits. The second split, EMNIST ByMerge, merges similar characters into a single class, resulting in 47 classes. This split is useful and challenging for scenarios where distinguishing between similar characters, such as uppercase and lowercase letters. The third split, EMNIST Balanced, aims to balance the number of samples per class. It provides a balanced dataset with 131,600 images representing 47 classes. The fourth split, EMNIST Letters, focuses exclusively on uppercase and lowercase letters. It consists of 145,600 images representing 26 classes of letters. The fifth split, EMNIST Digits, contains only the digits from 0 to 9. It consists of 280,000 images representing the 10-digit classes. Lastly, the sixth split follows the structure of the original MNIST dataset that contains 70,000 images of digits from 0 to 9. Each image in the dataset is associated with a label indicating the corresponding character class that provided information about the specific representative character or digit. Here for all the experiments, we considered split by digits.

\subsubsection{FEMNIST} It is a Federated EMNIST dataset. It is a ByClass split over the EMNIST dataset containing 814,255 images representing 62 character classes (0-9, A-Z, and a-z). The data are distributed among 3500 devices in an unbalanced manner, where each device has access to a maximum of 3 classes. FEMNIST is similar to \citep{caldas2018leaf}.

\subsubsection{CIFAR100} This dataset is a collection of 60,000 32x32 colour images in 100 classes, with 600 images per class. The dataset is distributed among 350 clients in an unbalanced manner where each client can access 3 classes. 

\subsubsection{Synthetic} We generate a synthetic dataset with $\bar \alpha$ = 0.5 and $\bar \beta$ = 0.5.  The synthetic dataset is a tabular dataset. It has 60 features and 10 classes.  The sample size of each client ranges from 250 to 25810, and each client has almost 2 classes.  Finally, we distribute the data to N devices according to the power law in \citep{li2020federated}.

\subsubsection{Data division}
MNIST, FMNIST, EMNIST, and synthetic datasets have ten different class labels representing distinct data distributions. The data was divided among multiple devices in a non-iid manner that ensured each device had information from two classes. To ensure each device has two classes, first, we gave specific data from each of these two classes to that device and then randomly distributed the remaining samples from these two classes. A similar approach is taken for the other devices. Following that, the devices were further divided into teams randomly. In the experiments, teams have an equal number of devices.

\subsection{Learning models}

Our study examined different scenarios and used specific models to handle them. We employed a multi-nomial logistic regression (MLR) model with $l_2$ regularization and a softmax activation function for strongly convex scenarios. To handle non-convex scenarios, we adopted different approaches depending on the dataset. For synthetic datasets, we constructed deep neural networks with two hidden layers. On the other hand, for the MNIST, FMNIST, and EMNIST datasets, which also involve non-convex scenarios, we built three two-layered convolutional neural networks (CNNs). 

Throughout our experiments, we used the abbreviations (PM) and (GM) to refer to the personalized model and global model, respectively.

\subsection{Effect of hyperparameters} \label{appx: hyperparameter}

A series of tests were conducted on the MNIST, FMNIST, and Synthetic datasets to examine the impact of various hyperparameters, including $\lambda$, $\gamma$, and $\beta$, on the convergence of \ourmodel{}. These tests were performed for both smooth strongly convex and smooth non-convex scenarios. The entire experiment is performed with the full participation of teams and devices for each global rounds. The number of teams is four, and each team has ten devices. 

\paragraph{Effect of $\beta$:} In this study, we made the following observation: when we increased the value of $\beta$ ( see \Cref{Fig:mnist_conv_beta_cnn} to \Cref{Fig: synthetic_conv_beta_mclr}) while other hyperparameters $\lambda$, and $\gamma$ remains constant, both personalized and global model of \ourmodel{} exhibited faster convergence. This behaviour was consistent in both convex and non-convex settings. Very low values of $\beta$ do not generalize the global model well and delay the convergence of personalized and global models. A high value of $\beta$ better generalizes the global model and the convergence faster. In all experiments from \Cref{Fig:mnist_conv_beta_cnn} to \Cref{Fig: synthetic_conv_beta_mclr}, we set the value of $\gamma = 3.0$ and $\lambda = 0.5$, $\eta = 0.03$, and $\alpha = 0.01$. 

\paragraph{Effect of $\gamma$:} In this study, we examined the influence of the hyperparameter $\gamma$ on the convergence of \ourmodel{}. From  \Cref{Fig:mnist_conv_gamma_cnn} to \Cref{Fig: synthetic_conv_gamma_mclr}, we observed that increasing the value of $\gamma$ led to faster convergence of \ourmodel{} PM and GM. Like $\beta$, a low value of $\gamma$ does not generalize the model well and slows the convergence speed of personalized and global models. Increasing the value of $\gamma$ results in a better generalization with a faster convergence. In all experiments from \Cref{Fig:mnist_conv_gamma_cnn} to \Cref{Fig: synthetic_conv_gamma_mclr}, we set the value of $\lambda = 1.5$ and $\beta=0.1$, $\eta = 0.03$, and $\alpha = 0.01$. 

\paragraph{Effect of $\lambda$:} Here, we examined the influence of the hyperparameter $\lambda$ on the convergence of \ourmodel{}. From the experimental results  (see \Cref{Fig:mnist_conv_lamda_cnn} to \Cref{Fig:synthetic_conv_lamda_cnn}), we found that a low value of $\lambda$ tended to impede the convergence process of both the personalized and global models. However, by increasing the value of $\lambda$, we observed a significant improvement in convergence. All the experiments from \Cref{Fig:mnist_conv_lamda_cnn} to \Cref{Fig:synthetic_conv_lamda_cnn}, we set the value of $\beta = 0.3$ and $\gamma = 3.0$,  $\eta = 0.03$, and $\alpha = 0.01$.

\paragraph{\textit{Discussions :}} The hyperparameters $\beta$, $\gamma$, and $\lambda$ exhibit interdependencies that should be taken into consideration. Modifying the value of a single hyperparameter can significantly influence the learning of both the personalized and global models. Consequently, it is crucial to recognize the intricate relationship among these hyperparameters, as changes in one hyperparameter can have consequential effects on the overall learning process. Therefore, thoroughly evaluating these hyperparameters is essential to achieve optimal model performance and convergence.


\begin{figure*}[!ht]
\centering
\begin{subfigure}[!t]{0.24\linewidth}
\resizebox{\linewidth}{!}{

}

\caption{Global Loss \\ (Validation)}
\label{Fig: synthetic_conv_lamda_gm_mclr_loss}
\end{subfigure}

\caption{Effect of $\lambda$ on convergence of \ourmodel{} in strongly convex settings (MCLR) using Synthetic dataset}
\label{Fig:synthetic_conv_lamda_mclr}
\end{figure*}


\clearpage
\subsection{Ablation study on the effect of teams and devices participation on \ourmodel{}} \label{appx: ablation_study_team}

 In a multi-tier architecture, teams and devices can be constituted in four ways: (1) Both teams and devices within teams have full participation (see \Cref{Fig: mnist_ftfc_cnn} to \Cref{Fig: synthetic_ftfc_mclr}), (2) Teams have full participation, but devices within teams are partially participating (see \Cref{Fig: FMnist_ftpc_team_5} to \Cref{Fig: Emnist_ftpc_team_5}), (3) Teams have partial participation, but all devices within teams are participating (see \Cref{Fig: emnist_ptfc_mclr} to \Cref{Fig: fmnist_ptfc_cnn}), and (4) Teams and devices both have partial participation (see \Cref{Fig: emnist_ptpc_team_2_mclr} to \Cref{Fig: fmnist_ptpc_team_20_mclr}). In these experiments, we observed how well \ourmodel{} performs and converges in various team combinations. We are also studying how \ourmodel{} behaves when the number of participating teams and devices is limited. 

\subsubsection{Full participation of Teams and Devices}
We conducted experiments in both convex (see \Cref{Fig: mnist_ftfc_mclr}, \Cref{Fig: fmnist_ftfc_mclr}, \Cref{Fig: emnist_ftfc_mclr}, \Cref{Fig: synthetic_ftfc_mclr} ) and non-convex (See \Cref{Fig: mnist_ftfc_cnn}, \Cref{Fig: fmnist_ftfc_cnn}, \Cref{Fig: emnist_ftfc_cnn}, \Cref{Fig: synthetic_ftfc_cnn}) settings using MNIST, FMNIST, EMNIST, and Synthetic datasets. Our findings indicate that when it comes to team and device selection strategies, having full participation of teams and devices yields the best results compared to the other three types of participation strategies. Increasing the number of teams does not negatively impact the performance of the personalized model. However, in some cases, such as those depicted in \Cref{Fig: synthetic_ftfc_dnn_pl} and \Cref{Fig: synthetic_ftfc_dnn_gl}, increasing the number of teams can lead to a decrease in the convergence of the global model.  


\begin{figure*}[!ht]
        \centering
        \begin{subfigure}[!t]{0.24\linewidth}
            \resizebox{\linewidth}{!}{
                \begin{tikzpicture}
                \tikzstyle{every node}=[font=\fontsize{12}{12}\selectfont]
\begin{axis}[
  xlabel= Global rounds,
  ylabel= Loss,
  legend style ={nodes={scale=1.4, transform shape}},
  legend pos=north east,
  xticklabels from table={ablation_study_team_Mnist_Full_team_full_client_cnn_loss.dat}{GR},xtick=data
]
\addplot[blue,thick,mark=diamond*] table [y=pteam_2,x=GR]{ablation_study_team_Mnist_Full_team_full_client_cnn_loss.dat};
\addlegendentry{$team = 2$}

\addplot[brown,dashed,thick,mark=square*]  table [y=pteam_3,x=GR]{ablation_study_team_Mnist_Full_team_full_client_cnn_loss.dat};
\addlegendentry{$ team = 3$}

\addplot[black,dashed,thick,mark=star]  table [y=pteam_4,x=GR]{ablation_study_team_Mnist_Full_team_full_client_cnn_loss.dat};
\addlegendentry{$ team = 4$}

\addplot[cyan,thick,mark=diamond*] table [y=pteam_5,x=GR]{ablation_study_team_Mnist_Full_team_full_client_cnn_loss.dat};
\addlegendentry{$ team = 5$}

\end{axis}
\end{tikzpicture}

            }
        \caption{Personalized Loss \\ (Validation)}
        \label{Fig: mnist_ftfc_cnn_pl}
    \end{subfigure} 
    \begin{subfigure}[!t]{0.24\linewidth}
        \resizebox{\linewidth}{!}{
        \begin{tikzpicture}
        \tikzstyle{every node}=[font=\fontsize{12}{12}\selectfont]

\begin{axis}[
  xlabel= Global rounds,
  ylabel= Loss,
  legend pos=north east,
  xticklabels from table={ablation_study_team_Mnist_Full_team_full_client_cnn_loss.dat}{GR},xtick=data
]
\addplot[blue,thick,mark=diamond*] table [y=gteam_2,x=GR]{ablation_study_team_Mnist_Full_team_full_client_cnn_loss.dat};

\addplot[brown,dashed,thick,mark=square*]  table [y=gteam_3,x=GR]{ablation_study_team_Mnist_Full_team_full_client_cnn_loss.dat};

\addplot[black,dashed,thick,mark=star]  table [y=gteam_4,x=GR]{ablation_study_team_Mnist_Full_team_full_client_cnn_loss.dat};

\addplot[cyan,thick,mark=diamond*] table [y=gteam_5,x=GR]{ablation_study_team_Mnist_Full_team_full_client_cnn_loss.dat};

\end{axis}

\end{tikzpicture}
}

    \caption{Global Loss \\ (Validation)}
    \label{Fig: mnist_ftfc_cnn_gl}
\end{subfigure}
\begin{subfigure}[!t]{0.24\linewidth}
            \resizebox{\linewidth}{!}{
                \begin{tikzpicture}
                \tikzstyle{every node}=[font=\fontsize{12}{12}\selectfont]

\begin{axis}[
  xlabel= Global rounds,
  ylabel= Personalized Accuracy,
  legend pos=south east,
  xticklabels from table={ablation_study_team_Mnist_Full_team_full_client_cnn_acc.dat}{GR},xtick=data
]
\addplot[blue,thick,mark=diamond*] table [y=pteam_2,x=GR]{ablation_study_team_Mnist_Full_team_full_client_cnn_acc.dat};

\addplot[brown,dashed,thick,mark=square*]  table [y=pteam_3,x=GR]{ablation_study_team_Mnist_Full_team_full_client_cnn_acc.dat};

\addplot[black,dashed,thick,mark=star]  table [y=pteam_4,x=GR]{ablation_study_team_Mnist_Full_team_full_client_cnn_acc.dat};

\addplot[cyan,thick,mark=diamond*] table [y=pteam_5,x=GR]{ablation_study_team_Mnist_Full_team_full_client_cnn_acc.dat};

\end{axis}

\end{tikzpicture}

            }
        \caption{Personalized Accuracy \\ (Validation)}
        \label{Fig: mnist_ftfc_cnn_pa}
    \end{subfigure} 
    \begin{subfigure}[!t]{0.24\linewidth}
        \resizebox{\linewidth}{!}{
        \begin{tikzpicture}
        \tikzstyle{every node}=[font=\fontsize{12}{12}\selectfont]
\begin{axis}[
  xlabel= Global rounds,
  ylabel= Global Accuracy,
  legend pos=south east,
  xticklabels from table={ablation_study_team_Mnist_Full_team_full_client_cnn_acc.dat}{GR},xtick=data
]
\addplot[blue,thick,mark=diamond*] table [y=gteam_2,x=GR]{ablation_study_team_Mnist_Full_team_full_client_cnn_acc.dat};

\addplot[brown,dashed,thick,mark=square*]  table [y=gteam_3,x=GR]{ablation_study_team_Mnist_Full_team_full_client_cnn_acc.dat};

\addplot[black,dashed,thick,mark=star]  table [y=gteam_4,x=GR]{ablation_study_team_Mnist_Full_team_full_client_cnn_acc.dat};

\addplot[cyan,thick,mark=diamond*] table [y=gteam_5,x=GR]{ablation_study_team_Mnist_Full_team_full_client_cnn_acc.dat};

\end{axis}

\end{tikzpicture}
}

    \caption{Global Accuracy \\ (Validation)}
    \label{Fig: mnist_ftfc_cnn_ga}
\end{subfigure}

\caption{Full participation teams and devices on MNIST datasets in non-convex settings (CNN)}
\label{Fig: mnist_ftfc_cnn}
\end{figure*}



\begin{figure*}[!ht]
        \centering
        \begin{subfigure}[!t]{0.24\linewidth}
            \resizebox{\linewidth}{!}{
                \begin{tikzpicture}
                \tikzstyle{every node}=[font=\fontsize{12}{12}\selectfont]
\begin{axis}[
  xlabel= Global rounds,
  ylabel= Loss,
  legend style ={nodes={scale=1.4, transform shape}},
  legend pos=north east,
  xticklabels from table={ablation_study_team_Mnist_Full_team_full_client_mclr_loss.dat}{GR},xtick=data
]
\addplot[blue,thick,mark=diamond*] table [y=pteam_2,x=GR]{ablation_study_team_Mnist_Full_team_full_client_mclr_loss.dat};
\addlegendentry{$team = 2$}

\addplot[brown,dashed,thick,mark=square*]  table [y=pteam_3,x=GR]{ablation_study_team_Mnist_Full_team_full_client_mclr_loss.dat};
\addlegendentry{$ team = 3$}

\addplot[black,dashed,thick,mark=star]  table [y=pteam_4,x=GR]{ablation_study_team_Mnist_Full_team_full_client_mclr_loss.dat};
\addlegendentry{$ team = 4$}

\addplot[cyan,thick,mark=diamond*] table [y=pteam_5,x=GR]{ablation_study_team_Mnist_Full_team_full_client_mclr_loss.dat};
\addlegendentry{$ team = 5$}

\end{axis}
\end{tikzpicture}

            }
        \caption{Personalized Loss \\ (Validation)}
        \label{Fig: mnist_ftfc_mclr_pl}
    \end{subfigure} 
    \begin{subfigure}[!t]{0.24\linewidth}
        \resizebox{\linewidth}{!}{
        \begin{tikzpicture}
        \tikzstyle{every node}=[font=\fontsize{12}{12}\selectfont]

\begin{axis}[
  xlabel= Global rounds,
  ylabel= Loss,
  legend pos=north east,
  xticklabels from table={ablation_study_team_Mnist_Full_team_full_client_mclr_loss.dat}{GR},xtick=data
]
\addplot[blue,thick,mark=diamond*] table [y=gteam_2,x=GR]{ablation_study_team_Mnist_Full_team_full_client_mclr_loss.dat};

\addplot[brown,dashed,thick,mark=square*]  table [y=gteam_3,x=GR]{ablation_study_team_Mnist_Full_team_full_client_mclr_loss.dat};

\addplot[black,dashed,thick,mark=star]  table [y=gteam_4,x=GR]{ablation_study_team_Mnist_Full_team_full_client_mclr_loss.dat};

\addplot[cyan,thick,mark=diamond*] table [y=gteam_5,x=GR]{ablation_study_team_Mnist_Full_team_full_client_mclr_loss.dat};

\end{axis}

\end{tikzpicture}
}

    \caption{Global Loss \\ (Validation)}
    \label{Fig: mnist_ftfc_mclr_gl}
\end{subfigure}
\begin{subfigure}[!t]{0.24\linewidth}
            \resizebox{\linewidth}{!}{
                \begin{tikzpicture}
                \tikzstyle{every node}=[font=\fontsize{12}{12}\selectfont]

\begin{axis}[
  xlabel= Global rounds,
  ylabel= Personalized Accuracy,
  legend pos=south east,
  xticklabels from table={ablation_study_team_Mnist_Full_team_full_client_mclr_acc.dat}{GR},xtick=data
]
\addplot[blue,thick,mark=diamond*] table [y=pteam_2,x=GR]{ablation_study_team_Mnist_Full_team_full_client_mclr_acc.dat};

\addplot[brown,dashed,thick,mark=square*]  table [y=pteam_3,x=GR]{ablation_study_team_Mnist_Full_team_full_client_mclr_acc.dat};

\addplot[black,dashed,thick,mark=star]  table [y=pteam_4,x=GR]{ablation_study_team_Mnist_Full_team_full_client_mclr_acc.dat};

\addplot[cyan,thick,mark=diamond*] table [y=pteam_5,x=GR]{ablation_study_team_Mnist_Full_team_full_client_mclr_acc.dat};

\end{axis}

\end{tikzpicture}

            }
        \caption{Personalized Accuracy \\ (Validation)}
        \label{Fig: mnist_ftfc_mclr_pa}
    \end{subfigure} 
    \begin{subfigure}[!t]{0.24\linewidth}
        \resizebox{\linewidth}{!}{
        \begin{tikzpicture}
        \tikzstyle{every node}=[font=\fontsize{12}{12}\selectfont]
\begin{axis}[
  xlabel= Global rounds,
  ylabel= Global Accuracy,
  legend pos=south east,
  xticklabels from table={ablation_study_team_Mnist_Full_team_full_client_mclr_acc.dat}{GR},xtick=data
]
\addplot[blue,thick,mark=diamond*] table [y=gteam_2,x=GR]{ablation_study_team_Mnist_Full_team_full_client_mclr_acc.dat};

\addplot[brown,dashed,thick,mark=square*]  table [y=gteam_3,x=GR]{ablation_study_team_Mnist_Full_team_full_client_mclr_acc.dat};

\addplot[black,dashed,thick,mark=star]  table [y=gteam_4,x=GR]{ablation_study_team_Mnist_Full_team_full_client_mclr_acc.dat};

\addplot[cyan,thick,mark=diamond*] table [y=gteam_5,x=GR]{ablation_study_team_Mnist_Full_team_full_client_mclr_acc.dat};

\end{axis}

\end{tikzpicture}
}

    \caption{Global Accuracy \\ (Validation)}
    \label{Fig: mnist_ftfc_mclr_ga}
\end{subfigure}

\caption{Full participation teams and devices on MNIST datasets in convex settings (MCLR)}
\label{Fig: mnist_ftfc_mclr}
\end{figure*}




\begin{figure*}[!ht]
        \centering
        \begin{subfigure}[!t]{0.24\linewidth}
            \resizebox{\linewidth}{!}{
                \begin{tikzpicture}
                \tikzstyle{every node}=[font=\fontsize{12}{12}\selectfont]
\begin{axis}[
  xlabel= Global rounds,
  ylabel= Loss,
  legend pos=north east,
  xticklabels from table={ablation_study_team_FMnist_Full_team_full_client_mclr_loss.dat}{GR},xtick=data
]
\addplot[blue,thick,mark=diamond*] table [y=pteam_2,x=GR]{ablation_study_team_FMnist_Full_team_full_client_mclr_loss.dat};

\addplot[brown,dashed,thick,mark=square*]  table [y=pteam_3,x=GR]{ablation_study_team_FMnist_Full_team_full_client_cnn_loss.dat};

\addplot[black,dashed,thick,mark=star]  table [y=pteam_4,x=GR]{ablation_study_team_FMnist_Full_team_full_client_cnn_loss.dat};

\addplot[cyan,thick,mark=diamond*] table [y=pteam_5,x=GR]{ablation_study_team_FMnist_Full_team_full_client_cnn_loss.dat};

\end{axis}
\end{tikzpicture}

            }
        \caption{Personalized Loss \\ (Validation)}
        \label{Fig: fmnist_ftfc_cnn_pl}
    \end{subfigure} 
    \begin{subfigure}[!t]{0.24\linewidth}
        \resizebox{\linewidth}{!}{
        \begin{tikzpicture}
        \tikzstyle{every node}=[font=\fontsize{12}{12}\selectfont]

\begin{axis}[
  xlabel= Global rounds,
  ylabel= Loss,
  legend pos=north east,
  legend style ={nodes={scale=1.4, transform shape}},
  xticklabels from table={ablation_study_team_FMnist_Full_team_full_client_cnn_loss.dat}{GR},xtick=data
]
\addplot[blue,thick,mark=diamond*] table [y=gteam_2,x=GR]{ablation_study_team_FMnist_Full_team_full_client_cnn_loss.dat};
\addlegendentry{$ team = 2$}

\addplot[brown,dashed,thick,mark=square*]  table [y=gteam_3,x=GR]{ablation_study_team_FMnist_Full_team_full_client_cnn_loss.dat};
\addlegendentry{$ team = 3$}

\addplot[black,dashed,thick,mark=star]  table [y=gteam_4,x=GR]{ablation_study_team_FMnist_Full_team_full_client_cnn_loss.dat};
\addlegendentry{$ team = 4$}

\addplot[cyan,thick,mark=diamond*] table [y=gteam_5,x=GR]{ablation_study_team_FMnist_Full_team_full_client_cnn_loss.dat};
\addlegendentry{$ team = 5$}

\end{axis}

\end{tikzpicture}
}

    \caption{Global Loss \\ (Validation)}
    \label{Fig: fmnist_ftfc_cnn_gl}
\end{subfigure}
\begin{subfigure}[!t]{0.24\linewidth}
            \resizebox{\linewidth}{!}{
                \begin{tikzpicture}
                \tikzstyle{every node}=[font=\fontsize{12}{12}\selectfont]

\begin{axis}[
  xlabel= Global rounds,
  ylabel= Personalized Accuracy,
  legend pos=south east,
  xticklabels from table={ablation_study_team_FMnist_Full_team_full_client_cnn_acc.dat}{GR},xtick=data
]
\addplot[blue,thick,mark=diamond*] table [y=pteam_2,x=GR]{ablation_study_team_FMnist_Full_team_full_client_cnn_acc.dat};

\addplot[brown,dashed,thick,mark=square*]  table [y=pteam_3,x=GR]{ablation_study_team_FMnist_Full_team_full_client_cnn_acc.dat};

\addplot[black,dashed,thick,mark=star]  table [y=pteam_4,x=GR]{ablation_study_team_FMnist_Full_team_full_client_cnn_acc.dat};

\addplot[cyan,thick,mark=diamond*] table [y=pteam_5,x=GR]{ablation_study_team_FMnist_Full_team_full_client_cnn_acc.dat};

\end{axis}

\end{tikzpicture}

            }
        \caption{Personalized Accuracy \\ (Validation)}
        \label{Fig: fmnist_ftfc_cnn_pa}
    \end{subfigure} 
    \begin{subfigure}[!t]{0.24\linewidth}
        \resizebox{\linewidth}{!}{
        \begin{tikzpicture}
        \tikzstyle{every node}=[font=\fontsize{12}{12}\selectfont]
\begin{axis}[
  xlabel= Global rounds,
  ylabel= Global Accuracy,
  legend pos=south east,
  xticklabels from table={ablation_study_team_FMnist_Full_team_full_client_cnn_acc.dat}{GR},xtick=data
]
\addplot[blue,thick,mark=diamond*] table [y=gteam_2,x=GR]{ablation_study_team_FMnist_Full_team_full_client_cnn_acc.dat};

\addplot[brown,dashed,thick,mark=square*]  table [y=gteam_3,x=GR]{ablation_study_team_FMnist_Full_team_full_client_cnn_acc.dat};

\addplot[black,dashed,thick,mark=star]  table [y=gteam_4,x=GR]{ablation_study_team_FMnist_Full_team_full_client_cnn_acc.dat};

\addplot[cyan,thick,mark=diamond*] table [y=gteam_5,x=GR]{ablation_study_team_FMnist_Full_team_full_client_cnn_acc.dat};

\end{axis}

\end{tikzpicture}
}

    \caption{Global Accuracy \\ (Validation)}
    \label{Fig: fmnist_ftfc_cnn_ga}
\end{subfigure}

\caption{Full participation teams and devices on FMNIST datasets in non-convex settings (CNN)}
\label{Fig: fmnist_ftfc_cnn}
\end{figure*}



\begin{figure*}[!ht]
        \centering
        \begin{subfigure}[!t]{0.24\linewidth}
            \resizebox{\linewidth}{!}{
                \begin{tikzpicture}
                \tikzstyle{every node}=[font=\fontsize{12}{12}\selectfont]
\begin{axis}[
  xlabel= Global rounds,
  ylabel= Loss,
  legend pos=north east,
  legend style ={nodes={scale=1.4, transform shape}},
  xticklabels from table={ablation_study_team_FMnist_Full_team_full_client_mclr_loss.dat}{GR},xtick=data
]
\addplot[blue,thick,mark=diamond*] table [y=pteam_2,x=GR]{ablation_study_team_FMnist_Full_team_full_client_mclr_loss.dat};
\addlegendentry{$team = 2$}

\addplot[brown,dashed,thick,mark=square*]  table [y=pteam_3,x=GR]{ablation_study_team_FMnist_Full_team_full_client_mclr_loss.dat};
\addlegendentry{$ team = 3$}

\addplot[black,dashed,thick,mark=star]  table [y=pteam_4,x=GR]{ablation_study_team_FMnist_Full_team_full_client_mclr_loss.dat};
\addlegendentry{$ team = 4$}

\addplot[cyan,thick,mark=diamond*] table [y=pteam_5,x=GR]{ablation_study_team_FMnist_Full_team_full_client_mclr_loss.dat};
\addlegendentry{$ team = 5$}

\end{axis}
\end{tikzpicture}

            }
        \caption{Personalized Loss \\ (Validation)}
        \label{Fig: fmnist_ftfc_mclr_pl}
    \end{subfigure} 
    \begin{subfigure}[!t]{0.24\linewidth}
        \resizebox{\linewidth}{!}{
        \begin{tikzpicture}
        \tikzstyle{every node}=[font=\fontsize{12}{12}\selectfont]

\begin{axis}[
  xlabel= Global rounds,
  ylabel= Loss,
  legend pos=north east,
  xticklabels from table={ablation_study_team_FMnist_Full_team_full_client_mclr_loss.dat}{GR},xtick=data
]
\addplot[blue,thick,mark=diamond*] table [y=gteam_2,x=GR]{ablation_study_team_FMnist_Full_team_full_client_mclr_loss.dat};

\addplot[brown,dashed,thick,mark=square*]  table [y=gteam_3,x=GR]{ablation_study_team_FMnist_Full_team_full_client_mclr_loss.dat};

\addplot[black,dashed,thick,mark=star]  table [y=gteam_4,x=GR]{ablation_study_team_FMnist_Full_team_full_client_mclr_loss.dat};

\addplot[cyan,thick,mark=diamond*] table [y=gteam_5,x=GR]{ablation_study_team_FMnist_Full_team_full_client_mclr_loss.dat};

\end{axis}

\end{tikzpicture}
}

    \caption{Global Loss \\ (Validation)}
    \label{Fig: fmnist_ftfc_mclr_gl}
\end{subfigure}
\begin{subfigure}[!t]{0.24\linewidth}
            \resizebox{\linewidth}{!}{
                \begin{tikzpicture}
                \tikzstyle{every node}=[font=\fontsize{12}{12}\selectfont]

\begin{axis}[
  xlabel= Global rounds,
  ylabel= Personalized Accuracy,
  legend pos=south east,
  xticklabels from table={ablation_study_team_FMnist_Full_team_full_client_mclr_acc.dat}{GR},xtick=data
]
\addplot[blue,thick,mark=diamond*] table [y=pteam_2,x=GR]{ablation_study_team_FMnist_Full_team_full_client_mclr_acc.dat};

\addplot[brown,dashed,thick,mark=square*]  table [y=pteam_3,x=GR]{ablation_study_team_FMnist_Full_team_full_client_mclr_acc.dat};

\addplot[black,dashed,thick,mark=star]  table [y=pteam_4,x=GR]{ablation_study_team_FMnist_Full_team_full_client_mclr_acc.dat};

\addplot[cyan,thick,mark=diamond*] table [y=pteam_5,x=GR]{ablation_study_team_FMnist_Full_team_full_client_mclr_acc.dat};

\end{axis}

\end{tikzpicture}

            }
        \caption{Personalized Accuracy \\ (Validation)}
        \label{Fig: fmnist_ftfc_mclr_pa}
    \end{subfigure} 
    \begin{subfigure}[!t]{0.24\linewidth}
        \resizebox{\linewidth}{!}{
        \begin{tikzpicture}
        \tikzstyle{every node}=[font=\fontsize{12}{12}\selectfont]
\begin{axis}[
  xlabel= Global rounds,
  ylabel= Global Accuracy,
  legend pos=south east,
  xticklabels from table={ablation_study_team_FMnist_Full_team_full_client_mclr_acc.dat}{GR},xtick=data
]
\addplot[blue,thick,mark=diamond*] table [y=gteam_2,x=GR]{ablation_study_team_FMnist_Full_team_full_client_mclr_acc.dat};

\addplot[brown,dashed,thick,mark=square*]  table [y=gteam_3,x=GR]{ablation_study_team_FMnist_Full_team_full_client_mclr_acc.dat};

\addplot[black,dashed,thick,mark=star]  table [y=gteam_4,x=GR]{ablation_study_team_FMnist_Full_team_full_client_mclr_acc.dat};

\addplot[cyan,thick,mark=diamond*] table [y=gteam_5,x=GR]{ablation_study_team_FMnist_Full_team_full_client_mclr_acc.dat};

\end{axis}

\end{tikzpicture}
}

    \caption{Global Accuracy \\ (Validation)}
    \label{Fig: fmnist_ftfc_mclr_ga}
\end{subfigure}

\caption{Full participation teams and devices on FMNIST datasets in convex settings (MCLR)}
\label{Fig: fmnist_ftfc_mclr}
\end{figure*}




\begin{figure*}[!ht]
        \centering
        \begin{subfigure}[!t]{0.24\linewidth}
            \resizebox{\linewidth}{!}{
                \begin{tikzpicture}
                \tikzstyle{every node}=[font=\fontsize{12}{12}\selectfont]
\begin{axis}[
  xlabel= Global rounds,
  ylabel= Loss,
  legend pos=north east,
  legend style ={nodes={scale=1.4, transform shape}},
  xticklabels from table={ablation_study_team_Emnist_Full_team_full_client_cnn_loss.dat}{GR},xtick=data
]
\addplot[blue,thick,mark=diamond*] table [y=pteam_2,x=GR]{ablation_study_team_Emnist_Full_team_full_client_cnn_loss.dat};
\addlegendentry{$team = 2$}

\addplot[brown,dashed,thick,mark=square*]  table [y=pteam_3,x=GR]{ablation_study_team_Emnist_Full_team_full_client_cnn_loss.dat};
\addlegendentry{$ team = 3$}

\addplot[black,dashed,thick,mark=star]  table [y=pteam_4,x=GR]{ablation_study_team_Emnist_Full_team_full_client_cnn_loss.dat};
\addlegendentry{$ team = 4$}

\addplot[cyan,thick,mark=diamond*] table [y=pteam_5,x=GR]{ablation_study_team_Emnist_Full_team_full_client_cnn_loss.dat};
\addlegendentry{$ team = 5$}

\end{axis}
\end{tikzpicture}

            }
        \caption{Personalized Loss \\ (Validation)}
        \label{Fig: Emnist_ftfc_cnn_pl}
    \end{subfigure} 
    \begin{subfigure}[!t]{0.24\linewidth}
        \resizebox{\linewidth}{!}{
        \begin{tikzpicture}
        \tikzstyle{every node}=[font=\fontsize{12}{12}\selectfont]

\begin{axis}[
  xlabel= Global rounds,
  ylabel= Loss,
  legend pos=north east,
  xticklabels from table={ablation_study_team_Emnist_Full_team_full_client_cnn_loss.dat}{GR},xtick=data
]
\addplot[blue,thick,mark=diamond*] table [y=gteam_2,x=GR]{ablation_study_team_Emnist_Full_team_full_client_cnn_loss.dat};

\addplot[brown,dashed,thick,mark=square*]  table [y=gteam_3,x=GR]{ablation_study_team_Emnist_Full_team_full_client_cnn_loss.dat};

\addplot[black,dashed,thick,mark=star]  table [y=gteam_4,x=GR]{ablation_study_team_Emnist_Full_team_full_client_cnn_loss.dat};

\addplot[cyan,thick,mark=diamond*] table [y=gteam_5,x=GR]{ablation_study_team_Emnist_Full_team_full_client_cnn_loss.dat};

\end{axis}

\end{tikzpicture}
}

    \caption{Global Loss \\ (Validation)}
    \label{Fig: Emnist_ftfc_cnn_gl}
\end{subfigure}
\begin{subfigure}[!t]{0.24\linewidth}
            \resizebox{\linewidth}{!}{
                \begin{tikzpicture}
                \tikzstyle{every node}=[font=\fontsize{12}{12}\selectfont]

\begin{axis}[
  xlabel= Global rounds,
  ylabel= Personalized Accuracy,
  legend pos=south east,
  xticklabels from table={ablation_study_team_Emnist_Full_team_full_client_cnn_acc.dat}{GR},xtick=data
]
\addplot[blue,thick,mark=diamond*] table [y=pteam_2,x=GR]{ablation_study_team_Emnist_Full_team_full_client_cnn_acc.dat};

\addplot[brown,dashed,thick,mark=square*]  table [y=pteam_3,x=GR]{ablation_study_team_Emnist_Full_team_full_client_cnn_acc.dat};

\addplot[black,dashed,thick,mark=star]  table [y=pteam_4,x=GR]{ablation_study_team_Emnist_Full_team_full_client_cnn_acc.dat};

\addplot[cyan,thick,mark=diamond*] table [y=pteam_5,x=GR]{ablation_study_team_Emnist_Full_team_full_client_cnn_acc.dat};

\end{axis}

\end{tikzpicture}

            }
        \caption{Personalized Accuracy \\ (Validation)}
        \label{Fig: Emnist_ftfc_cnn_pa}
    \end{subfigure} 
    \begin{subfigure}[!t]{0.24\linewidth}
        \resizebox{\linewidth}{!}{
        \begin{tikzpicture}
        \tikzstyle{every node}=[font=\fontsize{12}{12}\selectfont]
\begin{axis}[
  xlabel= Global rounds,
  ylabel= Global Accuracy,
  legend pos=south east,
  xticklabels from table={ablation_study_team_Emnist_Full_team_full_client_cnn_acc.dat}{GR},xtick=data
]
\addplot[blue,thick,mark=diamond*] table [y=gteam_2,x=GR]{ablation_study_team_Emnist_Full_team_full_client_cnn_acc.dat};

\addplot[brown,dashed,thick,mark=square*]  table [y=gteam_3,x=GR]{ablation_study_team_Emnist_Full_team_full_client_cnn_acc.dat};

\addplot[black,dashed,thick,mark=star]  table [y=gteam_4,x=GR]{ablation_study_team_Emnist_Full_team_full_client_cnn_acc.dat};

\addplot[cyan,thick,mark=diamond*] table [y=gteam_5,x=GR]{ablation_study_team_Emnist_Full_team_full_client_cnn_acc.dat};

\end{axis}

\end{tikzpicture}
}

    \caption{Global Accuracy \\ (Validation)}
    \label{Fig: emnist_ftfc_cnn_ga}
\end{subfigure}

\caption{Full participation teams and devices on EMNIST datasets in non-convex settings (CNN)}
\label{Fig: emnist_ftfc_cnn}
\end{figure*}



\begin{figure*}[!ht]
        \centering
        \begin{subfigure}[!t]{0.24\linewidth}
            \resizebox{\linewidth}{!}{
                \begin{tikzpicture}
                \tikzstyle{every node}=[font=\fontsize{12}{12}\selectfont]
\begin{axis}[
  xlabel= Global rounds,
  ylabel= Loss,
  legend pos=north east,
  legend style ={nodes={scale=1.4, transform shape}},
  xticklabels from table={ablation_study_team_Emnist_Full_team_full_client_mclr_loss.dat}{GR},xtick=data
]

\addplot[brown,dashed,thick,mark=square*]  table [y=pteam_3,x=GR]{ablation_study_team_Emnist_Full_team_full_client_mclr_loss.dat};
\addlegendentry{$ team = 3$}

\addplot[black,dashed,thick,mark=star]  table [y=pteam_4,x=GR]{ablation_study_team_Emnist_Full_team_full_client_mclr_loss.dat};
\addlegendentry{$ team = 4$}

\addplot[cyan,thick,mark=diamond*] table [y=pteam_5,x=GR]{ablation_study_team_Emnist_Full_team_full_client_mclr_loss.dat};
\addlegendentry{$ team = 5$}

\addplot[blue,thick,mark=diamond*] table [y=pteam_10,x=GR]{ablation_study_team_Emnist_Full_team_full_client_mclr_loss.dat};
\addlegendentry{$team = 10$}

\end{axis}
\end{tikzpicture}

            }
        \caption{Personalized Loss \\ (Validation)}
        \label{Fig: emnist_ftfc_mclr_pl}
    \end{subfigure} 
    \begin{subfigure}[!t]{0.24\linewidth}
        \resizebox{\linewidth}{!}{
        \begin{tikzpicture}
        \tikzstyle{every node}=[font=\fontsize{12}{12}\selectfont]

\begin{axis}[
  xlabel= Global rounds,
  ylabel= Loss,
  legend pos=north east,
  xticklabels from table={ablation_study_team_Emnist_Full_team_full_client_mclr_loss.dat}{GR},xtick=data
]

\addplot[brown,dashed,thick,mark=square*]  table [y=gteam_3,x=GR]{ablation_study_team_Emnist_Full_team_full_client_mclr_loss.dat};

\addplot[black,dashed,thick,mark=star]  table [y=gteam_4,x=GR]{ablation_study_team_Emnist_Full_team_full_client_mclr_loss.dat};

\addplot[cyan,thick,mark=diamond*] table [y=gteam_5,x=GR]{ablation_study_team_Emnist_Full_team_full_client_mclr_loss.dat};

\addplot[blue,thick,mark=diamond*] table [y=gteam_10,x=GR]{ablation_study_team_Emnist_Full_team_full_client_mclr_loss.dat};

\end{axis}

\end{tikzpicture}
}

    \caption{Global Loss \\ (Validation)}
    
    \label{Fig: emnist_ftfc_mclr_gl}
\end{subfigure}
\begin{subfigure}[!t]{0.24\linewidth}
            \resizebox{\linewidth}{!}{
                \begin{tikzpicture}
                \tikzstyle{every node}=[font=\fontsize{12}{12}\selectfont]

\begin{axis}[
  xlabel= Global rounds,
  ylabel= Personalized Accuracy,
  legend pos=south east,
  xticklabels from table={ablation_study_team_Emnist_Full_team_full_client_mclr_acc.dat}{GR},xtick=data
]

\addplot[brown,dashed,thick,mark=square*]  table [y=pteam_3,x=GR]{ablation_study_team_Emnist_Full_team_full_client_mclr_acc.dat};

\addplot[black,dashed,thick,mark=star]  table [y=pteam_4,x=GR]{ablation_study_team_Emnist_Full_team_full_client_mclr_acc.dat};

\addplot[cyan,thick,mark=diamond*] table [y=pteam_5,x=GR]{ablation_study_team_Emnist_Full_team_full_client_mclr_acc.dat};

\addplot[blue,thick,mark=diamond*] table [y=pteam_10,x=GR]{ablation_study_team_Emnist_Full_team_full_client_mclr_acc.dat};

\end{axis}

\end{tikzpicture}

            }
        \caption{Personalized Accuracy \\ (Validation)}
        \label{Fig: Emnist_ftfc_mclr_pa}
    \end{subfigure} 
    \begin{subfigure}[!t]{0.24\linewidth}
        \resizebox{\linewidth}{!}{
        \begin{tikzpicture}
        \tikzstyle{every node}=[font=\fontsize{12}{12}\selectfont]
\begin{axis}[
  xlabel= Global rounds,
  ylabel= Global Accuracy,
  legend pos=south east,
  xticklabels from table={ablation_study_team_Emnist_Full_team_full_client_mclr_acc.dat}{GR},xtick=data
]

\addplot[brown,dashed,thick,mark=square*]  table [y=gteam_3,x=GR]{ablation_study_team_Emnist_Full_team_full_client_mclr_acc.dat};

\addplot[black,dashed,thick,mark=star]  table [y=gteam_4,x=GR]{ablation_study_team_Emnist_Full_team_full_client_mclr_acc.dat};

\addplot[cyan,thick,mark=diamond*] table [y=gteam_5,x=GR]{ablation_study_team_Emnist_Full_team_full_client_mclr_acc.dat};

\addplot[blue,thick,mark=diamond*] table [y=gteam_10,x=GR]{ablation_study_team_Emnist_Full_team_full_client_mclr_acc.dat};

\end{axis}

\end{tikzpicture}
}

    \caption{Global Accuracy \\ (Validation)}
    \label{Fig: emnist_ftfc_mclr_ga}
\end{subfigure}

\caption{Full participation teams and devices on EMNIST datasets in convex settings (MCLR)}
\label{Fig: emnist_ftfc_mclr}
\end{figure*}




\begin{figure*}[!ht]
        \centering
        \begin{subfigure}[!t]{0.24\linewidth}
            \resizebox{\linewidth}{!}{
                \begin{tikzpicture}
                \tikzstyle{every node}=[font=\fontsize{12}{12}\selectfont]
\begin{axis}[
  xlabel= Global rounds,
  ylabel= Loss,
  legend pos=north east,
  xticklabels from table={ablation_study_team_Synthetic_Full_team_full_client_mclr_loss.dat}{GR},xtick=data
]

\addplot[brown,dashed,thick,mark=square*]  table [y=pteam_3,x=GR]{ablation_study_team_Synthetic_Full_team_full_client_dnn_loss.dat};
\addlegendentry{$ team = 3$}

\addplot[black,dashed,thick,mark=star]  table [y=pteam_4,x=GR]{ablation_study_team_Synthetic_Full_team_full_client_dnn_loss.dat};
\addlegendentry{$ team = 4$}

\addplot[cyan,thick,mark=diamond*] table [y=pteam_5,x=GR]{ablation_study_team_Synthetic_Full_team_full_client_dnn_loss.dat};
\addlegendentry{$ team = 5$}

\addplot[blue,thick,mark=diamond*] table [y=pteam_10,x=GR]{ablation_study_team_Synthetic_Full_team_full_client_dnn_loss.dat};
\addlegendentry{$team = 10$}

\end{axis}
\end{tikzpicture}

            }
        \caption{Personalized Loss \\ (Validation)}
        \label{Fig: synthetic_ftfc_dnn_pl}
    \end{subfigure} 
    \begin{subfigure}[!t]{0.24\linewidth}
        \resizebox{\linewidth}{!}{
        \begin{tikzpicture}
        \tikzstyle{every node}=[font=\fontsize{12}{12}\selectfont]

\begin{axis}[
  xlabel= Global rounds,
  ylabel= Loss,
  legend pos=north east,
  xticklabels from table={ablation_study_team_Synthetic_Full_team_full_client_dnn_loss.dat}{GR},xtick=data
]

\addplot[brown,dashed,thick,mark=square*]  table [y=gteam_3,x=GR]{ablation_study_team_Synthetic_Full_team_full_client_dnn_loss.dat};

\addplot[black,dashed,thick,mark=star]  table [y=gteam_4,x=GR]{ablation_study_team_Synthetic_Full_team_full_client_dnn_loss.dat};

\addplot[cyan,thick,mark=diamond*] table [y=gteam_5,x=GR]{ablation_study_team_Synthetic_Full_team_full_client_dnn_loss.dat};

\addplot[blue,thick,mark=diamond*] table [y=gteam_10,x=GR]{ablation_study_team_Synthetic_Full_team_full_client_dnn_loss.dat};

\end{axis}

\end{tikzpicture}
}

    \caption{Global Loss \\ (Validation)}
    \label{Fig: synthetic_ftfc_dnn_gl}
\end{subfigure}
\begin{subfigure}[!t]{0.24\linewidth}
            \resizebox{\linewidth}{!}{
                \begin{tikzpicture}
                \tikzstyle{every node}=[font=\fontsize{12}{12}\selectfont]

\begin{axis}[
  xlabel= Global rounds,
  ylabel= Personalized Accuracy,
  legend pos=south east,
  xticklabels from table={ablation_study_team_Synthetic_Full_team_full_client_dnn_acc.dat}{GR},xtick=data
]

\addplot[brown,dashed,thick,mark=square*]  table [y=pteam_3,x=GR]{ablation_study_team_Synthetic_Full_team_full_client_dnn_acc.dat};

\addplot[black,dashed,thick,mark=star]  table [y=pteam_4,x=GR]{ablation_study_team_Synthetic_Full_team_full_client_dnn_acc.dat};

\addplot[cyan,thick,mark=diamond*] table [y=pteam_5,x=GR]{ablation_study_team_Synthetic_Full_team_full_client_dnn_acc.dat};

\addplot[blue,thick,mark=diamond*] table [y=pteam_10,x=GR]{ablation_study_team_Synthetic_Full_team_full_client_dnn_acc.dat};

\end{axis}

\end{tikzpicture}

            }
        \caption{Personalized Accuracy \\ (Validation)}
        \label{Fig: synthetic_ftfc_dnn_pa}
    \end{subfigure} 
    \begin{subfigure}[!t]{0.24\linewidth}
        \resizebox{\linewidth}{!}{
        \begin{tikzpicture}
        \tikzstyle{every node}=[font=\fontsize{12}{12}\selectfont]
\begin{axis}[
  xlabel= Global rounds,
  ylabel= Global Accuracy,
  legend pos=south east,
  xticklabels from table={ablation_study_team_Synthetic_Full_team_full_client_dnn_acc.dat}{GR},xtick=data
]

\addplot[brown,dashed,thick,mark=square*]  table [y=gteam_3,x=GR]{ablation_study_team_Synthetic_Full_team_full_client_dnn_acc.dat};

\addplot[black,dashed,thick,mark=star]  table [y=gteam_4,x=GR]{ablation_study_team_Synthetic_Full_team_full_client_dnn_acc.dat};

\addplot[cyan,thick,mark=diamond*] table [y=gteam_5,x=GR]{ablation_study_team_Synthetic_Full_team_full_client_dnn_acc.dat};

\addplot[blue,thick,mark=diamond*] table [y=gteam_10,x=GR]{ablation_study_team_Synthetic_Full_team_full_client_dnn_acc.dat};

\end{axis}

\end{tikzpicture}
}

    \caption{Global Accuracy \\ (Validation)}
    \label{Fig: synthetic_ftfc_dnn_ga}
\end{subfigure}

\caption{Full participation teams and devices on Synthetic datasets in non-convex settings (DNN)}
\label{Fig: synthetic_ftfc_cnn}
\end{figure*}



\begin{figure*}[!ht]
        \centering
        \begin{subfigure}[!t]{0.24\linewidth}
            \resizebox{\linewidth}{!}{
                \begin{tikzpicture}
                \tikzstyle{every node}=[font=\fontsize{12}{12}\selectfont]
\begin{axis}[
  xlabel= Global rounds,
  ylabel= Loss,
  legend pos=north east,
  xticklabels from table={ablation_study_team_Synthetic_Full_team_full_client_mclr_loss.dat}{GR},xtick=data
]
\addplot[blue,thick,mark=diamond*] table [y=pteam_2,x=GR]{ablation_study_team_Synthetic_Full_team_full_client_mclr_loss.dat};

\addplot[brown,dashed,thick,mark=square*]  table [y=pteam_3,x=GR]{ablation_study_team_Synthetic_Full_team_full_client_mclr_loss.dat};

\addplot[black,dashed,thick,mark=star]  table [y=pteam_4,x=GR]{ablation_study_team_Synthetic_Full_team_full_client_mclr_loss.dat};

\addplot[cyan,thick,mark=diamond*] table [y=pteam_5,x=GR]{ablation_study_team_Synthetic_Full_team_full_client_mclr_loss.dat};

\end{axis}
\end{tikzpicture}

            }
        \caption{Personalized Loss \\ (Validation)}
        \label{Fig: synthetic_ftfc_mclr_pl}
    \end{subfigure} 
    \begin{subfigure}[!t]{0.24\linewidth}
        \resizebox{\linewidth}{!}{
        \begin{tikzpicture}
        \tikzstyle{every node}=[font=\fontsize{12}{12}\selectfont]

\begin{axis}[
  xlabel= Global rounds,
  ylabel= Loss,
  legend pos=north east,
  legend style ={nodes={scale=1.4, transform shape}},
  xticklabels from table={ablation_study_team_Synthetic_Full_team_full_client_mclr_loss.dat}{GR},xtick=data
]
\addplot[blue,thick,mark=diamond*] table [y=gteam_2,x=GR]{ablation_study_team_Synthetic_Full_team_full_client_mclr_loss.dat};
\addlegendentry{$ team = 2$}

\addplot[brown,dashed,thick,mark=square*]  table [y=gteam_3,x=GR]{ablation_study_team_Synthetic_Full_team_full_client_mclr_loss.dat};
\addlegendentry{$ team = 3$}

\addplot[black,dashed,thick,mark=star]  table [y=gteam_4,x=GR]{ablation_study_team_Synthetic_Full_team_full_client_mclr_loss.dat};
\addlegendentry{$ team = 4$}

\addplot[cyan,thick,mark=diamond*] table [y=gteam_5,x=GR]{ablation_study_team_Synthetic_Full_team_full_client_mclr_loss.dat};
\addlegendentry{$ team = 5$}

\end{axis}

\end{tikzpicture}
}

    \caption{Global Loss \\ (Validation)}
    \label{Fig: synthetic_ftfc_mclr_gl}
\end{subfigure}
\begin{subfigure}[!t]{0.24\linewidth}
            \resizebox{\linewidth}{!}{
                \begin{tikzpicture}
                \tikzstyle{every node}=[font=\fontsize{12}{12}\selectfont]

\begin{axis}[
  xlabel= Global rounds,
  ylabel= Personalized Accuracy,
  legend pos=south east,
  xticklabels from table={ablation_study_team_Synthetic_Full_team_full_client_mclr_acc.dat}{GR},xtick=data
]
\addplot[blue,thick,mark=diamond*] table [y=pteam_2,x=GR]{ablation_study_team_Synthetic_Full_team_full_client_mclr_acc.dat};

\addplot[brown,dashed,thick,mark=square*]  table [y=pteam_3,x=GR]{ablation_study_team_Synthetic_Full_team_full_client_mclr_acc.dat};

\addplot[black,dashed,thick,mark=star]  table [y=pteam_4,x=GR]{ablation_study_team_Synthetic_Full_team_full_client_mclr_acc.dat};

\addplot[cyan,thick,mark=diamond*] table [y=pteam_5,x=GR]{ablation_study_team_Synthetic_Full_team_full_client_mclr_acc.dat};

\end{axis}

\end{tikzpicture}

            }
        \caption{Personalized Accuracy \\ (Validation)}
        \label{Fig: synthetic_ftfc_mclr_pa}
    \end{subfigure} 
    \begin{subfigure}[!t]{0.24\linewidth}
        \resizebox{\linewidth}{!}{
        \begin{tikzpicture}
        \tikzstyle{every node}=[font=\fontsize{12}{12}\selectfont]
\begin{axis}[
  xlabel= Global rounds,
  ylabel= Global Accuracy,
  legend pos=south east,
  xticklabels from table={ablation_study_team_Synthetic_Full_team_full_client_mclr_acc.dat}{GR},xtick=data
]
\addplot[blue,thick,mark=diamond*] table [y=gteam_2,x=GR]{ablation_study_team_Synthetic_Full_team_full_client_mclr_acc.dat};

\addplot[brown,dashed,thick,mark=square*]  table [y=gteam_3,x=GR]{ablation_study_team_Synthetic_Full_team_full_client_mclr_acc.dat};

\addplot[black,dashed,thick,mark=star]  table [y=gteam_4,x=GR]{ablation_study_team_Synthetic_Full_team_full_client_mclr_acc.dat};

\addplot[cyan,thick,mark=diamond*] table [y=gteam_5,x=GR]{ablation_study_team_Synthetic_Full_team_full_client_mclr_acc.dat};

\end{axis}

\end{tikzpicture}
}

    \caption{Global Accuracy \\ (Validation)}
    \label{Fig: synthetic_ftfc_mclr_ga}
\end{subfigure}

\caption{Full participation teams and devices on Synthetic datasets in convex settings (MCLR)}
\label{Fig: synthetic_ftfc_mclr}
\end{figure*}


\clearpage
\subsubsection{Full participation of Teams and partial participation of Devices} 

Here, we conducted experiments in convex settings using the MCLR approach; we observed that \ourmodel{} with a low percentage of device participation converges slower compared to a high percentage of device participation (see \Cref{Fig: FMnist_ftpc_team_5} to \Cref{Fig: Emnist_ftpc_team_5}). Increasing the number of participating devices resulted in faster convergence of the global model. Therefore, to achieve a better global model with faster convergence, it is recommended to increase the number of device participants per team iteration.


\begin{figure*}[!ht]
        \centering
        \begin{subfigure}[!t]{0.24\linewidth}
            \resizebox{\linewidth}{!}{
                \begin{tikzpicture}
\begin{axis}[
  xlabel= Global rounds,
  ylabel= Loss,
  legend pos=north east,
  legend style ={nodes={scale=1.4, transform shape}},
  xticklabels from table={ablation_study_team_FMnist_Full_team_part_client_team_5_loss.dat}{GR},xtick=data
]
\addplot[blue,thick,mark=diamond*] table [y=p_10,x=GR]{ablation_study_team_FMnist_Full_team_part_client_team_5_loss.dat};
\addlegendentry{$ 10\%$}

\addplot[brown,dashed,thick,mark=star]  table [y=p_20,x=GR]{ablation_study_team_FMnist_Full_team_part_client_team_5_loss.dat};
\addlegendentry{$ 20\%$}

\addplot[black,dashed,thick,mark=triangle*]  table [y=p_25,x=GR]{ablation_study_team_FMnist_Full_team_part_client_team_5_loss.dat};
\addlegendentry{$ 25\%$}

\addplot[cyan,thick,mark=square*] table [y=p_50,x=GR]{ablation_study_team_FMnist_Full_team_part_client_team_5_loss.dat};
\addlegendentry{$ 50\%$}

\end{axis}
\end{tikzpicture}

            }
        \caption{Personalized Loss \\ (Validation)}
        \label{Fig: FMnist_ftpc_pl}
    \end{subfigure} 
    \begin{subfigure}[!t]{0.24\linewidth}
        \resizebox{\linewidth}{!}{
        \begin{tikzpicture}

\begin{axis}[
  xlabel= Global rounds,
  ylabel= Loss,
  legend pos=north east,
  xticklabels from table={ablation_study_team_FMnist_Full_team_part_client_team_5_loss.dat}{GR},xtick=data
]
\addplot[blue,thick,mark=diamond*] table [y=g_10,x=GR]{ablation_study_team_FMnist_Full_team_part_client_team_5_loss.dat};

\addplot[brown,dashed,thick,mark=star]  table [y=g_20,x=GR]{ablation_study_team_FMnist_Full_team_part_client_team_5_loss.dat};

\addplot[black,dashed,thick,mark=triangle*]  table [y=g_25,x=GR]{ablation_study_team_FMnist_Full_team_part_client_team_5_loss.dat};

\addplot[cyan,thick,mark=square*] table [y=g_50,x=GR]{ablation_study_team_FMnist_Full_team_part_client_team_5_loss.dat};

\end{axis}

\end{tikzpicture}
}

    \caption{Global Loss \\ (Validation)}
    \label{Fig: FMnist_ftpc_gl}
\end{subfigure}
\begin{subfigure}[!t]{0.24\linewidth}
            \resizebox{\linewidth}{!}{
                \begin{tikzpicture}

\begin{axis}[
  xlabel= Global rounds,
  ylabel= Personalized Accuracy,
  legend pos=south east,
  xticklabels from table={ablation_study_team_FMnist_Full_team_part_client_team_5_acc.dat}{GR},xtick=data
]
\addplot[blue,thick,mark=diamond*] table [y=p_10,x=GR]{ablation_study_team_FMnist_Full_team_part_client_team_5_acc.dat};

\addplot[brown,dashed,thick,mark=star]  table [y=p_20,x=GR]{ablation_study_team_FMnist_Full_team_part_client_team_5_acc.dat};

\addplot[black,dashed,thick,mark=triangle*]  table [y=p_25,x=GR]{ablation_study_team_FMnist_Full_team_part_client_team_5_acc.dat};

\addplot[cyan,thick,mark=square*] table [y=p_50,x=GR]{ablation_study_team_FMnist_Full_team_part_client_team_5_acc.dat};

\end{axis}

\end{tikzpicture}

            }
        \caption{Personalized Accuracy \\ (Validation)}
        \label{Fig: FMnist_ftpc_pa}
    \end{subfigure} 
    \begin{subfigure}[!t]{0.24\linewidth}
        \resizebox{\linewidth}{!}{
        \begin{tikzpicture}
\begin{axis}[
  xlabel= Global rounds,
  ylabel= Global Accuracy,
  legend pos=south east,
  xticklabels from table={ablation_study_team_FMnist_Full_team_part_client_team_5_acc.dat}{GR},xtick=data
]
\addplot[blue,thick,mark=diamond*] table [y=g_10,x=GR]{ablation_study_team_FMnist_Full_team_part_client_team_5_acc.dat};

\addplot[brown,dashed,thick,mark=star]  table [y=g_20,x=GR]{ablation_study_team_FMnist_Full_team_part_client_team_5_acc.dat};

\addplot[black,dashed,thick,mark=triangle*]  table [y=g_25,x=GR]{ablation_study_team_FMnist_Full_team_part_client_team_5_acc.dat};

\addplot[cyan,thick,mark=square*] table [y=g_50,x=GR]{ablation_study_team_FMnist_Full_team_part_client_team_5_acc.dat};

\end{axis}

\end{tikzpicture}
}

    \caption{Global Accuracy \\ (Validation)}
    \label{Fig: FMnist_ftpc_ga}
\end{subfigure}

\caption{Full team participation (5 teams) but partial devices participation on FMNIST datasets in convex settings (MCLR)}
\label{Fig: FMnist_ftpc_team_5}
\end{figure*}



\begin{figure*}[!ht]
        \centering
        \begin{subfigure}[!t]{0.24\linewidth}
            \resizebox{\linewidth}{!}{
                \begin{tikzpicture}
\begin{axis}[
  xlabel= Global rounds,
  ylabel= Loss,
  legend pos=north east,
  legend style ={nodes={scale=1.4, transform shape}},
  xticklabels from table={ablation_study_team_FMnist_Full_team_part_client_team_5_loss.dat}{GR},xtick=data
]
\addplot[blue,thick,mark=diamond*] table [y=p_8,x=GR]{ablation_study_team_FMnist_Full_team_part_client_team_4_loss.dat};
\addlegendentry{$ 8\%$}

\addplot[brown,dashed,thick,mark=star]  table [y=p_12,x=GR]{ablation_study_team_FMnist_Full_team_part_client_team_4_loss.dat};
\addlegendentry{$ 12\%$}

\addplot[black,dashed,thick,mark=triangle*]  table [y=p_20,x=GR]{ablation_study_team_FMnist_Full_team_part_client_team_4_loss.dat};
\addlegendentry{$ 20\%$}

\addplot[cyan,thick,mark=square*] table [y=p_48,x=GR]{ablation_study_team_FMnist_Full_team_part_client_team_4_loss.dat};
\addlegendentry{$ 48\%$}

\end{axis}
\end{tikzpicture}

            }
        \caption{Personalized Loss \\ (Validation)}
        \label{Fig: FMnist_ftpc_pl_4}
    \end{subfigure} 
    \begin{subfigure}[!t]{0.24\linewidth}
        \resizebox{\linewidth}{!}{
        \begin{tikzpicture}

\begin{axis}[
  xlabel= Global rounds,
  ylabel= Loss,
  legend pos=north east,
  xticklabels from table={ablation_study_team_FMnist_Full_team_part_client_team_4_loss.dat}{GR},xtick=data
]
\addplot[blue,thick,mark=diamond*] table [y=g_8,x=GR]{ablation_study_team_FMnist_Full_team_part_client_team_4_loss.dat};

\addplot[brown,dashed,thick,mark=star]  table [y=g_12,x=GR]{ablation_study_team_FMnist_Full_team_part_client_team_4_loss.dat};

\addplot[black,dashed,thick,mark=triangle*]  table [y=g_20,x=GR]{ablation_study_team_FMnist_Full_team_part_client_team_4_loss.dat};

\addplot[cyan,thick,mark=square*] table [y=g_48,x=GR]{ablation_study_team_FMnist_Full_team_part_client_team_4_loss.dat};

\end{axis}

\end{tikzpicture}
}

    \caption{Global Loss \\ (Validation)}
    \label{Fig: FMnist_ftpc_gl_4}
\end{subfigure}
\begin{subfigure}[!t]{0.24\linewidth}
            \resizebox{\linewidth}{!}{
                \begin{tikzpicture}

\begin{axis}[
  xlabel= Global rounds,
  ylabel= Personalized Accuracy,
  legend pos=south east,
  xticklabels from table={ablation_study_team_FMnist_Full_team_part_client_team_4_acc.dat}{GR},xtick=data
]
\addplot[blue,thick,mark=diamond*] table [y=p_8,x=GR]{ablation_study_team_FMnist_Full_team_part_client_team_4_acc.dat};

\addplot[brown,dashed,thick,mark=star]  table [y=p_12,x=GR]{ablation_study_team_FMnist_Full_team_part_client_team_4_acc.dat};

\addplot[black,dashed,thick,mark=triangle*]  table [y=p_20,x=GR]{ablation_study_team_FMnist_Full_team_part_client_team_4_acc.dat};

\addplot[cyan,thick,mark=square*] table [y=p_48,x=GR]{ablation_study_team_FMnist_Full_team_part_client_team_4_acc.dat};

\end{axis}

\end{tikzpicture}

            }
        \caption{Personalized Accuracy \\ (Validation)}
        \label{Fig: FMnist_ftpc_pa_4}
    \end{subfigure} 
    \begin{subfigure}[!t]{0.24\linewidth}
        \resizebox{\linewidth}{!}{
        \begin{tikzpicture}
\begin{axis}[
  xlabel= Global rounds,
  ylabel= Global Accuracy,
  legend pos=south east,
  xticklabels from table={ablation_study_team_FMnist_Full_team_part_client_team_4_acc.dat}{GR},xtick=data
]
\addplot[blue,thick,mark=diamond*] table [y=g_8,x=GR]{ablation_study_team_FMnist_Full_team_part_client_team_4_acc.dat};

\addplot[brown,dashed,thick,mark=star]  table [y=g_12,x=GR]{ablation_study_team_FMnist_Full_team_part_client_team_4_acc.dat};

\addplot[black,dashed,thick,mark=triangle*]  table [y=g_20,x=GR]{ablation_study_team_FMnist_Full_team_part_client_team_4_acc.dat};

\addplot[cyan,thick,mark=square*] table [y=g_48,x=GR]{ablation_study_team_FMnist_Full_team_part_client_team_4_acc.dat};

\end{axis}

\end{tikzpicture}
}

    \caption{Global Accuracy \\ (Validation)}
    \label{Fig: FMnist_ftpc_ga_4}
\end{subfigure}

\caption{Full team participation (4 teams) but partial devices participation on FMNIST datasets in convex settings (MCLR)}
\label{Fig: FMnist_ftpc_team_4}
\end{figure*}



\begin{figure*}[!ht]
        \centering
        \begin{subfigure}[!t]{0.24\linewidth}
            \resizebox{\linewidth}{!}{
                \begin{tikzpicture}
\begin{axis}[
  xlabel= Global rounds,
  ylabel= Loss,
  legend pos=north east,
  legend style ={nodes={scale=1.4, transform shape}},
  xticklabels from table={ablation_study_team_FMnist_Full_team_part_client_team_2_loss.dat}{GR},xtick=data
]
\addplot[blue,thick,mark=diamond*] table [y=p_4,x=GR]{ablation_study_team_FMnist_Full_team_part_client_team_2_loss.dat};
\addlegendentry{$ 4\%$}

\addplot[brown,dashed,thick,mark=star]  table [y=p_12,x=GR]{ablation_study_team_FMnist_Full_team_part_client_team_2_loss.dat};
\addlegendentry{$ 12\%$}

\addplot[black,dashed,thick,mark=triangle*]  table [y=p_20,x=GR]{ablation_study_team_FMnist_Full_team_part_client_team_2_loss.dat};
\addlegendentry{$ 20\%$}

\addplot[cyan,thick,mark=square*] table [y=p_50,x=GR]{ablation_study_team_FMnist_Full_team_part_client_team_2_loss.dat};
\addlegendentry{$ 50\%$}

\end{axis}
\end{tikzpicture}

            }
        \caption{Personalized Loss \\ (Validation)}
        \label{Fig: FMnist_ftpc_pl_2}
    \end{subfigure} 
    \begin{subfigure}[!t]{0.24\linewidth}
        \resizebox{\linewidth}{!}{
        \begin{tikzpicture}

\begin{axis}[
  xlabel= Global rounds,
  ylabel= Loss,
  legend pos=north east,
  xticklabels from table={ablation_study_team_FMnist_Full_team_part_client_team_2_loss.dat}{GR},xtick=data
]
\addplot[blue,thick,mark=diamond*] table [y=g_4,x=GR]{ablation_study_team_FMnist_Full_team_part_client_team_2_loss.dat};

\addplot[brown,dashed,thick,mark=star]  table [y=g_12,x=GR]{ablation_study_team_FMnist_Full_team_part_client_team_2_loss.dat};

\addplot[black,dashed,thick,mark=triangle*]  table [y=g_20,x=GR]{ablation_study_team_FMnist_Full_team_part_client_team_2_loss.dat};

\addplot[cyan,thick,mark=square*] table [y=g_50,x=GR]{ablation_study_team_FMnist_Full_team_part_client_team_2_loss.dat};

\end{axis}

\end{tikzpicture}
}

    \caption{Global Loss \\ (Validation)}
    \label{Fig: FMnist_ftpc_gl_2}
\end{subfigure}
\begin{subfigure}[!t]{0.24\linewidth}
            \resizebox{\linewidth}{!}{
                \begin{tikzpicture}

\begin{axis}[
  xlabel= Global rounds,
  ylabel= Personalized Accuracy,
  legend pos=south east,
  xticklabels from table={ablation_study_team_FMnist_Full_team_part_client_team_2_acc.dat}{GR},xtick=data
]
\addplot[blue,thick,mark=diamond*] table [y=p_4,x=GR]{ablation_study_team_FMnist_Full_team_part_client_team_2_acc.dat};

\addplot[brown,dashed,thick,mark=star]  table [y=p_12,x=GR]{ablation_study_team_FMnist_Full_team_part_client_team_2_acc.dat};

\addplot[black,dashed,thick,mark=triangle*]  table [y=p_20,x=GR]{ablation_study_team_FMnist_Full_team_part_client_team_2_acc.dat};

\addplot[cyan,thick,mark=square*] table [y=p_50,x=GR]{ablation_study_team_FMnist_Full_team_part_client_team_2_acc.dat};

\end{axis}

\end{tikzpicture}

            }
        \caption{Personalized Accuracy \\ (Validation)}
        \label{Fig: FMnist_ftpc_pa_2}
    \end{subfigure} 
    \begin{subfigure}[!t]{0.24\linewidth}
        \resizebox{\linewidth}{!}{
        \begin{tikzpicture}
\begin{axis}[
  xlabel= Global rounds,
  ylabel= Global Accuracy,
  legend pos=south east,
  xticklabels from table={ablation_study_team_FMnist_Full_team_part_client_team_2_acc.dat}{GR},xtick=data
]
\addplot[blue,thick,mark=diamond*] table [y=g_4,x=GR]{ablation_study_team_FMnist_Full_team_part_client_team_2_acc.dat};

\addplot[brown,dashed,thick,mark=star]  table [y=g_12,x=GR]{ablation_study_team_FMnist_Full_team_part_client_team_2_acc.dat};

\addplot[black,dashed,thick,mark=triangle*]  table [y=g_20,x=GR]{ablation_study_team_FMnist_Full_team_part_client_team_2_acc.dat};

\addplot[cyan,thick,mark=square*] table [y=g_50,x=GR]{ablation_study_team_FMnist_Full_team_part_client_team_2_acc.dat};

\end{axis}

\end{tikzpicture}
}

    \caption{Global Accuracy \\ (Validation)}
    \label{Fig: FMnist_ftpc_ga_2}
\end{subfigure}

\caption{Full team participation (2 teams) but partial devices participation on FMNIST datasets in convex settings (MCLR)}
\label{Fig: FMnist_ftpc_team_2}
\end{figure*}



\begin{figure*}[!ht]
        \centering
        \begin{subfigure}[!t]{0.24\linewidth}
            \resizebox{\linewidth}{!}{
                \begin{tikzpicture}
\begin{axis}[
  xlabel= Global rounds,
  ylabel= Loss,
  legend pos=north east,
  legend style ={nodes={scale=1.4, transform shape}},
  xticklabels from table={ablation_study_team_Mnist_Full_team_part_client_team_2_loss.dat}{GR},xtick=data
]
\addplot[blue,thick,mark=diamond*] table [y=p_4,x=GR]{ablation_study_team_Mnist_Full_team_part_client_team_2_loss.dat};
\addlegendentry{$ 4\%$}

\addplot[brown,dashed,thick,mark=star]  table [y=p_12,x=GR]{ablation_study_team_Mnist_Full_team_part_client_team_2_loss.dat};
\addlegendentry{$ 12\%$}

\addplot[black,dashed,thick,mark=triangle*]  table [y=p_20,x=GR]{ablation_study_team_Mnist_Full_team_part_client_team_2_loss.dat};
\addlegendentry{$ 20\%$}

\addplot[cyan,thick,mark=square*] table [y=p_50,x=GR]{ablation_study_team_Mnist_Full_team_part_client_team_2_loss.dat};
\addlegendentry{$ 50\%$}

\end{axis}
\end{tikzpicture}

            }
        \caption{Personalized Loss \\ (Validation)}
        \label{Fig: Mnist_ftpc_pl_2}
    \end{subfigure} 
    \begin{subfigure}[!t]{0.24\linewidth}
        \resizebox{\linewidth}{!}{
        \begin{tikzpicture}

\begin{axis}[
  xlabel= Global rounds,
  ylabel= Loss,
  legend pos=north east,
  xticklabels from table={ablation_study_team_FMnist_Full_team_part_client_team_2_loss.dat}{GR},xtick=data
]
\addplot[blue,thick,mark=diamond*] table [y=g_4,x=GR]{ablation_study_team_Mnist_Full_team_part_client_team_2_loss.dat};

\addplot[brown,dashed,thick,mark=star]  table [y=g_12,x=GR]{ablation_study_team_Mnist_Full_team_part_client_team_2_loss.dat};

\addplot[black,dashed,thick,mark=triangle*]  table [y=g_20,x=GR]{ablation_study_team_Mnist_Full_team_part_client_team_2_loss.dat};

\addplot[cyan,thick,mark=square*] table [y=g_50,x=GR]{ablation_study_team_Mnist_Full_team_part_client_team_2_loss.dat};

\end{axis}

\end{tikzpicture}
}

    \caption{Global Loss \\ (Validation)}
    \label{Fig: Mnist_ftpc_gl_2}
\end{subfigure}
\begin{subfigure}[!t]{0.24\linewidth}
            \resizebox{\linewidth}{!}{
                \begin{tikzpicture}

\begin{axis}[
  xlabel= Global rounds,
  ylabel= Personalized Accuracy,
  legend pos=south east,
  xticklabels from table={ablation_study_team_Mnist_Full_team_part_client_team_2_acc.dat}{GR},xtick=data
]
\addplot[blue,thick,mark=diamond*] table [y=p_4,x=GR]{ablation_study_team_Mnist_Full_team_part_client_team_2_acc.dat};

\addplot[brown,dashed,thick,mark=star]  table [y=p_12,x=GR]{ablation_study_team_Mnist_Full_team_part_client_team_2_acc.dat};

\addplot[black,dashed,thick,mark=triangle*]  table [y=p_20,x=GR]{ablation_study_team_Mnist_Full_team_part_client_team_2_acc.dat};

\addplot[cyan,thick,mark=square*] table [y=p_50,x=GR]{ablation_study_team_Mnist_Full_team_part_client_team_2_acc.dat};

\end{axis}

\end{tikzpicture}

            }
        \caption{Personalized Accuracy \\ (Validation)}
        \label{Fig: Mnist_ftpc_pa_2}
    \end{subfigure} 
    \begin{subfigure}[!t]{0.24\linewidth}
        \resizebox{\linewidth}{!}{
        \begin{tikzpicture}
\begin{axis}[
  xlabel= Global rounds,
  ylabel= Global Accuracy,
  legend pos=south east,
  xticklabels from table={ablation_study_team_Mnist_Full_team_part_client_team_2_acc.dat}{GR},xtick=data
]
\addplot[blue,thick,mark=diamond*] table [y=g_4,x=GR]{ablation_study_team_Mnist_Full_team_part_client_team_2_acc.dat};

\addplot[brown,dashed,thick,mark=star]  table [y=g_12,x=GR]{ablation_study_team_Mnist_Full_team_part_client_team_2_acc.dat};

\addplot[black,dashed,thick,mark=triangle*]  table [y=g_20,x=GR]{ablation_study_team_Mnist_Full_team_part_client_team_2_acc.dat};

\addplot[cyan,thick,mark=square*] table [y=g_50,x=GR]{ablation_study_team_Mnist_Full_team_part_client_team_2_acc.dat};

\end{axis}

\end{tikzpicture}
}

    \caption{Global Accuracy \\ (Validation)}
    \label{Fig: Mnist_ftpc_ga_2}
\end{subfigure}

\caption{Full team participation (2 teams) but partial devices participation on MNIST datasets in convex settings (MCLR)}
\label{Fig: Mnist_ftpc_team_2}
\end{figure*}



\begin{figure*}[!ht]
        \centering
        \begin{subfigure}[!t]{0.24\linewidth}
            \resizebox{\linewidth}{!}{
                \begin{tikzpicture}
\begin{axis}[
  xlabel= Global rounds,
  ylabel= Loss,
  legend pos=north east,
  legend style ={nodes={scale=1.4, transform shape}},
  xticklabels from table={ablation_study_team_Mnist_Full_team_part_client_team_4_loss.dat}{GR},xtick=data
]
\addplot[blue,thick,mark=diamond*] table [y=p_8,x=GR]{ablation_study_team_Mnist_Full_team_part_client_team_4_loss.dat};
\addlegendentry{$ 8\%$}

\addplot[brown,dashed,thick,mark=star]  table [y=p_12,x=GR]{ablation_study_team_Mnist_Full_team_part_client_team_4_loss.dat};
\addlegendentry{$ 12\%$}

\addplot[black,dashed,thick,mark=triangle*]  table [y=p_20,x=GR]{ablation_study_team_Mnist_Full_team_part_client_team_4_loss.dat};
\addlegendentry{$ 20\%$}

\addplot[cyan,thick,mark=square*] table [y=p_48,x=GR]{ablation_study_team_Mnist_Full_team_part_client_team_4_loss.dat};
\addlegendentry{$ 48\%$}

\end{axis}
\end{tikzpicture}

            }
        \caption{Personalized Loss \\ (Validation)}
        \label{Fig: Mnist_ftpc_pl_4}
    \end{subfigure} 
    \begin{subfigure}[!t]{0.24\linewidth}
        \resizebox{\linewidth}{!}{
        \begin{tikzpicture}

\begin{axis}[
  xlabel= Global rounds,
  ylabel= Loss,
  legend pos=north east,
  xticklabels from table={ablation_study_team_Mnist_Full_team_part_client_team_4_loss.dat}{GR},xtick=data
]
\addplot[blue,thick,mark=diamond*] table [y=g_8,x=GR]{ablation_study_team_Mnist_Full_team_part_client_team_4_loss.dat};

\addplot[brown,dashed,thick,mark=star]  table [y=g_12,x=GR]{ablation_study_team_Mnist_Full_team_part_client_team_4_loss.dat};

\addplot[black,dashed,thick,mark=triangle*]  table [y=g_20,x=GR]{ablation_study_team_Mnist_Full_team_part_client_team_4_loss.dat};

\addplot[cyan,thick,mark=square*] table [y=g_48,x=GR]{ablation_study_team_Mnist_Full_team_part_client_team_4_loss.dat};

\end{axis}

\end{tikzpicture}
}

    \caption{Global Loss \\ (Validation)}
    \label{Fig: Mnist_ftpc_gl_4}
\end{subfigure}
\begin{subfigure}[!t]{0.24\linewidth}
            \resizebox{\linewidth}{!}{
                \begin{tikzpicture}

\begin{axis}[
  xlabel= Global rounds,
  ylabel= Personalized Accuracy,
  legend pos=south east,
  xticklabels from table={ablation_study_team_Mnist_Full_team_part_client_team_4_acc.dat}{GR},xtick=data
]
\addplot[blue,thick,mark=diamond*] table [y=p_8,x=GR]{ablation_study_team_Mnist_Full_team_part_client_team_4_acc.dat};

\addplot[brown,dashed,thick,mark=star]  table [y=p_12,x=GR]{ablation_study_team_Mnist_Full_team_part_client_team_4_acc.dat};

\addplot[black,dashed,thick,mark=triangle*]  table [y=p_20,x=GR]{ablation_study_team_Mnist_Full_team_part_client_team_4_acc.dat};

\addplot[cyan,thick,mark=square*] table [y=p_48,x=GR]{ablation_study_team_Mnist_Full_team_part_client_team_4_acc.dat};

\end{axis}

\end{tikzpicture}

            }
        \caption{Personalized Accuracy \\ (Validation)}
        \label{Fig: Mnist_ftpc_pa_4}
    \end{subfigure} 
    \begin{subfigure}[!t]{0.24\linewidth}
        \resizebox{\linewidth}{!}{
        \begin{tikzpicture}
\begin{axis}[
  xlabel= Global rounds,
  ylabel= Global Accuracy,
  legend pos=south east,
  xticklabels from table={ablation_study_team_Mnist_Full_team_part_client_team_4_acc.dat}{GR},xtick=data
]
\addplot[blue,thick,mark=diamond*] table [y=g_8,x=GR]{ablation_study_team_Mnist_Full_team_part_client_team_4_acc.dat};

\addplot[brown,dashed,thick,mark=star]  table [y=g_12,x=GR]{ablation_study_team_Mnist_Full_team_part_client_team_4_acc.dat};

\addplot[black,dashed,thick,mark=triangle*]  table [y=g_20,x=GR]{ablation_study_team_Mnist_Full_team_part_client_team_4_acc.dat};

\addplot[cyan,thick,mark=square*] table [y=g_48,x=GR]{ablation_study_team_Mnist_Full_team_part_client_team_4_acc.dat};

\end{axis}

\end{tikzpicture}
}

    \caption{Global Accuracy \\ (Validation)}
    \label{Fig: Mnist_ftpc_ga_4}
\end{subfigure}

\caption{Full team participation (4 teams) but partial devices participation on MNIST datasets in convex settings (MCLR)}
\label{Fig: Mnist_ftpc_team_4}
\end{figure*}



\begin{figure*}[!ht]
        \centering
        \begin{subfigure}[!t]{0.24\linewidth}
            \resizebox{\linewidth}{!}{
                \begin{tikzpicture}
\begin{axis}[
  xlabel= Global rounds,
  ylabel= Loss,
  legend pos=north east,
  legend style ={nodes={scale=1.4, transform shape}},
  xticklabels from table={ablation_study_team_Mnist_Full_team_part_client_team_5_loss.dat}{GR},xtick=data
]
\addplot[blue,thick,mark=diamond*] table [y=p_10,x=GR]{ablation_study_team_Mnist_Full_team_part_client_team_5_loss.dat};
\addlegendentry{$ 10\%$}

\addplot[brown,dashed,thick,mark=star]  table [y=p_20,x=GR]{ablation_study_team_Mnist_Full_team_part_client_team_5_loss.dat};
\addlegendentry{$ 20\%$}

\addplot[black,dashed,thick,mark=triangle*]  table [y=p_30,x=GR]{ablation_study_team_Mnist_Full_team_part_client_team_5_loss.dat};
\addlegendentry{$ 30\%$}

\addplot[cyan,thick,mark=square*] table [y=p_50,x=GR]{ablation_study_team_Mnist_Full_team_part_client_team_5_loss.dat};
\addlegendentry{$ 50\%$}

\end{axis}
\end{tikzpicture}

            }
        \caption{Personalized Loss \\ (Validation)}
        \label{Fig: Mnist_ftpc_pl_5}
    \end{subfigure} 
    \begin{subfigure}[!t]{0.24\linewidth}
        \resizebox{\linewidth}{!}{
        \begin{tikzpicture}

\begin{axis}[
  xlabel= Global rounds,
  ylabel= Loss,
  legend pos=north east,
  xticklabels from table={ablation_study_team_Mnist_Full_team_part_client_team_5_loss.dat}{GR},xtick=data
]
\addplot[blue,thick,mark=diamond*] table [y=g_10,x=GR]{ablation_study_team_Mnist_Full_team_part_client_team_5_loss.dat};

\addplot[brown,dashed,thick,mark=star]  table [y=g_20,x=GR]{ablation_study_team_Mnist_Full_team_part_client_team_5_loss.dat};

\addplot[black,dashed,thick,mark=triangle*]  table [y=g_30,x=GR]{ablation_study_team_Mnist_Full_team_part_client_team_5_loss.dat};

\addplot[cyan,thick,mark=square*] table [y=g_50,x=GR]{ablation_study_team_Mnist_Full_team_part_client_team_5_loss.dat};

\end{axis}

\end{tikzpicture}
}

    \caption{Global Loss \\ (Validation)}
    \label{Fig: Mnist_ftpc_gl_5}
\end{subfigure}
\begin{subfigure}[!t]{0.24\linewidth}
            \resizebox{\linewidth}{!}{
                \begin{tikzpicture}

\begin{axis}[
  xlabel= Global rounds,
  ylabel= Personalized Accuracy,
  legend pos=south east,
  xticklabels from table={ablation_study_team_Mnist_Full_team_part_client_team_5_acc.dat}{GR},xtick=data
]
\addplot[blue,thick,mark=diamond*] table [y=p_10,x=GR]{ablation_study_team_Mnist_Full_team_part_client_team_5_acc.dat};

\addplot[brown,dashed,thick,mark=star]  table [y=p_20,x=GR]{ablation_study_team_Mnist_Full_team_part_client_team_5_acc.dat};

\addplot[black,dashed,thick,mark=triangle*]  table [y=p_30,x=GR]{ablation_study_team_Mnist_Full_team_part_client_team_5_acc.dat};

\addplot[cyan,thick,mark=square*] table [y=p_50,x=GR]{ablation_study_team_Mnist_Full_team_part_client_team_5_acc.dat};

\end{axis}

\end{tikzpicture}

            }
        \caption{Personalized Accuracy \\ (Validation)}
        \label{Fig: Mnist_ftpc_pa_5}
    \end{subfigure} 
    \begin{subfigure}[!t]{0.24\linewidth}
        \resizebox{\linewidth}{!}{
        \begin{tikzpicture}
\begin{axis}[
  xlabel= Global rounds,
  ylabel= Global Accuracy,
  legend pos=south east,
  xticklabels from table={ablation_study_team_Mnist_Full_team_part_client_team_5_acc.dat}{GR},xtick=data
]
\addplot[blue,thick,mark=diamond*] table [y=g_10,x=GR]{ablation_study_team_Mnist_Full_team_part_client_team_5_acc.dat};

\addplot[brown,dashed,thick,mark=star]  table [y=g_20,x=GR]{ablation_study_team_Mnist_Full_team_part_client_team_5_acc.dat};

\addplot[black,dashed,thick,mark=triangle*]  table [y=g_30,x=GR]{ablation_study_team_Mnist_Full_team_part_client_team_5_acc.dat};

\addplot[cyan,thick,mark=square*] table [y=g_50,x=GR]{ablation_study_team_Mnist_Full_team_part_client_team_5_acc.dat};

\end{axis}

\end{tikzpicture}
}

    \caption{Global Accuracy \\ (Validation)}
    \label{Fig: Mnist_ftpc_ga_5}
\end{subfigure}

\caption{Full team participation (5 teams) but partial devices participation on MNIST datasets in convex settings (MCLR)}
\label{Fig: Mnist_ftpc_team_5}
\end{figure*}



\begin{figure*}[!ht]
        \centering
        \begin{subfigure}[!t]{0.24\linewidth}
            \resizebox{\linewidth}{!}{
                \begin{tikzpicture}
\begin{axis}[
  xlabel= Global rounds,
  ylabel= Loss,
  legend pos=north east,
  xticklabels from table={ablation_study_team_Synthetic_Full_team_part_client_team_2_loss.dat}{GR},xtick=data
]
\addplot[blue,thick,mark=diamond*] table [y=p_part_4,x=GR]{ablation_study_team_Synthetic_Full_team_part_client_team_2_loss.dat};

\addplot[brown,dashed,thick,mark=star]  table [y=p_part_12,x=GR]{ablation_study_team_Synthetic_Full_team_part_client_team_2_loss.dat};

\addplot[black,dashed,thick,mark=triangle*]  table [y=p_part_20,x=GR]{ablation_study_team_Synthetic_Full_team_part_client_team_2_loss.dat};

\addplot[cyan,thick,mark=square*] table [y=p_part_50,x=GR]{ablation_study_team_Synthetic_Full_team_part_client_team_2_loss.dat};

\end{axis}
\end{tikzpicture}

            }
        \caption{Personalized Loss \\ (Validation)}
        \label{Fig: Synthetic_ftpc_pl_2}
    \end{subfigure} 
    \begin{subfigure}[!t]{0.24\linewidth}
        \resizebox{\linewidth}{!}{
        \begin{tikzpicture}

\begin{axis}[
  xlabel= Global rounds,
  ylabel= Loss,
  legend pos=south west,
  legend style ={nodes={scale=1.4, transform shape}},
  xticklabels from table={ablation_study_team_Synthetic_Full_team_part_client_team_2_loss.dat}{GR},xtick=data
]
\addplot[blue,thick,mark=diamond*] table [y=g_part_4,x=GR]{ablation_study_team_Synthetic_Full_team_part_client_team_2_loss.dat};
\addlegendentry{$ 4\%$}

\addplot[brown,dashed,thick,mark=star]  table [y=g_part_12,x=GR]{ablation_study_team_Synthetic_Full_team_part_client_team_2_loss.dat};
\addlegendentry{$ 12\%$}

\addplot[black,dashed,thick,mark=triangle*]  table [y=g_part_20,x=GR]{ablation_study_team_Synthetic_Full_team_part_client_team_2_loss.dat};
\addlegendentry{$ 20\%$}

\addplot[cyan,thick,mark=square*] table [y=g_part_50,x=GR]{ablation_study_team_Synthetic_Full_team_part_client_team_2_loss.dat};
\addlegendentry{$ 50\%$}

\end{axis}

\end{tikzpicture}
}

    \caption{Global Loss \\ (Validation)}
    \label{Fig: Synthetic_ftpc_gl_2}
\end{subfigure}
\begin{subfigure}[!t]{0.24\linewidth}
            \resizebox{\linewidth}{!}{
                \begin{tikzpicture}

\begin{axis}[
  xlabel= Global rounds,
  ylabel= Personalized Accuracy,
  legend pos=south west,
  legend style ={nodes={scale=1.4, transform shape}},
  xticklabels from table={ablation_study_team_Synthetic_Full_team_part_client_team_2_acc.dat}{GR},xtick=data
]
\addplot[blue,thick,mark=diamond*] table [y=p_part_4,x=GR]{ablation_study_team_Synthetic_Full_team_part_client_team_2_acc.dat};

\addplot[brown,dashed,thick,mark=star]  table [y=p_part_12,x=GR]{ablation_study_team_Synthetic_Full_team_part_client_team_2_acc.dat};

\addplot[black,dashed,thick,mark=triangle*]  table [y=p_part_20,x=GR]{ablation_study_team_Synthetic_Full_team_part_client_team_2_acc.dat};

\addplot[cyan,thick,mark=square*] table [y=p_part_50,x=GR]{ablation_study_team_Synthetic_Full_team_part_client_team_2_acc.dat};

\end{axis}

\end{tikzpicture}

            }
        \caption{Personalized Accuracy \\ (Validation)}
        \label{Fig: Synthetic_ftpc_pa_2}
    \end{subfigure} 
    \begin{subfigure}[!t]{0.24\linewidth}
        \resizebox{\linewidth}{!}{
        \begin{tikzpicture}
\begin{axis}[
  xlabel= Global rounds,
  ylabel= Global Accuracy,
  legend pos=south east,
  xticklabels from table={ablation_study_team_Synthetic_Full_team_part_client_team_2_acc.dat}{GR},xtick=data
]
\addplot[blue,thick,mark=diamond*] table [y=g_part_4,x=GR]{ablation_study_team_Synthetic_Full_team_part_client_team_2_acc.dat};

\addplot[brown,dashed,thick,mark=star]  table [y=g_part_12,x=GR]{ablation_study_team_Synthetic_Full_team_part_client_team_2_acc.dat};

\addplot[black,dashed,thick,mark=triangle*]  table [y=g_part_20,x=GR]{ablation_study_team_Synthetic_Full_team_part_client_team_2_acc.dat};

\addplot[cyan,thick,mark=square*] table [y=g_part_50,x=GR]{ablation_study_team_Synthetic_Full_team_part_client_team_2_acc.dat};

\end{axis}

\end{tikzpicture}
}

    \caption{Global Accuracy \\ (Validation)}
    \label{Fig: Synthetic_ftpc_ga_2}
\end{subfigure}

\caption{Full team participation (2 teams) but partial devices participation on Synthetic datasets in convex settings (MCLR)}
\label{Fig: Synthetic_ftpc_team_2}
\end{figure*}



\begin{figure*}[!ht]
        \centering
        \begin{subfigure}[!t]{0.24\linewidth}
            \resizebox{\linewidth}{!}{
                \begin{tikzpicture}
\begin{axis}[
  xlabel= Global rounds,
  ylabel= Loss,
  legend pos=north east,
  xticklabels from table={ablation_study_team_Synthetic_Full_team_part_client_team_4_loss.dat}{GR},xtick=data
]
\addplot[blue,thick,mark=diamond*] table [y=p_part_8,x=GR]{ablation_study_team_Synthetic_Full_team_part_client_team_4_loss.dat};

\addplot[brown,dashed,thick,mark=star]  table [y=p_part_12,x=GR]{ablation_study_team_Synthetic_Full_team_part_client_team_4_loss.dat};

\addplot[black,dashed,thick,mark=triangle*]  table [y=p_part_20,x=GR]{ablation_study_team_Synthetic_Full_team_part_client_team_4_loss.dat};

\addplot[cyan,thick,mark=square*] table [y=p_part_48,x=GR]{ablation_study_team_Synthetic_Full_team_part_client_team_4_loss.dat};

\end{axis}
\end{tikzpicture}

            }
        \caption{Personalized Loss \\ (Validation)}
        \label{Fig: Synthetic_ftpc_pl_4}
    \end{subfigure} 
    \begin{subfigure}[!t]{0.24\linewidth}
        \resizebox{\linewidth}{!}{
        \begin{tikzpicture}

\begin{axis}[
  xlabel= Global rounds,
  ylabel= Loss,
  legend pos=south west,
  legend style ={nodes={scale=1.4, transform shape}},
  xticklabels from table={ablation_study_team_Synthetic_Full_team_part_client_team_4_loss.dat}{GR},xtick=data
]
\addplot[blue,thick,mark=diamond*] table [y=g_part_8,x=GR]{ablation_study_team_Synthetic_Full_team_part_client_team_4_loss.dat};
\addlegendentry{$ 8\%$}

\addplot[brown,dashed,thick,mark=star]  table [y=g_part_12,x=GR]{ablation_study_team_Synthetic_Full_team_part_client_team_4_loss.dat};
\addlegendentry{$ 12\%$}

\addplot[black,dashed,thick,mark=triangle*]  table [y=g_part_20,x=GR]{ablation_study_team_Synthetic_Full_team_part_client_team_4_loss.dat};
\addlegendentry{$ 20\%$}

\addplot[cyan,thick,mark=square*] table [y=g_part_48,x=GR]{ablation_study_team_Synthetic_Full_team_part_client_team_4_loss.dat};
\addlegendentry{$ 48\%$}

\end{axis}

\end{tikzpicture}
}

    \caption{Global Loss \\ (Validation)}
    \label{Fig: Synthetic_ftpc_gl_4}
\end{subfigure}
\begin{subfigure}[!t]{0.24\linewidth}
            \resizebox{\linewidth}{!}{
                \begin{tikzpicture}

\begin{axis}[
  xlabel= Global rounds,
  ylabel= Personalized Accuracy,
  legend pos=south east,
  xticklabels from table={ablation_study_team_Synthetic_Full_team_part_client_team_4_acc.dat}{GR},xtick=data
]
\addplot[blue,thick,mark=diamond*] table [y=p_part_8,x=GR]{ablation_study_team_Synthetic_Full_team_part_client_team_4_acc.dat};

\addplot[brown,dashed,thick,mark=star]  table [y=p_part_12,x=GR]{ablation_study_team_Synthetic_Full_team_part_client_team_4_acc.dat};

\addplot[black,dashed,thick,mark=triangle*]  table [y=p_part_20,x=GR]{ablation_study_team_Synthetic_Full_team_part_client_team_4_acc.dat};

\addplot[cyan,thick,mark=square*] table [y=p_part_48,x=GR]{ablation_study_team_Synthetic_Full_team_part_client_team_4_acc.dat};

\end{axis}

\end{tikzpicture}

            }
        \caption{Personalized Accuracy \\ (Validation)}
        \label{Fig: Synthetic_ftpc_pa_4}
    \end{subfigure} 
    \begin{subfigure}[!t]{0.24\linewidth}
        \resizebox{\linewidth}{!}{
        \begin{tikzpicture}
\begin{axis}[
  xlabel= Global rounds,
  ylabel= Global Accuracy,
  legend pos=south east,
  xticklabels from table={ablation_study_team_Synthetic_Full_team_part_client_team_4_acc.dat}{GR},xtick=data
]
\addplot[blue,thick,mark=diamond*] table [y=g_part_8,x=GR]{ablation_study_team_Synthetic_Full_team_part_client_team_4_acc.dat};

\addplot[brown,dashed,thick,mark=star]  table [y=g_part_12,x=GR]{ablation_study_team_Synthetic_Full_team_part_client_team_4_acc.dat};

\addplot[black,dashed,thick,mark=triangle*]  table [y=g_part_20,x=GR]{ablation_study_team_Synthetic_Full_team_part_client_team_4_acc.dat};

\addplot[cyan,thick,mark=square*] table [y=g_part_48,x=GR]{ablation_study_team_Synthetic_Full_team_part_client_team_4_acc.dat};

\end{axis}

\end{tikzpicture}
}

    \caption{Global Accuracy \\ (Validation)}
    \label{Fig: Synthetic_ftpc_ga_4}
\end{subfigure}

\caption{Full team participation (4 teams) but partial devices participation on Synthetic datasets in convex settings (MCLR)}
\label{Fig: Synthetic_ftpc_team_4}
\end{figure*}



\begin{figure*}[!ht]
        \centering
        \begin{subfigure}[!t]{0.24\linewidth}
            \resizebox{\linewidth}{!}{
                \begin{tikzpicture}
\begin{axis}[
  xlabel= Global rounds,
  ylabel= Loss,
  legend pos=north east,
  xticklabels from table={ablation_study_team_Synthetic_Full_team_part_client_team_5_loss.dat}{GR},xtick=data
]
\addplot[blue,thick,mark=diamond*] table [y=p_part_10,x=GR]{ablation_study_team_Synthetic_Full_team_part_client_team_5_loss.dat};

\addplot[brown,dashed,thick,mark=star]  table [y=p_part_20,x=GR]{ablation_study_team_Synthetic_Full_team_part_client_team_5_loss.dat};

\addplot[black,dashed,thick,mark=triangle*]  table [y=p_part_40,x=GR]{ablation_study_team_Synthetic_Full_team_part_client_team_5_loss.dat};

\addplot[cyan,thick,mark=square*] table [y=p_part_50,x=GR]{ablation_study_team_Synthetic_Full_team_part_client_team_5_loss.dat};

\end{axis}
\end{tikzpicture}

            }
        \caption{Personalized Loss \\ (Validation)}
        \label{Fig: Synthetic_ftpc_pl_5}
    \end{subfigure} 
    \begin{subfigure}[!t]{0.24\linewidth}
        \resizebox{\linewidth}{!}{
        \begin{tikzpicture}

\begin{axis}[
  xlabel= Global rounds,
  ylabel= Loss,
  legend pos=south west,
  legend style ={nodes={scale=1.4, transform shape}},
  xticklabels from table={ablation_study_team_Synthetic_Full_team_part_client_team_5_loss.dat}{GR},xtick=data
]
\addplot[blue,thick,mark=diamond*] table [y=g_part_10,x=GR]{ablation_study_team_Synthetic_Full_team_part_client_team_5_loss.dat};
\addlegendentry{$ 10\%$}

\addplot[brown,dashed,thick,mark=star]  table [y=g_part_20,x=GR]{ablation_study_team_Synthetic_Full_team_part_client_team_5_loss.dat};
\addlegendentry{$ 20\%$}

\addplot[black,dashed,thick,mark=triangle*]  table [y=g_part_40,x=GR]{ablation_study_team_Synthetic_Full_team_part_client_team_5_loss.dat};
\addlegendentry{$ 40\%$}

\addplot[cyan,thick,mark=square*] table [y=g_part_50,x=GR]{ablation_study_team_Synthetic_Full_team_part_client_team_5_loss.dat};
\addlegendentry{$ 50\%$}

\end{axis}

\end{tikzpicture}
}

    \caption{Global Loss \\ (Validation)}
    \label{Fig: Synthetic_ftpc_gl_5}
\end{subfigure}
\begin{subfigure}[!t]{0.24\linewidth}
            \resizebox{\linewidth}{!}{
                \begin{tikzpicture}

\begin{axis}[
  xlabel= Global rounds,
  ylabel= Personalized Accuracy,
  legend pos=south east,
  xticklabels from table={ablation_study_team_Synthetic_Full_team_part_client_team_5_acc.dat}{GR},xtick=data
]
\addplot[blue,thick,mark=diamond*] table [y=p_part_10,x=GR]{ablation_study_team_Synthetic_Full_team_part_client_team_5_acc.dat};

\addplot[brown,dashed,thick,mark=star]  table [y=p_part_20,x=GR]{ablation_study_team_Synthetic_Full_team_part_client_team_5_acc.dat};

\addplot[black,dashed,thick,mark=triangle*]  table [y=p_part_40,x=GR]{ablation_study_team_Synthetic_Full_team_part_client_team_5_acc.dat};

\addplot[cyan,thick,mark=square*] table [y=p_part_50,x=GR]{ablation_study_team_Synthetic_Full_team_part_client_team_5_acc.dat};

\end{axis}

\end{tikzpicture}

            }
        \caption{Personalized Accuracy \\ (Validation)}
        \label{Fig: Synthetic_ftpc_pa_5}
    \end{subfigure} 
    \begin{subfigure}[!t]{0.24\linewidth}
        \resizebox{\linewidth}{!}{
        \begin{tikzpicture}
\begin{axis}[
  xlabel= Global rounds,
  ylabel= Global Accuracy,
  legend pos=south east,
  xticklabels from table={ablation_study_team_Synthetic_Full_team_part_client_team_5_acc.dat}{GR},xtick=data
]
\addplot[blue,thick,mark=diamond*] table [y=g_part_10,x=GR]{ablation_study_team_Synthetic_Full_team_part_client_team_5_acc.dat};

\addplot[brown,dashed,thick,mark=star]  table [y=g_part_20,x=GR]{ablation_study_team_Synthetic_Full_team_part_client_team_5_acc.dat};

\addplot[black,dashed,thick,mark=triangle*]  table [y=g_part_40,x=GR]{ablation_study_team_Synthetic_Full_team_part_client_team_5_acc.dat};

\addplot[cyan,thick,mark=square*] table [y=g_part_50,x=GR]{ablation_study_team_Synthetic_Full_team_part_client_team_5_acc.dat};

\end{axis}

\end{tikzpicture}
}

    \caption{Global Accuracy \\ (Validation)}
    \label{Fig: Synthetic_ftpc_ga_5}
\end{subfigure}

\caption{Full team participation (5 teams) but partial devices participation on Synthetic datasets in convex settings (MCLR)}
\label{Fig: Synthetic_ftpc_team_5}
\end{figure*}



\begin{figure*}[!ht]
        \centering
        \begin{subfigure}[!t]{0.24\linewidth}
            \resizebox{\linewidth}{!}{
                \begin{tikzpicture}
\begin{axis}[
  xlabel= Global rounds,
  ylabel= Loss,
  legend pos=north east,
  legend style ={nodes={scale=1.4, transform shape}},
  xticklabels from table={ablation_study_team_Emnist_Full_team_part_client_team_2_loss.dat}{GR},xtick=data
]
\addplot[blue,thick,mark=diamond*] table [y=p_part_4,x=GR]{ablation_study_team_Emnist_Full_team_part_client_team_2_loss.dat};
\addlegendentry{$ 4\%$}

\addplot[brown,dashed,thick,mark=star]  table [y=p_part_12,x=GR]{ablation_study_team_Emnist_Full_team_part_client_team_2_loss.dat};
\addlegendentry{$ 12\%$}

\addplot[black,dashed,thick,mark=triangle*]  table [y=p_part_20,x=GR]{ablation_study_team_Emnist_Full_team_part_client_team_2_loss.dat};
\addlegendentry{$ 20\%$}

\addplot[cyan,thick,mark=square*] table [y=p_part_50,x=GR]{ablation_study_team_Emnist_Full_team_part_client_team_2_loss.dat};
\addlegendentry{$ 50\%$}

\end{axis}
\end{tikzpicture}

            }
        \caption{Personalized Loss \\ (Validation)}
        \label{Fig: Emnist_ftpc_pl_2}
    \end{subfigure} 
    \begin{subfigure}[!t]{0.24\linewidth}
        \resizebox{\linewidth}{!}{
        \begin{tikzpicture}

\begin{axis}[
  xlabel= Global rounds,
  ylabel= Loss,
  legend pos=north east,
  xticklabels from table={ablation_study_team_Emnist_Full_team_part_client_team_2_loss.dat}{GR},xtick=data
]
\addplot[blue,thick,mark=diamond*] table [y=g_part_4,x=GR]{ablation_study_team_Emnist_Full_team_part_client_team_2_loss.dat};

\addplot[brown,dashed,thick,mark=star]  table [y=g_part_12,x=GR]{ablation_study_team_Emnist_Full_team_part_client_team_2_loss.dat};

\addplot[black,dashed,thick,mark=triangle*]  table [y=g_part_20,x=GR]{ablation_study_team_Emnist_Full_team_part_client_team_2_loss.dat};

\addplot[cyan,thick,mark=square*] table [y=g_part_50,x=GR]{ablation_study_team_Emnist_Full_team_part_client_team_2_loss.dat};

\end{axis}

\end{tikzpicture}
}

    \caption{Global Loss \\ (Validation)}
    \label{Fig: Emnist_ftpc_gl_2}
\end{subfigure}
\begin{subfigure}[!t]{0.24\linewidth}
            \resizebox{\linewidth}{!}{
                \begin{tikzpicture}

\begin{axis}[
  xlabel= Global rounds,
  ylabel= Personalized Accuracy,
  legend pos=south east,
  xticklabels from table={ablation_study_team_Emnist_Full_team_part_client_team_2_acc.dat}{GR},xtick=data
]
\addplot[blue,thick,mark=diamond*] table [y=p_part_4,x=GR]{ablation_study_team_Emnist_Full_team_part_client_team_2_acc.dat};

\addplot[brown,dashed,thick,mark=star]  table [y=p_part_12,x=GR]{ablation_study_team_Emnist_Full_team_part_client_team_2_acc.dat};

\addplot[black,dashed,thick,mark=triangle*]  table [y=p_part_20,x=GR]{ablation_study_team_Emnist_Full_team_part_client_team_2_acc.dat};

\addplot[cyan,thick,mark=square*] table [y=p_part_50,x=GR]{ablation_study_team_Emnist_Full_team_part_client_team_2_acc.dat};

\end{axis}

\end{tikzpicture}

            }
        \caption{Personalized Accuracy \\ (Validation)}
        \label{Fig: Emnist_ftpc_pa_2}
    \end{subfigure} 
    \begin{subfigure}[!t]{0.24\linewidth}
        \resizebox{\linewidth}{!}{
        \begin{tikzpicture}
\begin{axis}[
  xlabel= Global rounds,
  ylabel= Global Accuracy,
  legend pos=south east,
  xticklabels from table={ablation_study_team_Emnist_Full_team_part_client_team_2_acc.dat}{GR},xtick=data
]
\addplot[blue,thick,mark=diamond*] table [y=g_part_4,x=GR]{ablation_study_team_Emnist_Full_team_part_client_team_2_acc.dat};
\addlegendentry{$ 4\%$}

\addplot[brown,dashed,thick,mark=star]  table [y=g_part_12,x=GR]{ablation_study_team_Emnist_Full_team_part_client_team_2_acc.dat};

\addplot[black,dashed,thick,mark=triangle*]  table [y=g_part_20,x=GR]{ablation_study_team_Emnist_Full_team_part_client_team_2_acc.dat};

\addplot[cyan,thick,mark=square*] table [y=g_part_50,x=GR]{ablation_study_team_Emnist_Full_team_part_client_team_2_acc.dat};

\end{axis}

\end{tikzpicture}
}

    \caption{Global Accuracy \\ (Validation)}
    \label{Fig: Emnist_ftpc_ga_2}
\end{subfigure}

\caption{Full team participation (2 teams) but partial devices participation on EMNIST datasets in convex settings (MCLR)}
\label{Fig: Emnist_ftpc_team_2}
\end{figure*}



\begin{figure*}[!ht]
        \centering
        \begin{subfigure}[!t]{0.24\linewidth}
            \resizebox{\linewidth}{!}{
                \begin{tikzpicture}
\begin{axis}[
  xlabel= Global rounds,
  ylabel= Loss,
  legend pos=north east,
  legend style ={nodes={scale=1.4, transform shape}},
  xticklabels from table={ablation_study_team_Emnist_Full_team_part_client_team_4_loss.dat}{GR},xtick=data
]
\addplot[blue,thick,mark=diamond*] table [y=p_part_8,x=GR]{ablation_study_team_Emnist_Full_team_part_client_team_4_loss.dat};
\addlegendentry{$ 8\%$}

\addplot[brown,dashed,thick,mark=star]  table [y=p_part_12,x=GR]{ablation_study_team_Emnist_Full_team_part_client_team_4_loss.dat};
\addlegendentry{$ 12\%$}

\addplot[black,dashed,thick,mark=triangle*]  table [y=p_part_40,x=GR]{ablation_study_team_Emnist_Full_team_part_client_team_4_loss.dat};
\addlegendentry{$ 40\%$}

\addplot[cyan,thick,mark=square*] table [y=p_part_48,x=GR]{ablation_study_team_Emnist_Full_team_part_client_team_4_loss.dat};
\addlegendentry{$ 48\%$}

\end{axis}
\end{tikzpicture}

            }
        \caption{Personalized Loss \\ (Validation)}
        \label{Fig: Emnist_ftpc_pl_4}
    \end{subfigure} 
    \begin{subfigure}[!t]{0.24\linewidth}
        \resizebox{\linewidth}{!}{
        \begin{tikzpicture}

\begin{axis}[
  xlabel= Global rounds,
  ylabel= Loss,
  legend pos=north east,
  xticklabels from table={ablation_study_team_Emnist_Full_team_part_client_team_4_loss.dat}{GR},xtick=data
]
\addplot[blue,thick,mark=diamond*] table [y=g_part_8,x=GR]{ablation_study_team_Emnist_Full_team_part_client_team_4_loss.dat};

\addplot[brown,dashed,thick,mark=star]  table [y=g_part_12,x=GR]{ablation_study_team_Emnist_Full_team_part_client_team_4_loss.dat};

\addplot[black,dashed,thick,mark=triangle*]  table [y=g_part_40,x=GR]{ablation_study_team_Emnist_Full_team_part_client_team_4_loss.dat};

\addplot[cyan,thick,mark=square*] table [y=g_part_48,x=GR]{ablation_study_team_Emnist_Full_team_part_client_team_4_loss.dat};

\end{axis}

\end{tikzpicture}
}

    \caption{Global Loss \\ (Validation)}
    \label{Fig: Emnist_ftpc_gl_4}
\end{subfigure}
\begin{subfigure}[!t]{0.24\linewidth}
            \resizebox{\linewidth}{!}{
                \begin{tikzpicture}

\begin{axis}[
  xlabel= Global rounds,
  ylabel= Personalized Accuracy,
  legend pos=south east,
  xticklabels from table={ablation_study_team_Emnist_Full_team_part_client_team_4_acc.dat}{GR},xtick=data
]
\addplot[blue,thick,mark=diamond*] table [y=p_part_8,x=GR]{ablation_study_team_Emnist_Full_team_part_client_team_4_acc.dat};

\addplot[brown,dashed,thick,mark=star]  table [y=p_part_12,x=GR]{ablation_study_team_Emnist_Full_team_part_client_team_4_acc.dat};

\addplot[black,dashed,thick,mark=triangle*]  table [y=p_part_40,x=GR]{ablation_study_team_Emnist_Full_team_part_client_team_4_acc.dat};

\addplot[cyan,thick,mark=square*] table [y=p_part_48,x=GR]{ablation_study_team_Emnist_Full_team_part_client_team_4_acc.dat};

\end{axis}

\end{tikzpicture}

            }
        \caption{Personalized Accuracy \\ (Validation)}
        \label{Fig: Emnist_ftpc_pa_4}
    \end{subfigure} 
    \begin{subfigure}[!t]{0.24\linewidth}
        \resizebox{\linewidth}{!}{
        \begin{tikzpicture}
\begin{axis}[
  xlabel= Global rounds,
  ylabel= Global Accuracy,
  legend pos=south east,
  xticklabels from table={ablation_study_team_Emnist_Full_team_part_client_team_4_acc.dat}{GR},xtick=data
]
\addplot[blue,thick,mark=diamond*] table [y=g_part_8,x=GR]{ablation_study_team_Emnist_Full_team_part_client_team_4_acc.dat};

\addplot[brown,dashed,thick,mark=star]  table [y=g_part_12,x=GR]{ablation_study_team_Emnist_Full_team_part_client_team_4_acc.dat};

\addplot[black,dashed,thick,mark=triangle*]  table [y=g_part_40,x=GR]{ablation_study_team_Emnist_Full_team_part_client_team_4_acc.dat};

\addplot[cyan,thick,mark=square*] table [y=g_part_48,x=GR]{ablation_study_team_Emnist_Full_team_part_client_team_4_acc.dat};

\end{axis}

\end{tikzpicture}
}

    \caption{Global Accuracy \\ (Validation)}
    \label{Fig: Emnist_ftpc_ga_4}
\end{subfigure}

\caption{Full team participation (4 teams) but partial devices participation on EMNIST datasets in convex settings (MCLR)}
\label{Fig: Emnist_ftpc_team_4}
\end{figure*}



\begin{figure*}[!ht]
        \centering
        \begin{subfigure}[!t]{0.24\linewidth}
            \resizebox{\linewidth}{!}{
                \begin{tikzpicture}
\begin{axis}[
  xlabel= Global rounds,
  ylabel= Loss,
  legend pos=north east,
  legend style ={nodes={scale=1.4, transform shape}},
  xticklabels from table={ablation_study_team_Emnist_Full_team_part_client_team_5_loss.dat}{GR},xtick=data
]
\addplot[blue,thick,mark=diamond*] table [y=p_part_10,x=GR]{ablation_study_team_Emnist_Full_team_part_client_team_5_loss.dat};
\addlegendentry{$ 10\%$}

\addplot[brown,dashed,thick,mark=star]  table [y=p_part_20,x=GR]{ablation_study_team_Emnist_Full_team_part_client_team_5_loss.dat};
\addlegendentry{$ 20\%$}

\addplot[black,dashed,thick,mark=triangle*]  table [y=p_part_40,x=GR]{ablation_study_team_Emnist_Full_team_part_client_team_5_loss.dat};
\addlegendentry{$ 40\%$}

\addplot[cyan,thick,mark=square*] table [y=p_part_50,x=GR]{ablation_study_team_Emnist_Full_team_part_client_team_5_loss.dat};
\addlegendentry{$ 50\%$}

\end{axis}
\end{tikzpicture}

            }
        \caption{Personalized Loss \\ (Validation)}
        \label{Fig: Emnist_ftpc_pl_5}
    \end{subfigure} 
    \begin{subfigure}[!t]{0.24\linewidth}
        \resizebox{\linewidth}{!}{
        \begin{tikzpicture}

\begin{axis}[
  xlabel= Global rounds,
  ylabel= Loss,
  legend pos=north east,
  xticklabels from table={ablation_study_team_Emnist_Full_team_part_client_team_5_loss.dat}{GR},xtick=data
]
\addplot[blue,thick,mark=diamond*] table [y=g_part_10,x=GR]{ablation_study_team_Emnist_Full_team_part_client_team_5_loss.dat};

\addplot[brown,dashed,thick,mark=star]  table [y=g_part_20,x=GR]{ablation_study_team_Emnist_Full_team_part_client_team_5_loss.dat};

\addplot[black,dashed,thick,mark=triangle*]  table [y=g_part_40,x=GR]{ablation_study_team_Emnist_Full_team_part_client_team_5_loss.dat};

\addplot[cyan,thick,mark=square*] table [y=g_part_50,x=GR]{ablation_study_team_Emnist_Full_team_part_client_team_5_loss.dat};

\end{axis}

\end{tikzpicture}
}

    \caption{Global Loss \\ (Validation)}
    \label{Fig: Emnist_ftpc_gl_5}
\end{subfigure}
\begin{subfigure}[!t]{0.24\linewidth}
            \resizebox{\linewidth}{!}{
                \begin{tikzpicture}

\begin{axis}[
  xlabel= Global rounds,
  ylabel= Personalized Accuracy,
  legend pos=south east,
  xticklabels from table={ablation_study_team_Emnist_Full_team_part_client_team_5_acc.dat}{GR},xtick=data
]
\addplot[blue,thick,mark=diamond*] table [y=p_part_10,x=GR]{ablation_study_team_Emnist_Full_team_part_client_team_5_acc.dat};

\addplot[brown,dashed,thick,mark=star]  table [y=p_part_20,x=GR]{ablation_study_team_Emnist_Full_team_part_client_team_5_acc.dat};

\addplot[black,dashed,thick,mark=triangle*]  table [y=p_part_40,x=GR]{ablation_study_team_Emnist_Full_team_part_client_team_5_acc.dat};

\addplot[cyan,thick,mark=square*] table [y=p_part_50,x=GR]{ablation_study_team_Emnist_Full_team_part_client_team_5_acc.dat};

\end{axis}

\end{tikzpicture}

            }
        \caption{Personalized Accuracy \\ (Validation)}
        \label{Fig: Emnist_ftpc_pa_5}
    \end{subfigure} 
    \begin{subfigure}[!t]{0.24\linewidth}
        \resizebox{\linewidth}{!}{
        \begin{tikzpicture}
\begin{axis}[
  xlabel= Global rounds,
  ylabel= Global Accuracy,
  legend pos=south east,
  xticklabels from table={ablation_study_team_Emnist_Full_team_part_client_team_5_acc.dat}{GR},xtick=data
]
\addplot[blue,thick,mark=diamond*] table [y=g_part_10,x=GR]{ablation_study_team_Emnist_Full_team_part_client_team_5_acc.dat};

\addplot[brown,dashed,thick,mark=star]  table [y=g_part_20,x=GR]{ablation_study_team_Emnist_Full_team_part_client_team_5_acc.dat};

\addplot[black,dashed,thick,mark=triangle*]  table [y=g_part_40,x=GR]{ablation_study_team_Emnist_Full_team_part_client_team_5_acc.dat};

\addplot[cyan,thick,mark=square*] table [y=g_part_50,x=GR]{ablation_study_team_Emnist_Full_team_part_client_team_5_acc.dat};

\end{axis}

\end{tikzpicture}
}

    \caption{Global Accuracy \\ (Validation)}
    \label{Fig: Emnist_ftpc_ga_5}
\end{subfigure}

\caption{Full team participation (5 teams) but partial devices participation on EMNIST datasets in convex settings (MCLR)}
\label{Fig: Emnist_ftpc_team_5}
\end{figure*}


\clearpage
\subsubsection{Partial participation of Teams and full participation of Devices}

We conducted a series of experiments in both convex  (see \Cref{Fig: emnist_ptfc_mclr}, \Cref{Fig: mnist_ptfc_mclr}, and \Cref{Fig: fmnist_ptfc_mclr}) and non-convex settings (see \Cref{Fig: emnist_ptfc_cnn}, \Cref{Fig: mnist_ptfc_cnn}, and \Cref{Fig: fmnist_ptfc_cnn}) using EMNIST, MNIST, and FMNIST datasets. From there, we observed if the participation of teams is limited, then the personalized and global model both converges slowly. The same behaviour has been observed in all datasets for both convex and non-convex experiments (see \Cref{Fig: emnist_ptfc_mclr} to \Cref{Fig: fmnist_ptfc_cnn}).

\begin{figure*}[!ht]
        \centering
        \begin{subfigure}[!t]{0.24\linewidth}
            \resizebox{\linewidth}{!}{
                \begin{tikzpicture}
\begin{axis}[
  xlabel= Global rounds,
  ylabel= Loss,
  legend pos=north west,
  legend style ={nodes={scale=1.4, transform shape}},
  xticklabels from table={ablation_study_team_Emnist_Part_team_full_client_cnn_loss.dat}{GR},xtick=data
]
\addplot[blue,thick,mark=diamond*] table [y=pround2,x=GR]{ablation_study_team_Emnist_Part_team_full_client_mclr_loss.dat};
\addlegendentry{$ 2\%$}

\addplot[brown,dashed,thick,mark=square*]  table [y=pround4,x=GR]{ablation_study_team_Emnist_Part_team_full_client_mclr_loss.dat};
\addlegendentry{$ 4\%$}

\addplot[black,dashed,thick,mark=triangle*]  table [y=pround10,x=GR]{ablation_study_team_Emnist_Part_team_full_client_mclr_loss.dat};
\addlegendentry{$ 10\%$}

\end{axis}
\end{tikzpicture}

            }
        \caption{Personalized Loss \\ (Validation)}
        \label{Fig: emnist_ptfc_mclr_pl}
    \end{subfigure} 
    \begin{subfigure}[!t]{0.24\linewidth}
        \resizebox{\linewidth}{!}{
        \begin{tikzpicture}

\begin{axis}[
  xlabel= Global rounds,
  ylabel= Loss,
  legend pos=north east,
  xticklabels from table={ablation_study_team_Emnist_Part_team_full_client_mclr_loss.dat}{GR},xtick=data
]
\addplot[blue,thick,mark=diamond*] table [y=ground2,x=GR]{ablation_study_team_Emnist_Part_team_full_client_mclr_loss.dat};

\addplot[brown,dashed,thick,mark=square*]  table [y=ground4,x=GR]{ablation_study_team_Emnist_Part_team_full_client_mclr_loss.dat};

\addplot[black,dashed,thick,mark=triangle*]  table [y=ground10,x=GR]{ablation_study_team_Emnist_Part_team_full_client_mclr_loss.dat};

\end{axis}

\end{tikzpicture}
}

    \caption{Global Loss \\ (Validation)}
    \label{Fig: emnist_ptfc_mclr_gl}
\end{subfigure}
\begin{subfigure}[!t]{0.24\linewidth}
            \resizebox{\linewidth}{!}{
                \begin{tikzpicture}

\begin{axis}[
  xlabel= Global rounds,
  ylabel= Personalized Accuracy,
  legend pos=south east,
  xticklabels from table={ablation_study_team_Emnist_Part_team_full_client_mclr_acc.dat}{GR},xtick=data
]
\addplot[blue,thick,mark=diamond*] table [y=pround2,x=GR]{ablation_study_team_Emnist_Part_team_full_client_mclr_acc.dat};

\addplot[brown,dashed,thick,mark=square*]  table [y=pround4,x=GR]{ablation_study_team_Emnist_Part_team_full_client_mclr_acc.dat};

\addplot[black,dashed,thick,mark=triangle*]  table [y=pround10,x=GR]{ablation_study_team_Emnist_Part_team_full_client_mclr_acc.dat};

\end{axis}

\end{tikzpicture}

            }
        \caption{Personalized Accuracy \\ (Validation)}
        \label{Fig: emnist_ptfc_mclr_pa}
    \end{subfigure} 
    \begin{subfigure}[!t]{0.24\linewidth}
        \resizebox{\linewidth}{!}{
        \begin{tikzpicture}
\begin{axis}[
  xlabel= Global rounds,
  ylabel= Global Accuracy,
  legend pos=south east,
  xticklabels from table={ablation_study_team_Emnist_Part_team_full_client_mclr_acc.dat}{GR},xtick=data
]
\addplot[blue,thick,mark=diamond*] table [y=ground2,x=GR]{ablation_study_team_Emnist_Part_team_full_client_mclr_acc.dat};

\addplot[brown,dashed,thick,mark=square*]  table [y=ground4,x=GR]{ablation_study_team_Emnist_Part_team_full_client_mclr_acc.dat};

\addplot[black,dashed,thick,mark=triangle*]  table [y=ground10,x=GR]{ablation_study_team_Emnist_Part_team_full_client_mclr_acc.dat};

\end{axis}

\end{tikzpicture}
}

    \caption{Global Accuracy \\ (Validation)}
    \label{Fig: emnist_ptfc_mclr_ga}
\end{subfigure}

\caption{Partial participation of team ($2\%, 4\%, \ and\ 10\% $) and full devices participation of devices on EMNIST  datasets in convex settings (MCLR)}
\label{Fig: emnist_ptfc_mclr}
\end{figure*}


\begin{figure*}[!ht]
        \centering
        \begin{subfigure}[!t]{0.24\linewidth}
            \resizebox{\linewidth}{!}{
                \begin{tikzpicture}
\begin{axis}[
  xlabel= Global rounds,
  ylabel= Loss,
  legend pos=north east,
  legend style ={nodes={scale=1.4, transform shape}},
  xticklabels from table={ablation_study_team_Emnist_Part_team_full_client_cnn_loss.dat}{GR},xtick=data
]
\addplot[blue,thick,mark=diamond*] table [y=pteams_2,x=GR]{ablation_study_team_Emnist_Part_team_full_client_cnn_loss.dat};
\addlegendentry{$ 20\%$}

\addplot[brown,dashed,thick,mark=square*]  table [y=pteams_3,x=GR]{ablation_study_team_Emnist_Part_team_full_client_cnn_loss.dat};
\addlegendentry{$ 30\%$}

\addplot[black,dashed,thick,mark=triangle*]  table [y=pteams_4,x=GR]{ablation_study_team_Emnist_Part_team_full_client_cnn_loss.dat};
\addlegendentry{$ 40\%$}

\addplot[cyan,dashed,thick,mark=star]  table [y=pteams_5,x=GR]{ablation_study_team_Emnist_Part_team_full_client_cnn_loss.dat};
\addlegendentry{$ 50\%$}

\end{axis}
\end{tikzpicture}

            }
        \caption{Personalized Loss \\ (Validation)}
        \label{Fig: emnist_ptfc_cnn_pl}
    \end{subfigure} 
    \begin{subfigure}[!t]{0.24\linewidth}
        \resizebox{\linewidth}{!}{
        \begin{tikzpicture}

\begin{axis}[
  xlabel= Global rounds,
  ylabel= Loss,
  legend pos=north east,
  xticklabels from table={ablation_study_team_Emnist_Part_team_full_client_cnn_loss.dat}{GR},xtick=data
]
\addplot[blue,thick,mark=diamond*] table [y=gteams_2,x=GR]{ablation_study_team_Emnist_Part_team_full_client_cnn_loss.dat};

\addplot[brown,dashed,thick,mark=square*]  table [y=gteams_3,x=GR]{ablation_study_team_Emnist_Part_team_full_client_cnn_loss.dat};

\addplot[black,dashed,thick,mark=triangle*]  table [y=gteams_4,x=GR]{ablation_study_team_Emnist_Part_team_full_client_cnn_loss.dat};

\addplot[cyan,dashed,thick,mark=star]  table [y=gteams_5,x=GR]{ablation_study_team_Emnist_Part_team_full_client_cnn_loss.dat};

\end{axis}

\end{tikzpicture}
}

    \caption{Global Loss \\ (Validation)}
    \label{Fig: emnist_ptfc_cnn_gl}
\end{subfigure}
\begin{subfigure}[!t]{0.24\linewidth}
            \resizebox{\linewidth}{!}{
                \begin{tikzpicture}

\begin{axis}[
  xlabel= Global rounds,
  ylabel= Personalized Accuracy,
  legend pos=south east,
  xticklabels from table={ablation_study_team_Emnist_Part_team_full_client_cnn_acc.dat}{GR},xtick=data
]
\addplot[blue,thick,mark=diamond*] table [y=pteams_2,x=GR]{ablation_study_team_Emnist_Part_team_full_client_cnn_acc.dat};

\addplot[brown,dashed,thick,mark=square*]  table [y=pteams_3,x=GR]{ablation_study_team_Emnist_Part_team_full_client_cnn_acc.dat};

\addplot[black,dashed,thick,mark=triangle*]  table [y=pteams_4,x=GR]{ablation_study_team_Emnist_Part_team_full_client_cnn_acc.dat};

\addplot[cyan,dashed,thick,mark=star]  table [y=pteams_5,x=GR]{ablation_study_team_Emnist_Part_team_full_client_cnn_acc.dat};

\end{axis}

\end{tikzpicture}

            }
        \caption{Personalized Accuracy \\ (Validation)}
        \label{Fig: emnist_ptfc_cnn_pa}
    \end{subfigure} 
    \begin{subfigure}[!t]{0.24\linewidth}
        \resizebox{\linewidth}{!}{
        \begin{tikzpicture}
\begin{axis}[
  xlabel= Global rounds,
  ylabel= Global Accuracy,
  legend pos=south east,
  xticklabels from table={ablation_study_team_Emnist_Part_team_full_client_cnn_acc.dat}{GR},xtick=data
]
\addplot[blue,thick,mark=diamond*] table [y=gteams_2,x=GR]{ablation_study_team_Emnist_Part_team_full_client_cnn_acc.dat};

\addplot[brown,dashed,thick,mark=square*]  table [y=gteams_3,x=GR]{ablation_study_team_Emnist_Part_team_full_client_cnn_acc.dat};

\addplot[black,dashed,thick,mark=triangle*]  table [y=gteams_4,x=GR]{ablation_study_team_Emnist_Part_team_full_client_cnn_acc.dat};

\addplot[cyan,dashed,thick,mark=star]  table [y=gteams_5,x=GR]{ablation_study_team_Emnist_Part_team_full_client_cnn_acc.dat};

\end{axis}

\end{tikzpicture}
}

    \caption{Global Accuracy (Validation)}
    \label{Fig: emnist_ptfc_cnn_ga}
\end{subfigure}

\caption{Partial participation of team ($20\%, 30\%, 40\%, \ and\ 50\% $) and full devices participation of devices on EMNIST  datasets in convex settings (CNN)}
\label{Fig: emnist_ptfc_cnn}
\end{figure*}


\begin{figure*}[!ht]
        \centering
        \begin{subfigure}[!t]{0.24\linewidth}
            \resizebox{\linewidth}{!}{
                \begin{tikzpicture}
\begin{axis}[
  xlabel= Global rounds,
  ylabel= Loss,
  legend pos=south west,
  legend style ={nodes={scale=1.4, transform shape}},
  xticklabels from table={ablation_study_team_Mnist_Part_team_full_client_cnn_loss.dat}{GR},xtick=data
]
\addplot[blue,thick,mark=diamond*] table [y=pteams_2,x=GR]{ablation_study_team_Mnist_Part_team_full_client_mclr_loss.dat};
\addlegendentry{$ 20\%$}

\addplot[brown,dashed,mark=square*] table [y=pteams_3,x=GR]{ablation_study_team_Mnist_Part_team_full_client_mclr_loss.dat};
\addlegendentry{$ 30\%$}

\addplot[black,dashed,thick,mark=triangle*]  table [y=pteams_4,x=GR]{ablation_study_team_Mnist_Part_team_full_client_mclr_loss.dat};
\addlegendentry{$ 40\%$}

\addplot[cyan,dashed,thick,mark=star]  table [y=pteams_5,x=GR]{ablation_study_team_Mnist_Part_team_full_client_mclr_loss.dat};
\addlegendentry{$ 50\%$}

\end{axis}
\end{tikzpicture}

            }
        \caption{Personalized Loss \\ (Validation)}
        \label{Fig: mnist_ptfc_mclr_pl}
    \end{subfigure} 
    \begin{subfigure}[!t]{0.24\linewidth}
        \resizebox{\linewidth}{!}{
        \begin{tikzpicture}

\begin{axis}[
  xlabel= Global rounds,
  ylabel= Loss,
  legend pos=north east,
  xticklabels from table={ablation_study_team_Mnist_Part_team_full_client_mclr_loss.dat}{GR},xtick=data
]
\addplot[blue,thick,mark=diamond*] table [y=gteams_2,x=GR]{ablation_study_team_Mnist_Part_team_full_client_mclr_loss.dat};

\addplot[brown,dashed,thick,mark=square*]  table [y=gteams_3,x=GR]{ablation_study_team_Mnist_Part_team_full_client_mclr_loss.dat};

\addplot[black,dashed,thick,mark=triangle*]  table [y=gteams_4,x=GR]{ablation_study_team_Mnist_Part_team_full_client_mclr_loss.dat};

\addplot[cyan,dashed,thick,mark=star]  table [y=gteams_5,x=GR]{ablation_study_team_Mnist_Part_team_full_client_mclr_loss.dat};

\end{axis}

\end{tikzpicture}
}

    \caption{Global Loss (Validation)}
    \label{Fig: mnist_ptfc_mclr_gl}
\end{subfigure}
\begin{subfigure}[!t]{0.24\linewidth}
            \resizebox{\linewidth}{!}{
                \begin{tikzpicture}

\begin{axis}[
  xlabel= Global rounds,
  ylabel= Personalized Accuracy,
  legend pos=south east,
  xticklabels from table={ablation_study_team_Mnist_Part_team_full_client_mclr_acc.dat}{GR},xtick=data
]
\addplot[blue,thick,mark=diamond*] table [y=pteams_2,x=GR]{ablation_study_team_Mnist_Part_team_full_client_mclr_acc.dat};

\addplot[brown,dashed,thick,mark=square*]  table [y=pteams_3,x=GR]{ablation_study_team_Mnist_Part_team_full_client_mclr_acc.dat};

\addplot[black,dashed,thick,mark=triangle*]  table [y=pteams_4,x=GR]{ablation_study_team_Mnist_Part_team_full_client_mclr_acc.dat};

\addplot[cyan,dashed,thick,mark=star]  table [y=pteams_5,x=GR]{ablation_study_team_Mnist_Part_team_full_client_mclr_acc.dat};

\end{axis}

\end{tikzpicture}

            }
        \caption{Personalized Accuracy \\ (Validation)}
        \label{Fig: mnist_ptfc_mclr_pa}
    \end{subfigure} 
    \begin{subfigure}[!t]{0.24\linewidth}
        \resizebox{\linewidth}{!}{
        \begin{tikzpicture}
\begin{axis}[
  xlabel= Global rounds,
  ylabel= Global Accuracy,
  legend pos=south east,
  xticklabels from table={ablation_study_team_Mnist_Part_team_full_client_mclr_acc.dat}{GR},xtick=data
]
\addplot[blue,thick,mark=diamond*] table [y=gteams_2,x=GR]{ablation_study_team_Mnist_Part_team_full_client_mclr_acc.dat};

\addplot[brown,dashed,thick,mark=square*]  table [y=gteams_3,x=GR]{ablation_study_team_Mnist_Part_team_full_client_mclr_acc.dat};

\addplot[black,dashed,thick,mark=triangle*]  table [y=gteams_4,x=GR]{ablation_study_team_Mnist_Part_team_full_client_mclr_acc.dat};

\addplot[cyan,dashed,thick,mark=star]  table [y=gteams_5,x=GR]{ablation_study_team_Mnist_Part_team_full_client_mclr_acc.dat};
\end{axis}

\end{tikzpicture}
}

    \caption{Global Accuracy \\ (Validation)}
    \label{Fig: mnist_ptfc_mclr_ga}
\end{subfigure}

\caption{Partial participation of team ($20\%, 30\%, 40\%, \ and\ 50\% $) and full devices participation of devices on MNISTdatasets in convex settings (MCLR)}
\label{Fig: mnist_ptfc_mclr}
\end{figure*}


\begin{figure*}[!ht]
        \centering
        \begin{subfigure}[!t]{0.24\linewidth}
            \resizebox{\linewidth}{!}{
                \begin{tikzpicture}
\begin{axis}[
  xlabel= Global rounds,
  ylabel= Loss,
  legend pos=north east,
  legend style ={nodes={scale=1.4, transform shape}},
  xticklabels from table={ablation_study_team_Mnist_Part_team_full_client_cnn_loss.dat}{GR},xtick=data
]
\addplot[blue,thick,mark=diamond*] table [y=pteams_2,x=GR]{ablation_study_team_Mnist_Part_team_full_client_cnn_loss.dat};
\addlegendentry{$ 20\%$}

\addplot[brown,dashed,thick,mark=square*]  table [y=pteams_3,x=GR]{ablation_study_team_Mnist_Part_team_full_client_cnn_loss.dat};
\addlegendentry{$ 30\%$}

\addplot[black,dashed,thick,mark=triangle*]  table [y=pteams_4,x=GR]{ablation_study_team_Mnist_Part_team_full_client_cnn_loss.dat};
\addlegendentry{$ 40\%$}

\addplot[cyan,dashed,thick,mark=star]  table [y=pteams_5,x=GR]{ablation_study_team_Mnist_Part_team_full_client_cnn_loss.dat};
\addlegendentry{$ 50\%$}

\end{axis}
\end{tikzpicture}

            }
        \caption{Personalized Loss \\ (Validation)}
        \label{Fig: mnist_ptfc_cnn_pl}
    \end{subfigure} 
    \begin{subfigure}[!t]{0.24\linewidth}
        \resizebox{\linewidth}{!}{
        \begin{tikzpicture}

\begin{axis}[
  xlabel= Global rounds,
  ylabel= Loss,
  legend pos=north east,
  xticklabels from table={ablation_study_team_Mnist_Part_team_full_client_cnn_loss.dat}{GR},xtick=data
]
\addplot[blue,thick,mark=diamond*] table [y=gteams_2,x=GR]{ablation_study_team_Mnist_Part_team_full_client_cnn_loss.dat};

\addplot[brown,dashed,thick,mark=square*]  table [y=gteams_3,x=GR]{ablation_study_team_Mnist_Part_team_full_client_cnn_loss.dat};

\addplot[black,dashed,thick,mark=triangle*]  table [y=gteams_4,x=GR]{ablation_study_team_Mnist_Part_team_full_client_cnn_loss.dat};

\addplot[cyan,dashed,thick,mark=star]  table [y=gteams_5,x=GR]{ablation_study_team_Mnist_Part_team_full_client_cnn_loss.dat};

\end{axis}

\end{tikzpicture}
}

    \caption{Global Loss \\ (Validation)}
    \label{Fig: mnist_ptfc_cnn_gl}
\end{subfigure}
\begin{subfigure}[!t]{0.24\linewidth}
            \resizebox{\linewidth}{!}{
                \begin{tikzpicture}

\begin{axis}[
  xlabel= Global rounds,
  ylabel= Personalized Accuracy,
  legend pos=south east,
  xticklabels from table={ablation_study_team_Mnist_Part_team_full_client_cnn_acc.dat}{GR},xtick=data
]
\addplot[blue,thick,mark=diamond*] table [y=pteams_2,x=GR]{ablation_study_team_Mnist_Part_team_full_client_cnn_acc.dat};

\addplot[brown,dashed,thick,mark=square*]  table [y=pteams_3,x=GR]{ablation_study_team_Mnist_Part_team_full_client_cnn_acc.dat};

\addplot[black,dashed,thick,mark=triangle*]  table [y=pteams_4,x=GR]{ablation_study_team_Mnist_Part_team_full_client_cnn_acc.dat};

\addplot[cyan,dashed,thick,mark=star]  table [y=pteams_5,x=GR]{ablation_study_team_Mnist_Part_team_full_client_cnn_acc.dat};

\end{axis}

\end{tikzpicture}

            }
        \caption{Personalized Accuracy \\ (Validation)}
        \label{Fig: mnist_ptfc_cnn_pa}
    \end{subfigure} 
    \begin{subfigure}[!t]{0.24\linewidth}
        \resizebox{\linewidth}{!}{
        \begin{tikzpicture}
\begin{axis}[
  xlabel= Global rounds,
  ylabel= Global Accuracy,
  legend pos=south east,
  xticklabels from table={ablation_study_team_Mnist_Part_team_full_client_cnn_acc.dat}{GR},xtick=data
]
\addplot[blue,thick,mark=diamond*] table [y=gteams_2,x=GR]{ablation_study_team_Mnist_Part_team_full_client_cnn_acc.dat};

\addplot[brown,dashed,thick,mark=square*]  table [y=gteams_3,x=GR]{ablation_study_team_Mnist_Part_team_full_client_cnn_acc.dat};

\addplot[black,dashed,thick,mark=triangle*]  table [y=gteams_4,x=GR]{ablation_study_team_Mnist_Part_team_full_client_cnn_acc.dat};

\addplot[cyan,dashed,thick,mark=star]  table [y=gteams_5,x=GR]{ablation_study_team_Mnist_Part_team_full_client_cnn_acc.dat};

\end{axis}

\end{tikzpicture}
}

    \caption{Global Accuracy \\ (Validation)}
    \label{Fig: mnist_ptfc_cnn_ga}
\end{subfigure}

\caption{Partial participation of team ($20\%, 30\%, 40\%, \ and\ 50\% $) and full devices participation of devices on MNIST datasets in convex settings (CNN)}
\label{Fig: mnist_ptfc_cnn}
\end{figure*}


\begin{figure*}[!ht]
        \centering
        \begin{subfigure}[!t]{0.24\linewidth}
            \resizebox{\linewidth}{!}{
                \begin{tikzpicture}
\begin{axis}[
  xlabel= Global rounds,
  ylabel= Loss,
  legend pos=north east,
  legend style ={nodes={scale=1.4, transform shape}},
  xticklabels from table={ablation_study_team_FMnist_Part_team_full_client_cnn_loss.dat}{GR},xtick=data
]
\addplot[blue,thick,mark=diamond*] table [y=pteams_2,x=GR]{ablation_study_team_FMnist_Part_team_full_client_mclr_loss.dat};
\addlegendentry{$ 20\%$}

\addplot[brown,thick,mark=square*] table [y=pteams_3,x=GR]{ablation_study_team_FMnist_Part_team_full_client_mclr_loss.dat};
\addlegendentry{$ 30\%$}

\addplot[black,dashed,thick,mark=triangle*]  table [y=pteams_4,x=GR]{ablation_study_team_FMnist_Part_team_full_client_mclr_loss.dat};
\addlegendentry{$ 40\%$}

\addplot[green,dashed,thick,mark=star]  table [y=pteams_5,x=GR]{ablation_study_team_FMnist_Part_team_full_client_mclr_loss.dat};
\addlegendentry{$ 50\%$}

\end{axis}
\end{tikzpicture}

            }
        \caption{Personalized Loss \\ (Validation)}
        \label{Fig: fmnist_ptfc_mclr_pl}
    \end{subfigure} 
    \begin{subfigure}[!t]{0.24\linewidth}
        \resizebox{\linewidth}{!}{
        \begin{tikzpicture}

\begin{axis}[
  xlabel= Global rounds,
  ylabel= Loss,
  legend pos=north east,
  xticklabels from table={ablation_study_team_FMnist_Part_team_full_client_mclr_loss.dat}{GR},xtick=data
]
\addplot[blue,thick,mark=diamond*] table [y=gteams_2,x=GR]{ablation_study_team_FMnist_Part_team_full_client_mclr_loss.dat};

\addplot[brown,dashed,thick,mark=square*]  table [y=gteams_3,x=GR]{ablation_study_team_FMnist_Part_team_full_client_mclr_loss.dat};

\addplot[black,dashed,thick,mark=triangle*]  table [y=gteams_4,x=GR]{ablation_study_team_FMnist_Part_team_full_client_mclr_loss.dat};

\addplot[green,dashed,thick,mark=star]  table [y=gteams_5,x=GR]{ablation_study_team_FMnist_Part_team_full_client_mclr_loss.dat};

\end{axis}

\end{tikzpicture}
}

    \caption{Global Loss \\ (Validation)}
    \label{Fig: fmnist_ptfc_mclr_gl}
\end{subfigure}
\begin{subfigure}[!t]{0.24\linewidth}
            \resizebox{\linewidth}{!}{
                \begin{tikzpicture}

\begin{axis}[
  xlabel= Global rounds,
  ylabel= Personalized Accuracy,
  legend pos=south east,
  xticklabels from table={ablation_study_team_FMnist_Part_team_full_client_mclr_acc.dat}{GR},xtick=data
]
\addplot[blue,thick,mark=diamond*] table [y=pteams_2,x=GR]{ablation_study_team_FMnist_Part_team_full_client_mclr_acc.dat};

\addplot[brown,dashed,thick,mark=square*]  table [y=pteams_3,x=GR]{ablation_study_team_FMnist_Part_team_full_client_mclr_acc.dat};

\addplot[black,dashed,thick,mark=triangle*]  table [y=pteams_4,x=GR]{ablation_study_team_FMnist_Part_team_full_client_mclr_acc.dat};

\addplot[green,dashed,thick,mark=star]  table [y=pteams_5,x=GR]{ablation_study_team_FMnist_Part_team_full_client_mclr_acc.dat};

\end{axis}

\end{tikzpicture}

            }
        \caption{Personalized Accuracy \\ (Validation)}
        \label{Fig: fmnist_ptfc_mclr_pa}
    \end{subfigure} 
    \begin{subfigure}[!t]{0.24\linewidth}
        \resizebox{\linewidth}{!}{
        \begin{tikzpicture}
\begin{axis}[
  xlabel= Global rounds,
  ylabel= Global Accuracy,
  legend pos=south east,
  xticklabels from table={ablation_study_team_FMnist_Part_team_full_client_mclr_acc.dat}{GR},xtick=data
]
\addplot[blue,thick,mark=diamond*] table [y=gteams_2,x=GR]{ablation_study_team_FMnist_Part_team_full_client_mclr_acc.dat};

\addplot[brown,dashed,thick,mark=square*]  table [y=gteams_3,x=GR]{ablation_study_team_FMnist_Part_team_full_client_mclr_acc.dat};

\addplot[black,dashed,thick,mark=triangle*]  table [y=gteams_4,x=GR]{ablation_study_team_FMnist_Part_team_full_client_mclr_acc.dat};

\addplot[green,dashed,thick,mark=star]  table [y=gteams_5,x=GR]{ablation_study_team_FMnist_Part_team_full_client_mclr_acc.dat};
\end{axis}

\end{tikzpicture}
}

    \caption{Global Accuracy \\ (Validation)}
    \label{Fig: fmnist_ptfc_mclr_ga}
\end{subfigure}

\caption{Partial participation of team ($20\%, 30\%, 40\%, \ and\ 50\% $) and full devices participation of devices on FMNIST datasets in convex settings (MCLR)}
\label{Fig: fmnist_ptfc_mclr}
\end{figure*}


\begin{figure*}[!ht]
        \centering
        \begin{subfigure}[!t]{0.24\linewidth}
            \resizebox{\linewidth}{!}{
                \begin{tikzpicture}
\begin{axis}[
  xlabel= Global rounds,
  ylabel= Loss,
  legend pos=south west,
  legend style ={nodes={scale=1.4, transform shape}},
  xticklabels from table={ablation_study_team_FMnist_Part_team_full_client_cnn_loss.dat}{GR},xtick=data
]
\addplot[blue,thick,mark=diamond*] table [y=pteams_2,x=GR]{ablation_study_team_FMnist_Part_team_full_client_cnn_loss.dat};
\addlegendentry{$ 20\%$}

\addplot[brown,dashed,thick,mark=square*]  table [y=pteams_3,x=GR]{ablation_study_team_FMnist_Part_team_full_client_cnn_loss.dat};
\addlegendentry{$ 30\%$}

\addplot[black,dashed,thick,mark=triangle*]  table [y=pteams_4,x=GR]{ablation_study_team_FMnist_Part_team_full_client_cnn_loss.dat};
\addlegendentry{$ 40\%$}

\addplot[cyan,dashed,thick,mark=star]  table [y=pteams_5,x=GR]{ablation_study_team_FMnist_Part_team_full_client_cnn_loss.dat};
\addlegendentry{$ 50\%$}

\end{axis}
\end{tikzpicture}

            }
        \caption{Personalized Loss \\ (Validation)}
        \label{Fig: fmnist_ptfc_cnn_pl}
    \end{subfigure} 
    \begin{subfigure}[!t]{0.24\linewidth}
        \resizebox{\linewidth}{!}{
        \begin{tikzpicture}

\begin{axis}[
  xlabel= Global rounds,
  ylabel= Loss,
  legend pos=south west,
  xticklabels from table={ablation_study_team_FMnist_Part_team_full_client_cnn_loss.dat}{GR},xtick=data
]
\addplot[blue,thick,mark=diamond*] table [y=gteams_2,x=GR]{ablation_study_team_FMnist_Part_team_full_client_cnn_loss.dat};

\addplot[brown,dashed,thick,mark=square*]  table [y=gteams_3,x=GR]{ablation_study_team_FMnist_Part_team_full_client_cnn_loss.dat};

\addplot[black,dashed,thick,mark=triangle*]  table [y=gteams_4,x=GR]{ablation_study_team_FMnist_Part_team_full_client_cnn_loss.dat};

\addplot[cyan,dashed,thick,mark=star]  table [y=gteams_5,x=GR]{ablation_study_team_FMnist_Part_team_full_client_cnn_loss.dat};

\end{axis}

\end{tikzpicture}
}

    \caption{Global Loss \\ (Validation)}
    \label{Fig: fmnist_ptfc_cnn_gl}
\end{subfigure}
\begin{subfigure}[!t]{0.24\linewidth}
            \resizebox{\linewidth}{!}{
                \begin{tikzpicture}

\begin{axis}[
  xlabel= Global rounds,
  ylabel= Personalized Accuracy,
  legend pos=south east,
  xticklabels from table={ablation_study_team_FMnist_Part_team_full_client_cnn_acc.dat}{GR},xtick=data
]
\addplot[blue,thick,mark=diamond*] table [y=pteams_2,x=GR]{ablation_study_team_FMnist_Part_team_full_client_cnn_acc.dat};

\addplot[brown,dashed,thick,mark=square*]  table [y=pteams_3,x=GR]{ablation_study_team_FMnist_Part_team_full_client_cnn_acc.dat};

\addplot[black,dashed,thick,mark=triangle*]  table [y=pteams_4,x=GR]{ablation_study_team_FMnist_Part_team_full_client_cnn_acc.dat};

\addplot[cyan,dashed,thick,mark=star]  table [y=pteams_5,x=GR]{ablation_study_team_FMnist_Part_team_full_client_cnn_acc.dat};

\end{axis}

\end{tikzpicture}

            }
        \caption{Personalized Accuracy \\ (Validation)}
        \label{Fig: fmnist_ptfc_cnn_pa}
    \end{subfigure} 
    \begin{subfigure}[!t]{0.24\linewidth}
        \resizebox{\linewidth}{!}{
        \begin{tikzpicture}
\begin{axis}[
  xlabel= Global rounds,
  ylabel= Global Accuracy,
  legend pos=south east,
  xticklabels from table={ablation_study_team_FMnist_Part_team_full_client_cnn_acc.dat}{GR},xtick=data
]
\addplot[blue,thick,mark=diamond*] table [y=gteams_2,x=GR]{ablation_study_team_FMnist_Part_team_full_client_cnn_acc.dat};

\addplot[brown,dashed,thick,mark=square*]  table [y=gteams_3,x=GR]{ablation_study_team_FMnist_Part_team_full_client_cnn_acc.dat};

\addplot[black,dashed,thick,mark=triangle*]  table [y=gteams_4,x=GR]{ablation_study_team_FMnist_Part_team_full_client_cnn_acc.dat};

\addplot[cyan,dashed,thick,mark=star]  table [y=gteams_5,x=GR]{ablation_study_team_FMnist_Part_team_full_client_cnn_acc.dat};

\end{axis}

\end{tikzpicture}
}

    \caption{Global Accuracy \\ (Validation)}
    \label{Fig: fmnist_ptfc_cnn_ga}
\end{subfigure}

\caption{Partial participation of teams ($20\%, 30\%, 40\%, \ and\ 50\% $) and full devices participation of devices on FMNIST datasets in convex settings (CNN)}
\label{Fig: fmnist_ptfc_cnn}
\end{figure*}

\clearpage
\subsubsection{Partial participation of Teams and Devices}
We conducted experiments for convex scenarios on EMNIST, MNIST, and FMNIST datasets (see \Cref{Fig: emnist_ptpc_team_2_mclr}, \Cref{Fig: mnist_ptpc_team_2_mclr}, and \Cref{Fig: fmnist_ptpc_team_20_mclr}). From the experiments, it was observed that when 20\% of teams participated in each global round, increasing the number of devices participating in each team iteration  resulted in improved convergence of both personalized and global models.

\begin{figure*}[!ht]
        \centering
        \begin{subfigure}[!t]{0.24\linewidth}
            \resizebox{\linewidth}{!}{
                \begin{tikzpicture}
\begin{axis}[
  xlabel= Global rounds,
  ylabel= Loss,
  legend pos=north east,
  legend style ={nodes={scale=1.4, transform shape}},
  xticklabels from table={ablation_study_team_Emnist_Part_team_part_client_team_2_mclr_loss.dat}{GR},xtick=data
]
\addplot[blue,thick,mark=diamond*] table [y=pusers_2,x=GR]{ablation_study_team_Emnist_Part_team_part_client_team_2_mclr_loss.dat};

\addplot[brown,thick,mark=diamond*] table [y=pusers_4,x=GR]{ablation_study_team_Emnist_Part_team_part_client_team_2_mclr_loss.dat};

\addplot[black,dashed,thick,mark=oval*]  table [y=pusers_10,x=GR]{ablation_study_team_Emnist_Part_team_part_client_team_2_mclr_loss.dat};

\addplot[cyan,dashed,thick,mark=triangle*]  table [y=pusers_20,x=GR]{ablation_study_team_Emnist_Part_team_part_client_team_2_mclr_loss.dat};

\end{axis}
\end{tikzpicture}

            }
        \caption{Personalized Loss \\ (Validation)}
        \label{Fig: emnist_ptpc_team_2_mclr_pl}
    \end{subfigure} 
    \begin{subfigure}[!t]{0.24\linewidth}
        \resizebox{\linewidth}{!}{
        \begin{tikzpicture}

\begin{axis}[
  xlabel= Global rounds,
  ylabel= Loss,
  legend pos=north east,
  legend style ={nodes={scale=1.4, transform shape}},
  xticklabels from table={ablation_study_team_Emnist_Part_team_part_client_team_2_mclr_loss.dat}{GR},xtick=data
]
\addplot[blue,thick,mark=diamond*] table [y=gusers_2,x=GR]{ablation_study_team_Emnist_Part_team_part_client_team_2_mclr_loss.dat};
\addlegendentry{$ 2\%$}

\addplot[brown,dashed,thick,mark=oval*]  table [y=gusers_4,x=GR]{ablation_study_team_Emnist_Part_team_part_client_team_2_mclr_loss.dat};
\addlegendentry{$ 4\%$}

\addplot[black,dashed,thick,mark=triangle*]  table [y=gusers_10,x=GR]{ablation_study_team_Emnist_Part_team_part_client_team_2_mclr_loss.dat};
\addlegendentry{$ 10\%$}

\addplot[cyan,dashed,thick,mark=triangle*]  table [y=gusers_20,x=GR]{ablation_study_team_Emnist_Part_team_part_client_team_2_mclr_loss.dat};
\addlegendentry{$ 20\%$}

\end{axis}

\end{tikzpicture}
}

    \caption{Global Loss (Validation)}
    \label{Fig: emnist_ptpc_team_2_mclr_gl}
\end{subfigure}
\begin{subfigure}[!t]{0.24\linewidth}
            \resizebox{\linewidth}{!}{
                \begin{tikzpicture}

\begin{axis}[
  xlabel= Global rounds,
  ylabel= Personalized Accuracy,
  legend pos=south east,
  xticklabels from table={ablation_study_team_Emnist_Part_team_part_client_team_2_mclr_acc.dat}{GR},xtick=data
]
\addplot[blue,thick,mark=diamond*] table [y=pusers_2,x=GR]{ablation_study_team_Emnist_Part_team_part_client_team_2_mclr_acc.dat};

\addplot[brown,dashed,thick,mark=oval*]  table [y=pusers_4,x=GR]{ablation_study_team_Emnist_Part_team_part_client_team_2_mclr_acc.dat};

\addplot[black,dashed,thick,mark=triangle*]  table [y=pusers_10,x=GR]{ablation_study_team_Emnist_Part_team_part_client_team_2_mclr_acc.dat};

\addplot[cyan,dashed,thick,mark=triangle*]  table [y=pusers_20,x=GR]{ablation_study_team_Emnist_Part_team_part_client_team_2_mclr_acc.dat};

\end{axis}

\end{tikzpicture}

            }
        \caption{Personalized Accuracy \\ (Validation)}
        \label{Fig: emnist_ptpc_team_2_mclr_pa}
    \end{subfigure} 
    \begin{subfigure}[!t]{0.24\linewidth}
        \resizebox{\linewidth}{!}{
        \begin{tikzpicture}
\begin{axis}[
  xlabel= Global rounds,
  ylabel= Global Accuracy,
  legend pos=south east,
  xticklabels from table={ablation_study_team_Emnist_Part_team_part_client_team_2_mclr_acc.dat}{GR},xtick=data
]
\addplot[blue,thick,mark=diamond*] table [y=gusers_2,x=GR]{ablation_study_team_Emnist_Part_team_part_client_team_2_mclr_acc.dat};

\addplot[brown,dashed,thick,mark=oval*]  table [y=gusers_4,x=GR]{ablation_study_team_Emnist_Part_team_part_client_team_2_mclr_acc.dat};

\addplot[black,dashed,thick,mark=triangle*]  table [y=gusers_10,x=GR]{ablation_study_team_Emnist_Part_team_part_client_team_2_mclr_acc.dat};

\addplot[cyan,dashed,thick,mark=triangle*]  table [y=gusers_20,x=GR]{ablation_study_team_Emnist_Part_team_part_client_team_2_mclr_acc.dat};
\end{axis}

\end{tikzpicture}
}

    \caption{Global Accuracy (Validation)}
    \label{Fig: emnist_ptpc_team_2_mclr_ga}
\end{subfigure}

\caption{Partial participation of team ($20\%$) and devices ( $2\%, 4\%, 10\%,\ and\ 30\%$)  on EMNIST datasets in convex settings (MCLR)}
\label{Fig: emnist_ptpc_team_2_mclr}
\end{figure*}


\begin{figure*}[!ht]
        \centering
        \begin{subfigure}[!t]{0.24\linewidth}
            \resizebox{\linewidth}{!}{
                \begin{tikzpicture}
\begin{axis}[
  xlabel= Global rounds,
  ylabel= Loss,
  legend pos=north east,
  legend style ={nodes={scale=1.4, transform shape}},
  xticklabels from table={ablation_study_team_Mnist_Part_team_part_client_team_2_mclr_loss.dat}{GR},xtick=data
]
\addplot[blue,thick,mark=diamond*] table [y=pusers_2,x=GR]{ablation_study_team_Mnist_Part_team_part_client_team_2_mclr_loss.dat};
\addlegendentry{$ 2\%$}

\addplot[brown,thick,mark=square*] table [y=pusers_4,x=GR]{ablation_study_team_Mnist_Part_team_part_client_team_2_mclr_loss.dat};
\addlegendentry{$ 4\%$}

\addplot[black,dashed,thick,mark=triangle*]  table [y=pusers_10,x=GR]{ablation_study_team_Mnist_Part_team_part_client_team_2_mclr_loss.dat};
\addlegendentry{$ 10\%$}

\addplot[cyan,dashed,thick,mark=star]  table [y=pusers_20,x=GR]{ablation_study_team_Mnist_Part_team_part_client_team_2_mclr_loss.dat};
\addlegendentry{$ 20\%$}

\end{axis}
\end{tikzpicture}

            }
        \caption{Personalized Loss \\ (Validation)}
        \label{Fig: mnist_ptpc_team_2_mclr_pl}
    \end{subfigure} 
    \begin{subfigure}[!t]{0.24\linewidth}
        \resizebox{\linewidth}{!}{
        \begin{tikzpicture}

\begin{axis}[
  xlabel= Global rounds,
  ylabel= Loss,
  legend pos=north east,
  legend style ={nodes={scale=1.4, transform shape}},
  xticklabels from table={ablation_study_team_Mnist_Part_team_part_client_team_2_mclr_loss.dat}{GR},xtick=data
]
\addplot[blue,thick,mark=diamond*] table [y=gusers_2,x=GR]{ablation_study_team_Mnist_Part_team_part_client_team_2_mclr_loss.dat};

\addplot[brown,dashed,thick,mark=square*]  table [y=gusers_4,x=GR]{ablation_study_team_Mnist_Part_team_part_client_team_2_mclr_loss.dat};

\addplot[black,dashed,thick,mark=triangle*]  table [y=gusers_10,x=GR]{ablation_study_team_Mnist_Part_team_part_client_team_2_mclr_loss.dat};

\addplot[cyan,dashed,thick,mark=star]  table [y=gusers_20,x=GR]{ablation_study_team_Mnist_Part_team_part_client_team_2_mclr_loss.dat};

\end{axis}

\end{tikzpicture}
}

    \caption{Global Loss (Validation)}
    \label{Fig: mnist_ptpc_team_2_mclr_gl}
\end{subfigure}
\begin{subfigure}[!t]{0.24\linewidth}
            \resizebox{\linewidth}{!}{
                \begin{tikzpicture}

\begin{axis}[
  xlabel= Global rounds,
  ylabel= Personalized Accuracy,
  legend pos=south east,
  xticklabels from table={ablation_study_team_Mnist_Part_team_part_client_team_2_mclr_acc.dat}{GR},xtick=data
]
\addplot[blue,thick,mark=diamond*] table [y=pusers_2,x=GR]{ablation_study_team_Mnist_Part_team_part_client_team_2_mclr_acc.dat};

\addplot[brown,dashed,thick,mark=square*]  table [y=pusers_4,x=GR]{ablation_study_team_Mnist_Part_team_part_client_team_2_mclr_acc.dat};

\addplot[black,dashed,thick,mark=triangle*]  table [y=pusers_10,x=GR]{ablation_study_team_Mnist_Part_team_part_client_team_2_mclr_acc.dat};

\addplot[cyan,dashed,thick,mark=star]  table [y=pusers_20,x=GR]{ablation_study_team_Mnist_Part_team_part_client_team_2_mclr_acc.dat};

\end{axis}

\end{tikzpicture}

            }
        \caption{Personalized Accuracy \\ (Validation)}
        \label{Fig: mnist_ptpc_team_2_mclr_pa}
    \end{subfigure} 
    \begin{subfigure}[!t]{0.24\linewidth}
        \resizebox{\linewidth}{!}{
        \begin{tikzpicture}
\begin{axis}[
  xlabel= Global rounds,
  ylabel= Global Accuracy,
  legend pos=south east,
  xticklabels from table={ablation_study_team_Mnist_Part_team_part_client_team_2_mclr_acc.dat}{GR},xtick=data
]
\addplot[blue,thick,mark=diamond*] table [y=gusers_2,x=GR]{ablation_study_team_Mnist_Part_team_part_client_team_2_mclr_acc.dat};

\addplot[brown,dashed,thick,mark=square*]  table [y=gusers_4,x=GR]{ablation_study_team_Mnist_Part_team_part_client_team_2_mclr_acc.dat};

\addplot[black,dashed,thick,mark=triangle*]  table [y=gusers_10,x=GR]{ablation_study_team_Mnist_Part_team_part_client_team_2_mclr_acc.dat};

\addplot[cyan,dashed,thick,mark=star]  table [y=gusers_20,x=GR]{ablation_study_team_Mnist_Part_team_part_client_team_2_mclr_acc.dat};
\end{axis}

\end{tikzpicture}
}

    \caption{Global Accuracy (Validation)}
    \label{Fig: mnist_ptpc_team_2_mclr_ga}
\end{subfigure}

\caption{Partial participation of team ($20\%$) and devices ( $2\%, 4\%, 10\%,\ and\ 30\%$)  on MNIST datasets in convex settings (MCLR)}
\label{Fig: mnist_ptpc_team_2_mclr}
\end{figure*}


\begin{figure*}[!ht]
        \centering
        \begin{subfigure}[!t]{0.24\linewidth}
            \resizebox{\linewidth}{!}{
                \begin{tikzpicture}
\begin{axis}[
  xlabel= Global rounds,
  ylabel= Loss,
  legend pos=north east,
  legend style ={nodes={scale=1.4, transform shape}},
  xticklabels from table={ablation_study_team_FMnist_Part_team_part_client_team_20_mclr_loss.dat}{GR},xtick=data
]
\addplot[blue,thick,mark=diamond*] table [y=pusers_2,x=GR]{ablation_study_team_FMnist_Part_team_part_client_team_20_mclr_loss.dat};

\addplot[brown,thick,mark=square*] table [y=pusers_4,x=GR]{ablation_study_team_FMnist_Part_team_part_client_team_20_mclr_loss.dat};

\addplot[black,dashed,thick,mark=triangle*]  table [y=pusers_15,x=GR]{ablation_study_team_FMnist_Part_team_part_client_team_20_mclr_loss.dat};

\addplot[cyan,dashed,thick,mark=star]  table [y=pusers_20,x=GR]{ablation_study_team_FMnist_Part_team_part_client_team_20_mclr_loss.dat};

\end{axis}
\end{tikzpicture}

            }
        \caption{Personalized Loss \\ (Validation)}
        \label{Fig: mnist_ptpc_team_20_mclr_pl}
    \end{subfigure} 
    \begin{subfigure}[!t]{0.24\linewidth}
        \resizebox{\linewidth}{!}{
        \begin{tikzpicture}

\begin{axis}[
  xlabel= Global rounds,
  ylabel= Loss,
  legend pos=north east,
  legend style ={nodes={scale=1.4, transform shape}},
  xticklabels from table={ablation_study_team_FMnist_Part_team_part_client_team_20_mclr_loss.dat}{GR},xtick=data
]
\addplot[blue,thick,mark=diamond*] table [y=gusers_2,x=GR]{ablation_study_team_FMnist_Part_team_part_client_team_20_mclr_loss.dat};
\addlegendentry{$ 2\%$}

\addplot[brown,dashed,thick,mark=square*]  table [y=gusers_4,x=GR]{ablation_study_team_FMnist_Part_team_part_client_team_20_mclr_loss.dat};
\addlegendentry{$ 4\%$}

\addplot[black,dashed,thick,mark=triangle*]  table [y=gusers_15,x=GR]{ablation_study_team_FMnist_Part_team_part_client_team_20_mclr_loss.dat};
\addlegendentry{$ 15\%$}

\addplot[cyan,dashed,thick,mark=star]  table [y=gusers_20,x=GR]{ablation_study_team_FMnist_Part_team_part_client_team_20_mclr_loss.dat};
\addlegendentry{$ 20\%$}

\end{axis}

\end{tikzpicture}
}

    \caption{Global Loss \\ (Validation)}
    \label{Fig: mnist_ptpc_team_20_mclr_gl}
\end{subfigure}
\begin{subfigure}[!t]{0.24\linewidth}
            \resizebox{\linewidth}{!}{
                \begin{tikzpicture}

\begin{axis}[
  xlabel= Global rounds,
  ylabel= Personalized Accuracy,
  legend pos=south east,
  xticklabels from table={ablation_study_team_FMnist_Part_team_part_client_team_20_mclr_acc.dat}{GR},xtick=data
]
\addplot[blue,thick,mark=diamond*] table [y=pusers_2,x=GR]{ablation_study_team_FMnist_Part_team_part_client_team_20_mclr_acc.dat};

\addplot[brown,dashed,thick,mark=square*]  table [y=pusers_4,x=GR]{ablation_study_team_FMnist_Part_team_part_client_team_20_mclr_acc.dat};

\addplot[black,dashed,thick,mark=triangle*]  table [y=pusers_15,x=GR]{ablation_study_team_FMnist_Part_team_part_client_team_20_mclr_acc.dat};

\addplot[cyan,dashed,thick,mark=star]  table [y=pusers_20,x=GR]{ablation_study_team_FMnist_Part_team_part_client_team_20_mclr_acc.dat};

\end{axis}

\end{tikzpicture}

            }
        \caption{Personalized Accuracy \\ (Validation)}
        \label{Fig: mnist_ptpc_team_20_mclr_pa}
    \end{subfigure} 
    \begin{subfigure}[!t]{0.24\linewidth}
        \resizebox{\linewidth}{!}{
        \begin{tikzpicture}
\begin{axis}[
  xlabel= Global rounds,
  ylabel= Global Accuracy,
  legend pos=south east,
  xticklabels from table={ablation_study_team_FMnist_Part_team_part_client_team_20_mclr_acc.dat}{GR},xtick=data
]
\addplot[blue,thick,mark=diamond*] table [y=gusers_2,x=GR]{ablation_study_team_FMnist_Part_team_part_client_team_20_mclr_acc.dat};

\addplot[brown,dashed,thick,mark=square*]  table [y=gusers_4,x=GR]{ablation_study_team_FMnist_Part_team_part_client_team_20_mclr_acc.dat};

\addplot[black,dashed,thick,mark=triangle*]  table [y=gusers_15,x=GR]{ablation_study_team_FMnist_Part_team_part_client_team_20_mclr_acc.dat};

\addplot[cyan,dashed,thick,mark=star]  table [y=gusers_20,x=GR]{ablation_study_team_FMnist_Part_team_part_client_team_20_mclr_acc.dat};
\end{axis}

\end{tikzpicture}
}

    \caption{Global Accuracy (Validation)}
    \label{Fig: mnist_ptpc_team_20_mclr_ga}
\end{subfigure}

\caption{Partial participation of team ($20\%$) and devices ( $2\%, 4\%, 15\%,\ and\ 30\%$)  on FMNIST datasets in convex settings (MCLR)}
\label{Fig: fmnist_ptpc_team_20_mclr}
\end{figure*}


\paragraph{\textit{Discussions:}} Based on our empirical observations, we conclude that the convergence and performance of \ourmodel{} are optimal when both teams and devices are fully present throughout the entire global iterations. However, \ourmodel{} also demonstrates good performance even with variations in team and device participation. When teams fully participate but there is partial device participation, \ourmodel{} achieves fast convergence. On the other hand, when the number of participating teams is limited, the convergence of the global model is slower, requiring a higher number of global rounds to converge. It is important to note that the convergence of \ourmodel{} is slowest when both teams and devices have very low participation ($2\%$) in each global rounds. 

\clearpage

\subsection{Effect of team iterations on convergence of \ourmodel{}} \label{appx: effect_team_iters}
In this study, experiments were conducted to investigate the impact of team iterations on the convergence of \ourmodel{}. The objective was to gain insights into how varying the number of team iterations influences the convergence behaviour of the model. Here we studied (1) the effect of the number of team iterations on the convergence of \ourmodel{} when teams have full involvement but devices have fractional involvement in the entire learning process. (2) Effect of team iterations on the convergence of \ourmodel{} while teams and devices both have fractional involvement. i.e., all devices are not participating in each global rounds only a fraction of devices are participating. 

\subsubsection{Effect of number of team iterations on the full participation of Teams and partial participation of Devices.}

In our experiments (see \Cref{Fig:effect_team_mnist_cnn}, \Cref{Fig:effect_team_mnist_mclr} \Cref{Fig:effect_team_fmnist}, \Cref{Fig:team_iters_synthetic_cnn}, and \Cref{Fig:team_iters_synthetic_mclr}), we observed for both convex and non-convex scenarios, the convergence of \ourmodel{}'s personalized and global model improves if we increase the team iterations. This improvement occurs because when we increase the team iterations, more devices within the teams can actively contribute to the process.

\subsubsection{Effect of number of team iterations on the partial participation of Teams and Devices}
In our experiments, we specifically investigated the impact of low team and device participation on the convergence of the global model. The objective was to determine whether increasing the number of team iterations could expedite the convergence process. Our findings, as depicted in \Cref{Fig: te_emnist_ptpc_team_2_mclr} and \Cref{Fig: te_mnist_ptpc_team_2_mclr}, indicate that when team participation is set at $20\%$, increasing the team iterations lead to improved convergence of the global model. However, when team participation is extremely low ($2\%$), as shown in \Cref{Fig: te_emnist_very_low_ptpc_team_2_mclr} and \Cref{Fig: te_mnist_very_low_ptpc_team_2_mclr}, simply increasing team iterations is insufficient. In such cases, a higher number of global iterations is necessary to achieve convergence for the global model.

\paragraph{\textit{Discussions:}}Based on our findings, we can infer that team iterations play a crucial role in improving the performance of both the global and personalized models. However, when there is limited participation from teams and devices in each global round, relying solely on team iterations is inadequate. In such cases, it becomes necessary to increase the number of global rounds to enable more teams to participate and, consequently, enhance the performance of \ourmodel{}.

\begin{figure*}[!ht]
        \centering
        \begin{subfigure}[!t]{0.24\linewidth}
            \resizebox{\linewidth}{!}{
                \begin{tikzpicture}
                 \tikzstyle{every node}=[font=\fontsize{12}{12}\selectfont]
\begin{axis}[
  xlabel= Global rounds,
  ylabel= Accuracy,
  legend pos=south east,
  legend style ={nodes={scale=1.2, transform shape}},
  xticklabels from table={final-pm-gm-tr-cnn-mnist-acc.dat}{GR},xtick=data
]
\addplot[blue,dashed,thick,mark=diamond*] table [y=pt40,x=GR]{final-pm-gm-tr-cnn-mnist-acc.dat};
\addlegendentry{$K = 40$ }

\addplot[brown,dashed,thick,mark=square*]  table [y=pt20,x=GR]{final-pm-gm-tr-cnn-mnist-acc.dat};
\addlegendentry{$K = 20$ }

\addplot[black,dashed,thick,mark=triangle*]  table [y=pt10,x=GR]{final-pm-gm-tr-cnn-mnist-acc.dat};
\addlegendentry{$K = 10$ }

\end{axis}
\end{tikzpicture}

            }
        \caption{Personalized Accuracy (Validation)}
        \label{Fig:team_non_conv_mnist_acc}
    \end{subfigure} 
    \begin{subfigure}[!t]{0.24\linewidth}
        \resizebox{\linewidth}{!}{
        \begin{tikzpicture}
         \tikzstyle{every node}=[font=\fontsize{12}{12}\selectfont]
\begin{axis}[
  xlabel= Global rounds,
  ylabel= Accuracy,
  legend pos=south east,
  xticklabels from table={final-pm-gm-tr-cnn-mnist-acc.dat}{GR},xtick=data
]
\addplot[blue,dashed,thick,mark=diamond*] table [y=gt40,x=GR]{final-pm-gm-tr-cnn-mnist-acc.dat};

\addplot[brown,dashed,thick,mark=square*]  table [y=gt20,x=GR]{final-pm-gm-tr-cnn-mnist-acc.dat};

\addplot[black,dashed,thick,mark=triangle*]  table [y=gt10,x=GR]{final-pm-gm-tr-cnn-mnist-acc.dat};

\end{axis}
\end{tikzpicture}
}

    \caption{Global Accuracy (Validation)}
    \label{Fig:team_non_conv_mnist_global_acc}
\end{subfigure}
\begin{subfigure}[!t]{0.24\linewidth}
            \resizebox{\linewidth}{!}{
                \begin{tikzpicture}
                 \tikzstyle{every node}=[font=\fontsize{12}{12}\selectfont]

\begin{axis}[
  xlabel= Global rounds,
  ylabel= Loss,
  legend pos= north east,
  xticklabels from table={final-pm-gm-tr-cnn-mnist.dat}{GR},xtick=data
]
\addplot[blue,dashed, thick,mark=diamond*] table [y=pt40,x=GR]{final-pm-gm-tr-cnn-mnist.dat};

\addplot[brown,dashed,thick,mark=square*]  table [y=pt20,x=GR]{final-pm-gm-tr-cnn-mnist.dat};

\addplot[black,dashed,thick,mark=triangle*]  table [y=pt10,x=GR]{final-pm-gm-tr-cnn-mnist.dat};

\end{axis}

\end{tikzpicture}

            }
        \caption{Personalized Loss (Validation)}
        \label{Fig:team_non_convex_mnist_loss}
    \end{subfigure} 
    \begin{subfigure}[!t]{0.24\linewidth}
        \resizebox{\linewidth}{!}{
        \begin{tikzpicture}
         \tikzstyle{every node}=[font=\fontsize{12}{12}\selectfont]
        \begin{axis}[
  xlabel= Global rounds,
  ylabel= Loss,
  legend pos= north east,
  xticklabels from table={final-pm-gm-tr-cnn-mnist.dat}{GR},xtick=data
]

\addplot[blue,dashed,thick,mark=diamond*] table [y=gt40,x=GR]{final-pm-gm-tr-cnn-mnist.dat};

\addplot[brown,dashed,thick,mark=square*]  table [y=gt20,x=GR]{final-pm-gm-tr-cnn-mnist.dat};

\addplot[black,dashed,thick,mark=triangle*]  table [y=gt10,x=GR]{final-pm-gm-tr-cnn-mnist.dat};

\end{axis}
\end{tikzpicture}
}

    \caption{Global Loss (validation)}
    \label{Fig:team_non_convex_mnist_global_loss}
\end{subfigure}

\caption{Effect of Team iterations on the convergence of \ourmodel{} in non-convex settings (CNN) on MNIST while teams and devices have full participation}
\label{Fig:effect_team_mnist_cnn}
\end{figure*}


\begin{figure*}[!ht]
        \centering
        \begin{subfigure}[!t]{0.24\linewidth}
            \resizebox{\linewidth}{!}{
                \begin{tikzpicture}
                 \tikzstyle{every node}=[font=\fontsize{12}{12}\selectfont]
\begin{axis}[
  xlabel= Global rounds,
  ylabel= Accuracy,
  legend pos=south east,
  legend style ={nodes={scale=1.2, transform shape}},
  xticklabels from table={final-pm-gm-tr-mclr-mnist-acc.dat}{GR},xtick=data
]
\addplot[blue,dashed,thick,mark=diamond*] table [y=pt40,x=GR]{final-pm-gm-tr-mclr-mnist-acc.dat};
\addlegendentry{$K = 40$}

\addplot[brown,dashed,thick,mark=square*]  table [y=pt20,x=GR]{final-pm-gm-tr-mclr-mnist-acc.dat};
\addlegendentry{$K = 20$}

\addplot[black,dashed,thick,mark=triangle*]  table [y=pt10,x=GR]{final-pm-gm-tr-mclr-mnist-acc.dat};
\addlegendentry{$K = 10$}

\end{axis}
\end{tikzpicture}

            }
        \caption{Personalized Accuracy}
        \label{Fig:team_conv_mnist_acc}
    \end{subfigure} 
    \begin{subfigure}[!t]{0.24\linewidth}
        \resizebox{\linewidth}{!}{
        \begin{tikzpicture}
         \tikzstyle{every node}=[font=\fontsize{12}{12}\selectfont]
\begin{axis}[
  xlabel= Global rounds,
  ylabel= Accuracy,
  legend pos=south east,
  xticklabels from table={final-pm-gm-tr-mclr-mnist-acc.dat}{GR},xtick=data
]

\addplot[blue,dashed, thick,mark=diamond*] table [y=gt40,x=GR]{final-pm-gm-tr-mclr-mnist-acc.dat};

\addplot[brown,dashed,thick,mark=square*]  table [y=gt20,x=GR]{final-pm-gm-tr-mclr-mnist-acc.dat};

\addplot[black,dashed,thick,mark=triangle*]  table [y=gt10,x=GR]{final-pm-gm-tr-mclr-mnist-acc.dat};

\end{axis}
\end{tikzpicture}
}

    \caption{Global Accuracy (Validation)}
    \label{Fig:team_convex_mnist_global_acc}
\end{subfigure}
\begin{subfigure}[!t]{0.24\linewidth}
            \resizebox{\linewidth}{!}{
                \begin{tikzpicture}
                 \tikzstyle{every node}=[font=\fontsize{12}{12}\selectfont]

\begin{axis}[
  xlabel= Global rounds,
  ylabel= Loss,
  legend pos= north east,
  xticklabels from table={final-pm-gm-tr-mclr-mnist.dat}{GR},xtick=data
]
\addplot[blue,dashed,thick,mark=diamond*] table [y=pt40,x=GR]{final-pm-gm-tr-mclr-mnist.dat};

\addplot[brown,dashed,thick,mark=square*]  table [y=pt20,x=GR]{final-pm-gm-tr-mclr-mnist.dat};

\addplot[black,dashed,thick,mark=triangle*]  table [y=pt10,x=GR]{final-pm-gm-tr-mclr-mnist.dat};

\end{axis}

\end{tikzpicture}

            }
        \caption{Personalized Loss (Validation)}
        \label{Fig:team_convex_mnist_loss}
    \end{subfigure} 
    \begin{subfigure}[!t]{0.24\linewidth}
        \resizebox{\linewidth}{!}{
        \begin{tikzpicture}
         \tikzstyle{every node}=[font=\fontsize{12}{12}\selectfont]
        \begin{axis}[
  xlabel= Global rounds,
  ylabel= Loss,
  legend pos= north east,
  xticklabels from table={final-pm-gm-tr-mclr-mnist.dat}{GR},xtick=data
]

\addplot[blue,dashed, thick,mark=diamond*] table [y=gt40,x=GR]{final-pm-gm-tr-mclr-mnist.dat};

\addplot[brown,dashed,thick,mark=square*]  table [y=gt20,x=GR]{final-pm-gm-tr-mclr-mnist.dat};

\addplot[black,dashed,thick,mark=triangle*]  table [y=gt10,x=GR]{final-pm-gm-tr-mclr-mnist.dat};

\end{axis}
\end{tikzpicture}
}

    \caption{Global Loss (Validation}
    \label{Fig:team_convex_mnist_global_loss}
\end{subfigure}

\caption{Effect of Team iterations on the convergence of \ourmodel{} in strongly convex (MCLR) settings on MNIST while teams and devices have full participation }
\label{Fig:effect_team_mnist_mclr}
\end{figure*}


\begin{figure*}[!ht]
        \centering
        \begin{subfigure}[!t]{0.24\linewidth}
            \resizebox{\linewidth}{!}{
                \begin{tikzpicture}
                \tikzstyle{every node}=[font=\fontsize{12}{12}\selectfont]
\begin{axis}[
  xlabel= Global rounds,
  ylabel= Accuracy,
  legend pos=south east,
  legend style ={nodes={scale=1.0, transform shape}},
  xticklabels from table={final-tr-fmnist-cnn-acc.dat}{GR},xtick=data
]
\addplot[blue,dashed,thick,mark=diamond*] table [y=pt40,x=GR]{final-tr-fmnist-cnn-acc.dat};
\addlegendentry{$K = 40$}

\addplot[brown,dashed,thick,mark=square*]  table [y=pt20,x=GR]{final-tr-fmnist-cnn-acc.dat};
\addlegendentry{$K = 20$}

\addplot[black,dashed,thick,mark=triangle*]  table [y=pt10,x=GR]{final-tr-fmnist-cnn-acc.dat};
\addlegendentry{$K = 10$}

\end{axis}
\end{tikzpicture}

            }
        \caption{Personalized Accuracy (Validation)}
        \label{Fig:tr_fmnist_cnn_acc}
    \end{subfigure} 
    \begin{subfigure}[!t]{0.24\linewidth}
        \resizebox{\linewidth}{!}{
        \begin{tikzpicture}
        \tikzstyle{every node}=[font=\fontsize{12}{12}\selectfont]
\begin{axis}[
  xlabel= Global rounds,
  ylabel= Accuracy,
  xticklabels from table={final-tr-fmnist-cnn-acc.dat}{GR},xtick=data
]

\addplot[blue,dashed,thick,mark=diamond*] table [y=gt40,x=GR]{final-tr-fmnist-cnn-acc.dat};

\addplot[brown,dashed,thick,mark=square*]  table [y=gt20,x=GR]{final-tr-fmnist-cnn-acc.dat};

\addplot[black,dashed,thick,mark=triangle*]  table [y=gt10,x=GR]{final-tr-fmnist-cnn-acc.dat};

\end{axis}
\end{tikzpicture}
}

    \caption{Global Accuracy (Validation)}
    \label{Fig:tr_fmnist_cnn_global_acc}
\end{subfigure}
\begin{subfigure}[!t]{0.24\linewidth}
            \resizebox{\linewidth}{!}{
                \begin{tikzpicture}
                \tikzstyle{every node}=[font=\fontsize{12}{12}\selectfont]

\begin{axis}[
  xlabel= Global rounds,
  ylabel= Loss,
  legend pos= north east,
  xticklabels from table={final-tr-fmnist-cnn-loss.dat}{GR},xtick=data
]
\addplot[blue,dashed,thick,mark=diamond*] table [y=pt40,x=GR]{final-tr-fmnist-cnn-loss.dat};

\addplot[brown,dashed,thick,mark=square*]  table [y=pt20,x=GR]{final-tr-fmnist-cnn-loss.dat};

\addplot[black,dashed,thick,mark=triangle*]  table [y=pt10,x=GR]{final-tr-fmnist-cnn-loss.dat};

\end{axis}

\end{tikzpicture}

            }
        \caption{Personalized Loss (Validation)}
        \label{Fig:tr_fmnist_cnn_loss}
    \end{subfigure} 
    \begin{subfigure}[!t]{0.24\linewidth}
        \resizebox{\linewidth}{!}{
        \begin{tikzpicture}
        \tikzstyle{every node}=[font=\fontsize{12}{12}\selectfont]
        \begin{axis}[
  xlabel= Global rounds,
  ylabel= Loss,
  xticklabels from table={final-tr-fmnist-cnn-loss.dat}{GR},xtick=data
]

\addplot[blue,dashed,thick,mark=diamond*] table [y=gt40,x=GR]{final-tr-fmnist-cnn-loss.dat};

\addplot[brown,dashed,thick,mark=square*]  table [y=gt20,x=GR]{final-tr-fmnist-cnn-loss.dat};

\addplot[black,dashed,thick,mark=triangle*]  table [y=gt10,x=GR]{final-tr-fmnist-cnn-loss.dat};

\end{axis}
\end{tikzpicture}
}

    \caption{Global Loss (Validation)}
    \label{Fig:tr_fmnist_cnn_global_loss}
\end{subfigure}

\caption{Effect of Team iterations on the convergence of \ourmodel{} in non-convex settings (CNN) on FMNIST while teams and devices have full participation}
\label{Fig:effect_team_fmnist}
\end{figure*}


\begin{figure*}[!ht]
        \centering
        \begin{subfigure}[!t]{0.24\linewidth}
            \resizebox{\linewidth}{!}{
                \begin{tikzpicture}
                \tikzstyle{every node}=[font=\fontsize{12}{12}\selectfont]
\begin{axis}[
  xlabel= Global rounds,
  ylabel= Accuracy,
  legend pos=south east,
  xticklabels from table={final-tr-synthetic-cnn-acc.dat}{GR},xtick=data
]
\addplot[blue,dashed,thick,mark=diamond*] table [y=pt40,x=GR]{final-tr-synthetic-cnn-acc.dat};

\addplot[brown,dashed,thick,mark=square*]  table [y=pt20,x=GR]{final-tr-synthetic-cnn-acc.dat};

\addplot[black,dashed,thick,mark=triangle*]  table [y=pt10,x=GR]{final-tr-synthetic-cnn-acc.dat};

\end{axis}
\end{tikzpicture}

            }
        \caption{Personalized Accuracy (Validation)}
        \label{Fig:tr_synthetic_cnn_acc}
    \end{subfigure} 
    \begin{subfigure}[!t]{0.24\linewidth}
        \resizebox{\linewidth}{!}{
        \begin{tikzpicture}
        \tikzstyle{every node}=[font=\fontsize{12}{12}\selectfont]
\begin{axis}[
  xlabel= Global rounds,
  ylabel= Accuracy,
  xticklabels from table={final-tr-fmnist-cnn-acc.dat}{GR},xtick=data
]
\addplot[blue,dashed, thick,mark=diamond*] table [y=gt40,x=GR]{final-tr-synthetic-cnn-acc.dat};

\addplot[brown,dashed,thick,mark=square*]  table [y=gt20,x=GR]{final-tr-synthetic-cnn-acc.dat};

\addplot[black,dashed,thick,mark=triangle*]  table [y=gt10,x=GR]{final-tr-synthetic-cnn-acc.dat};

\end{axis}
\end{tikzpicture}
}

    \caption{Global Accuracy (Validation)}
    \label{Fig:tr_synthetic_cnn_global_acc}
\end{subfigure}
\begin{subfigure}[!t]{0.24\linewidth}
            \resizebox{\linewidth}{!}{
                \begin{tikzpicture}
                \tikzstyle{every node}=[font=\fontsize{12}{12}\selectfont]

\begin{axis}[
  xlabel= Global rounds,
  ylabel= Loss,
  legend pos= north east,
  xticklabels from table={final-tr-synthetic-cnn-loss.dat}{GR},xtick=data
]
\addplot[blue,dashed,,thick,mark=diamond*] table [y=pt40,x=GR]{final-tr-synthetic-cnn-loss.dat};

\addplot[brown,dashed,thick,mark=square*]  table [y=pt20,x=GR]{final-tr-synthetic-cnn-loss.dat};

\addplot[black,dashed,thick,mark=triangle*]  table [y=pt10,x=GR]{final-tr-synthetic-cnn-loss.dat};

\end{axis}

\end{tikzpicture}

            }
        \caption{Personalized Loss (Validation)}
        \label{Fig:tr_synthetic_cnn_loss}
    \end{subfigure} 
    \begin{subfigure}[!t]{0.24\linewidth}
        \resizebox{\linewidth}{!}{
        \begin{tikzpicture}
        \tikzstyle{every node}=[font=\fontsize{12}{12}\selectfont]
        \begin{axis}[
  xlabel= Global rounds,
  ylabel= Loss,
  legend pos= north east,
  xticklabels from table={final-tr-synthetic-cnn-loss.dat}{GR},xtick=data
]
\addplot[blue,dashed,thick,mark=diamond*] table [y=gt40,x=GR]{final-tr-synthetic-cnn-loss.dat};
\addlegendentry{$K = 40$}

\addplot[brown,dashed,thick,mark=square*]  table [y=gt20,x=GR]{final-tr-synthetic-cnn-loss.dat};
\addlegendentry{$K = 20$}

\addplot[black,dashed,thick,mark=triangle*]  table [y=gt10,x=GR]{final-tr-synthetic-cnn-loss.dat};
\addlegendentry{$K = 10$}

\end{axis}
\end{tikzpicture}
}

    \caption{Global Loss (Validation)}
    \label{Fig:tr_synthetic_cnn_global_loss}
\end{subfigure}

\caption{Effect of Team iterations on the convergence of \ourmodel{} in non-convex settings (DNN) on Synthetic dataset while teams and devices fully participate. }
\label{Fig:team_iters_synthetic_cnn}
\end{figure*}


\begin{figure*}[!ht]
        \centering
        \begin{subfigure}[!t]{0.24\linewidth}
            \resizebox{\linewidth}{!}{
                \begin{tikzpicture}
                \tikzstyle{every node}=[font=\fontsize{12}{12}\selectfont]
\begin{axis}[
  xlabel= Global rounds,
  ylabel= Accuracy,
  legend pos=south east,
  xticklabels from table={final-tr-synthetic-mclr-acc.dat}{GR},xtick=data
]
\addplot[blue,dashed,thick,mark=diamond*] table [y=pt40,x=GR]{final-tr-synthetic-mclr-acc.dat};

\addplot[brown,dashed,thick,mark=square*]  table [y=pt20,x=GR]{final-tr-synthetic-mclr-acc.dat};

\addplot[black,dashed,thick,mark=triangle*]  table [y=pt10,x=GR]{final-tr-synthetic-mclr-acc.dat};

\end{axis}
\end{tikzpicture}

            }
        \caption{Personalized Accuracy (Validation)}
        \label{Fig:tr_synthetic_mclr_acc}
    \end{subfigure} 
    \begin{subfigure}[!t]{0.24\linewidth}
        \resizebox{\linewidth}{!}{
        \begin{tikzpicture}
        \tikzstyle{every node}=[font=\fontsize{12}{12}\selectfont]
\begin{axis}[
  xlabel= Global rounds,
  ylabel= Accuracy,
  xticklabels from table={final-tr-fmnist-mclr-acc.dat}{GR},xtick=data
]
\addplot[blue,dashed, thick,mark=diamond*] table [y=gt40,x=GR]{final-tr-synthetic-mclr-acc.dat};

\addplot[brown,dashed,thick,mark=square*]  table [y=gt20,x=GR]{final-tr-synthetic-mclr-acc.dat};

\addplot[black,dashed,thick,mark=triangle*]  table [y=gt10,x=GR]{final-tr-synthetic-mclr-acc.dat};

\end{axis}
\end{tikzpicture}
}

    \caption{Global Accuracy (Validation)}
    \label{Fig:tr_synthetic_mclr_global_acc}
\end{subfigure}
\begin{subfigure}[!t]{0.24\linewidth}
            \resizebox{\linewidth}{!}{
                \begin{tikzpicture}
                \tikzstyle{every node}=[font=\fontsize{12}{12}\selectfont]

\begin{axis}[
  xlabel= Global rounds,
  ylabel= Loss,
  legend pos= north east,
  xticklabels from table={final-tr-synthetic-mclr-loss.dat}{GR},xtick=data
]
\addplot[blue,dashed,,thick,mark=diamond*] table [y=pt40,x=GR]{final-tr-synthetic-mclr-loss.dat};

\addplot[brown,dashed,thick,mark=square*]  table [y=pt20,x=GR]{final-tr-synthetic-mclr-loss.dat};

\addplot[black,dashed,thick,mark=triangle*]  table [y=pt10,x=GR]{final-tr-synthetic-mclr-loss.dat};

\end{axis}

\end{tikzpicture}

            }
        \caption{Personalized Loss (Validation)}
        \label{Fig:tr_synthetic_mclr_loss}
    \end{subfigure} 
    \begin{subfigure}[!t]{0.24\linewidth}
        \resizebox{\linewidth}{!}{
        \begin{tikzpicture}
        \tikzstyle{every node}=[font=\fontsize{12}{12}\selectfont]
        \begin{axis}[
  xlabel= Global rounds,
  ylabel= Loss,
  legend pos= north east,
  xticklabels from table={final-tr-synthetic-mclr-loss.dat}{GR},xtick=data
]
\addplot[blue,dashed,thick,mark=diamond*] table [y=gt40,x=GR]{final-tr-synthetic-mclr-loss.dat};
\addlegendentry{$K = 40$ }

\addplot[brown,dashed,thick,mark=square*]  table [y=gt20,x=GR]{final-tr-synthetic-mclr-loss.dat};
\addlegendentry{$K = 20$ }

\addplot[black,dashed,thick,mark=triangle*]  table [y=gt10,x=GR]{final-tr-synthetic-mclr-loss.dat};
\addlegendentry{$K = 10$ }

\end{axis}
\end{tikzpicture}
}

    \caption{Global Loss (Validation)}
    \label{Fig:tr_synthetic_mclr_global_loss}
\end{subfigure}

\caption{Effect of Team iterations on the convergence of \ourmodel{} in strongly convex settings (MCLR) on Synthetic dataset while teams and devices fully participate. }
\label{Fig:team_iters_synthetic_mclr}
\end{figure*}


\begin{figure*}[!ht]
        \centering
        \begin{subfigure}[!t]{0.24\linewidth}
            \resizebox{\linewidth}{!}{
                \begin{tikzpicture}
                \tikzstyle{every node}=[font=\fontsize{12}{12}\selectfont]
\begin{axis}[
  xlabel= Global rounds,
  ylabel= Loss,
  legend pos=north east,
  legend style ={nodes={scale=1.2, transform shape}},
  xticklabels from table={ablation_study_team_Emnist_TE_part_team_part_client_TE_team_2_client_2_loss.dat}{GR},xtick=data
]
\addplot[blue,thick,mark=diamond*] table [y=pTE_30,x=GR]{ablation_study_team_Emnist_TE_part_team_part_client_TE_team_2_client_2_loss.dat};
\addlegendentry{$K = 30$}

\addplot[brown,thick,mark=square*] table [y=pTE_50,x=GR]{ablation_study_team_Emnist_TE_part_team_part_client_TE_team_2_client_2_loss.dat};
\addlegendentry{$K = 50$}

\addplot[black,dashed,thick,mark=triangle*]  table [y=pTE_100,x=GR]{ablation_study_team_Emnist_TE_part_team_part_client_TE_team_2_client_2_loss.dat};
\addlegendentry{$K = 100$}

\addplot[green,dashed,thick,mark=star]  table [y=pTE_150,x=GR]{ablation_study_team_Emnist_TE_part_team_part_client_TE_team_2_client_2_loss.dat};
\addlegendentry{$K = 150$}

\end{axis}
\end{tikzpicture}

            }
        \caption{Personalized Loss \\ (Validation)}
        \label{Fig: te_emnist_ptpc_team_2_mclr_pl}
\end{subfigure} 
\begin{subfigure}[!t]{0.24\linewidth}
        \resizebox{\linewidth}{!}{
        \begin{tikzpicture}
        \tikzstyle{every node}=[font=\fontsize{12}{12}\selectfont]

\begin{axis}[
  xlabel= Global rounds,
  ylabel= Loss,
  xticklabels from table={ablation_study_team_Emnist_TE_part_team_part_client_TE_team_2_client_2_loss.dat}{GR},xtick=data
]
\addplot[blue,thick,mark=diamond*] table [y=gTE_30,x=GR]{ablation_study_team_Emnist_TE_part_team_part_client_TE_team_2_client_2_loss.dat};

\addplot[brown,dashed,thick,mark=square*]  table [y=gTE_50,x=GR]{ablation_study_team_Emnist_TE_part_team_part_client_TE_team_2_client_2_loss.dat};

\addplot[black,dashed,thick,mark=triangle*]  table [y=gTE_100,x=GR]{ablation_study_team_Emnist_TE_part_team_part_client_TE_team_2_client_2_loss.dat};

\addplot[green,dashed,thick,mark=star]  table [y=gTE_150,x=GR]{ablation_study_team_Emnist_TE_part_team_part_client_TE_team_2_client_2_loss.dat};

\end{axis}

\end{tikzpicture}
}

    \caption{Global Loss \\ (Validation)}
    \label{Fig: te_emnist_ptpc_team_2_mclr_gl}
\end{subfigure}
\begin{subfigure}[!t]{0.24\linewidth}
            \resizebox{\linewidth}{!}{
                \begin{tikzpicture}
                \tikzstyle{every node}=[font=\fontsize{12}{12}\selectfont]

\begin{axis}[
  xlabel= Global rounds,
  ylabel= Personalized Accuracy,
  xticklabels from table={ablation_study_team_Emnist_TE_part_team_part_client_TE_team_2_client_2_acc.dat}{GR},xtick=data
]
\addplot[blue,thick,mark=diamond*] table [y=pTE_30,x=GR]{ablation_study_team_Emnist_TE_part_team_part_client_TE_team_2_client_2_acc.dat};

\addplot[brown,dashed,thick,mark=square*]  table [y=pTE_50,x=GR]{ablation_study_team_Emnist_TE_part_team_part_client_TE_team_2_client_2_acc.dat};

\addplot[black,dashed,thick,mark=triangle*]  table [y=pTE_100,x=GR]{ablation_study_team_Emnist_TE_part_team_part_client_TE_team_2_client_2_acc.dat};

\addplot[green,dashed,thick,mark=star]  table [y=pTE_150,x=GR]{ablation_study_team_Emnist_TE_part_team_part_client_TE_team_2_client_2_acc.dat};

\end{axis}

\end{tikzpicture}

            }
        \caption{Personalized Accuracy \\ (Validation)}
        \label{Fig: te_emnist_ptpc_team_2_mclr_pa}
    \end{subfigure} 
    \begin{subfigure}[!t]{0.24\linewidth}
        \resizebox{\linewidth}{!}{
        \begin{tikzpicture}
        \tikzstyle{every node}=[font=\fontsize{12}{12}\selectfont]
\begin{axis}[
  xlabel= Global rounds,
  ylabel= Global Accuracy,
  xticklabels from table={ablation_study_team_Emnist_TE_part_team_part_client_TE_team_2_client_2_acc.dat}{GR},xtick=data
]
\addplot[blue,thick,mark=diamond*] table [y=gTE_30,x=GR]{ablation_study_team_Emnist_TE_part_team_part_client_TE_team_2_client_2_acc.dat};

\addplot[brown,dashed,thick,mark=square*]  table [y=gTE_50,x=GR]{ablation_study_team_Emnist_TE_part_team_part_client_TE_team_2_client_2_acc.dat};

\addplot[black,dashed,thick,mark=triangle*]  table [y=gTE_100,x=GR]{ablation_study_team_Emnist_TE_part_team_part_client_TE_team_2_client_2_acc.dat};

\addplot[green,dashed,thick,mark=star]  table [y=gTE_150,x=GR]{ablation_study_team_Emnist_TE_part_team_part_client_TE_team_2_client_2_acc.dat};
\end{axis}

\end{tikzpicture}
}

    \caption{Global Accuracy \\ (Validation)}
    \label{Fig: te_emnist_ptpc_team_2_mclr_ga}
\end{subfigure}

\caption{Effect of Team iterations $\{TE = 30, 50, 100,\ and\ 150\}$ when team ($20\%$) and devices ($2\%$) are partially participated in the \ourmodel{} using EMNIST datasets in convex settings (MCLR)}
\label{Fig: te_emnist_ptpc_team_2_mclr}
\end{figure*}


\begin{figure*}[!ht]
        \centering
        \begin{subfigure}[!t]{0.24\linewidth}
            \resizebox{\linewidth}{!}{
                \begin{tikzpicture}
                \tikzstyle{every node}=[font=\fontsize{12}{12}\selectfont]
\begin{axis}[
  xlabel= Global rounds,
  ylabel= Loss,
  legend pos=south west,
  xticklabels from table={ablation_study_team_Emnist_Very_low_part_team_part_client_TE_team_2_client_2_loss.dat}{GR},xtick=data
]
\addplot[blue,thick,mark=diamond*] table [y=pTE_30,x=GR]{ablation_study_team_Emnist_Very_low_part_team_part_client_TE_team_2_client_2_loss.dat};

\addplot[brown,dashed,thick,mark=square*]  table [y=pTE_100,x=GR]{ablation_study_team_Emnist_Very_low_part_team_part_client_TE_team_2_client_2_loss.dat};

\addplot[black,dashed,thick,mark=triangle*]  table [y=pTE_150,x=GR]{ablation_study_team_Emnist_Very_low_part_team_part_client_TE_team_2_client_2_loss.dat};

\addplot[green,thick,mark=star] table [y=pTE_200,x=GR]{ablation_study_team_Emnist_Very_low_part_team_part_client_TE_team_2_client_2_loss.dat};

\end{axis}
\end{tikzpicture}

            }
        \caption{Personalized Loss \\ (Validation)}
        \label{Fig: te_emnist_very_low_ptpc_team_2_mclr_pl}
    \end{subfigure} 
    \begin{subfigure}[!t]{0.24\linewidth}
        \resizebox{\linewidth}{!}{
        \begin{tikzpicture}
        \tikzstyle{every node}=[font=\fontsize{12}{12}\selectfont]

\begin{axis}[
  xlabel= Global rounds,
  ylabel= Loss,
  legend style ={nodes={scale=1.2, transform shape}},
  legend pos=south west,
  xticklabels from table={ablation_study_team_Emnist_Very_low_part_team_part_client_TE_team_2_client_2_loss.dat}{GR},xtick=data
]
\addplot[blue,thick,mark=diamond*] table [y=gTE_30,x=GR]{ablation_study_team_Emnist_Very_low_part_team_part_client_TE_team_2_client_2_loss.dat};
\addlegendentry{$K = 30$}

\addplot[brown,dashed,thick,mark=square*]  table [y=gTE_100,x=GR]{ablation_study_team_Emnist_Very_low_part_team_part_client_TE_team_2_client_2_loss.dat};
\addlegendentry{$K = 100$}

\addplot[black,dashed,thick,mark=triangle*]  table [y=gTE_150,x=GR]{ablation_study_team_Emnist_Very_low_part_team_part_client_TE_team_2_client_2_loss.dat};
\addlegendentry{$K = 150$}

\addplot[green,dashed,thick,mark=star]  table [y=gTE_200,x=GR]{ablation_study_team_Emnist_Very_low_part_team_part_client_TE_team_2_client_2_loss.dat};
\addlegendentry{$K = 200$}

\end{axis}

\end{tikzpicture}
}

    \caption{Global Loss (Validation)}
    \label{Fig: te_emnist_very_low_ptpc_team_2_mclr_gl}
\end{subfigure}
\begin{subfigure}[!t]{0.24\linewidth}
            \resizebox{\linewidth}{!}{
                \begin{tikzpicture}
                \tikzstyle{every node}=[font=\fontsize{12}{12}\selectfont]

\begin{axis}[
  xlabel= Global rounds,
  ylabel= Personalized Accuracy,
  xticklabels from table={ablation_study_team_Emnist_Very_low_part_team_part_client_TE_team_2_client_2_acc.dat}{GR},xtick=data
]
\addplot[blue,thick,mark=diamond*] table [y=pTE_30,x=GR]{ablation_study_team_Emnist_Very_low_part_team_part_client_TE_team_2_client_2_acc.dat};

\addplot[brown,dashed,thick,mark=square*]  table [y=pTE_100,x=GR]{ablation_study_team_Emnist_Very_low_part_team_part_client_TE_team_2_client_2_acc.dat};

\addplot[black,dashed,thick,mark=triangle*]  table [y=pTE_150,x=GR]{ablation_study_team_Emnist_Very_low_part_team_part_client_TE_team_2_client_2_acc.dat};

\addplot[green,dashed,thick,mark=star]  table [y=pTE_200,x=GR]{ablation_study_team_Emnist_Very_low_part_team_part_client_TE_team_2_client_2_acc.dat};

\end{axis}

\end{tikzpicture}

            }
        \caption{Personalized Accuracy \\ (Validation)}
        \label{Fig: te_emnist_very_low_ptpc_team_2_mclr_pa}
    \end{subfigure} 
    \begin{subfigure}[!t]{0.24\linewidth}
        \resizebox{\linewidth}{!}{
        \begin{tikzpicture}
        \tikzstyle{every node}=[font=\fontsize{12}{12}\selectfont]
\begin{axis}[
  xlabel= Global rounds,
  ylabel= Global Accuracy,
  xticklabels from table={ablation_study_team_Emnist_Very_low_part_team_part_client_TE_team_2_client_2_acc.dat}{GR},xtick=data
]
\addplot[blue,thick,mark=diamond*] table [y=gTE_30,x=GR]{ablation_study_team_Emnist_Very_low_part_team_part_client_TE_team_2_client_2_acc.dat};

\addplot[brown,dashed,thick,mark=square*]  table [y=gTE_100,x=GR]{ablation_study_team_Emnist_Very_low_part_team_part_client_TE_team_2_client_2_acc.dat};

\addplot[black,dashed,thick,mark=triangle*]  table [y=gTE_150,x=GR]{ablation_study_team_Emnist_Very_low_part_team_part_client_TE_team_2_client_2_acc.dat};

\addplot[green,dashed,thick,mark=star]  table [y=gTE_200,x=GR]{ablation_study_team_Emnist_Very_low_part_team_part_client_TE_team_2_client_2_acc.dat};
\end{axis}

\end{tikzpicture}
}

    \caption{Global Accuracy \\ (Validation)}
    \label{Fig: te_emnist_very_low_ptpc_team_2_mclr_ga}
\end{subfigure}

\caption{Effect of Team iterations $\{TE = 30, 100,  150\ and\ 200\}$ when team ($2\%$) and devices ($2\%$) are partially participated in the \ourmodel{} using EMNIST datasets in convex settings (MCLR)}
\label{Fig: te_emnist_very_low_ptpc_team_2_mclr}
\end{figure*}


\begin{figure*}[!ht]
    \centering
        \begin{subfigure}[!t]{0.24\linewidth}
            \resizebox{\linewidth}{!}{
                \begin{tikzpicture}
                \tikzstyle{every node}=[font=\fontsize{12}{12}\selectfont]
\begin{axis}[
  xlabel= Global rounds,
  ylabel= Loss,
  legend pos=north east,
  legend style ={nodes={scale=1.3, transform shape}},
  xticklabels from table={ablation_study_team_Mnist_TE_part_team_part_client_TE_team_2_client_2_loss.dat}{GR},xtick=data
]
\addplot[blue,thick,mark=diamond*] table [y=pTE_30,x=GR]{ablation_study_team_Mnist_TE_part_team_part_client_TE_team_2_client_2_loss.dat};
\addlegendentry{$K = 30$}

\addplot[black,dashed,thick,mark=square*]  table [y=pTE_100,x=GR]{ablation_study_team_Mnist_TE_part_team_part_client_TE_team_2_client_2_loss.dat};
\addlegendentry{$K = 100$}

\addplot[green,dashed,thick,mark=triangle*]  table [y=pTE_150,x=GR]{ablation_study_team_Mnist_TE_part_team_part_client_TE_team_2_client_2_loss.dat};
\addlegendentry{$K = 150$}

\addplot[brown,thick,mark=star] table [y=pTE_200,x=GR]{ablation_study_team_Mnist_TE_part_team_part_client_TE_team_2_client_2_loss.dat};
\addlegendentry{$K = 200$}

\end{axis}
\end{tikzpicture}

            }
        \caption{Personalized Loss \\ (Validation)}
        \label{Fig: te_mnist_ptpc_team_2_mclr_pl}
    \end{subfigure} 
    \begin{subfigure}[!t]{0.24\linewidth}
        \resizebox{\linewidth}{!}{
        \begin{tikzpicture}
        \tikzstyle{every node}=[font=\fontsize{12}{12}\selectfont]

\begin{axis}[
  xlabel= Global rounds,
  ylabel= Loss,
  xticklabels from table={ablation_study_team_Mnist_TE_part_team_part_client_TE_team_2_client_2_loss.dat}{GR},xtick=data
]
\addplot[blue,thick,mark=diamond*] table [y=gTE_30,x=GR]{ablation_study_team_Mnist_TE_part_team_part_client_TE_team_2_client_2_loss.dat};

\addplot[black,dashed,thick,mark=square*]  table [y=gTE_100,x=GR]{ablation_study_team_Mnist_TE_part_team_part_client_TE_team_2_client_2_loss.dat};

\addplot[green,dashed,thick,mark=triangle*]  table [y=gTE_150,x=GR]{ablation_study_team_Mnist_TE_part_team_part_client_TE_team_2_client_2_loss.dat};

\addplot[brown,dashed,thick,mark=star]  table [y=gTE_200,x=GR]{ablation_study_team_Mnist_TE_part_team_part_client_TE_team_2_client_2_loss.dat};

\end{axis}

\end{tikzpicture}
}

    \caption{Global Loss \\ (Validation)}
    \label{Fig: te_mnist_ptpc_team_2_mclr_gl}
\end{subfigure}
\begin{subfigure}[!t]{0.24\linewidth}
            \resizebox{\linewidth}{!}{
                \begin{tikzpicture}
                \tikzstyle{every node}=[font=\fontsize{12}{12}\selectfont]

\begin{axis}[
  xlabel= Global rounds,
  ylabel= Personalized Accuracy,
  xticklabels from table={ablation_study_team_Mnist_TE_part_team_part_client_TE_team_2_client_2_acc.dat}{GR},xtick=data
]
\addplot[blue,thick,mark=diamond*] table [y=pTE_30,x=GR]{ablation_study_team_Mnist_TE_part_team_part_client_TE_team_2_client_2_acc.dat};

\addplot[black,dashed,thick,mark=square*]  table [y=pTE_100,x=GR]{ablation_study_team_Mnist_TE_part_team_part_client_TE_team_2_client_2_acc.dat};

\addplot[green,dashed,thick,mark=triangle*]  table [y=pTE_150,x=GR]{ablation_study_team_Mnist_TE_part_team_part_client_TE_team_2_client_2_acc.dat};

\addplot[brown,dashed,thick,mark=star]  table [y=pTE_200,x=GR]{ablation_study_team_Mnist_TE_part_team_part_client_TE_team_2_client_2_acc.dat};

\end{axis}

\end{tikzpicture}

            }
        \caption{Personalized Accuracy \\ (Validation)}
        \label{Fig: te_mnist_ptpc_team_2_mclr_pa}
    \end{subfigure} 
    \begin{subfigure}[!t]{0.24\linewidth}
        \resizebox{\linewidth}{!}{
        \begin{tikzpicture}
        \tikzstyle{every node}=[font=\fontsize{12}{12}\selectfont]
\begin{axis}[
  xlabel= Global rounds,
  ylabel= Global Accuracy,
  xticklabels from table={ablation_study_team_Mnist_TE_part_team_part_client_TE_team_2_client_2_acc.dat}{GR},xtick=data
]
\addplot[blue,thick,mark=diamond*] table [y=gTE_30,x=GR]{ablation_study_team_Mnist_TE_part_team_part_client_TE_team_2_client_2_acc.dat};

\addplot[black,dashed,thick,mark=square*]  table [y=gTE_100,x=GR]{ablation_study_team_Mnist_TE_part_team_part_client_TE_team_2_client_2_acc.dat};

\addplot[green,dashed,thick,mark=triangle*]  table [y=gTE_150,x=GR]{ablation_study_team_Mnist_TE_part_team_part_client_TE_team_2_client_2_acc.dat};

\addplot[brown,dashed,thick,mark=star]  table [y=gTE_200,x=GR]{ablation_study_team_Mnist_TE_part_team_part_client_TE_team_2_client_2_acc.dat};

\end{axis}

\end{tikzpicture}
}

    \caption{Global Accuracy \\ (Validation)}
    \label{Fig: te_mnist_ptpc_team_2_mclr_ga}
\end{subfigure}

\caption{Effect of Team iterations $\{TE = 30, 100, 150 \ and\ 200\}$ when team ($20\%$) and devices ($2\%$) are partially participated in the \ourmodel{} using MNIST datasets in convex settings (MCLR)}
\label{Fig: te_mnist_ptpc_team_2_mclr}
\end{figure*}



\begin{figure*}[!ht]
        \centering
        \begin{subfigure}[!t]{0.24\linewidth}
            \resizebox{\linewidth}{!}{
                \begin{tikzpicture}
                \tikzstyle{every node}=[font=\fontsize{12}{12}\selectfont]
\begin{axis}[
  xlabel= Global rounds,
  ylabel= Loss,
  legend pos=south west,
  xticklabels from table={ablation_study_team_Mnist_Very_low_part_team_part_client_TE_team_2_client_2_loss.dat}{GR},xtick=data
]
\addplot[blue,thick,mark=diamond*] table [y=pTE_30,x=GR]{ablation_study_team_Mnist_Very_low_part_team_part_client_TE_team_2_client_2_loss.dat};

\addplot[brown,dashed,thick,mark=square*]  table [y=pTE_100,x=GR]{ablation_study_team_Mnist_Very_low_part_team_part_client_TE_team_2_client_2_loss.dat};

\addplot[black,dashed,thick,mark=triangle*]  table [y=pTE_150,x=GR]{ablation_study_team_Mnist_Very_low_part_team_part_client_TE_team_2_client_2_loss.dat};

\addplot[green,thick,mark=star] table [y=pTE_200,x=GR]{ablation_study_team_Mnist_Very_low_part_team_part_client_TE_team_2_client_2_loss.dat};

\end{axis}
\end{tikzpicture}

            }
        \caption{Personalized Loss \\ (Validation)}
        \label{Fig: te_mnist_very_low_ptpc_team_2_mclr_pl}
    \end{subfigure} 
    \begin{subfigure}[!t]{0.24\linewidth}
        \resizebox{\linewidth}{!}{
        \begin{tikzpicture}
        \tikzstyle{every node}=[font=\fontsize{12}{12}\selectfont]

\begin{axis}[
  xlabel= Global rounds,
  ylabel= Loss,
  legend style ={nodes={scale=1.2, transform shape}},
  legend pos=south west,
  xticklabels from table={ablation_study_team_Mnist_Very_low_part_team_part_client_TE_team_2_client_2_loss.dat}{GR},xtick=data
]
\addplot[blue,thick,mark=diamond*] table [y=gTE_30,x=GR]{ablation_study_team_Mnist_Very_low_part_team_part_client_TE_team_2_client_2_loss.dat};
\addlegendentry{$K = 30$}

\addplot[brown,dashed,thick,mark=square*]  table [y=gTE_100,x=GR]{ablation_study_team_Mnist_Very_low_part_team_part_client_TE_team_2_client_2_loss.dat};
\addlegendentry{$K = 100$}

\addplot[black,dashed,thick,mark=triangle*]  table [y=gTE_150,x=GR]{ablation_study_team_Mnist_Very_low_part_team_part_client_TE_team_2_client_2_loss.dat};
\addlegendentry{$K = 150$}

\addplot[green,dashed,thick,mark=star]  table [y=gTE_200,x=GR]{ablation_study_team_Mnist_Very_low_part_team_part_client_TE_team_2_client_2_loss.dat};
\addlegendentry{$K = 200$}

\end{axis}

\end{tikzpicture}
}

    \caption{Global Loss \\ (Validation)}
    \label{Fig: te_mnist_very_low_ptpc_team_2_mclr_gl}
\end{subfigure}
\begin{subfigure}[!t]{0.24\linewidth}
            \resizebox{\linewidth}{!}{
                \begin{tikzpicture}
                \tikzstyle{every node}=[font=\fontsize{12}{12}\selectfont]

\begin{axis}[
  xlabel= Global rounds,
  ylabel= Personalized Accuracy,
  xticklabels from table={ablation_study_team_Mnist_Very_low_part_team_part_client_TE_team_2_client_2_acc.dat}{GR},xtick=data
]
\addplot[blue,thick,mark=diamond*] table [y=pTE_30,x=GR]{ablation_study_team_Mnist_Very_low_part_team_part_client_TE_team_2_client_2_acc.dat};

\addplot[brown,dashed,thick,mark=square*]  table [y=pTE_100,x=GR]{ablation_study_team_Mnist_Very_low_part_team_part_client_TE_team_2_client_2_acc.dat};

\addplot[black,dashed,thick,mark=triangle*]  table [y=pTE_150,x=GR]{ablation_study_team_Mnist_Very_low_part_team_part_client_TE_team_2_client_2_acc.dat};

\addplot[green,dashed,thick,mark=star]  table [y=pTE_200,x=GR]{ablation_study_team_Mnist_Very_low_part_team_part_client_TE_team_2_client_2_acc.dat};

\end{axis}

\end{tikzpicture}

            }
        \caption{Personalized Accuracy \\ (Validation)}
        \label{Fig: te_mnist_very_low_ptpc_team_2_mclr_pa}
    \end{subfigure} 
    \begin{subfigure}[!t]{0.24\linewidth}
        \resizebox{\linewidth}{!}{
        \begin{tikzpicture}
        \tikzstyle{every node}=[font=\fontsize{12}{12}\selectfont]
\begin{axis}[
  xlabel= Global rounds,
  ylabel= Global Accuracy,
  xticklabels from table={ablation_study_team_Mnist_Very_low_part_team_part_client_TE_team_2_client_2_acc.dat}{GR},xtick=data
]
\addplot[blue,thick,mark=diamond*] table [y=gTE_30,x=GR]{ablation_study_team_Mnist_Very_low_part_team_part_client_TE_team_2_client_2_acc.dat};

\addplot[brown,dashed,thick,mark=square*]  table [y=gTE_100,x=GR]{ablation_study_team_Mnist_Very_low_part_team_part_client_TE_team_2_client_2_acc.dat};

\addplot[black,dashed,thick,mark=triangle*]  table [y=gTE_150,x=GR]{ablation_study_team_Mnist_Very_low_part_team_part_client_TE_team_2_client_2_acc.dat};

\addplot[green,dashed,thick,mark=star]  table [y=gTE_200,x=GR]{ablation_study_team_Mnist_Very_low_part_team_part_client_TE_team_2_client_2_acc.dat};
\end{axis}

\end{tikzpicture}
}

    \caption{Global Accuracy \\ (Validation)}
    \label{Fig: te_mnist_very_low_ptpc_team_2_mclr_ga}
\end{subfigure}

\caption{Effect of Team iterations $\{TE = 30, 100,  150\ and\ 200\}$ when team ($2\%$) and devices ($2\%$) are partially participated in the \ourmodel{} using EMNIST datasets in convex settings (MCLR)}
\label{Fig: te_mnist_very_low_ptpc_team_2_mclr}
\end{figure*}


\clearpage
\subsection{Convergence analysis } \label{appx: exp_conv_analysis}
Based on the convergence results presented in  \Cref{Fig:convergence_sota_mnist}, \Cref{Fig:convergence_sota_emnist}, and \Cref{Fig:convergence_sota_synthetic}, we observed that \ourmodel{}(PM) achieved faster convergence compared to AL2GD. Additionally, the convergence of \ourmodel{}(GM) and h-SGD was found to be similar for both strongly convex and non-convex scenarios.


\begin{figure*}[!ht]
\centering
\begin{subfigure}[!t]{0.23\linewidth}
\resizebox{\linewidth}{!}{
\begin{tikzpicture}
\tikzstyle{every node}=[font=\fontsize{12}{12}\selectfont]
\begin{axis}[
xlabel= Global rounds,
ylabel= Accuracy,
legend style ={nodes={scale=1, transform shape}},
legend pos=south east,
xtick={0,10,20,30,40,50,60,70,80,90,100}
]
\addplot[blue,dashed,thick,mark=otimes*, mark repeat=10, mark phase=10] table [y=PerMFL(PM) ,x=GR]{final-perf-emnist-cnn-acc.dat};
\addlegendentry{\ourmodel{}(PM)}

\addplot[black,dashed,thick,mark=diamond*, mark repeat=10, mark phase=10]  table [y=PerMFL(GM) ,x=GR]{final-perf-emnist-cnn-acc.dat};
\addlegendentry{\ourmodel{}(GM)}

\addplot[gray,dashed,thick,mark=square*, mark repeat=10, mark phase=10]  table [y=h-sgd,x=GR]{final-perf-emnist-cnn-acc.dat};
\addlegendentry{h-SGD(GM)}

\addplot[red,dashed,thick,mark=triangle*, mark repeat=10, mark phase=10]  table [y=AL2GD,x=GR]{final-perf-emnist-cnn-acc.dat};
\addlegendentry{AL2GD(PM)}

\addplot[green,thick,dashed,mark=triangle*, mark repeat=10, mark phase=10]  table [y=DemLearn(C-GEN),x=GR]{final-perf-emnist-cnn-acc.dat};
\addlegendentry{DemLearn(GM)}

\addplot[yellow,thick,dashed,mark=*, mark repeat=10, mark phase=10]  table [y=DemLearn(C-SPE),x=GR]{final-perf-emnist-cnn-acc.dat};
\addlegendentry{DemLearn(PM)}

\end{axis}
\end{tikzpicture}

}
\caption{non-convex}
\label{Fig:emnist_non_conv_acc_sota}
\end{subfigure} 
\begin{subfigure}[!t]{0.23\linewidth}
\resizebox{\linewidth}{!}{
\begin{tikzpicture}
\tikzstyle{every node}=[font=\fontsize{12}{12}\selectfont]
\begin{axis}[
xlabel= Global rounds,
ylabel= Loss,
xtick={0,10,20,30,40,50,60,70,80,90,100},
legend style ={nodes={scale=1.50, transform shape}},
legend pos= north east,
]
\addplot[blue,dashed,thick,mark=otimes*, mark repeat=10, mark phase=10] table [y=PerMFL(PM) ,x=GR]{final-perf-emnist-cnn-loss.dat};

\addplot[black,dashed,thick,mark=diamond*, mark repeat=10, mark phase=10]  table [y=PerMFL(GM) ,x=GR]{final-perf-emnist-cnn-loss.dat};

\addplot[gray,dashed,thick,mark=square*, mark repeat=10, mark phase=10]  table [y=h-sgd,x=GR]{final-perf-emnist-cnn-loss.dat};

\addplot[red,dashed,thick,mark=triangle*, mark repeat=10, mark phase=10]  table [y=AL2GD,x=GR]{final-perf-emnist-cnn-loss.dat};

\addplot[green,thick,dashed,mark=triangle*, mark repeat=10, mark phase=10]  table [y=DemLearn(C-GEN),x=GR]{final-perf-emnist-cnn-loss.dat};

\addplot[yellow,thick,dashed,mark=*, mark repeat=10, mark phase=10]  table [y=DemLearn(C-SPE),x=GR]{final-perf-emnist-cnn-loss.dat};

\end{axis}
\end{tikzpicture}
}

\caption{non-convex}
\label{Fig:emnist_non_conv_loss_sota}
\end{subfigure}
\begin{subfigure}[!t]{0.23\linewidth}
\resizebox{\linewidth}{!}{
\begin{tikzpicture}
\tikzstyle{every node}=[font=\fontsize{12}{12}\selectfont]

\begin{axis}[
xlabel= Global rounds,
ylabel= Accuracy,
xtick={0,10,20,30,40,50,60,70,80,90,100},
legend style ={nodes={scale=1.3, transform shape}},
legend pos= south east
]
\addplot[blue,thick,dashed,mark=otimes*, mark repeat=10, mark phase=10] table [y=PerMFL(PM) ,x=GR]{final-perf-emnist-mclr-acc.dat};

\addplot[black,dashed,thick,mark=diamond*, mark repeat=10, mark phase=10]  table [y=PerMFL(GM) ,x=GR]{final-perf-emnist-mclr-acc.dat};

\addplot[gray,dashed,thick,mark=square*, mark repeat=10, mark phase=10]  table [y=h-sgd,x=GR]{final-perf-emnist-mclr-acc.dat};

\addplot[red,dashed,thick,mark=triangle*, mark repeat=10, mark phase=10]  table [y=AL2GD,x=GR]{final-perf-emnist-mclr-acc.dat};

\addplot[green,thick,dashed,mark=triangle*, mark repeat=10, mark phase=10]  table [y=DemLearn(C-GEN),x=GR]{final-perf-emnist-mclr-acc.dat};

\addplot[yellow,thick,dashed,mark=*, mark repeat=10, mark phase=10]  table [y=DemLearn(C-SPE),x=GR]{final-perf-emnist-mclr-acc.dat};

\end{axis}

\end{tikzpicture}

}
\caption{strongly convex}
\label{Fig:emnist_conv_acc_sota}
\end{subfigure} 
\begin{subfigure}[!t]{0.23\linewidth}
\resizebox{\linewidth}{!}{
\begin{tikzpicture}
\tikzstyle{every node}=[font=\fontsize{12}{12}\selectfont]
\begin{axis}[
xlabel= Global rounds,
ylabel= Loss,
xtick={0,10,20,30,40,50,60,70,80,90,100},
legend style ={nodes={scale=1.45, transform shape}},
legend pos= north east,
]

\addplot[blue,thick,dashed,mark=otimes*, mark repeat=10, mark phase=10] table [y=PerMFL(PM) ,x=GR]{final-perf-emnist-mclr-loss.dat};

\addplot[black,dashed,thick,mark=diamond*, mark repeat=10, mark phase=10]  table [y=PerMFL(GM) ,x=GR]{final-perf-emnist-mclr-loss.dat};

\addplot[gray,dashed,thick,mark=square*, mark repeat=10, mark phase=10]  table [y=h-sgd,x=GR]{final-perf-emnist-mclr-loss.dat};

\addplot[red,dashed,thick,mark=triangle*, mark repeat=10, mark phase=10]  table [y=AL2GD,x=GR]{final-perf-emnist-mclr-loss.dat};

\addplot[green,thick,dashed,mark=triangle*, mark repeat=10, mark phase=10]  table [y=DemLearn(C-GEN),x=GR]{final-perf-emnist-mclr-loss.dat};

\addplot[yellow,thick,dashed,mark=*, mark repeat=10, mark phase=10]  table [y=DemLearn(C-SPE),x=GR]{final-perf-emnist-mclr-loss.dat};

\end{axis}
\end{tikzpicture}
}

\caption{strongly convex}
\label{Fig:emnist_conv_loss_sota}
\end{subfigure}

\caption{ Convergence comparison of \ourmodel{} with multi-tier SOTA in strongly convex and non-convex settings on EMNIST}
\label{Fig:convergence_sota_emnist}
\end{figure*}
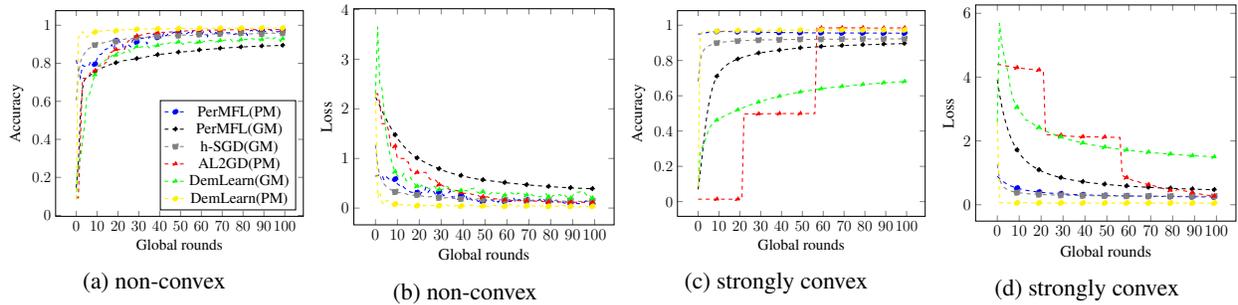




\begin{figure*}[!ht]
\centering
\begin{subfigure}[!t]{0.24\linewidth}
\resizebox{\linewidth}{!}{
\begin{tikzpicture}
\tikzstyle{every node}=[font=\fontsize{12}{12}\selectfont]

\begin{axis}[
xlabel= Global rounds,
ylabel= Accuracy,
legend style ={nodes={scale=1, transform shape}},
legend pos=south east,
xtick={0,10,20,30,40,50,60,70,80,90,100}
]
\addplot[blue,dashed,thick,mark=otimes*, mark repeat=10, mark phase=10] table [y=PerMFL(PM) ,x=GR]{final-perf-mnist-cnn-acc.dat};
\addlegendentry{\ourmodel{}(PM)}

\addplot[black,dashed,thick,mark=diamond*, mark repeat=10, mark phase=10]  table [y=PerMFL(GM) ,x=GR]{final-perf-mnist-cnn-acc.dat};
\addlegendentry{\ourmodel{}(GM)}

\addplot[gray,dashed,thick,mark=square*, mark repeat=10, mark phase=10]  table [y=h-sgd,x=GR]{final-perf-mnist-cnn-acc.dat};
\addlegendentry{h-SGD(GM)}

\addplot[red,dashed,thick,mark=triangle*, mark repeat=10, mark phase=10]  table [y=AL2GD,x=GR]{final-perf-mnist-cnn-acc.dat};
\addlegendentry{AL2GD(PM)}

\addplot[green,thick,dashed,mark=triangle*, mark repeat=10, mark phase=10]  table [y=DemLearn(C-GEN),x=GR]{final-perf-mnist-cnn-acc.dat};
\addlegendentry{DemLearn(GM)}

\addplot[yellow,thick,dashed,mark=*, mark repeat=10, mark phase=10]  table [y=DemLearn(C-SPE),x=GR]{final-perf-mnist-cnn-acc.dat};
\addlegendentry{DemLearn(PM)}

\end{axis}
\end{tikzpicture}

}
\caption{Non-Convex}
\label{Fig:mnist_non_conv_acc_sota}
\end{subfigure} 
\begin{subfigure}[!t]{0.24\linewidth}
\resizebox{\linewidth}{!}{
\begin{tikzpicture}
\tikzstyle{every node}=[font=\fontsize{12}{12}\selectfont]
\begin{axis}[
xlabel= Global rounds,
ylabel= Loss,
legend style ={nodes={scale=1.45, transform shape}},
legend pos= north east,
xtick={0,10,20,30,40,50,60,70,80,90,100}
]
\addplot[blue,dashed,thick,mark=otimes*, mark repeat=10, mark phase=10] table [y=PerMFL(PM) ,x=GR]{final-perf-mnist-cnn-loss.dat};

\addplot[black,dashed,thick,mark=diamond*, mark repeat=10, mark phase=10]  table [y=PerMFL(GM) ,x=GR]{final-perf-mnist-cnn-loss.dat};

\addplot[gray,dashed,thick,mark=square*, mark repeat=10, mark phase=10]  table [y=h-sgd,x=GR]{final-perf-mnist-cnn-loss.dat};

\addplot[red,dashed,thick,mark=triangle*, mark repeat=10, mark phase=10]  table [y=AL2GD,x=GR]{final-perf-mnist-cnn-loss.dat};

\addplot[green,thick,dashed,mark=triangle*, mark repeat=10, mark phase=10]  table [y=DemLearn(C-GEN),x=GR]{final-perf-mnist-cnn-loss.dat};

\addplot[yellow,thick,dashed,mark=*, mark repeat=10, mark phase=10]  table [y=DemLearn(C-SPE),x=GR]{final-perf-mnist-cnn-loss.dat};

\end{axis}
\end{tikzpicture}
}

\caption{Non-convex}
\label{Fig:mnist_non_conv_loss_sota}
\end{subfigure}
\begin{subfigure}[!t]{0.24\linewidth}
\resizebox{\linewidth}{!}{
\begin{tikzpicture}
\tikzstyle{every node}=[font=\fontsize{12}{12}\selectfont]

\begin{axis}[
xlabel= Global rounds,
ylabel= Accuracy,
legend style ={nodes={scale=1.45, transform shape}},
legend pos= south east,
xtick={0,10,20,30,40,50,60,70,80,90,100},
legend style ={nodes={scale=1.3, transform shape}},
legend pos= south east
]
\addplot[blue,thick,dashed,mark=otimes*, mark repeat=10, mark phase=10] table [y=PerMFL(PM) ,x=GR]{final-perf-mnist-mclr-acc.dat};

\addplot[black,dashed,thick,mark=diamond*, mark repeat=10, mark phase=10]  table [y=PerMFL(GM) ,x=GR]{final-perf-mnist-mclr-acc.dat};

\addplot[gray,dashed,thick,mark=square*, mark repeat=10, mark phase=10]  table [y=h-sgd,x=GR]{final-perf-mnist-mclr-acc.dat};

\addplot[red,dashed,thick,mark=triangle*, mark repeat=10, mark phase=10]  table [y=AL2GD,x=GR]{final-perf-mnist-mclr-acc.dat};

\addplot[green,thick,dashed,mark=triangle*, mark repeat=10, mark phase=10]  table [y=DemLearn(C-GEN),x=GR]{final-perf-mnist-mclr-acc.dat};

\addplot[yellow,thick,dashed,mark=*, mark repeat=10, mark phase=10]  table [y=DemLearn(C-SPE),x=GR]{final-perf-mnist-mclr-acc.dat};

\end{axis}

\end{tikzpicture}

}
\caption{strongly convex}
\label{Fig:mnist_conv_acc_sota}
\end{subfigure} 
\begin{subfigure}[!t]{0.24\linewidth}
\resizebox{\linewidth}{!}{
\begin{tikzpicture}
\tikzstyle{every node}=[font=\fontsize{12}{12}\selectfont]
\begin{axis}[
xlabel= Global rounds,
ylabel= Loss,
legend style ={nodes={scale=1.45, transform shape}},
legend pos= north east,
xtick={0,10,20,30,40,50,60,70,80,90,100}
]
\addplot[blue,dashed,thick,mark=otimes*, mark repeat=10, mark phase=10] table [y=PerMFL(PM) ,x=GR]{final-perf-mnist-mclr-loss.dat};

\addplot[black,dashed,thick,mark=diamond*, mark repeat=10, mark phase=10]  table [y=PerMFL(GM) ,x=GR]{final-perf-mnist-mclr-loss.dat};

\addplot[gray,dashed,thick,mark=square*, mark repeat=10, mark phase=10]  table [y=h-sgd,x=GR]{final-perf-mnist-mclr-loss.dat};

\addplot[red,dashed,thick,mark=triangle*, mark repeat=10, mark phase=10]  table [y=AL2GD,x=GR]{final-perf-mnist-mclr-loss.dat};

\addplot[green,thick,dashed,mark=triangle*, mark repeat=10, mark phase=10]  table [y=DemLearn(C-GEN),x=GR]{final-perf-mnist-mclr-loss.dat};

\addplot[yellow,thick,dashed,mark=*, mark repeat=10, mark phase=10]  table [y=DemLearn(C-SPE),x=GR]{final-perf-mnist-mclr-loss.dat};

\end{axis}
\end{tikzpicture}
}

\caption{strongly convex}
\label{Fig:mnist_conv_loss_sota}
\end{subfigure}

\caption{ Convergence comparison of \ourmodel{} with multi-tier SOTA in strongly convex and non-convex settings on MNIST }
\label{Fig:convergence_sota_mnist}
\end{figure*}


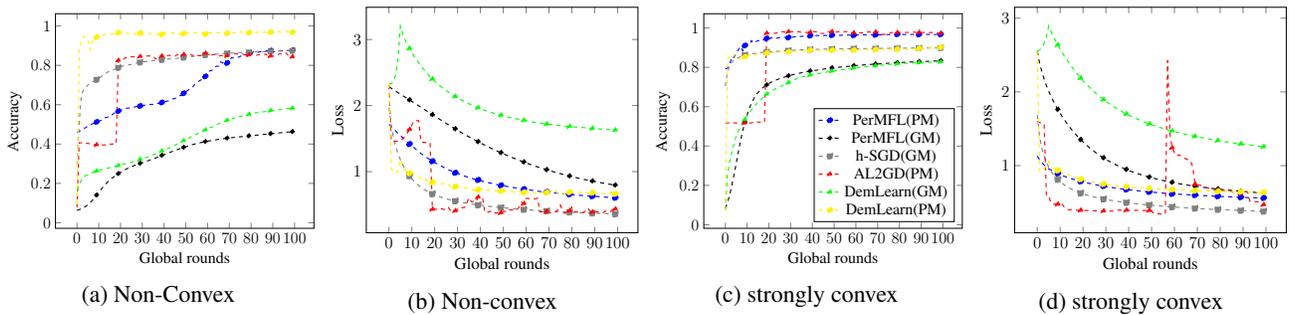
\begin{figure*}[!ht]
\centering
\begin{subfigure}[!t]{0.24\linewidth}
\resizebox{\linewidth}{!}{
\begin{tikzpicture}
\tikzstyle{every node}=[font=\fontsize{12}{12}\selectfont]

\begin{axis}[
xlabel= Global rounds,
ylabel= Accuracy,
legend style ={nodes={scale=1, transform shape}},
legend pos=south east,
xtick={0,10,20,30,40,50,60,70,80,90,100}
]
\addplot[blue,dashed,thick,mark=otimes*, mark repeat=10, mark phase=10] table [y=PerMFL(PM) ,x=GR]{final-perf-synthetic-dnn-acc.dat};

\addplot[black,dashed,thick,mark=diamond*, mark repeat=10, mark phase=10]  table [y=PerMFL(GM) ,x=GR]{final-perf-synthetic-dnn-acc.dat};

\addplot[gray,dashed,thick,mark=square*, mark repeat=10, mark phase=10]  table [y=h-sgd,x=GR]{final-perf-synthetic-dnn-acc.dat};

\addplot[red,dashed,thick,mark=triangle*, mark repeat=10, mark phase=10]  table [y=AL2GD,x=GR]{final-perf-synthetic-dnn-acc.dat};

\addplot[green,thick,dashed,mark=triangle*, mark repeat=10, mark phase=10]  table [y=DemLearn(C-GEN),x=GR]{final-perf-synthetic-dnn-acc.dat};

\addplot[yellow,thick,dashed,mark=*, mark repeat=10, mark phase=10]  table [y=DemLearn(C-SPE),x=GR]{final-perf-mnist-cnn-acc.dat};

\end{axis}
\end{tikzpicture}

}
\caption{Non-Convex}
\label{Fig:synthetic_non_conv_acc_sota}
\end{subfigure} 
\begin{subfigure}[!t]{0.24\linewidth}
\resizebox{\linewidth}{!}{
\begin{tikzpicture}
\tikzstyle{every node}=[font=\fontsize{12}{12}\selectfont]
\begin{axis}[
xlabel= Global rounds,
ylabel= Loss,
legend style ={nodes={scale=1.45, transform shape}},
legend pos= north east,
xtick={0,10,20,30,40,50,60,70,80,90,100}
]
\addplot[blue,dashed,thick,mark=otimes*, mark repeat=10, mark phase=10] table [y=PerMFL(PM) ,x=GR]{final-perf-synthetic-dnn-loss.dat};

\addplot[black,dashed,thick,mark=diamond*, mark repeat=10, mark phase=10]  table [y=PerMFL(GM) ,x=GR]{final-perf-synthetic-dnn-loss.dat};

\addplot[gray,dashed,thick,mark=square*, mark repeat=10, mark phase=10]  table [y=h-sgd,x=GR]{final-perf-synthetic-dnn-loss.dat};

\addplot[red,dashed,thick,mark=triangle*, mark repeat=10, mark phase=10]  table [y=AL2GD,x=GR]{final-perf-synthetic-dnn-loss.dat};

\addplot[green,thick,dashed,mark=triangle*, mark repeat=10, mark phase=10]  table [y=DemLearn(C-GEN),x=GR]{final-perf-synthetic-dnn-loss.dat};

\addplot[yellow,thick,dashed,mark=*, mark repeat=10, mark phase=10]  table [y=DemLearn(C-SPE),x=GR]{final-perf-synthetic-dnn-loss.dat};

\end{axis}
\end{tikzpicture}
}

\caption{Non-convex}
\label{Fig:synthetic_non_conv_loss_sota}
\end{subfigure}
\begin{subfigure}[!t]{0.24\linewidth}
\resizebox{\linewidth}{!}{
\begin{tikzpicture}
\tikzstyle{every node}=[font=\fontsize{12}{12}\selectfont]

\begin{axis}[
xlabel= Global rounds,
ylabel= Accuracy,
legend style ={nodes={scale=1.1, transform shape}},
legend pos= south east,
xtick={0,10,20,30,40,50,60,70,80,90,100},
legend style ={nodes={scale=0.9, transform shape}},
legend pos= south east
]
\addplot[blue,thick,dashed,mark=otimes*, mark repeat=10, mark phase=10] table [y=PerMFL(PM) ,x=GR]{final-perf-mnist-mclr-acc.dat};
\addlegendentry{\ourmodel{}(PM)}

\addplot[black,dashed,thick,mark=diamond*, mark repeat=10, mark phase=10]  table [y=PerMFL(GM) ,x=GR]{final-perf-mnist-mclr-acc.dat};
\addlegendentry{\ourmodel{}(GM)}

\addplot[gray,dashed,thick,mark=square*, mark repeat=10, mark phase=10]  table [y=h-sgd,x=GR]{final-perf-mnist-mclr-acc.dat};
\addlegendentry{h-SGD(GM)}

\addplot[red,dashed,thick,mark=triangle*, mark repeat=10, mark phase=10]  table [y=AL2GD,x=GR]{final-perf-mnist-mclr-acc.dat};
\addlegendentry{AL2GD(PM)}

\addplot[green,thick,dashed,mark=triangle*, mark repeat=10, mark phase=10]  table [y=DemLearn(C-GEN),x=GR]{final-perf-mnist-mclr-acc.dat};
\addlegendentry{DemLearn(GM)}

\addplot[yellow,thick,dashed,mark=*, mark repeat=10, mark phase=10]  table [y=DemLearn(C-SPE),x=GR]{final-perf-mnist-mclr-acc.dat};
\addlegendentry{DemLearn(PM)}

\end{axis}

\end{tikzpicture}

}
\caption{strongly convex}
\label{Fig:synthetic_conv_acc_sota}
\end{subfigure} 
\begin{subfigure}[!t]{0.24\linewidth}
\resizebox{\linewidth}{!}{
\begin{tikzpicture}
\tikzstyle{every node}=[font=\fontsize{12}{12}\selectfont]
\begin{axis}[
xlabel= Global rounds,
ylabel= Loss,
legend style ={nodes={scale=1.45, transform shape}},
legend pos= north east,
xtick={0,10,20,30,40,50,60,70,80,90,100}
]
\addplot[blue,dashed,thick,mark=otimes*, mark repeat=10, mark phase=10] table [y=PerMFL(PM) ,x=GR]{final-perf-synthetic-mclr-loss.dat};

\addplot[black,dashed,thick,mark=diamond*, mark repeat=10, mark phase=10]  table [y=PerMFL(GM) ,x=GR]{final-perf-synthetic-mclr-loss.dat};

\addplot[gray,dashed,thick,mark=square*, mark repeat=10, mark phase=10]  table [y=h-sgd,x=GR]{final-perf-synthetic-mclr-loss.dat};

\addplot[red,dashed,thick,mark=triangle*, mark repeat=10, mark phase=10]  table [y=AL2GD,x=GR]{final-perf-synthetic-mclr-loss.dat};

\addplot[green,thick,dashed,mark=triangle*, mark repeat=10, mark phase=10]  table [y=DemLearn(C-GEN),x=GR]{final-perf-synthetic-mclr-loss.dat};

\addplot[yellow,thick,dashed,mark=*, mark repeat=10, mark phase=10]  table [y=DemLearn(C-SPE),x=GR]{final-perf-synthetic-mclr-loss.dat};

\end{axis}
\end{tikzpicture}
}

\caption{strongly convex}
\label{Fig:synthetic_conv_loss_sota}
\end{subfigure}

\caption{ Convergence comparison of \ourmodel{} with multi-tier SOTA in strongly convex and non-convex settings on Synthetic datasets }
\label{Fig:convergence_sota_synthetic}
\end{figure*}

\clearpage

\subsection{Performance analysis on FEMNIST and CIFAR100 datasets } \label{appx: perf_femnist_cifar100}
We have experimented the performance of \ourmodel{} with the state-of-the-art multi-tire federated learning algorithms on Federated EMNIST (FEMNIST) and CIFAR100 datasets (see \Cref{tab: perf_femnist_cifar100}) for the strongly convex setup. From there, we observed \ourmodel{}(PM) produce better results than AL2GD. h-sgd produces a better global model than \ourmodel{}. For the CIFAR100 dataset, AL2GD overcame the performance of \ourmodel{}(PM). The performance of \ourmodel{}(GM) and h-SGD both are equivalent. 
\begin{table*}[!ht]
\centering
\caption{Performance comparison of \ourmodel{} with SOTA. (Validation accuracy(mean/std)) } \label{tab: perf_femnist_cifar100}
\begin{adjustbox}{max width=\textwidth}
\setstretch{1.0}
\begin{tabular}{| c | c | c |}
    \hline
    Algorithm & \multicolumn{2}{| c |}{MCLR (Strongly convex)}  \\
    \cline{2-3} 
    & FEMNIST & CIFAR100 \\
    \hline
    h-SGD (GM) & 0.6405($\pm 0.005$) & 0.1232($0.001$) \\
    AL2GD (PM) & 0.4467($\pm 0.01$) & 0.65.87($\pm 0.07$)  \\
    \ourmodel{} (GM) & 0.5757 ($\pm 0.0$) & 0.1368 ($\pm 0.0$)  \\
    \ourmodel{} (PM) & 0.8129 ($\pm 0.0$) &  0.6695 ($\pm 0.001$) \\
    \hline
   
\end{tabular}
\end{adjustbox}
\end{table*}

\end{document}